\renewcommand{\@noticestring}{}
\let\citet\textcite
\let\citep\parencite
\title{Evaluating Graph Generative Models with Contrastively Learned Features}
\newtheorem{prop}{Proposition}
\author{%
  Hamed Shirzad\\
  UBC\\
  \texttt{shirzad@cs.ubc.ca}%
  \And 
  Kaveh Hassani\\
  Autodesk\\
  \texttt{kaveh.hassani@autodesk.com}%
  \And 
  Danica J.\ Sutherland\\
  UBC \& Amii\\
  \texttt{dsuth@cs.ubc.ca}%
}
\begin{document}

\maketitle

\begin{abstract}
A wide range of models have been proposed for Graph Generative Models, necessitating effective methods to evaluate their quality. So far, most techniques use either traditional metrics based on subgraph counting, or the representations of randomly initialized Graph Neural Networks (GNNs). We propose using representations from contrastively trained GNNs, rather than random GNNs, and show this gives more reliable evaluation metrics. Neither traditional approaches nor GNN-based approaches dominate the other, however: we give examples of graphs that each approach is unable to distinguish. We demonstrate that Graph Substructure Networks (GSNs), which in a way combine both approaches, are better at distinguishing the distances between graph datasets. The code used for this project is available in: \url{https://github.com/hamed1375/Self-Supervised-Models-for-GGM-Evaluation}.
\end{abstract}

\section{Introduction}
\label{intro}

Quantitative evaluation of generative models is challenging \cite{theis2015note};
evaluating purely by visual inspection can introduce biases, particularly towards ``precision'' at cost of ``recall'' \cite{sajjadi2018assessing}.
The traditional metric, likelihood,
is also not only difficult to evaluate for implicit generative models on complex, highly structured datasets,
but can also be a poor fit to users' goals with generative modeling 
\cite{theis2015note, sajjadi2018assessing}.
Many quantitative proxy measures for the discrepancy between generator and real distributions have thus been used in recent work on generative modeling of images,
some of the most important including the
Inception Score (IS) \cite{salimans2016improved}, Fréchet 
Inception Distance (FID) \cite{heusel2017gans}, Precision/Recall (PR) \cite{sajjadi2018assessing}, and Density/Coverage \citep{naeem2020reliable}.
FID and PR measures in particular have been shown to correlate with human 
judgments in some practical settings.
All of these measures use representations extracted from CNNs pretrained on ImageNet classification \citep{ILSVRC15}. Nevertheless, it is recently shown that self-supervised representations archives more reasonable ranking in terms of FID/Precision/Recall, while the ranking with ImageNet-pretrained embeddings often can be misleading \cite{morozov2021on}.

These problems are exacerbated for the evaluation of generative models of graph data.
Graphs are often used to represent concepts in a broad range of specialized domains, including various fields of engineering, bioinformatics, and so on,
so that it is more difficult or perhaps impossible to find ``generally good'' features like those available for natural images via ImageNet models.
Many methods for evaluating graph generative models are thus based on measuring discrepancies between statistics of generic low-level graph features,
including local measurements like
degree distributions, 
clustering coefficient distributions, and four-node orbit counts \cite{li2018learning, you2018graphrnn}, or simple global properties such as 
spectra \cite{liao2019efficient}.
Differences between distributions of these features are generally measured via the maximum mean discrepancy (MMD) \cite{gretton:mmd}
or total variation (TV) distance.

More recent work \cite{thompson2022evaluation} proposes instead extracting features via randomly initialized Graph Isomorphism Networks (GINs) \cite{xu2018powerful}.
These features provide a representation for each graph, so that metrics like FID, Precision/Recall, and Density/Coverage can be estimated using these representations. 
For datasets with natural labels, we can alternatively find representations based on training a classifier;
such features, however, are optimized for a very different task.

One significant challenge of graph data is that it is computationally difficult to even tell whether two graphs are the same (are isomorphic to one another).
GINs are known to be as powerful at detecting isomorphism as the standard Weisfeiler-Lehman (WL) test \cite{xu2018powerful}, but it is not known whether \emph{random} GIN representations have this ability. Experimental evidence \cite{xu2018powerful} shows that trained GIN networks are as powerful as the WL test, but have substantially worse power at random initialization.

Unlike for distributions of natural images, however, the relevant graph features and particularly the semantics of node or edge features varies widely between datasets.
On molecular graphs, adding any single edge will almost always violate physical constraints about stable atomic bonds;
for house layouts, the feasibility of adding an edge depends significantly on the rest of the layout and the type of rooms (few front doors open into a bathroom, and direct connections between rooms on the first and third floors are unlikely);
Traditional metrics and random graph neural networks are both unlikely to be able to capture these complex inter-dependencies functioning very differently across domains,
and it similarly seems quite unlikely that pretraining on some ``graph ImageNet'' would be able to find useful generic graph features.

We therefore propose to train graph encoders on the same data used to train the generative model, using self-supervised contrastive learning to find meaningful feature extractors \cite{velickovic_2019_iclr,Sun2020InfoGraph,pmlr-v119-hassani20a,qiu2020gcc,you2020graph,you2021graph,hassani2022learning,zhu2020deep,zhu2021graph, thakoor2021bootstrapped}, which we then use to compare the generated graphs to a test set.
The set of perturbations introduced in contrastive learning teaches the model which kinds of graphs should be considered similar to one another.
In this work we use types of perturbations traditionally used for training contrastive learning methods on graphs, but point out that future work focusing on domain-specific modeling can directly incorporate knowledge about which graphs should be considered similar by choosing different perturbation sets.

Inspired by the theoretical results for contrastive learning of \cite{wang2022chaos},
we also propose two variations to our representation learning procedures.
This upper bound shows that contrastively learned representations work well for downstream tasks as long as the probability of data points yielding overlapping augmentations is relatively large for \emph{within-class} dataset pairs and relatively small for \emph{cross-class} pairs.
Edit distances between graphs on typical training sets are large, however;
we propose to use subgraphs (``crops'') in our set of data augmentations for contrastive learning,
which is more likely to yield overlaps.
We also suggest enforcing a layer-wise Lipschitz constraint on feature extractors,
which encourages similar graphs to have similar learned representations.
We show experimentally that both changes improve learning.

With all of these improvements, we further ask: can we theoretically guarantee that our learned GNN representations outperform traditional local metrics?
We prove that we cannot:
we give examples of graphs easily distinguishable by local metrics that first-order GNNs cannot distinguish.
Yet the converse is also true: we show graphs easily distinguishable by GNNs that appear equivalent to local metrics.
We thus propose to use models based on Graph Substructure Networks (GSNs) \citep{bouritsas2022improving}, (using node degrees, and node clustering coefficients), and as a result,
explicitly incorporating local metrics into our models
and surpassing the power of the WL test, and yield further improvements.

\section{Related Work}
\paragraph{Graph generative models.}
Our work is not particular to any type of graph generative model, as it focuses on simply evaluating samples;
nonetheless, it is worth briefly reviewing some methods.
Sequential generation models, such as GraphRNN \cite{you2018graphrnn} and GRAN \cite{liao2019efficient}, generate nodes and edges in an auto-regressive manner. These models are efficient, but can struggle to capture global properties of the graphs, and require a predefined ordering. One-shot models such as MolGAN \cite{de2018molgan}, GraphDeconv \cite{flam2020graph}, GraphVAE \cite{kipf2016semi}, and GraphNVP \cite{madhawa2019graphnvp}, on the other hand, generate all nodes and edges in one shot; they can model long-range dependencies, but generally are not efficient and cannot scale to large graphs.
For a detailed overview of graph generative models, see \cite{guo2020systematic}.

\paragraph{Evaluating graph generative models.}
Graph generative models can be even more challenging to evaluate than visual or textual models,
because it is generally more difficult for humans to judge the quality of a generated graph.
The classic measure of the quality of a generative model, likelihood,
also has significant issues with graphs:
in addition to the kinds of issues that appear in generative models of images \citep{theis2015note},
the likelihood is particularly hard to evaluate on graphs where even checking equality is quite difficult \citep{o2021evaluation}. The most common method is to compute the Maximum Mean Discrepancy (MMD) \citep{gretton:mmd} between distributions of local graph statistics,
including node degrees, cluster coefficients, and counts of orbits up to a particular size \citep{you2018graphrnn,liao2019efficient}.
Global statistics, such as eigenvalues of the normalized graph Laplacian, are also used \citep{you2018graphrnn}.
These metrics, however, focus only on low-level structure of the graph, and ignore any features that might be present on nodes or edges \citep{thompson2022evaluation}.
Choosing an appropriate kernel is also very important to consistency of these metrics \citep{o2021evaluation}.
These types of graph statistics might also encourage models to overfit to the training data, rather than truly learning the target distribution \citep{shirzad2021td}.

For unlabeled image-based generative models, most work focuses on metrics including the Inception Score (IS) \cite{salimans2016improved}, Fréchet 
Inception Distance (FID) \cite{heusel2017gans}, Precision/Recall (PR) \cite{sajjadi2018assessing}, and Density/Coverage \citep{naeem2020reliable},
all of which compare distributions in a fixed latent space (typically activations of a late layer in an InceptionV3 \citep{inception} model trained on ImageNet).
These methods are rarely adopted in graph generative models,
due to challenges with ``general-purpose'' graph models discussed in \cref{intro}.
\citet{thompson2022evaluation} thus used random (untrained) graph encoders in these metrics.
We discuss these methods in more detail in the appendix.
Our work, inspired by \citeauthor{morozov2020self}'s similar proposal in image domains \cite{morozov2020self}, explores the use of self-supervised contrastive training to find representations that work better than random initializations.

\paragraph{Graph Contrastive Learning.}
A popular method for self-supervised learning,
contrastive learning generally aims to find a representation
roughly invariant to various operations (e.g.\ for images, taking random crops, horizontal flipping, shifting colors)
but able to identify different source data points.
Ideally, such a representation will be useful for downstream tasks not known when learning the representation.
In graph settings,
learned representations may be at the node, edge, or graph levels.

DGI \cite{velickovic_2019_iclr} 
and InfoGraph \cite{Sun2020InfoGraph} adopt DeepInfoMax \cite{hjelm_2019_iclr}, and enforce consistency between local (node) and 
global (graph) representation.
MVGRL \cite{pmlr-v119-hassani20a} augments a graph via graph diffusion, and constructs two views by 
randomly sampling sub-graphs from the adjacency and diffusion matrices. GCC \cite{qiu2020gcc} uses sub-graph instance discrimination 
and contrasts sub-graphs from a multi-hop ego network. GraphCL \cite{you2020graph} uses trial-and-error to hand-pick graph augmentations 
and the corresponding parameters of each augmentation. JOAO \cite{you2021graph} extends the GraphCL using a bi-level min-max optimization 
that learns to select the augmentations. LG2AR \cite{hassani2022learning} learns a policy to sample augmentations. GRACE \cite{zhu2020deep} uses a 
similar approach to GraphCL to learn node representations. GCA \cite{zhu2021graph} uses a set of heuristics to adaptively pick the augmentation 
parameters. BGRL \cite{thakoor2021bootstrapped} adopts BYOL \cite{NEURIPS2020_f3ada80d} and uses random augmentations to learn node representations.

\section{GNNs Versus Local Metrics} \label{sec:gnn-v-local}
\begin{figure}
    \centering
    \includegraphics[width=110mm]{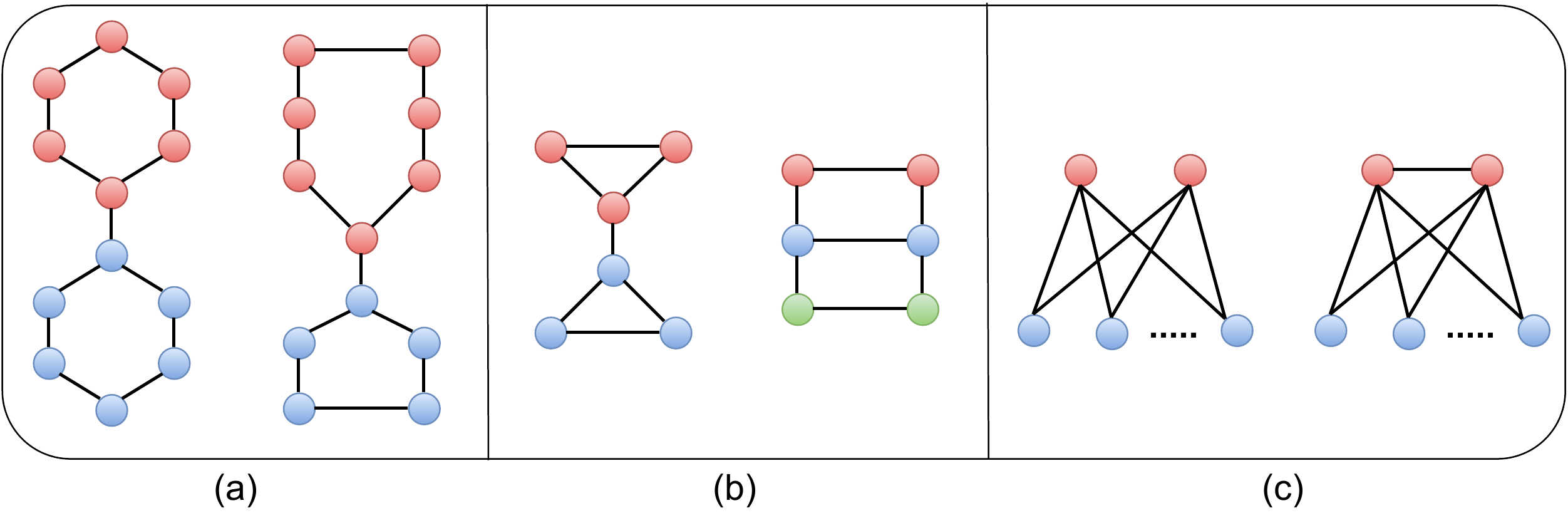}
    \caption{(a) An example of two graphs that can be differentiated by GNNs but not local metrics. Both graphs have same degree distribution, clustering coefficient and 4 node orbits. (b) An example of two graphs that can be differentiated by local metrics (clustering coefficient and smallest cycle)  but not GNNs. (c) Adding a single edge can drastically change cluster coefficient and 4-node orbits.}
    \label{fig:local_iso}
\end{figure}

Different methods for understanding graphs can ``understand'' the difference between graphs in very different ways:
a particular change to a graph might barely affect some features,
while drastically changing others.
One extreme case is when a given metric cannot detect that two distinct (non-isomorphic) graphs are different.
Since graph isomorphism is a computationally difficult problem,
we expect that all efficiently computable graph representations ``collapse'' some pairs of input graphs.\footnote{Alternatively, an efficient graph representation might be able to distinguish non-isomorphic graphs, if it also sometimes distinguishes isomorphic graphs. We do not consider such representations in this paper.}
It is conceivable, however, that one method could be strictly more powerful than another.
For instance,
since recent GNN models have overcome traditional models based on local metric representations in a variety of problems \citep{kipf2016semi, velivckovic2017graph, xu2018powerful},
is it the case that GNNs are strictly more powerful than local metrics?

We show, constructively, that the answer is no:
there are indeed graphs which GNNs can distinguish and local metrics cannot,
but there are also graphs which local metrics can distinguish but first-order GNNs cannot.
\Cref{fig:local_iso}(a) shows a pair of graphs with the same degree distribution, clustering coefficient, and four-node orbits,
which can nonetheless be distinguished by GNNs
(proof in \cref{app:proofs}).
On the other hand, the graphs in \cref{fig:local_iso}(b) have different clustering coefficient and smallest cycle,
but first-order GNNs cannot tell them apart.
Thus, neither method strictly outperforms the other on all problems,
and so there are theoretical generative models which perfectly match in GNN-based representations but differ in local metrics,
and vice versa.
This motivates our addition of local features to our graph representation models (\cref{sec:add-local-feats}).

It is much easier to incorporated such hard-coded structures into GNNs than it would be to add learning to feature metrics; in particular, counting higher-order local patterns quickly becomes prohibitively expensive, with a super-exponential time complexity. GNNs can also easily handle node and/or edge features on the underlying graphs, which is far more difficult to add to local metrics.

Another quality we would like in our graph representations, in addition to the ability to distinguish distinct graphs, is some form of stability: if a distribution of graphs only changes slightly, we would like our evaluation methods to give ``partial credit'' as opposed to a distribution where all graphs are dramatically different.
(This is closely related to issues in training and evaluating image-based generative models \citep{arjovsky:towards-principled-gans,wgan,smmdgan,theis2015note}.)
As previously discussed,
the notion of ``similar graphs'' is very domain-dependent,
but traditional local metrics can be highly sensitive to changes like adding a single edge:
\cref{fig:local_iso}(c) shows an example where two graphs differing only by a single edge can have drastically different statistics.
By learning GNN representations,
we can have some control over these types of smoothness properties;
we exploit this explicitly in our methodology in \cref{sec:adding-smoothness}.

    \begin{figure}
        \centering
        \includegraphics[width=130mm]{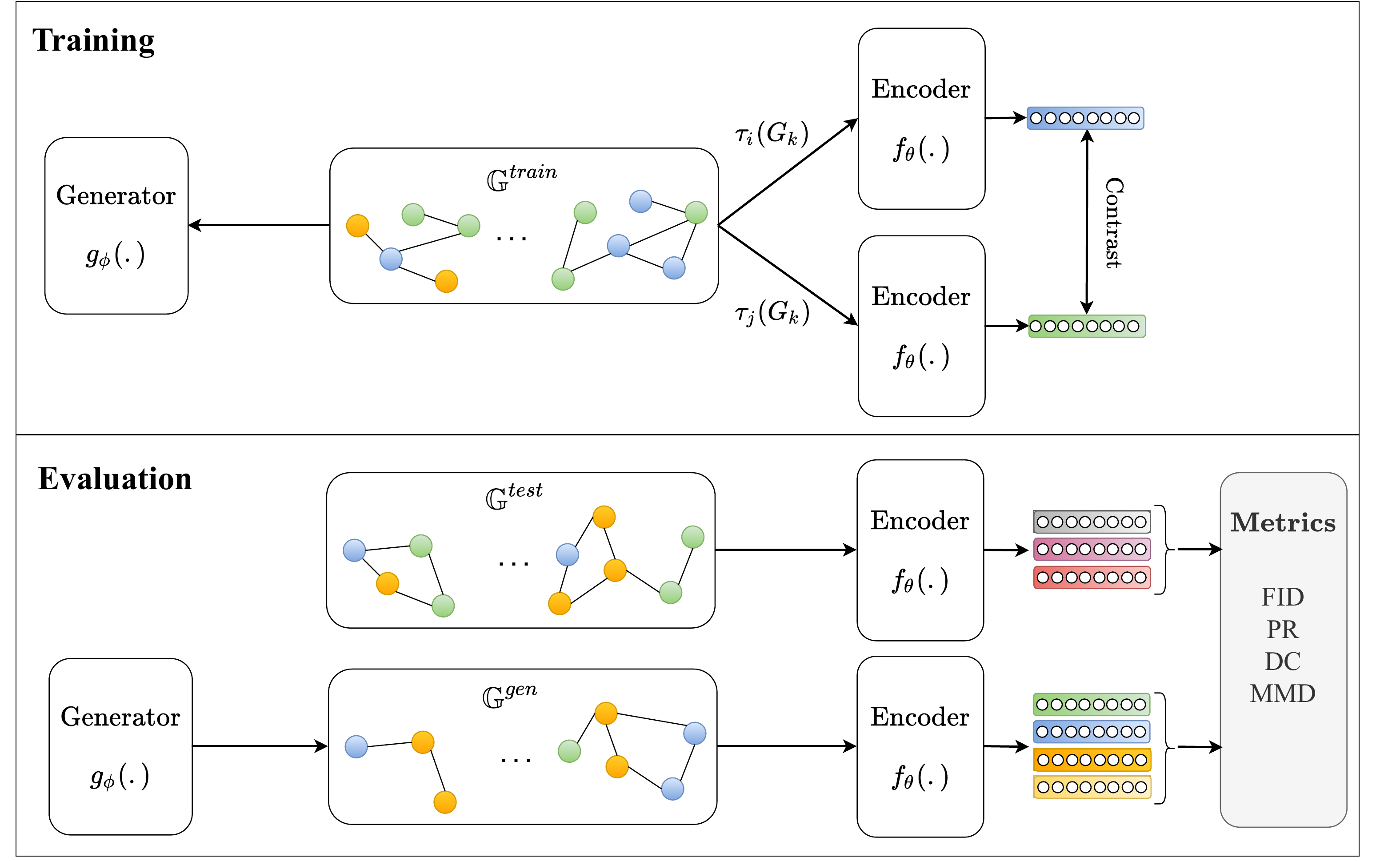}
        \caption{During the \textbf{training} phase, we use the same training data ($\mathbb{G}^{train}$) to train both the generator and the encoder networks. The encoder is trained using a contrastive loss where 
        two augmentation $\tau_{i}$ and $\tau_{j}$ are randomly sampled
        from a set of rational augmentations $\mathcal{T}$ to construct two views of a sampled graph $G_k$. During \textbf{evaluation} phase, we sample the generator to form a generated set
        of graphs $\mathbb{G}^{gen}$ and feed it along with a held-out set of real graphs $\mathbb{G}^{test}$ to the encoder to compute the graph representations. The representations are then used to compute robust metrics to quantify the discrepency between real and generated graphs.
        }
        \label{fig:gen-arc}
    \end{figure}

\section{Self-supervised Training of Graph Representations for Evaluation}
Suppose we have two sets of graphs $\mathbb{G}^{train} = \{G_1,\dots\,G_S\}$ and $\mathbb{G}^{test} = \{G_1,\dots\,G_M\}$, each sampled from the same data distribution $p\left(G\right)$. Also suppose that we have access to an unconditional graph generative model $g_\phi(.)$, which is trained on $\mathbb{G}^{train}$ to learn the distribution of the observed set of graphs. We sample a set of generated graphs $\mathbb{G}^{gen} = \{G_1,\dots\,G_N\} \sim p_{g_\phi}(G)$ from this model. In order to evaluate the quality of the sampled graphs (i.e., to decide whether the model $g_\phi(.)$ has successfully recovered the underlying distribution $p\left(G\right)$), we can define a measure of divergence $\mathcal{D}\left(\mathbb{G}^{test}, \mathbb{G}^{gen} \right)$ to quantify the discrepancy
between distributions of the real and generated graphs. One robust way to achieve this is
to define the metric on latent vector representation spaces, and expect representations of graphs rather than the original objects. Thus, to use these metrics, we need to train a shared encoder $f_{\theta}(.)$ and then compute the discrepancy as $\mathcal{D}\left(f_{\theta}(\mathbb{G}^{test}, f_{\theta}(\mathbb{G}^{gen}\right)$
There are a few such metrics well-studied in visual domains that can differentiate the fidelity and diversity of the model, and which we can adopt in graph domains.

For evaluating image generative models, due to similar feature space across image datasets and also availability of large-scale data, the trunk of a model trained over ImageNet with explicit supervisory signals is usually chosen as the encoder.
However, it is not straightforward to adopt the same trick to graph-structured data: there are no ImageNet-scale graph datasets available, and more importantly,
the semantics of graphs and their features vary wildly across commonly used graph datasets,
far more than occurs across distributions of natural images.
For instance, even datasets of molecular graphs may use different feature sets to represent the molecules \cite{hassani2022cross},
in addition to the many cross-domain challenges  discussed in \cref{intro}.
Thus, it is not feasible to imagine a single ``universal'' graph representation;
we would like a general-purpose method for finding representations useful for a new graph dataset.

\paragraph{Training with Graph Contrastive Learning}
\label{sec:using-gcl}

To find expressive representation of real and generated graphs, we train the encoder using a contrastive objective. Assuming a set of rational augmentations $\mathcal{T}$over $\mathbb{G}$ where each augmentation 
$\tau_{i} \in\mathcal{T}$ is defined as a function over graph $G_k$ that generates an identity-preserving view of the graph: $G^+_k=\tau_{i}(G_k)$, a  
contrastive framework with negative sampling strategy uses $\mathcal{T}$to draw positive samples from the joint distribution $p\left(\tau_{i}(G_k), \tau_{j}(G_k)\right)$ 
in order to maximize the agreement between different views of the same graph $G_k$ and to draw negative samples from the product of 
marginals $p\left(\tau_{i}(G_k)\right) \times p\left(\tau_{j}(G_{k'})\right)$ to minimize it for views from two distinct graphs $G_k$ and $G_{k'}, k\neq k'$.

As mentioned previously, we use a GIN architecture for our feature extractor.
Other than the training process, the rest of our evaluation pipeline is similar to that of \citet{thompson2022evaluation},
who use random GIN weights.We consider two methods for training our GIN's parameters:
GraphCL \citep{you2020graph} and InfoGraph \citep{sun2019infograph}. GraphCL randomly samples augmentation from four possible augmentaiotns including \textit{node dropping}, \textit{edge perturbation}, \textit{attribute masking}, and \textit{sub-graph inducing} based on a random walk to which
we would like the representation to be roughly invariant. . GraphCL uses normalized temperature-scaled 
cross-entropy (NT-Xent) objective \cite{chen_2020_arxiv} to maximize the agreement between positive graph-level representations. InfoGraph works differently: it contrasts the graph-level representation with the representations of individual nodes, which encode neighborhood structure information. InfoGraph uses 
DeepInfoMax \cite{hjelm_2019_iclr} objective to maximize the mutual information between graph-level and node-level representations of each individual graph.

\paragraph{Using Local Subgraph Counts as Input Features for GINs} \label{sec:add-local-feats}

To build on the insight of \cref{sec:gnn-v-local},
we also consider various methods for adding information about the local graph structure as node features,
similarly to Graph Substructure Networks \citep{bouritsas2022improving}. The simplest such method is to add the degree of a node as an explicit feature for the GIN to consider.
We do this, but also add, by concatenation on node features,
higher-order local information as well: three-node and four-node clustering features for each node. Four node clustering coefficient is calculated as:
\begin{equation}
    C_{4}(v)=\frac{\sum_{(u,w) \in Nei(v)} q_{v}(u, w)}{\sum_{(u,w) \in Nei(v)}\left[deg(u) + deg(w) - q_v(u,w) - 2\mathbb{I}(u \in Nei(w))\right]}
\end{equation}
where $q_v(u,w)$ defined as the number of common neighbors of $u$ and $w$, not counting $v$.
Aggregating these features across the whole graph would give information on distribution of $4$ node orbits of the graph,
but this provides more localized information across the graph that is nonetheless difficult or impossible for a GIN to examine otherwise \citep{chen2020can}.

\paragraph{Choice of Augmentations}
\label{sec:choosing-augmentations}
\citet{wang2022chaos}, building off work of \citet{wang2020understanding}, study contrastive learning in general settings (with an eye towards vision applications),
and provide an intriguing bound for the performance of downstream classifiers.
To explain it,
consider the ``augmentation graph'' of the training samples:
if $G_i$ is the $i$th training example
and $G_i^+$ is a random augmentation of that example,
we connect the nodes $G_i$ and $G_j$ in the augmentation graph
if there are feasible augmentations $G_i^+$ and $G_j^+$ such that $G_i^+ = G_j^+$.\footnote{%
  Technically, the augmentations we use (described in \cref{sec:using-gcl}) would result in a densely-connected augmentation graph:
  for instance, each graph has some vanishingly small probability of being reduced to an empty graph.
  We can instead think about the augmentation graph based on the augmentations from a high-probability set for each training example.
}
\citeauthor{wang2022chaos} argue that downstream linear classifiers are likely to succeed if this augmentation graph has stronger intra-class connectivity than it does cross-class connectivity,
proving a tight connection between the two under a particular setting with strong assumptions about this and related aspects of the setup.

Given the connection between the distribution metrics discussed in \cref{sec:eval-methods-on-latents} and classifier performance \citep[e.g.][Section 4]{liu:deep-testing},
if we accept the argument of \citet{wang2022chaos},
evaluation metrics based on a contrastively trained graph representation will give poor values (good classifier performance) when generated samples are not well-connected to real samples in the augmentation graph,
and vice versa.
If we choose augmentations appropriately, this is sensible behavior.

\paragraph{Enforcing Lipschitz Layers in Representation Networks}
\label{sec:adding-smoothness}
The prior line of reasoning also suggests that we should choose augmentations that are capable of making real graphs look like one another.
Edit distances between graphs, however, are typically large on the datasets we consider,
and so augmentations based on adding or deleting individual nodes and/or edges will struggle to do this.
The same is true for many of the augmentations used on images,
except -- as \citet{wang2022chaos} note -- for crop-type augmentations,
where e.g.\ two different car pictures might become quite similar if we crop to just a wheel.
On graphs, an analogous operation is subgraph sampling,
which we include in our GraphCL setup;
InfoGraph already naturally looks at subgraph features as a core component.

Taking this line of reasoning as well as the general motivations of contrastive learning further,
it is also natural to think that if we can inherently enforce ``similar graphs'' to have similar representations,
this could improve the process of contrastive learning:
we would save on needing to train the model to learn these similarities,
and it could help decrease the classifier performance for good generative models whose output graphs are legitimately near the distribution of target graphs.

A line of work on GANs in visual settings \citep{arjovsky:towards-principled-gans,wgan,smmdgan,roth:stabilizing,mescheder:converge,spectral-norm}
has made clear the importance of this type of reasoning in losses for training generative models:
the loss should smoothly improve as generator samples approach the target distribution, even if the supports differ.
Viewing model evaluation metrics as a kind of ``out-of-the-loop'' loss function for training generative models -- hyperparameter selection and model development focusing on variants with better evaluation metrics -- suggests that these kinds of properties may be important for the problem of evaluation as well.

We thus explicitly enforce the layers of our GIN to have a controlled Lipschitz constant, similarly to e.g.\ spectral normalization in GAN discriminators \citep{spectral-norm}. To this end, we fix the $\lambda$ Lipschitz factor to $1.0$ in the experiments. For each linear layer with weights $\mathbf{W}_\ell$, we use projected gradient descent; after each update on the weights, if $\|\mathbf{W}_\ell\| > 1.0$, we update the value of the weights to $\mathbf{W}_\ell = \frac{\mathbf{W}_\ell}{\|\mathbf{W}_\ell\|}$.
This guarantees small changes in the graphs, such as adding/removing edges, or change in the input features, will not drastically change the final representation.

If we assume node representations for input of a GIN layer has a maximum norm of $\lVert B \rVert$, changing one connection changes two input messages in the graph by at most $\lVert B \rVert$. Each MLP of the GIN has a Lipschitz constant of $1$, between the normalization above and using ReLU activations. As a result, each node's representation will change by at most $\lVert B \rVert$. 
As the GIN concatenates the representations from all layers, this difference will propagate in the layers and increase as the number of layers increase. For fixed-depth networks, however, such as our depth-3 networks here, controlling this Lipschitz constant will still give reasonable control of the changes in the final representation.

\section{Experimental Results}
In all of the experiments we train the model on the full dataset in an self-supervised manner. Following \citep{thompson2022evaluation}, we take the dataset and make perturbations on the dataset and see what is the trend in the measurements as the perturbation degree increases. We denote perturbation degree with $r$, and define it for each type of perturbation. We use these type of perturbations:

\begin{figure*}[ht!]
\captionsetup[subfloat]{farskip=-2pt,captionskip=-8pt}
\centering
\subfloat[][]{\includegraphics[width = 2.55in]{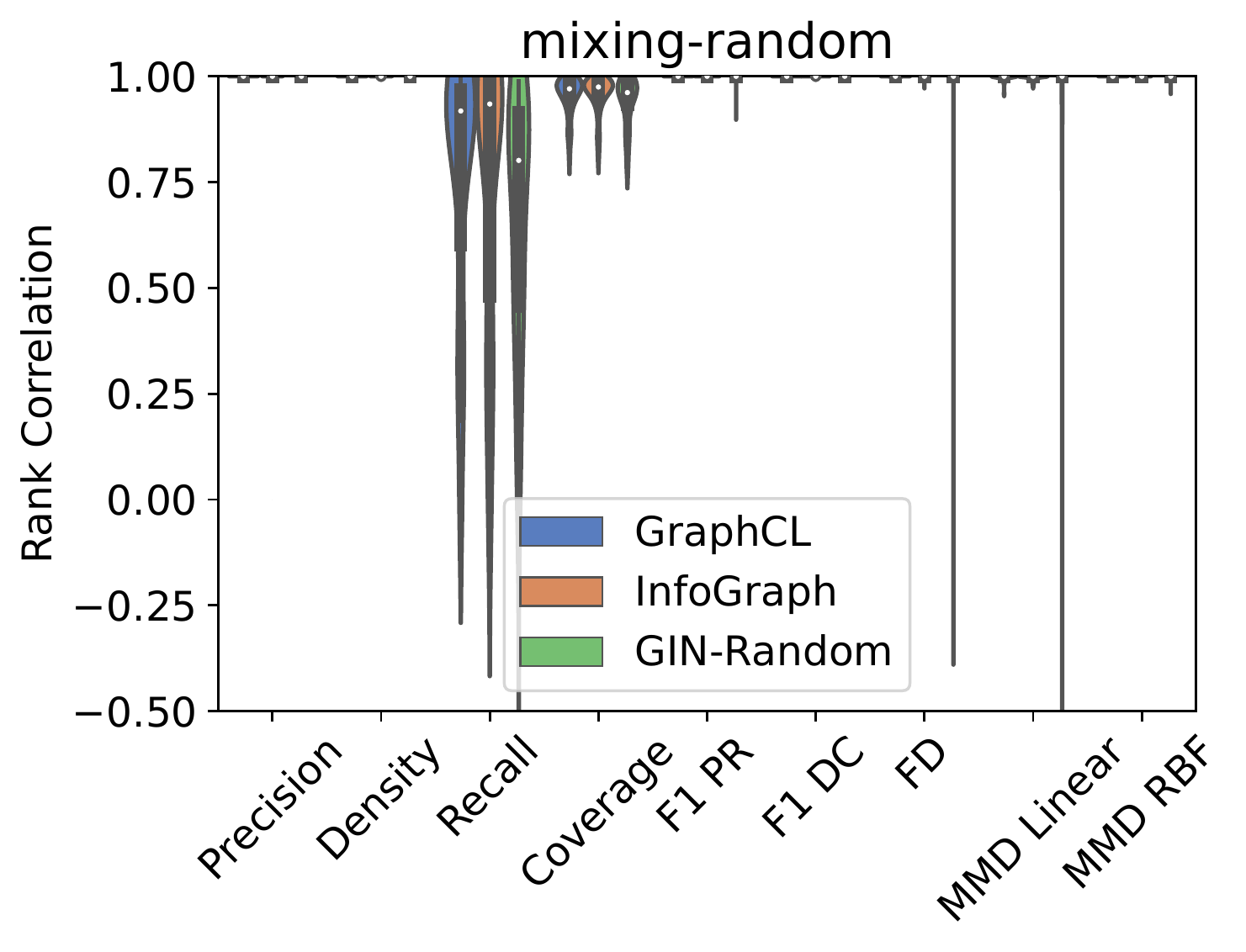}} 
\subfloat[][]{\includegraphics[width = 2.55in]{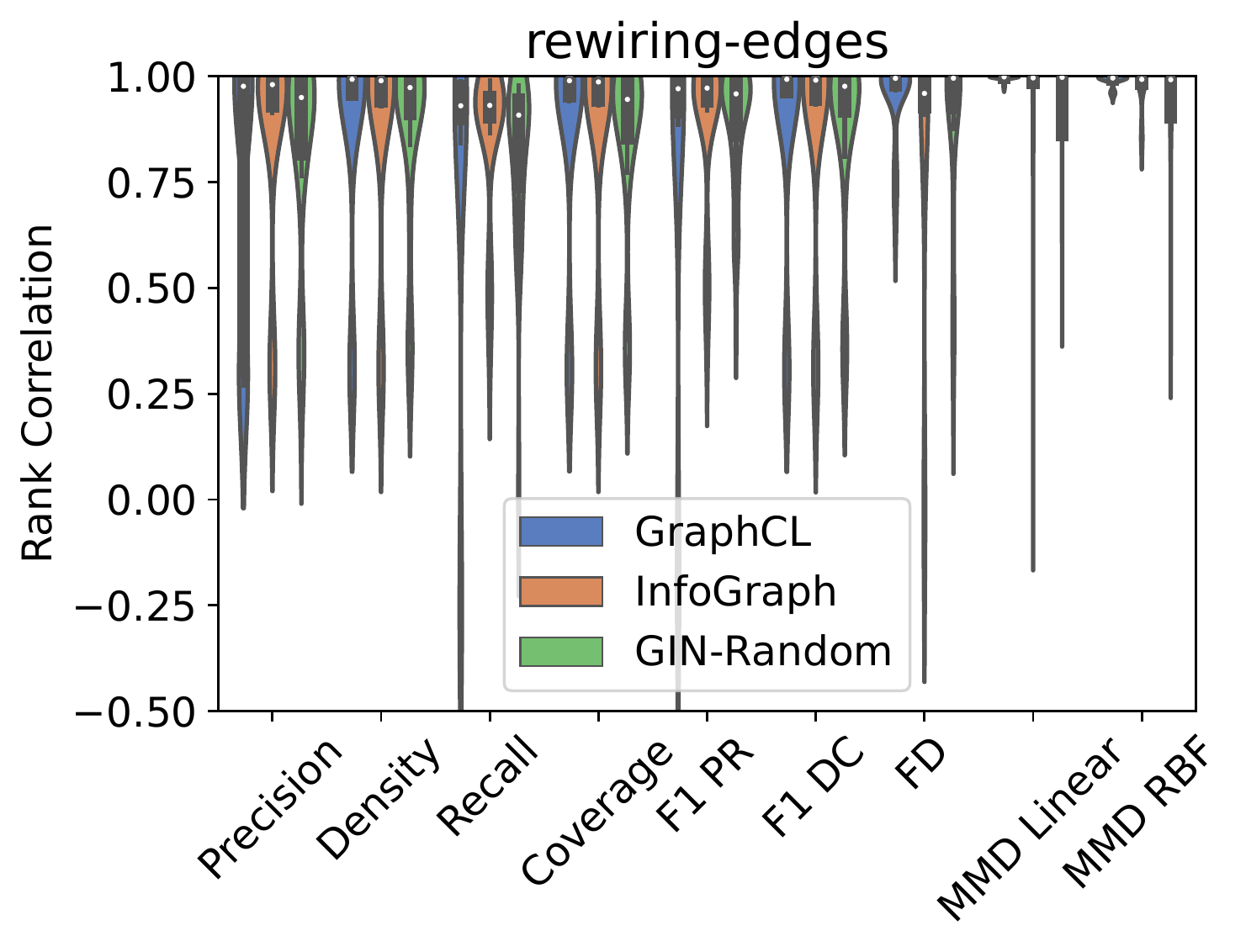}}
\\
\subfloat[][]{\includegraphics[width = 2.55in]{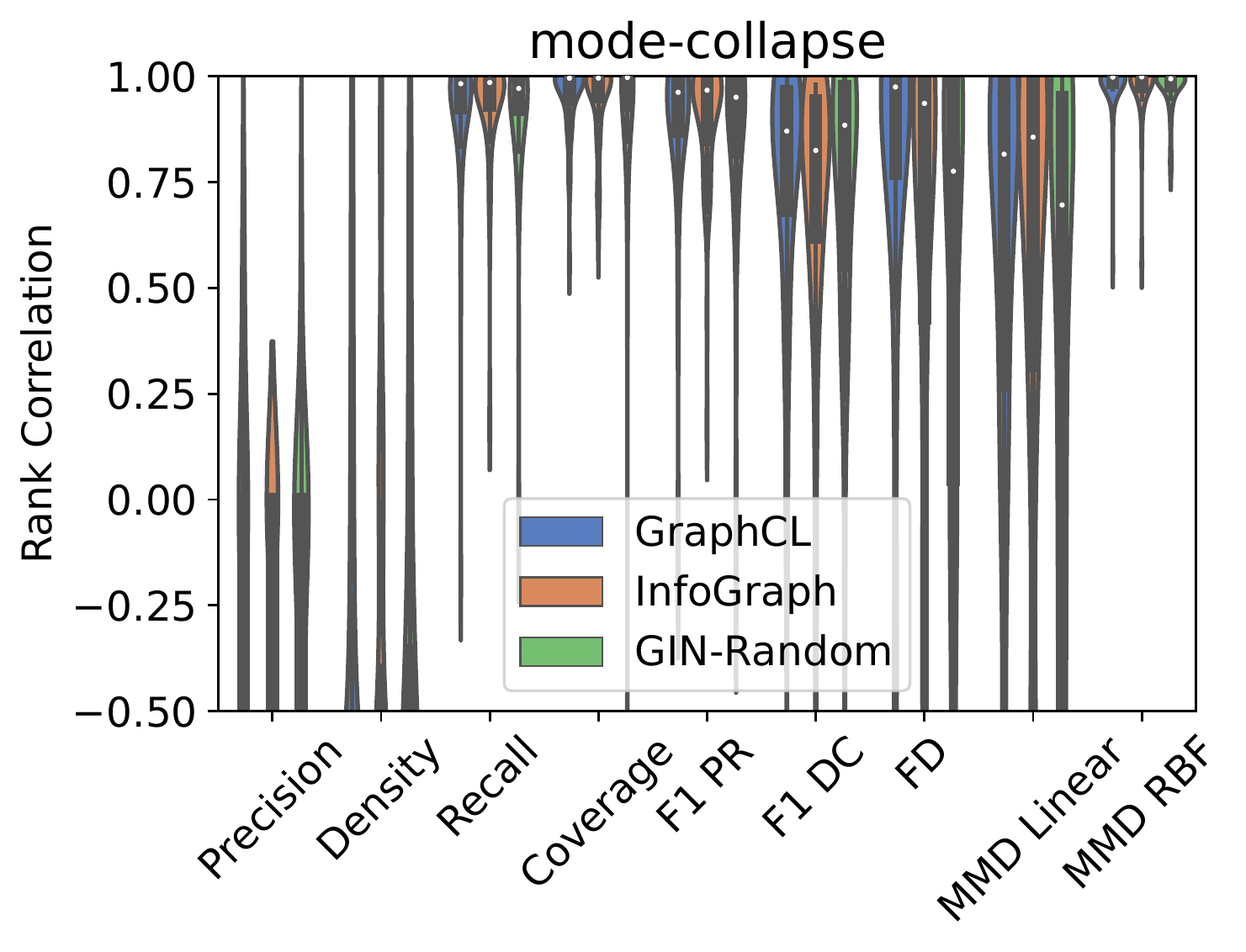}}
\subfloat[][]{\includegraphics[width = 2.55in]{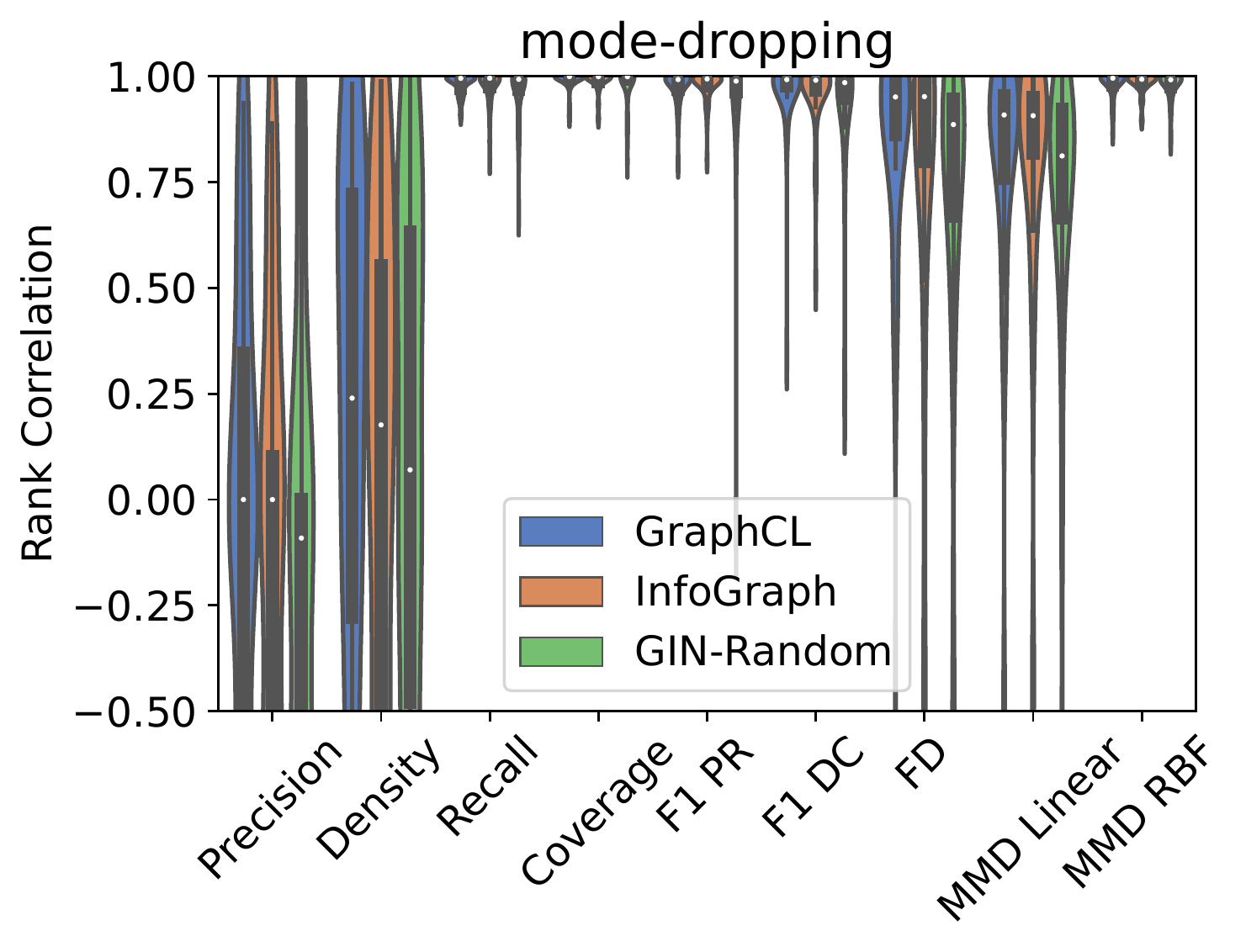}}
\caption{Experimental results for the pretrained models versus random GIN, without structural features.}
\label{fig:no_feat}
\end{figure*}

\begin{itemize}
    \item \textbf{Mixing-Random:} In perturbation with ratio $r$, we remove $r$ chunk of the reference samples, and replace them with Erdős–Rényi (ER) graphs with the same ratio of edges.
    \item \textbf{Rewiring Edges:} This perturbation, rewires each edge of the graph with probability $r$. Each rewired edge, will change one of the sides of the edge with equal probability to another node that is not already connected to the stable node.
    \item \textbf{Mode Collapse and Mode Dropping:} For these perturbation, first we cluster the graphs using the WL-Kernel. First, we choose $r$ ratio of clusters, then, for mode collapse, we replace each graph on that dataset with the center of the cluster. For, mode dropping, we remove the selected clusters and then for making size of the dataset fixed, we randomly repeat samples from other clusters.
\end{itemize}

For each experiment, we measure the Spearman rand correlation between the perturbation ratio, $r$, and the value of the measurement. For measurements that supposed to decrease by the increase of ratio, we flip the values. As a result, in all experiments higher is better. We gather the results among different datasets and several random seeds and plot them for distribution of the correlations. For detailed experiments on the individual datasets see \ref{app:further_experiments}.

\paragraph{Datasets:} Following \cite{thompson2022evaluation}, we use six diverse datasets ( three synthetic and three real-world) that are frequently used in literature
including: (1) \textbf{Lobster} \cite{dai2020scalable} consisting of stochastic graphs with each node at most 2-hops away from a backbone path, (2) 2D \textbf{Grid} 
graphs \cite{dai2020scalable,you2018graphrnn,liao2019efficient}, (3) \textbf{Proteins} \cite{dobson2003distinguishing} where represents connectivity among amino acids 
if their physical distance is less than a threshold, (4) 3-hop \textbf{Ego} networks \cite{you2018graphrnn} extracted from the CiteSeer network \cite{sen2008collective} representing citation-based connectivity among documents, (5) \textbf{Community} \cite{you2018graphrnn} graphs generated by Erdős–Rényi model \cite{erdos1960evolution}, and (6) \textbf{Zinc} \cite{irwin2012zinc} is a dataset of attributed molecular graphs which allows to study sensitivity of metrics to changes in node/edge feature distributions.  We follow the exact protocols used in  \cite{thompson2022evaluation, dai2020scalable,you2018graphrnn,liao2019efficient} as follows. We randomly sample 1000 graphs from Zinc dataset and use one-hot encoding for edge/node features. For community dataset, we set $n=|\mathcal{V}|/2$, $p=0.3$, and add $0.05|\mathcal{V}|$  inter-community edges with uniform probability. For all datasets, we use the node size range indicated in Table \ref{table:dataset} in Appendix.

\paragraph{Contrastive Training Versus Random GIN without Structural Features:} In the first experiment, we compare the methods without adding structural features. Results can be seen in Figure \ref{fig:no_feat}. In general we can see that in most measurements pretraining shows superior performance compared to the random network. In this experiments we use $10$ random seeds and gather data on all datasets. In \ref{app:further_experiments}, same results separated for each dataset can be seen. In our experience, pretraining shows near perfect results on larger datasets, but for Lobster and Grid datasets correlations are not near to $1$. Our observation is these measurements are moving with the correct trend up to some perturbation ratio; but for example precision/recall become zero after some perturbation and keeps still. Our intuition here is because of highly regular structures in these datasets, model learns instantly to discrimate the real graphs from the fake ones very easily. And after small amount of perturbation the perturbed graphs are all very far from the reference graphs.

\paragraph{Contrastive Training Versus Random GIN with Node Degree Features:} In this experiment, similar to the first one, we compare random and pretrained models. The difference is that we use just node degree features for the nodes, the easiest and fastest structural information that could be added to the model. Results are provided in Figure \ref{fig:deg_feat}.
\begin{figure*}[ht!]
\captionsetup[subfloat]{farskip=-2pt,captionskip=-8pt}
\centering
\subfloat[][]{\includegraphics[width = 2.55in]{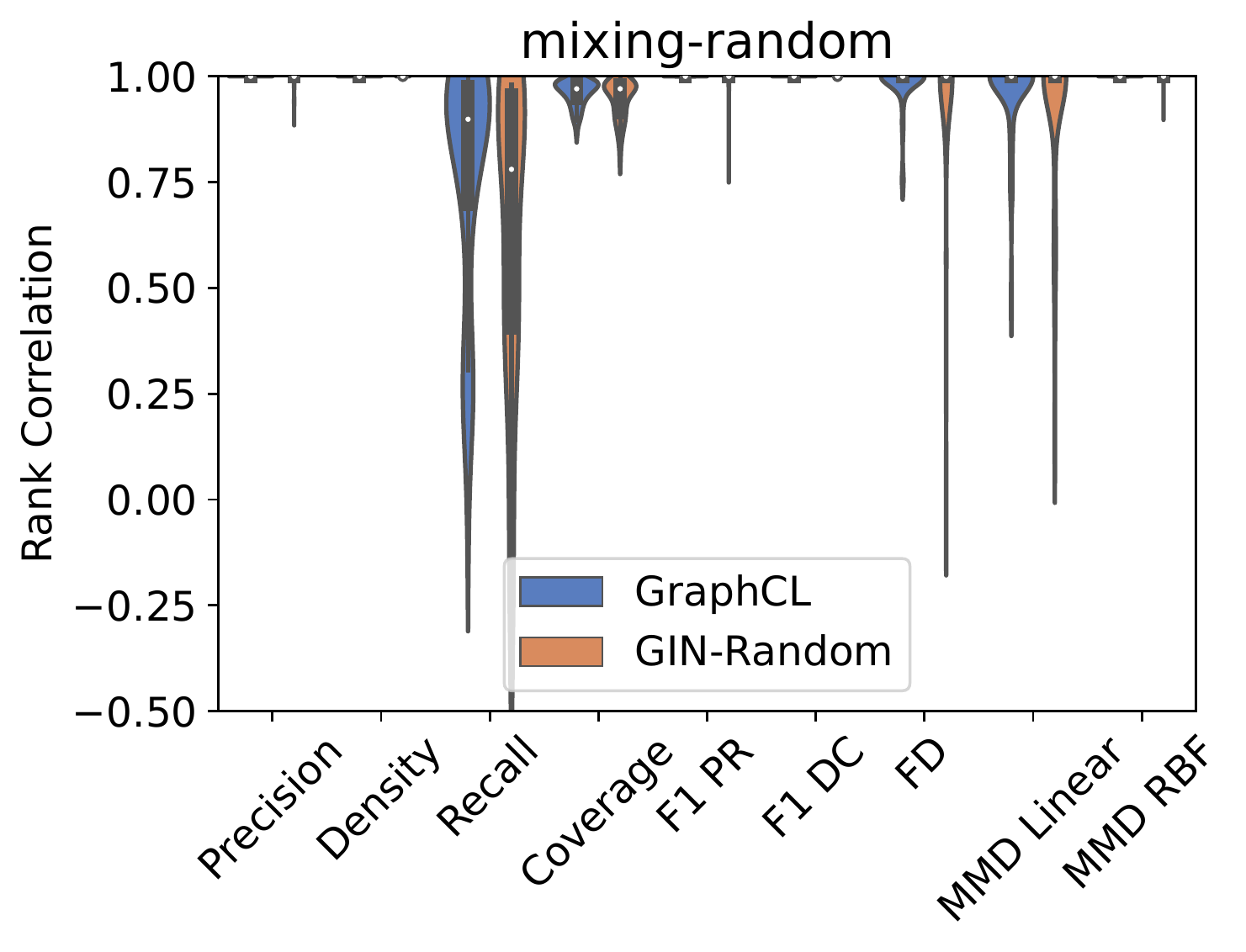}} 
\subfloat[][]{\includegraphics[width = 2.55in]{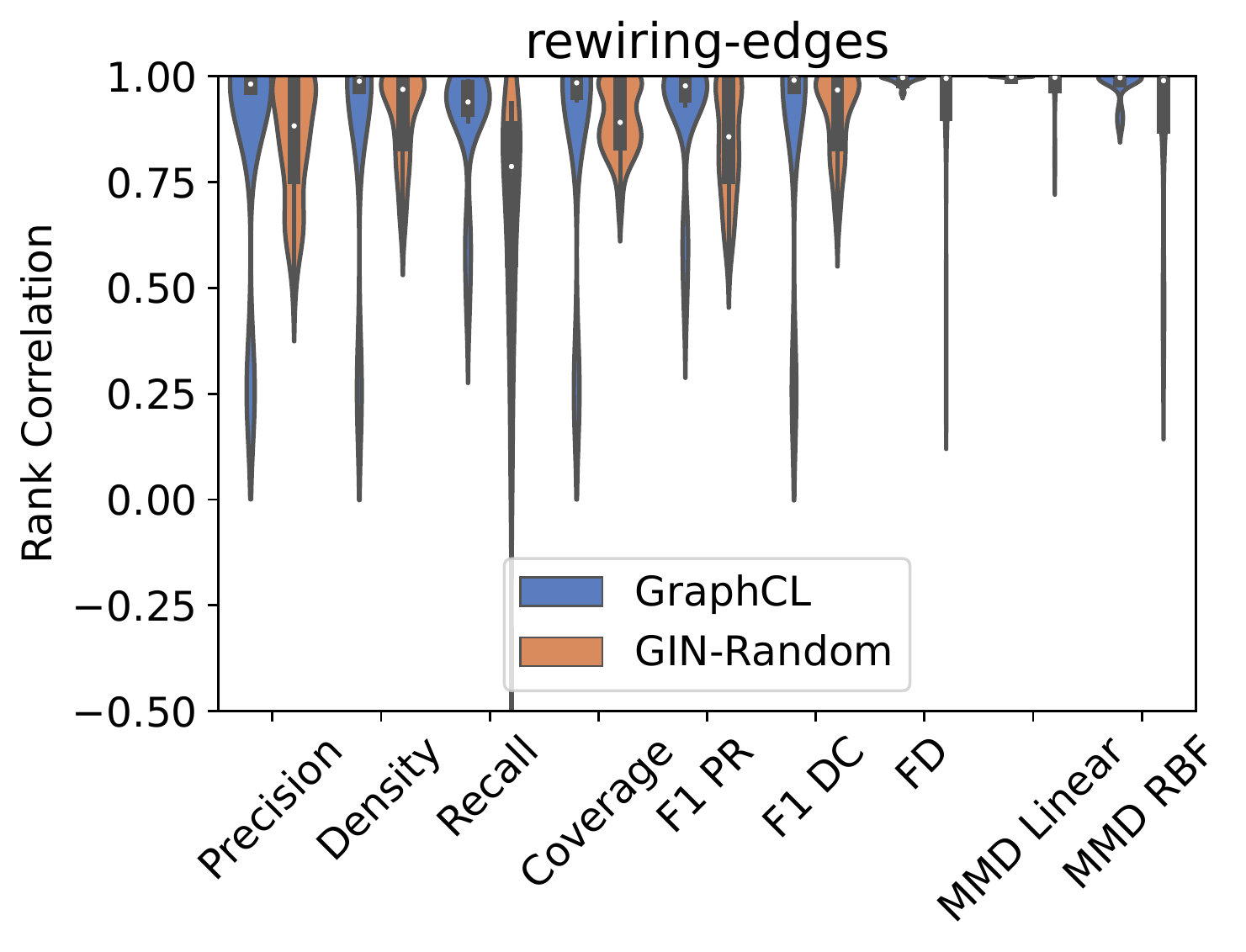}}
\\
\subfloat[][]{\includegraphics[width = 2.55in]{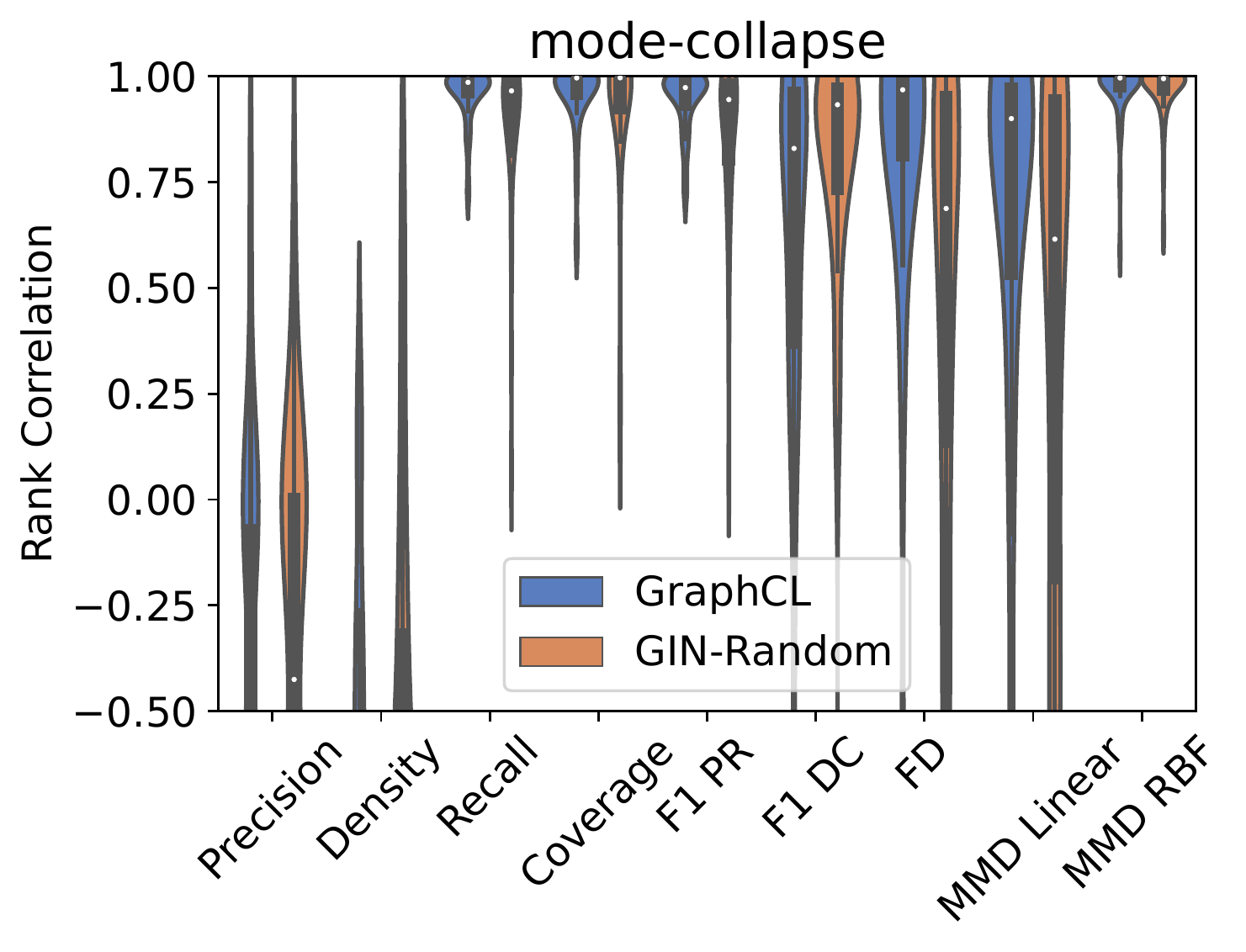}}
\subfloat[][]{\includegraphics[width = 2.55in]{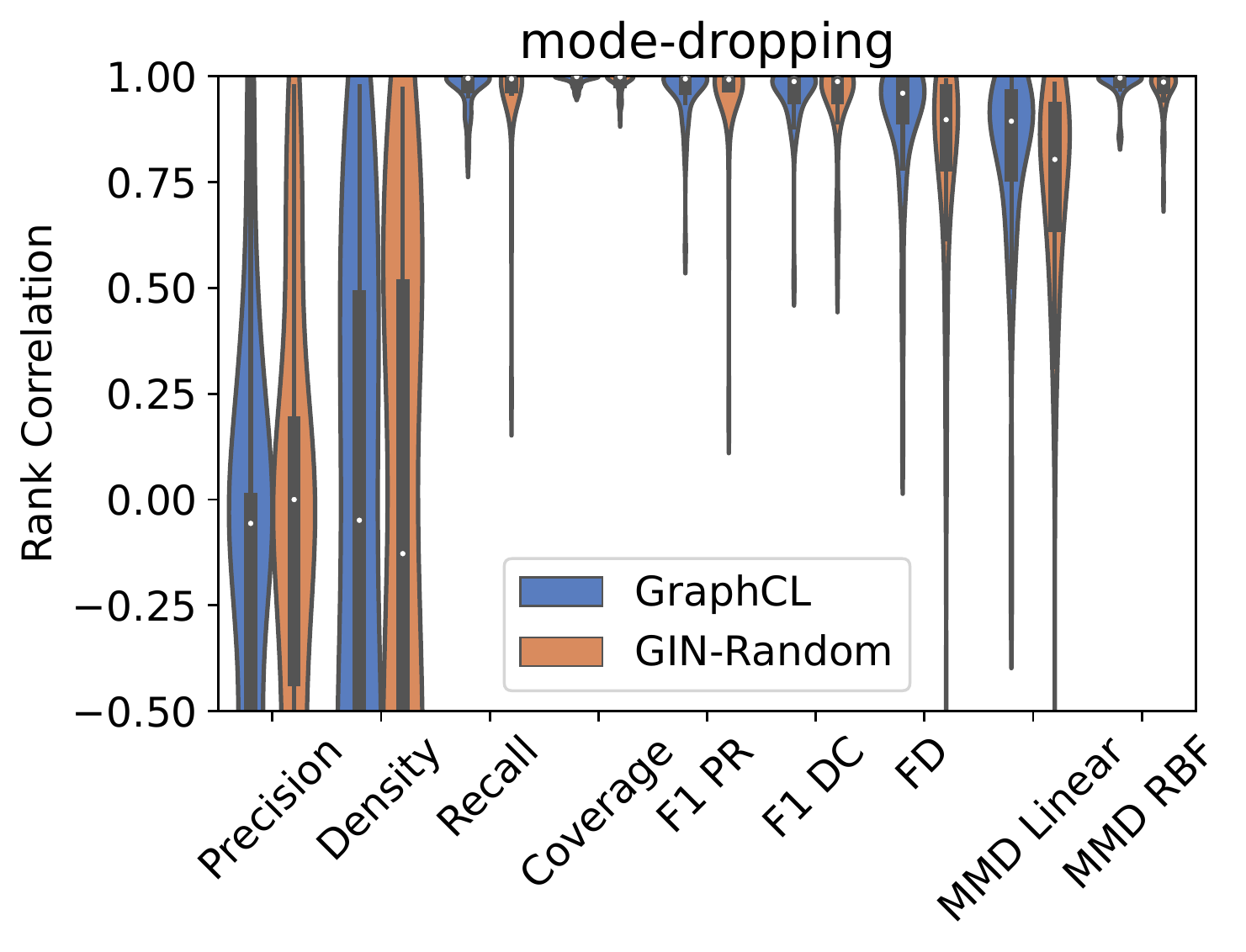}}
\caption{Experimental results for the pretrained models versus random GIN, with node degree features.}
\label{fig:deg_feat}
\end{figure*}
\begin{figure*}[ht!]
\captionsetup[subfloat]{farskip=-2pt,captionskip=-8pt}
\centering
\subfloat[][]{\includegraphics[width = 2.55in]{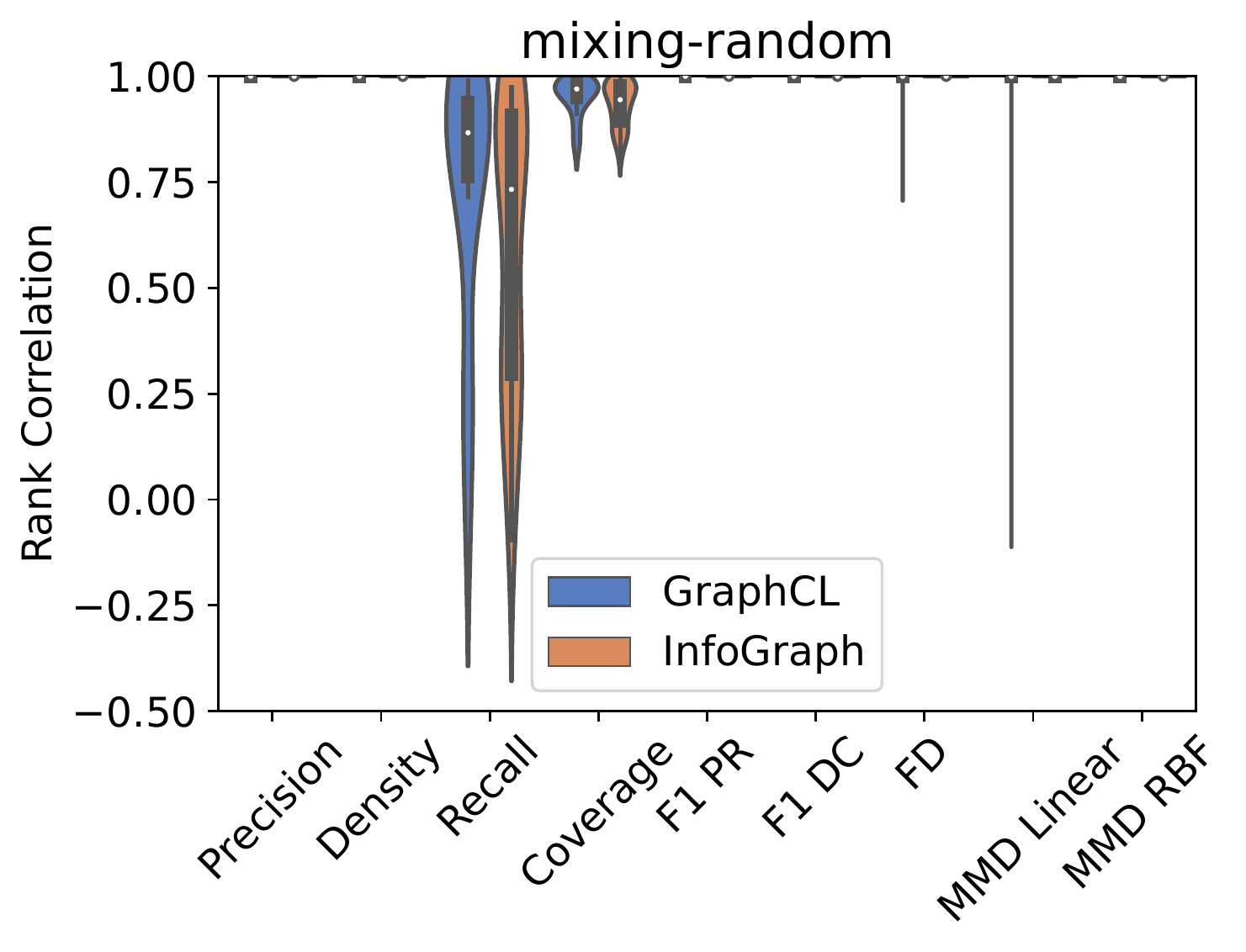}} 
\subfloat[][]{\includegraphics[width = 2.55in]{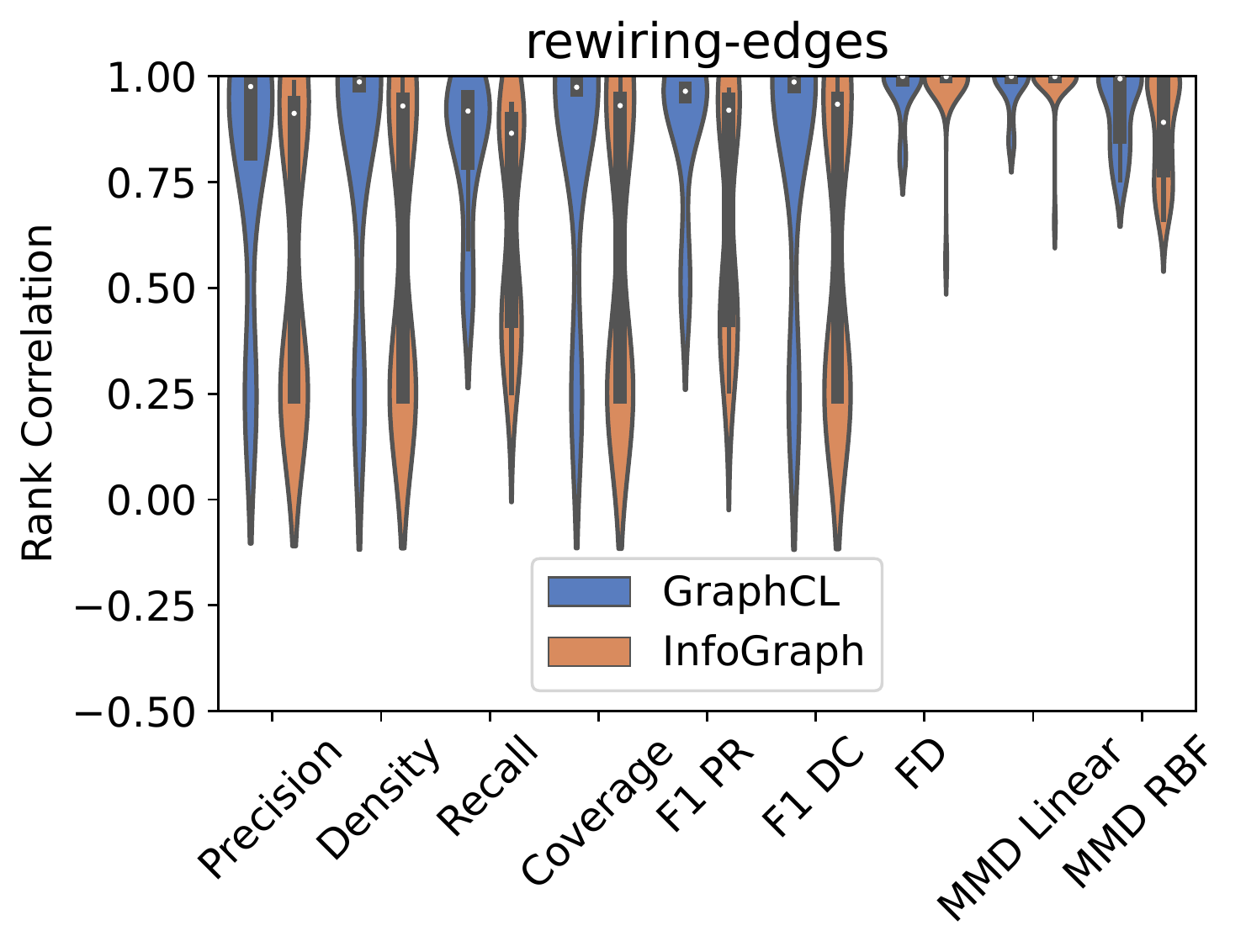}}
\\
\subfloat[][]{\includegraphics[width = 2.55in]{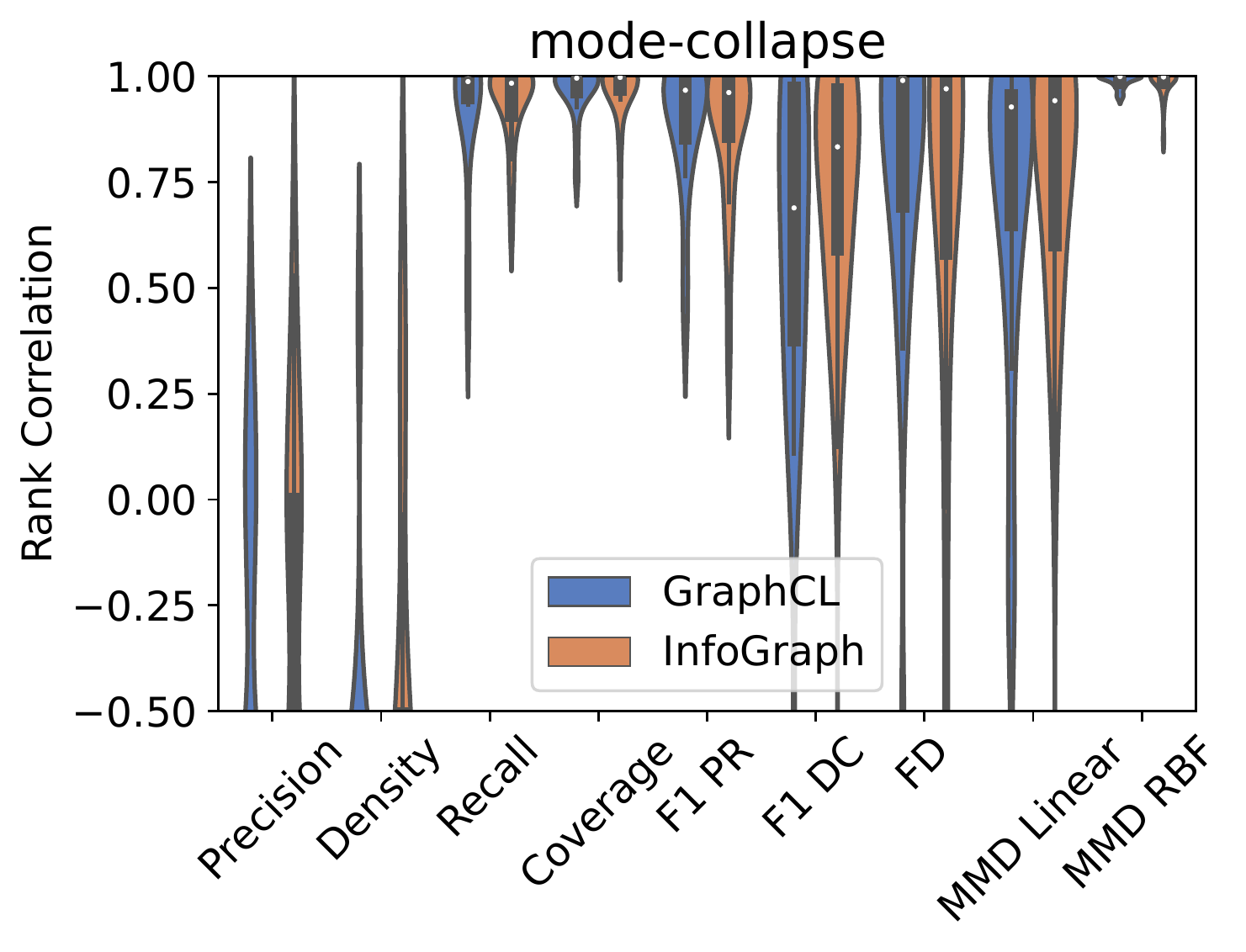}}
\subfloat[][]{\includegraphics[width = 2.55in]{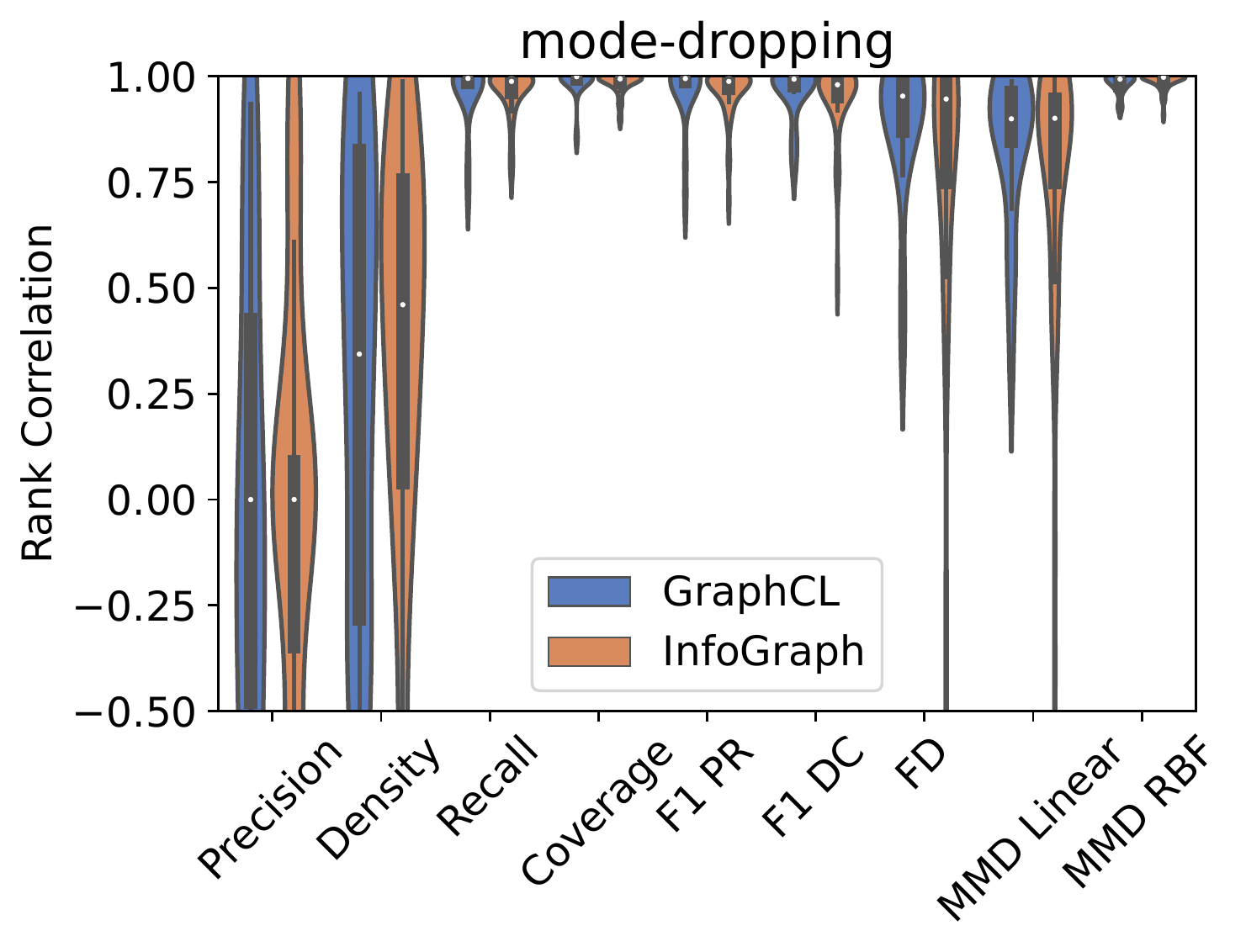}}
\caption{Experimental results for the pretrained models versus random GIN, with structural features.}
\label{fig:clustering_feat}
\end{figure*}
\begin{figure*}[ht!]
\captionsetup[subfloat]{farskip=-2pt,captionskip=-8pt}
\centering
\subfloat[][]{\includegraphics[width = 2.55in]{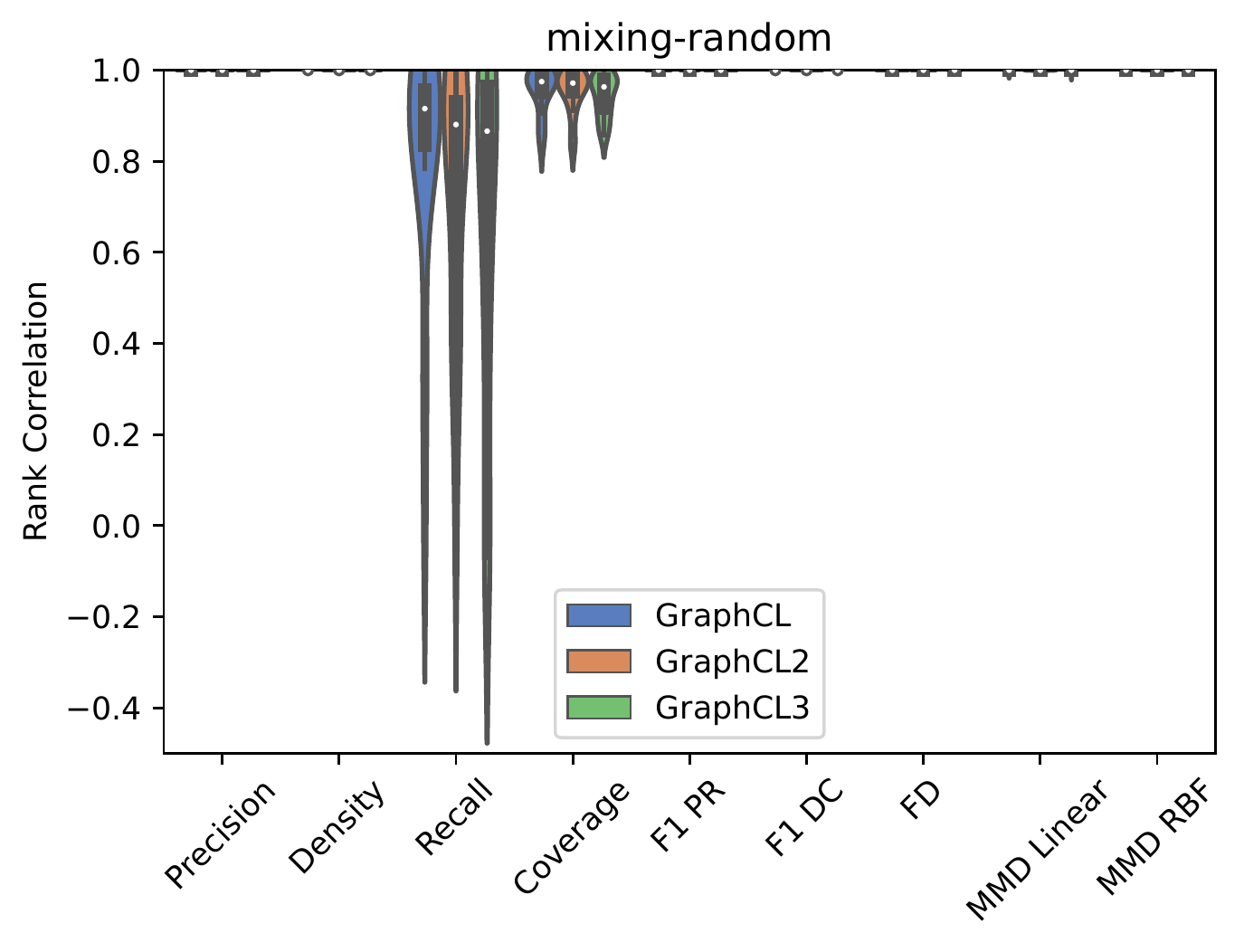}} 
\subfloat[][]{\includegraphics[width = 2.55in]{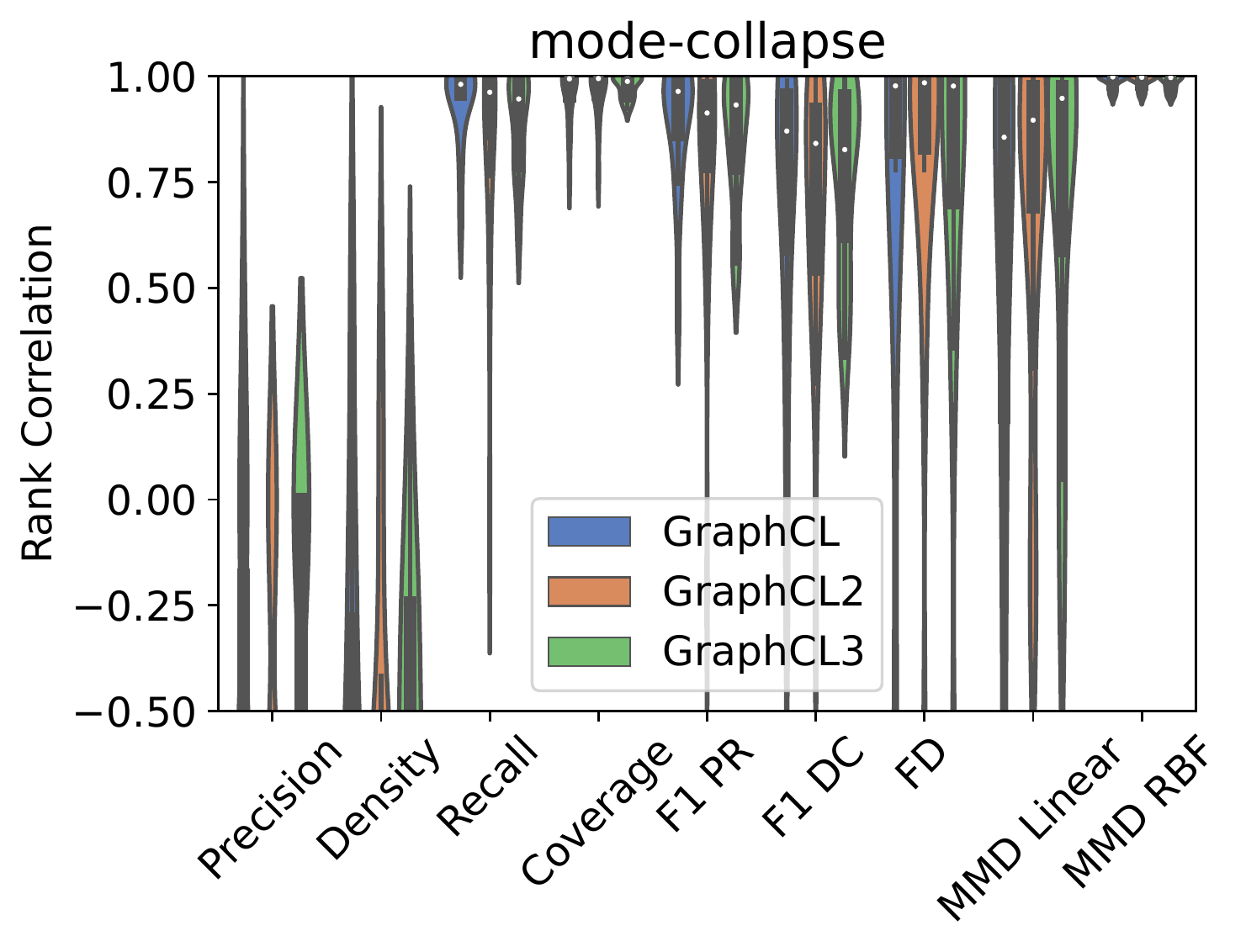}}
\caption{Comparison of the full method on no feature data versus removing layer normalization versus removing subgraph data augmentations on pretraining the network. GraphCL is normal training. In GraphCL2 we do not enforce lipschitzness. In GraphCL3 we remove the subgraph constraints and reduce the probability of node and edge dropping in the augmentations.}
\label{fig:ablation}
\end{figure*}
\paragraph{Contrastive Training with Structural Information of Node Clustering coefficients:} In this experiment we concatenate node degrees and node clustering coefficients for each node feature. Here we can see that adding these features improves the models power for some measurements. However, on small datasets Grid and Lobster results become poorer. Again intuition is more powerful model can easier distinguish main graphs from the fake ones. Break down of the results on the datasets in Appendix \ref{app:further_experiments} proves this point.
\paragraph{Ablation Study:} We analyze how layer normalization and using subgraph augmentations in GraphCL reflects on the final results. We have used no structural feature setup and conducted the experiment for mixing-random and mode-collapse experiments. Figure \ref{fig:ablation}, shows the results on this task. The results prove that both of these improvements are essential for getting better results.

\section{Conclusion}
We have demonstrated that self-supervised pre-training of representations can yield significantly better metrics for graph evaluation than random ones,
particularly when incorporating local graph features with Lipschitz control,
as inspired by theory.
We suggest graph generative modeling papers should consider evaluating with these metrics in addition to or instead of their existing ones.

\begin{ack}
This research was enabled in part by support, computational resources, and services provided by the Canada CIFAR AI Chairs program, the Natural Sciences and Engineering Research Council of Canada, WestGrid, and the Digital Research Alliance of Canada.
\end{ack}

\printbibliography
\newpage

\clearpage
\appendix

\section{Preliminaries}
We now give further description of some background material.
\subsection{Graph Neural Networks}
Suppose we have a set of graphs $\mathbb{G} = \{G_1,\dots\,G_N\}$ where each graph $G=\left(\mathcal{V}, \mathcal{E}, \mathbf{X}, \mathbf{E}\right)$ consists of $|\mathcal{V}|$ nodes and $|\mathcal{E}|$ edges such that an edge $e_{ij} \in \mathcal{E}$ connects nodes $v_i, v_j \in \mathcal{V}$ ($\mathcal{E} \subseteq \mathcal{V} \times \mathcal{V}$).
If the graph contains extra data beyond simply the connectivity structure, initial node features $\textbf{X}\in \mathbb{R}^{|\mathcal{V}| \times d_x}$ and/or edge features $\textbf{E}\in \mathbb{R}^{|\mathcal{V}| \times |\mathcal{V}| \times d_e}$ will also be available, where $d_x$, and $d_e$ denote the dimension of initial node and edge features, respectively.
Message-passing GNNs learn low-dimensional node representations $h_v^{(k)} \in \mathbb{R}^{d_h}, \forall v\in \mathcal{V}$ as follows:
\begin{equation}
    h_v^{(k)}=\phi \left(h_v^{(k-1)}, \bigoplus\limits_{u\in\mathcal{N}(v)} \psi\left(h_v^{(k-1)},h_u^{(k-1)}, e_{uv}  \right) \right)
\end{equation}
where $\mathcal{N}(v)$ is a set of immediate neighbors of node $v$, $\phi$ and $\psi$ are two arbitrary functions, and $\bigoplus$ is the aggregation function. We can use a read-out function to aggregate the learned node representations into a graph representation: $h_G=\mathcal{R}\left(\{h_v|v \in \mathcal{V}\}\right)$. Different instantiations of $\phi, \psi, \bigoplus$ results in different flavors of GNNs such as message passing neural network (MPNN) \cite{gilmer_2017_icml}, graph convolutional network (GCN) \cite{kipf2016semi}, graph attention network (GAT) \cite{velivckovic2017graph}, and graph isomorphism network (GIN) \cite{xu2018powerful}. Particularly, GIN (in which $\phi$ and $\psi$ are designed as MLP and identity functions, respectively) is shown to be as powerful at detecting isomorphism as the standard Weisfeiler-Lehman (WL) test. 

\subsection{Evaluating Graph Generative Models in Latent Space} \label{sec:eval-methods-on-latents}
Suppose we have two sets of graphs $\mathbb{G}^{train} = \{G_1,\dots\,G_S\}$ and $\mathbb{G}^{test} = \{G_1,\dots\,G_M\}$, each sampled from the same data distribution $p\left(G\right)$. Also suppose that we have access to an unconditional graph generative model $g_\phi(.)$, which is trained on $\mathbb{G}^{train}$ to learn the distribution of the observed set of graphs. We sample a set of generated graphs $\mathbb{G}^{gen} = \{G_1,\dots\,G_N\} \sim p_{g_\phi}(G)$ from this model. In order to evaluate the quality of the sampled graphs (i.e., to decide whether the model $g_\phi(.)$ has successfully recovered the underlying distribution $p\left(G\right)$), we can define a measure of divergence $\mathcal{D}\left(\mathbb{G}^{test}, \mathbb{G}^{gen} \right)$ to quantify the discrepancy
between distributions of the real and generated graphs. One robust way to achieve this is
to define the metric on latent vector representation spaces, and expect representations of graphs rather than the original objects. Thus, to use these metrics, we need to train a shared encoder $f_{\theta}(.)$ and then compute the discrepancy as $\mathcal{D}\left(\textbf{H}^{t}, \textbf{H}^{g} \right)$ where $\textbf{H}^{t}=f_{\theta}(\mathbb{G}^{test})\in\mathbb{R}^{M \times d_h}$ and $\textbf{H}^{g}=f_{\theta}(\mathbb{G}^{gen})\in\mathbb{R}^{N \times d_h}$. There are a few such metrics well-studied in visual domains that can differentiate the fidelity and diversity of the model, and which we can adopt in graph domains.

\textbf{Fréchet Distance (FD)} \cite{heusel2017gans} models the graph representations as multivariate Gaussian distributions with mean and covariance $\mu$ and $\textbf{C}$, and then computes the squared Wasserstein-2 distance between them: $\mathcal{D}\left(\textbf{H}^{t}, \textbf{H}^{g} \right)=||\mu_{t}-\mu_{g}||_2^2+\text{Tr}\left(\textbf{C}_t+\textbf{C}_g-2(\textbf{C}_t\textbf{C}_g)^{1/2} \right)$.
FD based on Inception v3 features of images has been shown to correlate with human judgments in some settings. It can also detect intra-class mode collapse \cite{borji2019pros}.
The estimator, however, has low variance but high bias \citep{demystifying}.

\textbf{Precision \& Recall (PR)} \cite{sajjadi2018assessing} measure constructs a hyper-sphere for each representation by extending a radius to its kth nearest neighbor and then aggregates the hyper-spheres to define a manifold. It then computes the precision and recall based on generated samples that fall within the manifold of real samples and vice versa. PR is also shown to correlate with human judgments in visual domain. It can also detect mode collapse and mode dropping.

\textbf{Density \& Coverage (DC)} \citep{naeem2020reliable} is similar to PR but instead of aggregating the hyper-spheres, treats them independently. Density is computed as the average number of real hyper-spheres that a generated sample falls in and Coverage is computed as the average number of generated hyper-spheres that a real samples falls in. DC is shown to be more robust compared to PR.

\textbf{Maximum Mean Discrepancy (MMD)} \cite{gretton:mmd} compares two distributions of any type, based on item-level comparison by a kernel function (e.g., polynomial kernel $\mathcal{K}(x_i, x_j)=(x_i^{\top}x_j+1)^p$, Gaussian kernel $\mathcal{K}(x_i, x_j)=\text{exp}\left(- \tfrac{d(x_i, x_j)^2}{2\sigma^2} \right)$, or linear kernel $\mathcal{K}(x_i, x_j)=x_i^\top x_j$).
The usual estimator is
\begin{equation}
    \text{MMD}\left(\textbf{H}^{t}, \textbf{H}^{g} \right) = \frac{1}{M^2}\sum_{\substack{i,j=1 \\ i\neq j}}^{M}\mathcal{K}\left(h_i^t, h_j^t \right) +
    \frac{1}{N^2}\sum_{\substack{i,j=1 \\ i\neq j}}^{N}\mathcal{K}\left(h_i^g, h_j^g \right) -
    \frac{2}{NM}\sum_{i=1}^{N}\sum_{\substack{j=1 %
    }}^{M}\mathcal{K}\left(h_i^g, h_j^t \right)
.\end{equation}

\section{Proofs}
\label{app:proofs}

\begin{figure}
    \centering
    \includegraphics[width=13cm]{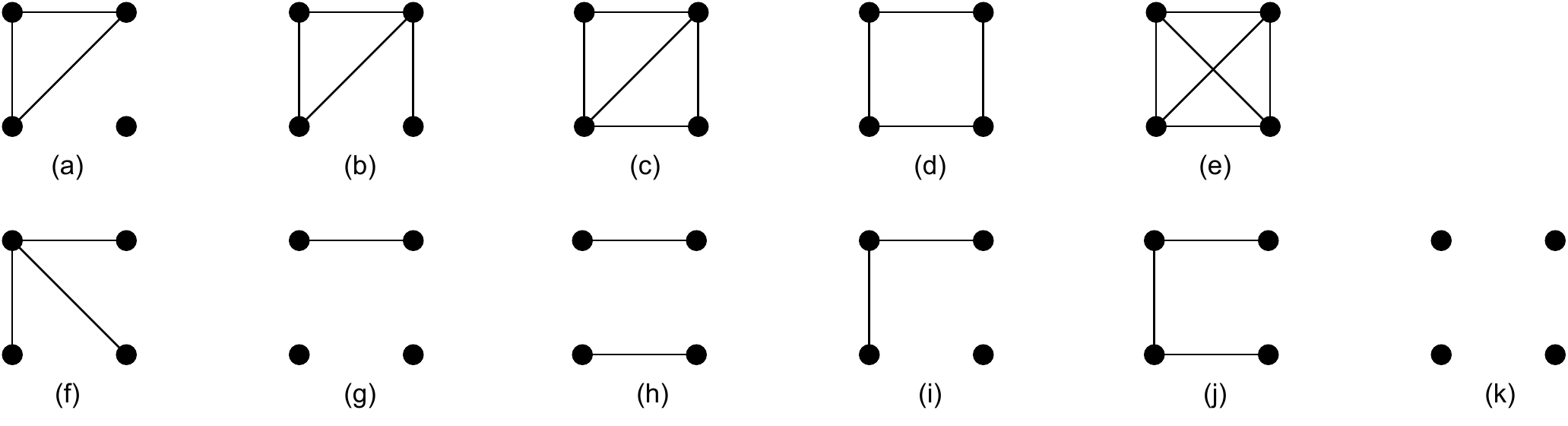}
    \caption{Different possible $4$ node orbits.}
    \label{fig:4_node_orb}
\end{figure}

\begin{figure}
        \centering
        \includegraphics[width=4.5cm]{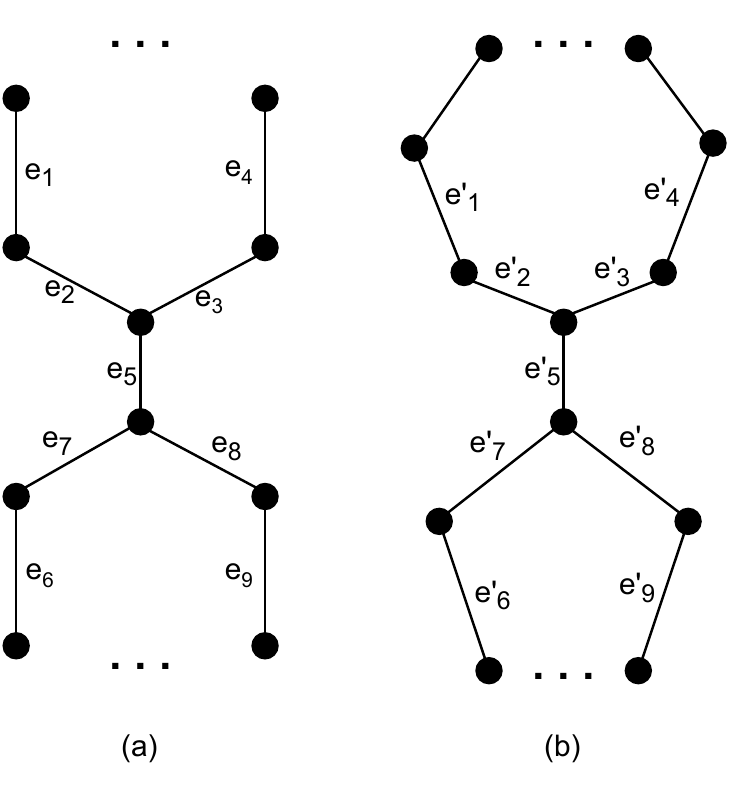}
        \caption{a and b are $\mathcal{C}_{a,b}$ and $\mathcal{C}_{c,d}$ respectively. We use naming from this figure for the proof.}
        \label{fig:iso_proof}
    \end{figure}

Let $\mathcal{C}_{i,j}$ denote the graph constructed by taking disjoint cycles of size $i$ and a cycle of size $j$, then connecting the two with a single edge between one node from each side.
\begin{prop}
For any $a, b, c, d$ satisfying $4<a<c<d<b$ and $a+b=c+d$, $\mathcal{C}_{a,b}$ and $\mathcal{C}_{c,d}$ are not distinguishable using local metrics, but there exists a GNN that can distinguish the two.
\end{prop}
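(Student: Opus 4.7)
The plan is to split the statement into two halves: (i) the standard local metrics fail to separate the graphs, and (ii) some GNN succeeds.

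For (i), the easy pieces dispatch quickly: both graphs have $a+b=c+d$ vertices, two of degree 3 (the bridge endpoints) and the rest of degree 2, so the degree distributions coincide; triangle-freeness (cycles have length $\ge 5$ and the bridge has endpoints in disjoint cycles) gives every clustering coefficient $0$. The 4-node orbit counts are the only substantive check. Triangle-freeness restricts the connected 4-node graphlets to $P_4$, $K_{1,3}$, $C_4$; $C_4$ is ruled out (cycles too long to induce a 4-cycle and the bridge cannot close one) and there is exactly one induced $K_{1,3}$ at each bridge endpoint, whose leaves are its three neighbors. For $P_4$ I would case-split by how the four vertices distribute between the two cycles and obtain $a+b$ within-cycle $P_4$'s (one per quadruple of consecutive cycle vertices) plus $4+4=8$ cross-bridge ones (the 3-in-one-cycle case contributes $2$ per cycle, and the 2-in-each case contributes $2\cdot 2=4$), for a total of $a+b+8=c+d+8$. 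If the local metric is instead the finer per-node orbit-count distribution (as ORCA-based evaluations use), each 4-node orbit count at $v$ depends only on $v$'s 3-hop neighborhood, which in turn depends only on $v$'s distance to the nearest bridge endpoint; both graphs partition their vertices into $(2,4,4,a+b-10)$ at distances $0,1,2,\ge 3$, so the multisets of orbit-count vectors also agree.

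For (ii), because $a<c$ the cycle-length multisets $\{a,b\}$ and $\{c,d\}$ differ, and the bridge edge is a cut edge in each graph so the only simple cycles in $\mathcal{C}_{i,j}$ are the two given ones. I would construct a Graph Substructure Network \citep{bouritsas2022improving} whose initial node features count, for each $5\le n\le\max(b,d)$, whether $v$ lies on a simple $n$-cycle: on $\mathcal{C}_{i,j}$ every vertex of the length-$i$ cycle has a $1$ in coordinate $i$ and $0$ elsewhere. Sum-pooling then produces graph-level features that encode the cycle-length multiset, and these differ between $\mathcal{C}_{a,b}$ and $\mathcal{C}_{c,d}$. As a remark for readers preferring a vanilla message-passing witness, one can show that 1-WL also separates the two graphs once the ``color wave'' emanating from each bridge endpoint has wrapped around its cycle (the $t$-step unfolding tree at a bridge endpoint becomes sensitive to its cycle length for $t\gtrsim\max(a,b,c,d)/2$), and the GIN--1-WL equivalence \citep{xu2018powerful} then provides a plain GIN witness; the smallest instance $(5,9)$ vs $(6,8)$ can be verified by hand at iteration $4$.

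The main obstacle is the $P_4$ enumeration in (i): one must be careful that in the 3-in-cycle-$i$ case a $P_4$ appears only when the three consecutive cycle-$i$ vertices place $u_i$ at an end of the path (placing $u_i$ in the middle yields a $K_{1,3}$, not a $P_4$), and that in the 2-in-each case the contribution decomposes as independent neighbor choices at $u_i$ and $u_j$ giving exactly $2\cdot 2 = 4$ paths. Once that bookkeeping is settled, the per-node statement is a short 3-hop locality observation, and the GSN direction in (ii) is essentially immediate from the fact that the bridge is a cut edge.
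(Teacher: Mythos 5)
Your decomposition into (i) and (ii) matches the paper's, but within (i) you take a genuinely different route: you enumerate connected induced $4$-vertex subgraphs directly (using triangle- and $C_4$-freeness to whittle the possibilities down to $P_4$, $K_{1,3}$, $C_4$ and then counting), whereas the paper sets up an edge bijection $f:E(\mathcal{C}_{a,b})\to E(\mathcal{C}_{c,d})$ and uses a removal/complementary-counting argument that handles all eleven $4$-vertex isomorphism types, including the disconnected ones. Your $P_4$ bookkeeping ($a+b$ within-cycle plus $8$ bridge-crossing paths) checks out, as does the count of exactly two induced claws and zero $C_4$'s.

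The genuine gap is that you never address the disconnected $4$-node graphlets ($2K_2$, $K_2\cup 2K_1$, $P_3\cup K_1$, $\overline{K_4}$) at the graph level, which the paper's Figure~\ref{fig:4_node_orb} and its proof (patterns g, h, i, k) explicitly cover. This is fixable: both graphs share $|V|$, $|E|$, degree sequence, and are triangle- and $C_4$-free, so all non-induced $4$-vertex subgraph counts agree, and M\"obius inversion over the $4$-vertex subgraph poset then forces the induced counts to agree as well. But as written you have only proved equality of connected graphlet counts, so you should either add this inversion step or explicitly scope the claim to ORCA-style connected orbits.

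Your parenthetical per-node argument is actually \emph{wrong} as stated and should be dropped or restricted. The claim that the $3$-hop neighborhood of $v$ ``depends only on $v$'s distance to the nearest bridge endpoint'' fails for small cycles allowed by the hypotheses ($a\ge 5$): in $\mathcal{C}_{6,8}$ the vertex antipodal to the bridge endpoint in the $C_6$ has $3$-hop ball isomorphic to the whole $C_6$, not to a $P_7$, while a vertex at distance $3$ in the $C_8$ has a path-shaped ball. Similarly your coarse partition $(2,4,4,a+b-10)$ collapses distances $\ge 3$ even though the finer distance distributions (and hence the $3$-hop ball types) differ between $\mathcal{C}_{a,b}$ and $\mathcal{C}_{c,d}$. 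The paper makes no per-node claim, so this attempted strengthening needs its own proof or should be cut.

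For (ii), the GSN witness is technically a GNN that separates the graphs, but since your hand-engineered cycle-length input features already distinguish $\{a,b\}$ from $\{c,d\}$ before any message passing occurs, it does not really speak to the proposition's intended contrast between learned message passing and fixed local statistics. Your ``remark'' route (1-WL/GIN via \citet{xu2018powerful}) is the one that matches the paper's proof, which argues that after $a$ rounds the color at the degree-$3$ vertex encodes the short cycle's length; your estimate $t\gtrsim\max(a,b,c,d)/2$ is correct but more conservative than necessary, as roughly $\lceil a/2\rceil$ rounds already suffice.
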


\begin{proof}
First we will prove that local metrics can not identify these two class of graphs from each other:

Both graphs have $a+b-2$ nodes of degree $2$ and $2$ nodes of degree $3$ so the distribution of degrees is equal in both graphs. For clustering coefficient, none of the graphs has a cycle of size $3$, so it is $0$ for both graphs.

We will count the number of different possible orbits of $4$ nodes (all shown in \cref{fig:4_node_orb}), and show the count is the same between two graphs.

Orbits a-e occur in neither of the two graphs, so their counts are the same.

Orbit f appears just twice in both graphs.

For g and h: Consider an edge in an arbitrary graph, if we remove the edge and all nodes connected to either sides of the edge, in the remaining graph selection of each two connected nodes with the first edge will make pattern h, and two non-connected with the removed edge will make pattern g. And these are all patterns g and h that we can see in these graphs. Now, let us assume a mapping from edges of $\mathcal{C}_{a,b}$ to $\mathcal{C}_{c,d}$, by $e_i \leftrightarrow e'_i$ for the named edges in figure~\ref{fig:iso_proof}. And assume any arbitrary mapping for other edges. If we call this mapping $f$, we can see that removing edge $e$ and all its neighbors, leaves the same number of edges and nodes as removing $f(e)$ and its neighbors for the second graph. This results as having the same number of patterns of g and h in both graphs.

For pattern i, if we assume two connected edges and again remove all the nodes connected to the nodes of these edges, any of the remaining nodes + two removed edges will shape pattern i. Again by the same mapping $f$, we can see that removing any of these two connected edges from the first graph leaves the same number of nodes as removing from the second graph.

For pattern j, we can divide these patterns into two groups: 1) Patterns that do not include $e_5$ or $e'_5$. 2) Patterns that include these edges. For the first 1, we can assume we have two separate cycles on each of the graphs. Each cycle of size $n$ has $n$ of these patterns, so first graph has a total of $a+b$ of these patterns, and the second graph has a total of $c+d = a+b$ of these patterns. So, these class of patterns have same amount of repetition in two graphs. For (2), we can easily see from the~\ref{fig:iso_proof} there are $8$ of them in each of the graphs. So, the number of patterns of this type is also equal in both graphs.

For pattern k, as two graphs have same number of nodes, we have $n\choose{k}$ possible selection of four nodes, and as the number of patterns of type a-j is equal between the graphs, pattern k should also have same number of repeat in both graphs.

As a result, the distribution of local metrics is exactly same between two classes of graphs, and thus these two graphs can not be distinguished using these metrics.

Now, we prove that with a deep enough GNN, and good set of weights, we can distinguish these two graphs. Assume all nodes have no information more than their degrees, at first all of the nodes except two sides of bridge in each graph have the same message. There are two messages of degree 3, and rest of degree 2. In cycle with size $a$, after $a$ steps, the initial node with degree 3, will finally receive a message that has started from itself, and so it would be different from any message that are already spread in 3 other circles. As a result, there are at least two nodes with different messages between two graphs after $a$ steps. Thus two graphs are distinguishable using this GNN.
\end{proof}

For \cref{fig:local_iso} (b), it is well-known that these two graphs can not be identified using WL-test, and thus results of \citet{xu2018powerful} imply that GNNs of order $1$ can not distinguish these two graphs. On the other hand, for local structures, one of the graphs has two triangles and the other one has none. As a result clustering coefficients of the nodes will be different.

\section{Implementation Details and Hyperparameters}
\label{app:hyperparam}

In addition to the details here,
code is available in the supplementary materials, and will be made public with release of the paper.

 \subsection{Training Self-Supervised Methods}
 For training GraphCL networks, we use node dropping and edge dropping probabilities of $0.1$ for both. We use random walks of length $10$ as subgraphs. We do not use node feature masking augmentations; here essentially all the datasets except for the ZINC do not have any node features, and we use constant value of $1.0$ for their node features. Removing just constant would leave some nodes without any features. For the experiments with structural features, the point was to see how the structural features work, as a result again we do not use feature dropping here as well. During the training, we use structural features as default node features; this means that after augmentations like node/edge dropping, we do not recalculate the features for the new graph. This makes training process faster, but can affect the learning process in GraphCL only.

The hyperparameters used for GINs are provided in \ref{table:hyperparams}. These hyperparameters are shared among GraphCL, InfoGraph and Random GIN models. For other hyperparameters of the self-supervised training we follow the instructions from \citet{you2020graph, sun2019infograph}. We train the self-supervised methods and evaluate them based on their criteria loss. After finding proper hyperparameters for that dataset, we train the model with those hyperparameters and among all of the experiments for that dataset we use the same model (no retraining for each experiment).

\begin{table}[h!]
\centering
\caption{Hyperparameters of the networks.}
\label{table:hyperparams}
\begin{tabular}{lcccccc} 
\toprule
                      & \textbf{Lobster} & \textbf{Grid} & \textbf{Community} & \textbf{Ego} & \textbf{Proteins} & \textbf{ZINC}  \\ 
\midrule
Number of GIN Layers  & 2                & 3             & 3                  & 3            & 3                 & 3              \\
GIN Hidden Layer Size & 16               & 16            & 32                 & 32           & 32                & 32             \\
Epochs                & 40               & 20            & 100                & 100          & 100               & 100            \\
Lipchitz Factor       & 1.0              & 1.0           & 1.0                & 1.0          & 1.0               & 1.0            \\
\bottomrule
\end{tabular}
\end{table}

\subsection{Random GINs}
Experiments in \citep{thompson2022evaluation} show that results on random GINs tend to have very small variance among different setups of the structures of the GINs. For fair evaluation, we used the same GIN hyperparameters among all of the models for the same dataset. We use batch norm and orthogonal initialization as instructed in the main paper. For each experiment with a different seed, we use a different randomly initialized GIN. For experiments with structural features, we give the same features as input for the Random GIN as well. Random GIN with no structural features will be the same as the method proposed in \citep{thompson2022evaluation}. %

\subsection{Benchmark Experiment Setups}
We follow the exact instruction and \hyperlink{https://github.com/uoguelph-mlrg/ggm-metrics}{public code} released for \citep{thompson2022evaluation} for the benchmarks. Step sizes for increasing the $r$ is set to $0.01$. For mixing random we remove $r$ ratio of the graphs and for each Graph $G = (V,E)$ that has been removed we replace it with and Erdős–Rényi (ER) graph with same number of nodes and probability of connection as $p=\frac{|E|}{\binom{|V|}{2}}$. Rewiring probabilities are increasing from $0.0$ to $1.0$ with step size of $0.01$. The number of steps for mode-dropping/mode-collapse experiments is as the number of clusters found by the clustering algorithm. Experiments for no structural features and with degree features are conducted with ten random seed for each dataset. For the clustering features we have used three random seed per dataset. Ablation studies are done using $5$ random seeds per dataset.

\section{Further Experiment Results}
\label{app:further_experiments}

\Cref{table:dataset} gives details of the datasets.
\cref{fig:no_struct_feats_community,fig:no_struct_feats_ego,fig:no_struct_feats_lobster,fig:no_struct_feats_proteins,fig:no_struct_feats_zinc,fig:no_struct_feats_grid,table:no_struct_feats_community,table:no_struct_feats_ego,table:no_struct_feats_lobster,table:no_struct_feats_proteins,table:no_struct_feats_zinc,table:no_struct_feats_grid}
break down the experiments with no structural features on individual datasets,
\cref{fig:deg_feats_community,fig:deg_feats_ego,fig:deg_feats_lobster,fig:deg_feats_proteins,fig:deg_feats_zinc,fig:deg_feats_grid,table:deg_feats_community,table:deg_feats_ego,table:deg_feats_lobster,table:deg_feats_proteins,table:deg_feats_zinc,table:deg_feats_grid} those with degree features,
and 
\cref{fig:clustering_feats_community,fig:clustering_feats_ego,fig:clustering_feats_lobster,fig:clustering_feats_proteins,fig:clustering_feats_zinc,fig:clustering_feats_grid,table:clustering_feats_community,table:clustering_feats_ego,table:clustering_feats_lobster,table:clustering_feats_proteins,table:clustering_feats_zinc,table:clustering_feats_grid} those with clustering features.

\begin{table}[h!]

\caption{Statistics of graph datasets.}
\label{table:dataset}
\begin{center}
\begin{small}
\begin{sc}
\begin{tabular}{lccc|ccc}
\toprule
& \multicolumn{3}{c}{\textit{Synthetic}} & \multicolumn{3}{c}{\textit{Real-World}}\\
\cline{2-7}
&                  \textbf{Lobster} & \textbf{Grid} &  \textbf{Community}&  \textbf{Ego} & \textbf{Proteins} & \textbf{ZINC}\\
\midrule
$|$ Graphs$|$   & 100	     & 100	    & 500     &  757 		& 918	 & 1000		\\			
$|$ Nodes$|$    & 10-100   &  100-400  &   60-160  &  50-399    &  100-500  & 10-50   \\
$|$ Edges$|$     & 10-100   &  360-1368  &   300-1800  &  57-1071  & 186-1575   & 22-82    \\
$|$ Node Features$|$ & 0  & 0  &   0  &  0	 &  0 & 28 \\
$|$ Edge Features$|$ & 0  & 0  &   0  &  0	 &  0 & 4 \\
\bottomrule
\end{tabular}
\end{sc}	
\end{small}
\end{center}
\end{table}

\begin{figure*}[h!]
    \captionsetup[subfloat]{farskip=-2pt,captionskip=-8pt}
    \centering
    \subfloat[][]{\includegraphics[width = 2.55in]{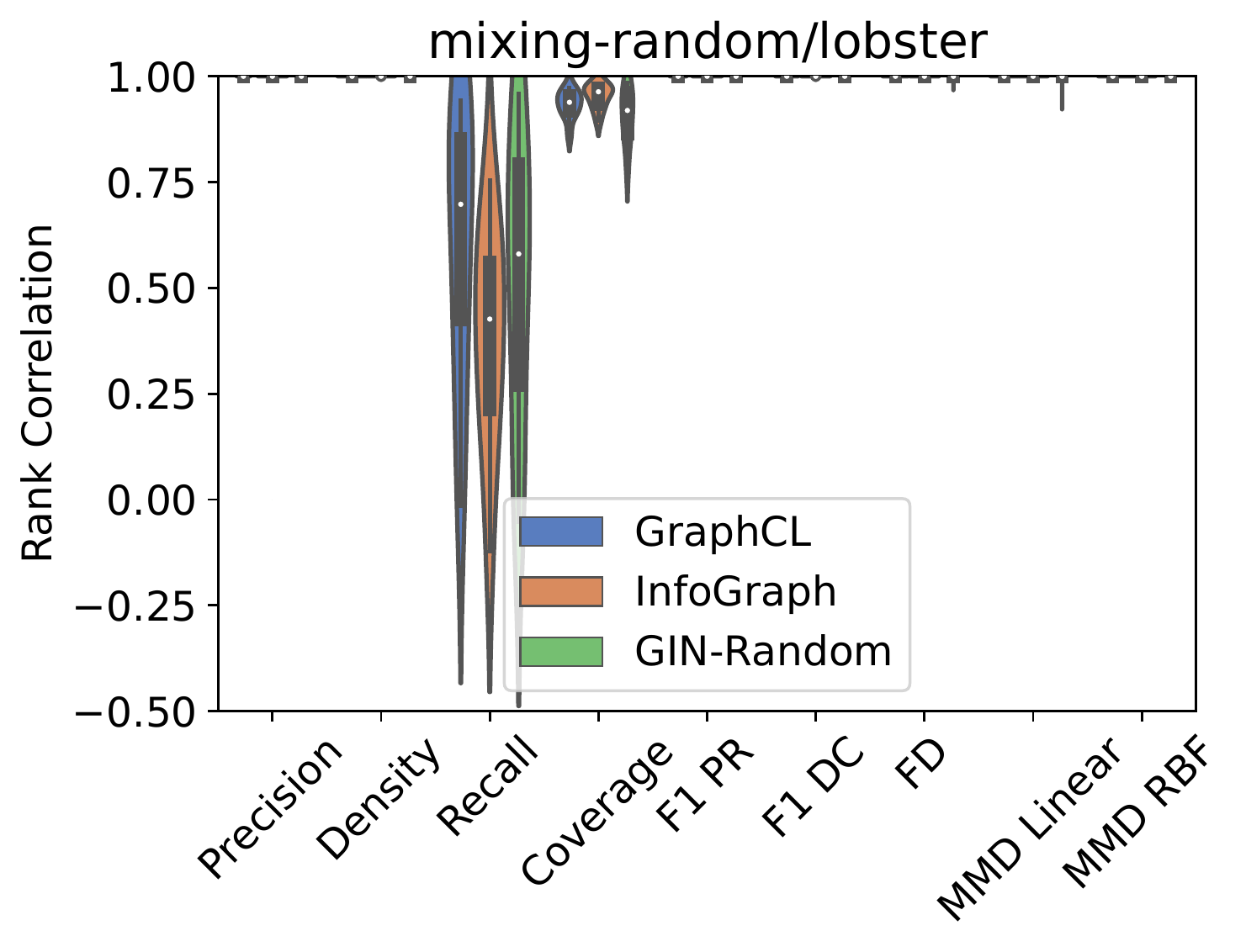}}
    \subfloat[][]{\includegraphics[width = 2.55in]{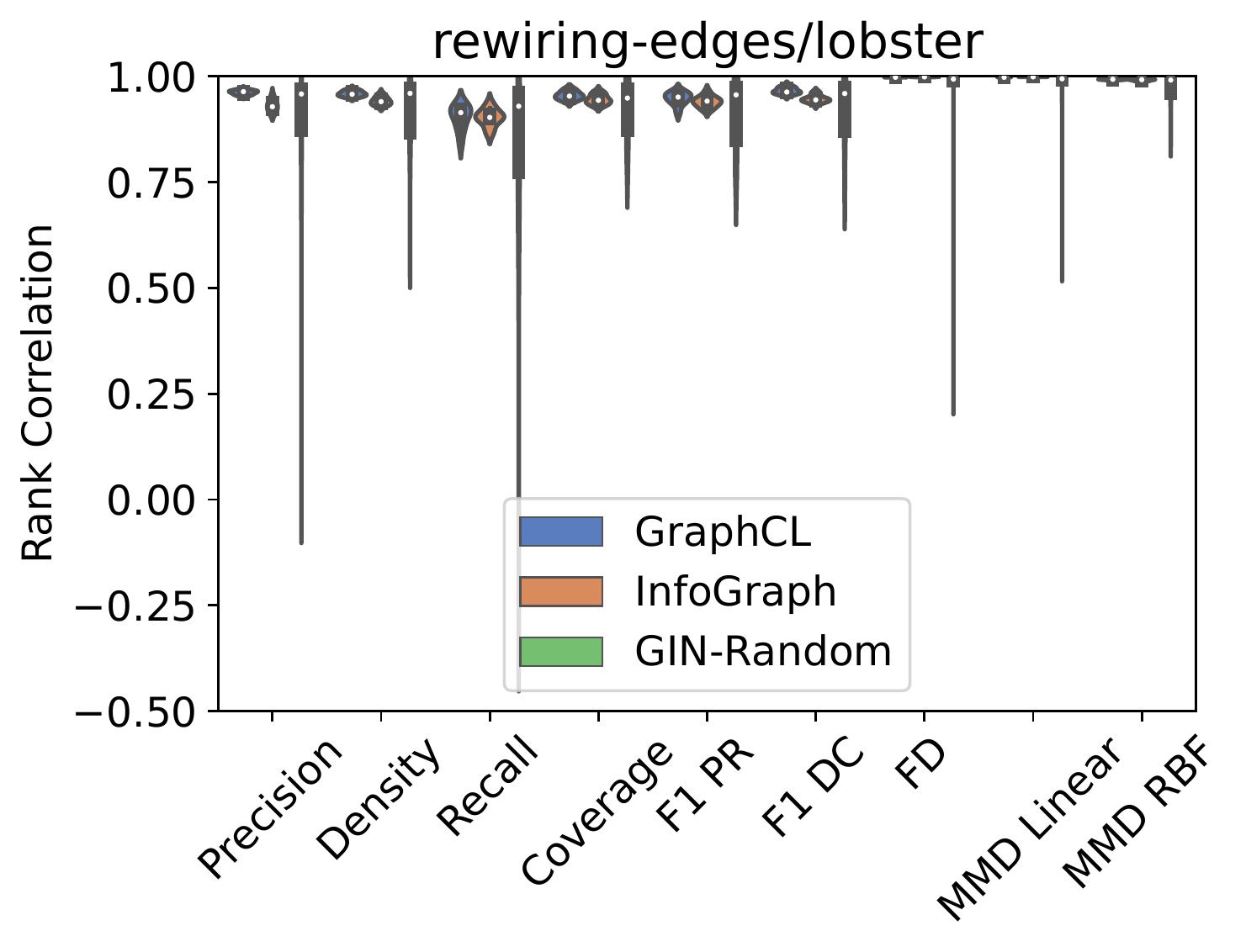}}
    \\
    \subfloat[][]{\includegraphics[width = 2.55in]{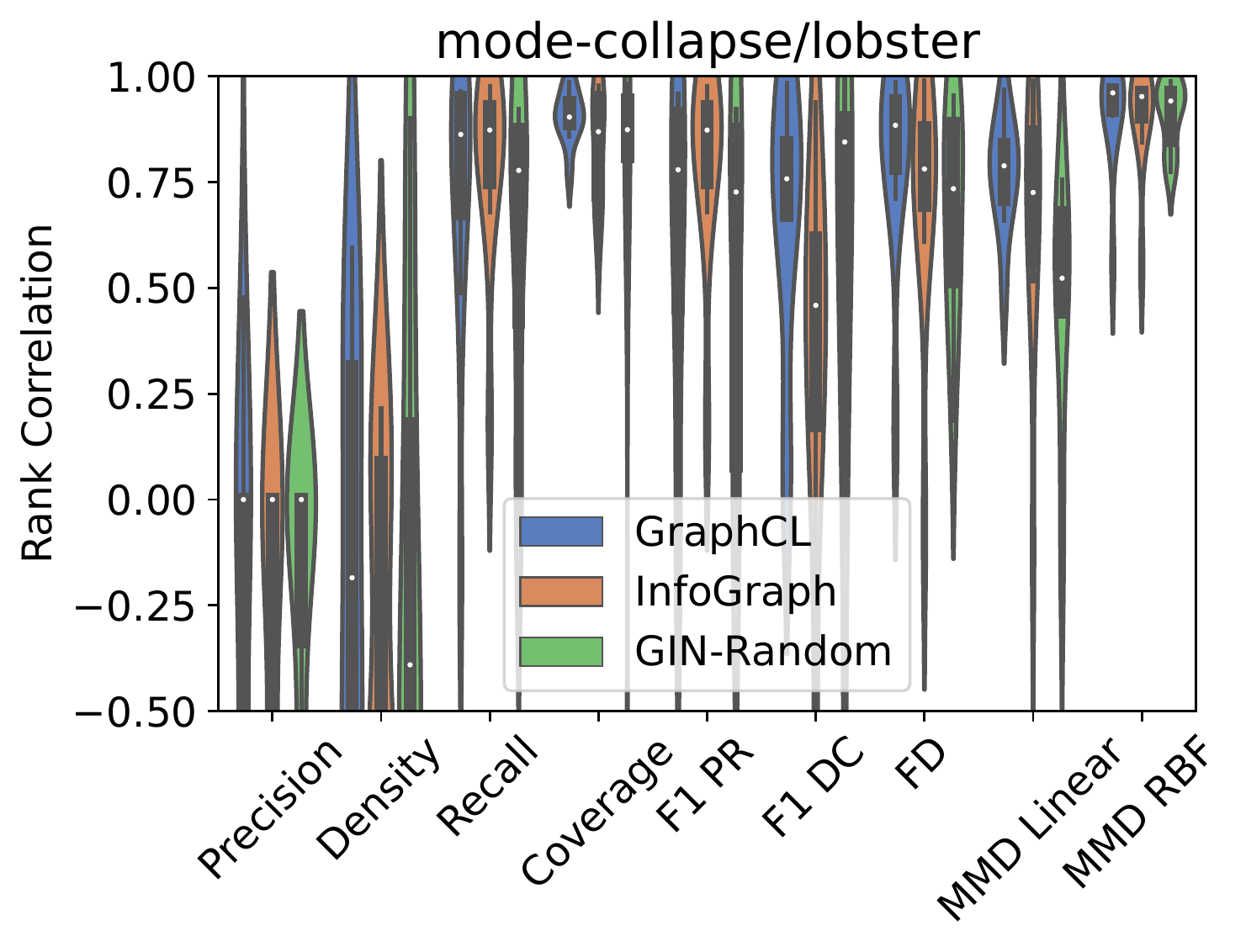}}
    \subfloat[][]{\includegraphics[width = 2.55in]{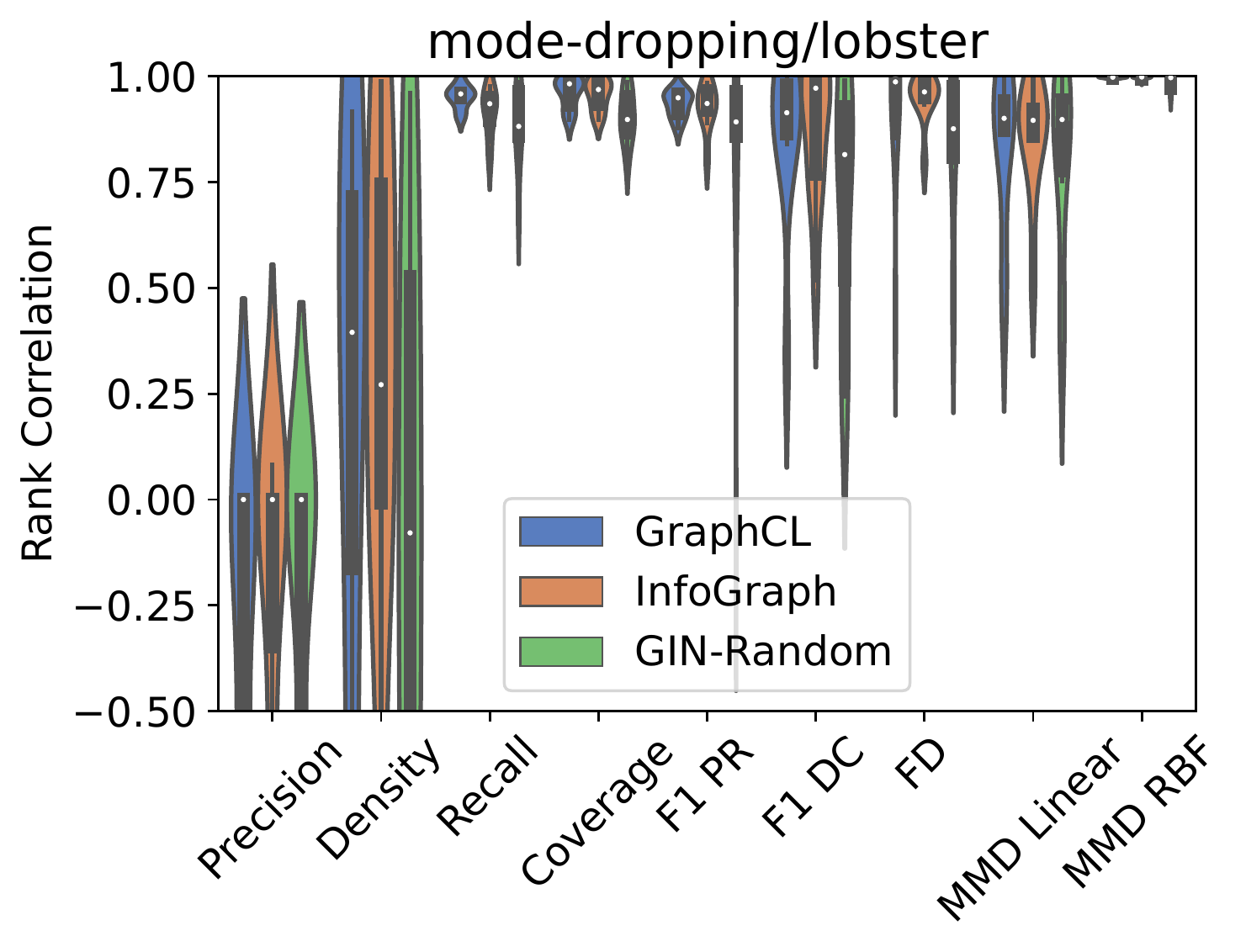}}
    \caption{Violing comparative results among the methods, with no structural features for lobster dataset.}
    \label{fig:no_struct_feats_lobster}
\end{figure*}
 
\begin{table}[h!]
\centering
\begin{small}
\scalebox{0.7}{
\begin{tabular}{l|l|l|l|l|l|l|l|l|l|l|l} 
\toprule
Model Name               & Experiment                      &        & Precision & Density & Recall & Coverage & F1PR & F1DC & FD & MMD Lin & MMD RBF  \\ 
\hline
\multirow{8}{*}{GraphCL}  &    \multirow{2}{*}{Mixing Random} & Mean & 1.0 & 1.0 & 0.6 & 0.93 & 1.0 & 1.0 & 1.0 & 1.0 & 1.0 \\
\cline{3-12}
                   &     & Median & 1.0 & 1.0 & 0.7 & 0.94 & 1.0 & 1.0 & 1.0 & 1.0 & 1.0 \\ 
\cline{2-12}
 &    \multirow{2}{*}{Rewiring Edges} & Mean & 0.96 & 0.96 & 0.9 & 0.95 & 0.95 & 0.97 & 1.0 & 1.0 & 0.99 \\
\cline{3-12}
                   &     & Median & 0.96 & 0.96 & 0.91 & 0.95 & 0.95 & 0.96 & 1.0 & 1.0 & 0.99 \\ 
\cline{2-12}
 &    \multirow{2}{*}{Mode Collapse} & Mean & -0.31 & -0.13 & 0.73 & 0.9 & 0.59 & 0.66 & 0.8 & 0.77 & 0.91 \\
\cline{3-12}
                   &     & Median & 0.0 & -0.18 & 0.86 & 0.9 & 0.78 & 0.76 & 0.88 & 0.79 & 0.96 \\ 
\cline{2-12}
 &    \multirow{2}{*}{Mode Dropping} & Mean & -0.29 & 0.17 & 0.95 & 0.96 & 0.93 & 0.86 & 0.91 & 0.83 & 1.0 \\
\cline{3-12}
                   &     & Median & 0.0 & 0.4 & 0.96 & 0.98 & 0.95 & 0.91 & 0.99 & 0.9 & 1.0 \\ 
\cline{1-12}
\multirow{8}{*}{InfoGraph}  &    \multirow{2}{*}{Mixing Random} & Mean & 1.0 & 1.0 & 0.38 & 0.95 & 1.0 & 1.0 & 1.0 & 1.0 & 1.0 \\
\cline{3-12}
                   &     & Median & 1.0 & 1.0 & 0.43 & 0.96 & 1.0 & 1.0 & 1.0 & 1.0 & 1.0 \\ 
\cline{2-12}
 &    \multirow{2}{*}{Rewiring Edges} & Mean & 0.93 & 0.94 & 0.9 & 0.95 & 0.94 & 0.95 & 1.0 & 1.0 & 0.99 \\
\cline{3-12}
                   &     & Median & 0.93 & 0.94 & 0.9 & 0.94 & 0.94 & 0.94 & 1.0 & 1.0 & 0.99 \\ 
\cline{2-12}
 &    \multirow{2}{*}{Mode Collapse} & Mean & -0.32 & -0.41 & 0.79 & 0.83 & 0.79 & 0.38 & 0.71 & 0.67 & 0.9 \\
\cline{3-12}
                   &     & Median & 0.0 & -0.59 & 0.87 & 0.87 & 0.87 & 0.46 & 0.78 & 0.73 & 0.95 \\ 
\cline{2-12}
 &    \multirow{2}{*}{Mode Dropping} & Mean & -0.21 & 0.27 & 0.92 & 0.96 & 0.93 & 0.87 & 0.95 & 0.85 & 1.0 \\
\cline{3-12}
                   &     & Median & 0.0 & 0.27 & 0.94 & 0.97 & 0.94 & 0.97 & 0.96 & 0.9 & 1.0 \\ 
\cline{1-12}
\multirow{8}{*}{GIN-Random}  &    \multirow{2}{*}{Mixing Random} & Mean & 1.0 & 1.0 & 0.51 & 0.91 & 1.0 & 1.0 & 1.0 & 0.99 & 1.0 \\
\cline{3-12}
                   &     & Median & 1.0 & 1.0 & 0.58 & 0.92 & 1.0 & 1.0 & 1.0 & 1.0 & 1.0 \\ 
\cline{2-12}
 &    \multirow{2}{*}{Rewiring Edges} & Mean & 0.86 & 0.9 & 0.79 & 0.92 & 0.91 & 0.92 & 0.94 & 0.96 & 0.96 \\
\cline{3-12}
                   &     & Median & 0.96 & 0.96 & 0.93 & 0.95 & 0.96 & 0.96 & 0.99 & 0.99 & 0.99 \\ 
\cline{2-12}
 &    \multirow{2}{*}{Mode Collapse} & Mean & -0.21 & -0.18 & 0.61 & 0.74 & 0.5 & 0.5 & 0.67 & 0.46 & 0.91 \\
\cline{3-12}
                   &     & Median & 0.0 & -0.39 & 0.78 & 0.87 & 0.73 & 0.84 & 0.73 & 0.52 & 0.94 \\ 
\cline{2-12}
 &    \multirow{2}{*}{Mode Dropping} & Mean & -0.23 & -0.03 & 0.88 & 0.9 & 0.81 & 0.71 & 0.82 & 0.81 & 0.99 \\
\cline{3-12}
                   &     & Median & 0.0 & -0.08 & 0.88 & 0.9 & 0.89 & 0.82 & 0.88 & 0.9 & 1.0 \\ \bottomrule
\end{tabular}
}
\end{small}
\caption{Mean and median values for measurements in experiments with no structural features by models.} 
\label{table:no_struct_feats_lobster}
\end{table}
 
\begin{figure*}[h!]
    \captionsetup[subfloat]{farskip=-2pt,captionskip=-8pt}
    \centering
    \subfloat[][]{\includegraphics[width = 2.55in]{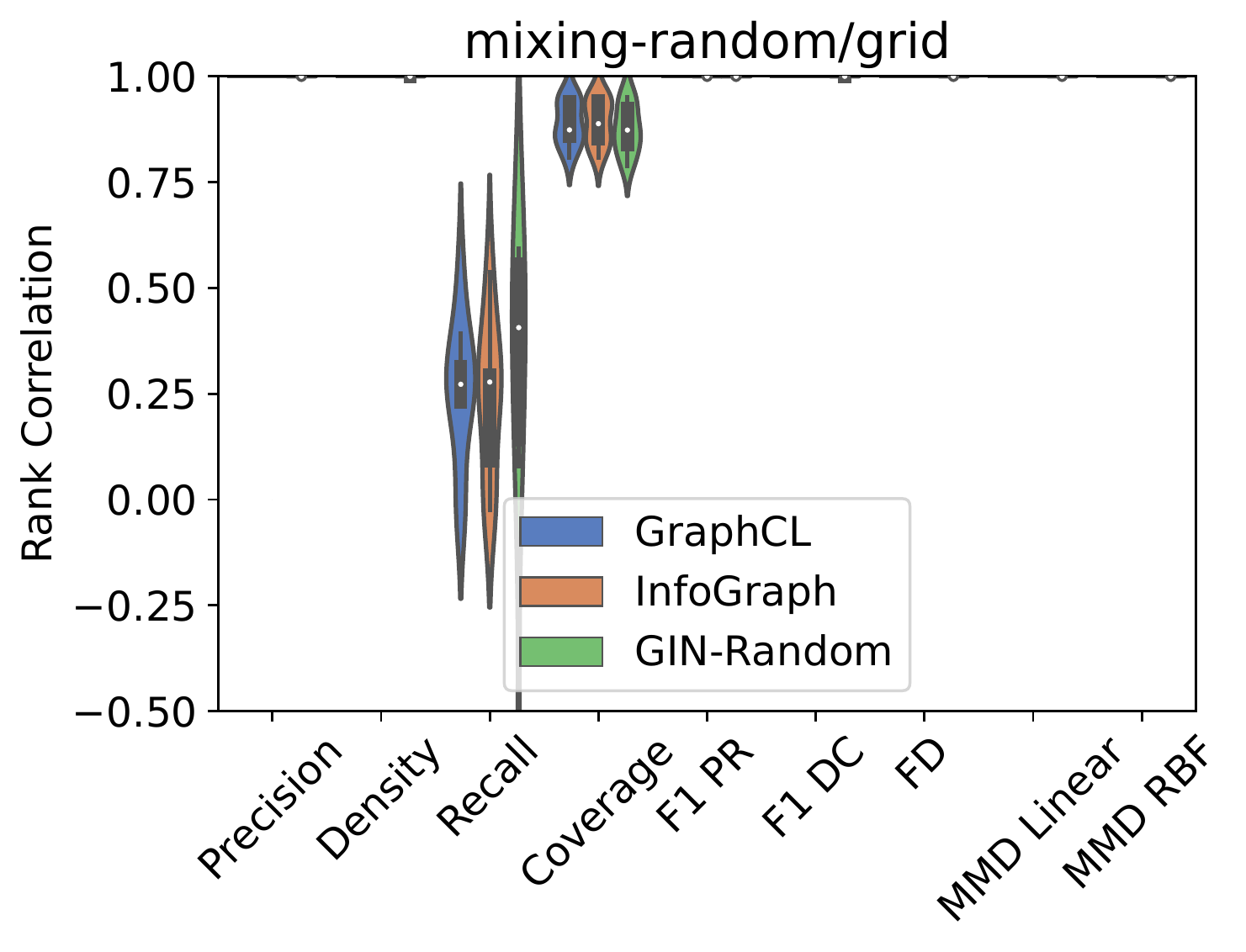}}
    \subfloat[][]{\includegraphics[width = 2.55in]{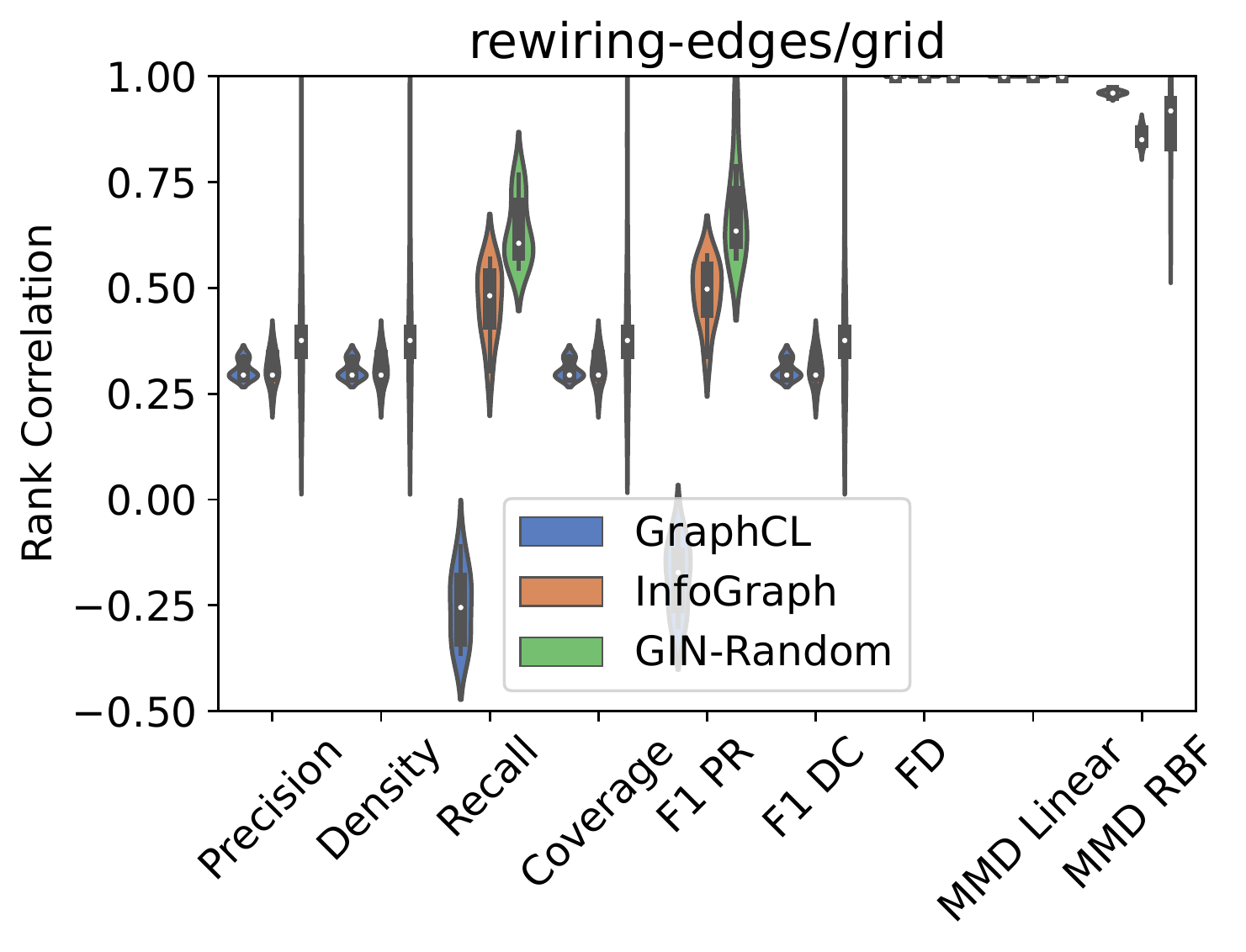}}
    \\
    \subfloat[][]{\includegraphics[width = 2.55in]{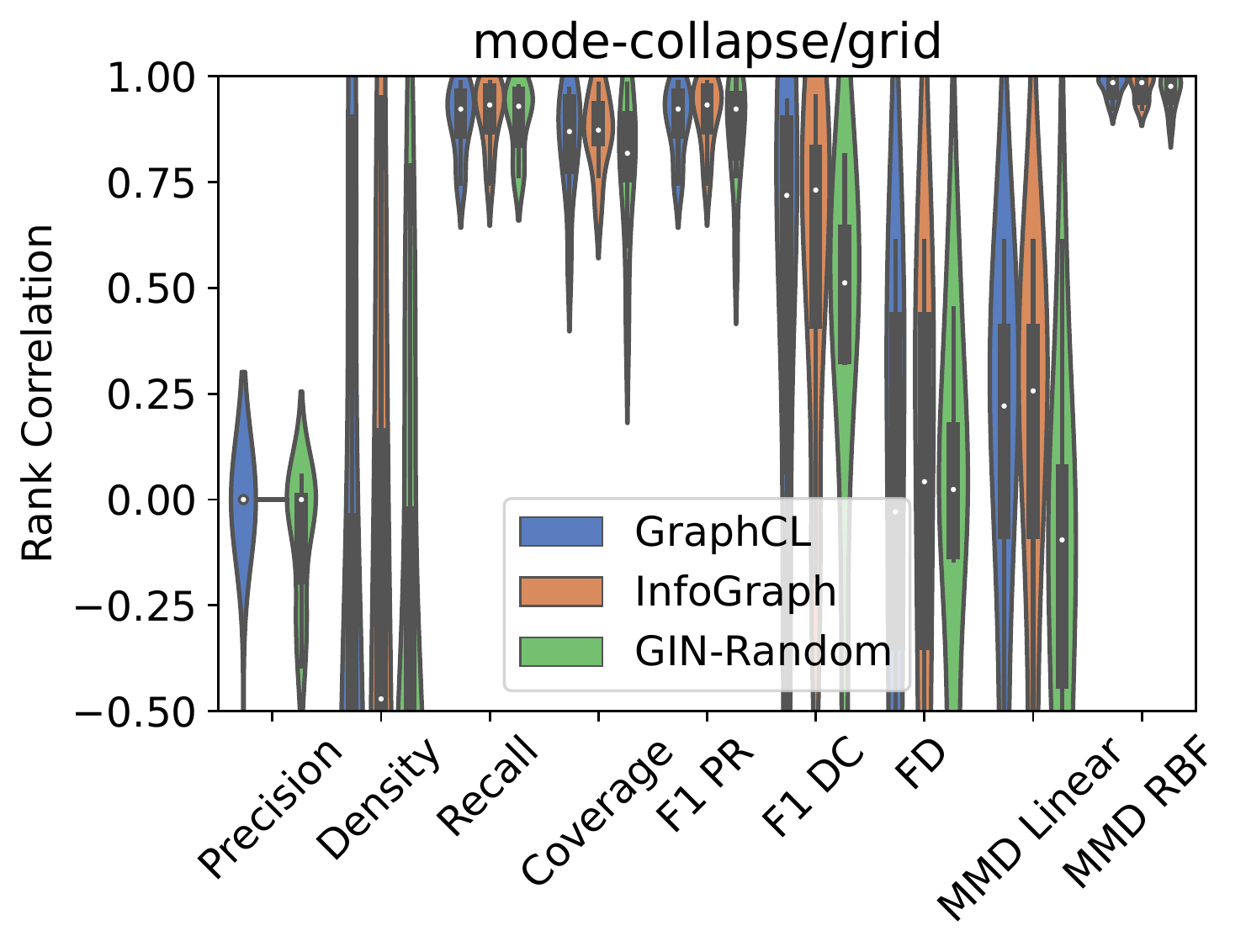}}
    \subfloat[][]{\includegraphics[width = 2.55in]{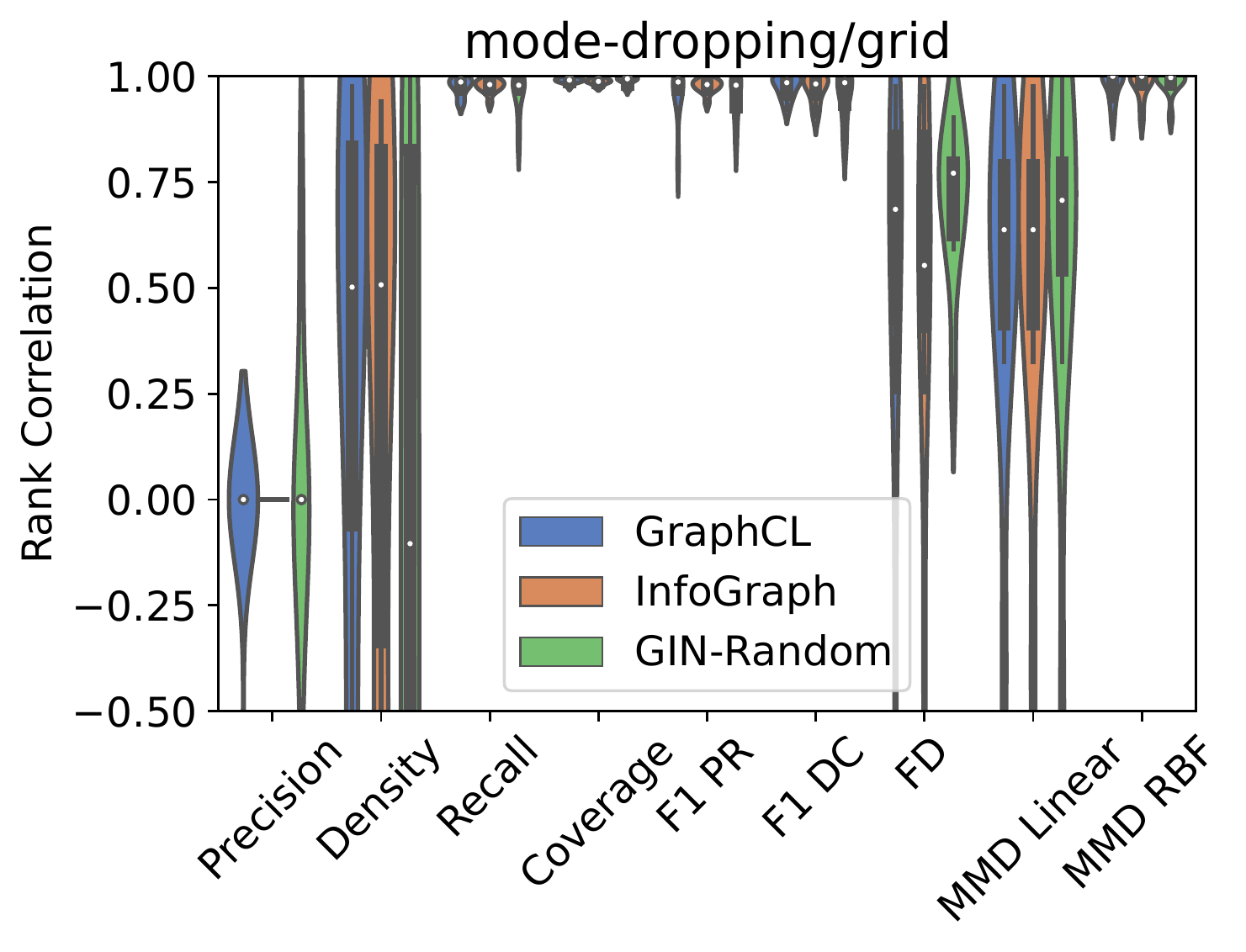}}
    \caption{Violing comparative results among the methods, with no structural features for grid dataset.}
    \label{fig:no_struct_feats_grid}
\end{figure*}
 
\begin{table}[h!]
\centering
\begin{small}
\scalebox{0.7}{
\begin{tabular}{l|l|l|l|l|l|l|l|l|l|l|l} 
\toprule
Model Name               & Experiment                      &        & Precision & Density & Recall & Coverage & F1PR & F1DC & FD & MMD Lin & MMD RBF  \\ 
\hline
\multirow{8}{*}{GraphCL}  &    \multirow{2}{*}{Mixing Random} & Mean & 1.0 & 1.0 & 0.25 & 0.89 & 1.0 & 1.0 & 1.0 & 1.0 & 1.0 \\
\cline{3-12}
                   &     & Median & 1.0 & 1.0 & 0.27 & 0.87 & 1.0 & 1.0 & 1.0 & 1.0 & 1.0 \\ 
\cline{2-12}
 &    \multirow{2}{*}{Rewiring Edges} & Mean & 0.31 & 0.31 & -0.25 & 0.31 & -0.19 & 0.31 & 1.0 & 1.0 & 0.96 \\
\cline{3-12}
                   &     & Median & 0.29 & 0.29 & -0.25 & 0.29 & -0.17 & 0.29 & 1.0 & 1.0 & 0.96 \\ 
\cline{2-12}
 &    \multirow{2}{*}{Mode Collapse} & Mean & -0.08 & -0.35 & 0.89 & 0.85 & 0.89 & 0.46 & -0.05 & 0.09 & 0.98 \\
\cline{3-12}
                   &     & Median & 0.0 & -0.7 & 0.92 & 0.87 & 0.92 & 0.72 & -0.03 & 0.22 & 0.99 \\ 
\cline{2-12}
 &    \multirow{2}{*}{Mode Dropping} & Mean & -0.08 & 0.3 & 0.98 & 0.99 & 0.96 & 0.98 & 0.57 & 0.56 & 0.98 \\
\cline{3-12}
                   &     & Median & 0.0 & 0.5 & 0.99 & 0.99 & 0.99 & 0.98 & 0.69 & 0.64 & 1.0 \\ 
\cline{1-12}
\multirow{8}{*}{InfoGraph}  &    \multirow{2}{*}{Mixing Random} & Mean & 1.0 & 1.0 & 0.24 & 0.89 & 1.0 & 1.0 & 1.0 & 1.0 & 1.0 \\
\cline{3-12}
                   &     & Median & 1.0 & 1.0 & 0.28 & 0.89 & 1.0 & 1.0 & 1.0 & 1.0 & 1.0 \\ 
\cline{2-12}
 &    \multirow{2}{*}{Rewiring Edges} & Mean & 0.31 & 0.31 & 0.47 & 0.31 & 0.49 & 0.31 & 1.0 & 1.0 & 0.86 \\
\cline{3-12}
                   &     & Median & 0.29 & 0.29 & 0.48 & 0.29 & 0.5 & 0.29 & 1.0 & 1.0 & 0.85 \\ 
\cline{2-12}
 &    \multirow{2}{*}{Mode Collapse} & Mean & 0.0 & -0.23 & 0.91 & 0.87 & 0.91 & 0.54 & -0.03 & 0.1 & 0.97 \\
\cline{3-12}
                   &     & Median & 0.0 & -0.47 & 0.93 & 0.87 & 0.93 & 0.73 & 0.04 & 0.26 & 0.99 \\ 
\cline{2-12}
 &    \multirow{2}{*}{Mode Dropping} & Mean & 0.0 & 0.24 & 0.98 & 0.99 & 0.98 & 0.97 & 0.54 & 0.56 & 0.98 \\
\cline{3-12}
                   &     & Median & 0.0 & 0.51 & 0.98 & 0.99 & 0.98 & 0.98 & 0.55 & 0.64 & 1.0 \\ 
\cline{1-12}
\multirow{8}{*}{GIN-Random}  &    \multirow{2}{*}{Mixing Random} & Mean & 1.0 & 1.0 & 0.28 & 0.87 & 1.0 & 1.0 & 1.0 & 1.0 & 1.0 \\
\cline{3-12}
                   &     & Median & 1.0 & 1.0 & 0.41 & 0.87 & 1.0 & 1.0 & 1.0 & 1.0 & 1.0 \\ 
\cline{2-12}
 &    \multirow{2}{*}{Rewiring Edges} & Mean & 0.46 & 0.46 & 0.63 & 0.46 & 0.68 & 0.46 & 1.0 & 1.0 & 0.88 \\
\cline{3-12}
                   &     & Median & 0.38 & 0.38 & 0.61 & 0.38 & 0.63 & 0.38 & 1.0 & 1.0 & 0.92 \\ 
\cline{2-12}
 &    \multirow{2}{*}{Mode Collapse} & Mean & -0.09 & -0.41 & 0.9 & 0.79 & 0.87 & 0.38 & -0.01 & -0.16 & 0.96 \\
\cline{3-12}
                   &     & Median & 0.0 & -0.71 & 0.93 & 0.82 & 0.92 & 0.51 & 0.02 & -0.1 & 0.98 \\ 
\cline{2-12}
 &    \multirow{2}{*}{Mode Dropping} & Mean & 0.05 & 0.01 & 0.96 & 0.99 & 0.96 & 0.96 & 0.71 & 0.58 & 0.98 \\
\cline{3-12}
                   &     & Median & 0.0 & -0.1 & 0.98 & 0.99 & 0.98 & 0.98 & 0.77 & 0.71 & 1.0 \\ \bottomrule
\end{tabular}
}
\end{small}
\caption{Mean and median values for measurements in experiments with no structural features by models.} 
\label{table:no_struct_feats_grid}
\end{table}
 
\begin{figure*}[h!]
    \captionsetup[subfloat]{farskip=-2pt,captionskip=-8pt}
    \centering
    \subfloat[][]{\includegraphics[width = 2.55in]{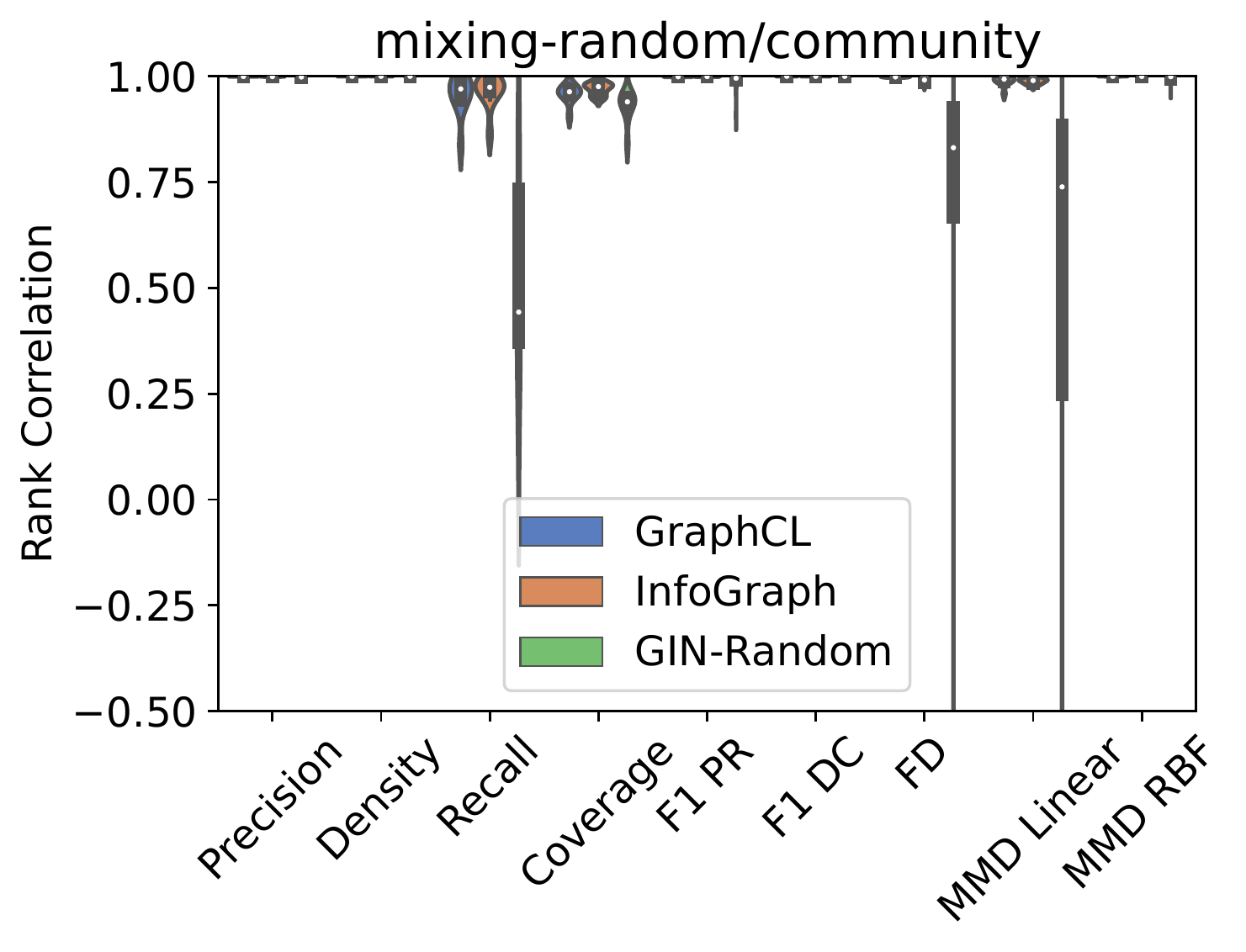}}
    \subfloat[][]{\includegraphics[width = 2.55in]{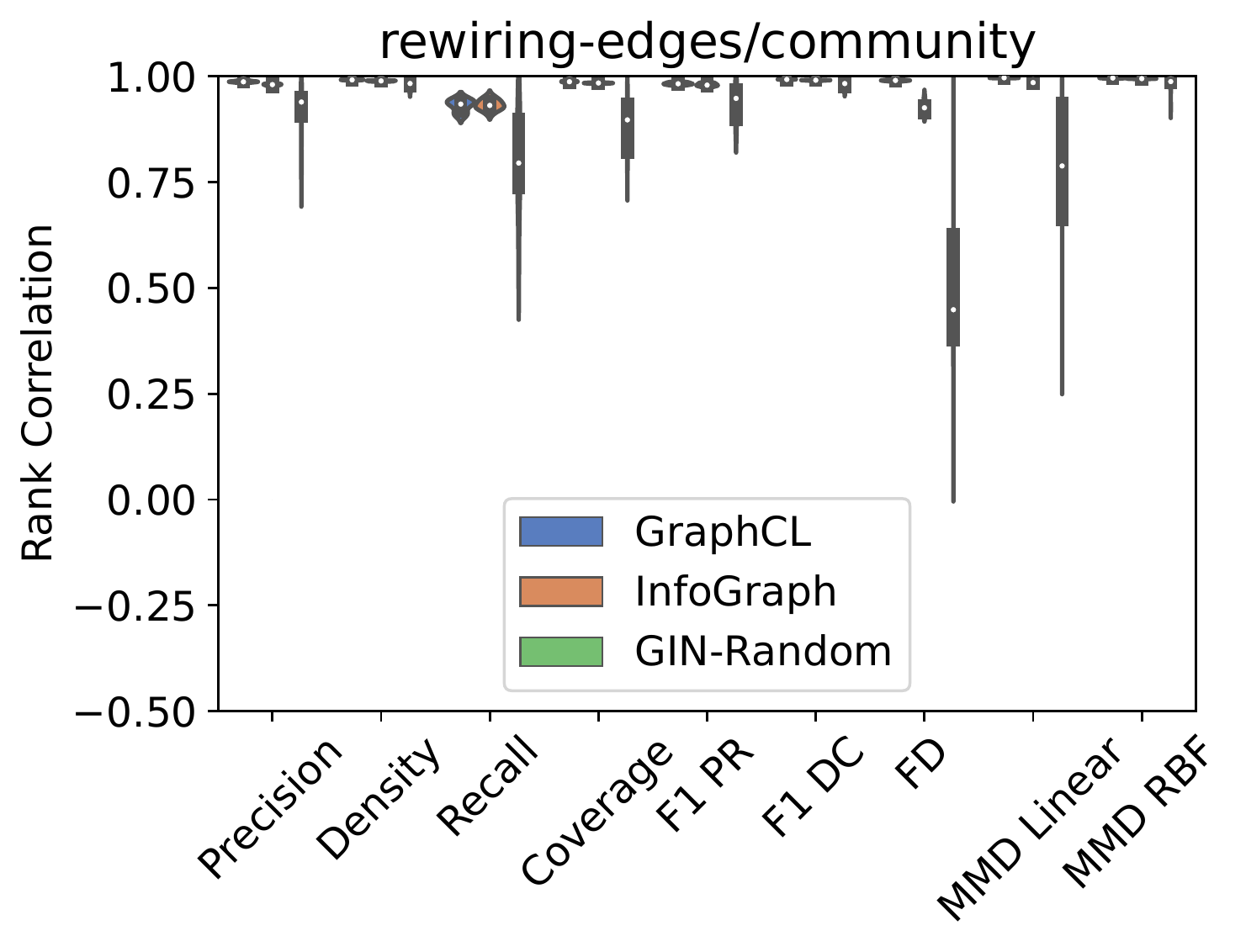}}
    \\
    \subfloat[][]{\includegraphics[width = 2.55in]{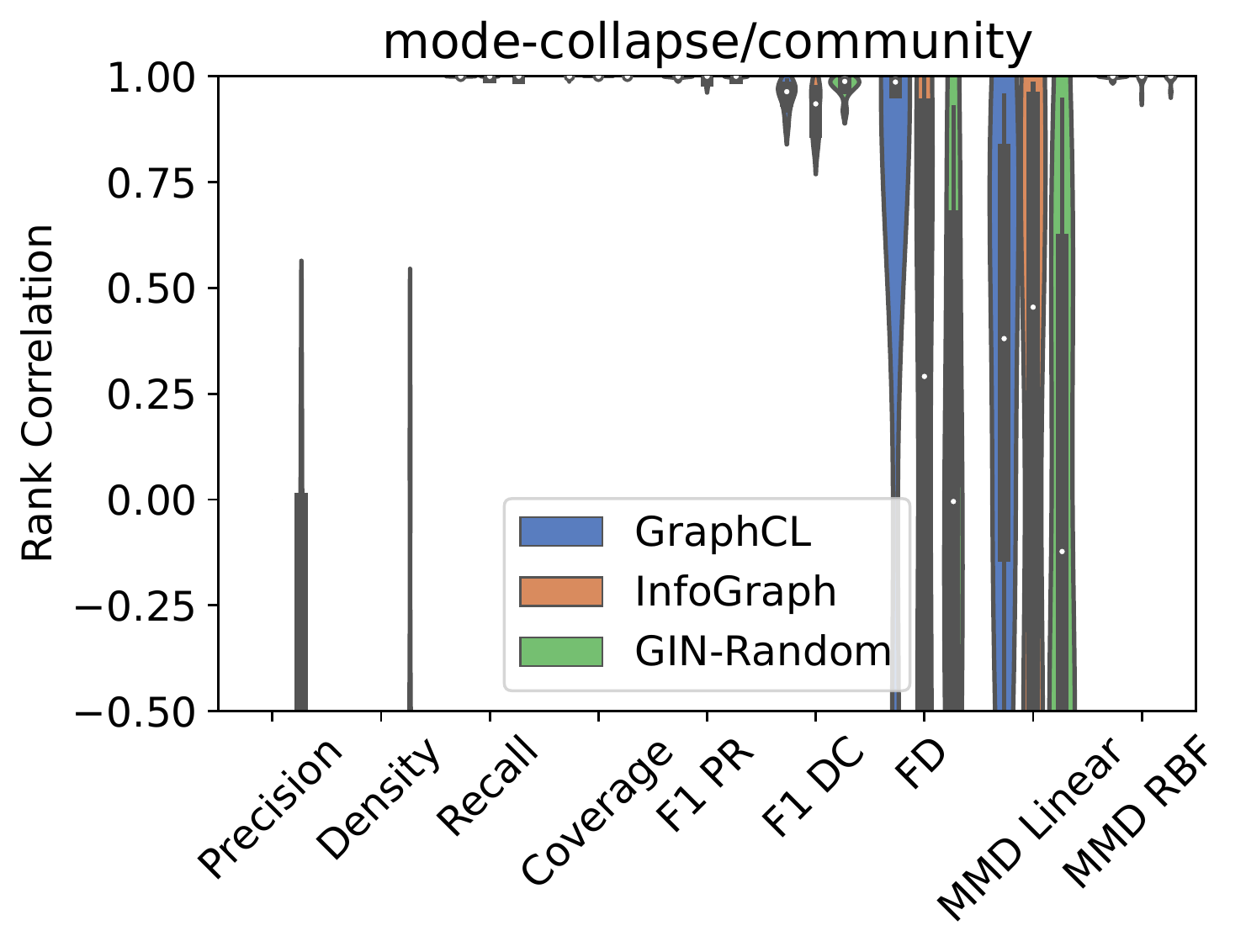}}
    \subfloat[][]{\includegraphics[width = 2.55in]{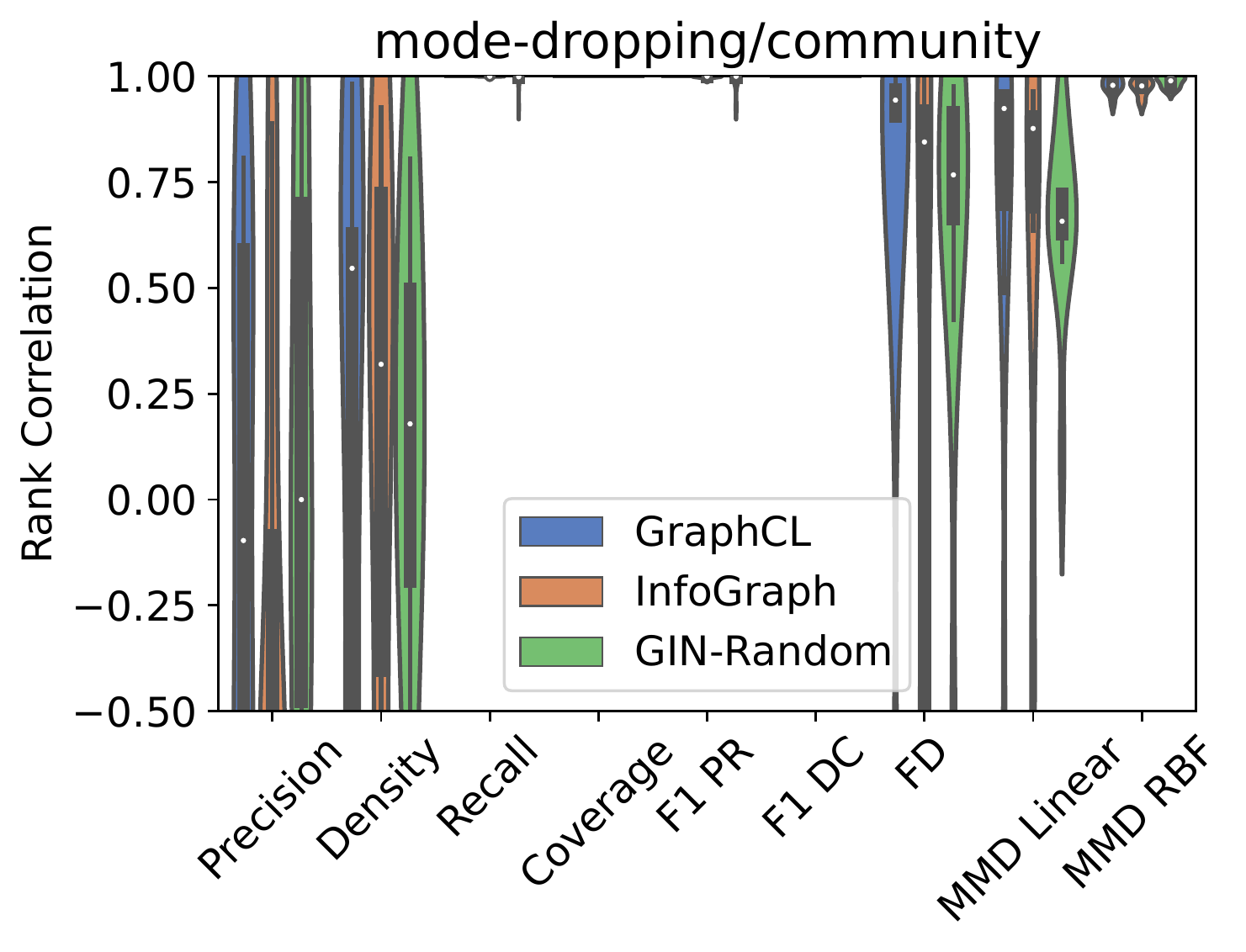}}
    \caption{Violing comparative results among the methods, with no structural features for community dataset.}
    \label{fig:no_struct_feats_community}
\end{figure*}
 
\begin{table}[h!]
\centering
\begin{small}
\scalebox{0.7}{
\begin{tabular}{l|l|l|l|l|l|l|l|l|l|l|l} 
\toprule
Model Name               & Experiment                      &        & Precision & Density & Recall & Coverage & F1PR & F1DC & FD & MMD Lin & MMD RBF  \\ 
\hline
\multirow{8}{*}{GraphCL}  &    \multirow{2}{*}{Mixing Random} & Mean & 1.0 & 1.0 & 0.96 & 0.96 & 1.0 & 1.0 & 1.0 & 0.99 & 1.0 \\
\cline{3-12}
                   &     & Median & 1.0 & 1.0 & 0.97 & 0.96 & 1.0 & 1.0 & 1.0 & 0.99 & 1.0 \\ 
\cline{2-12}
 &    \multirow{2}{*}{Rewiring Edges} & Mean & 0.99 & 0.99 & 0.93 & 0.99 & 0.98 & 0.99 & 0.99 & 1.0 & 1.0 \\
\cline{3-12}
                   &     & Median & 0.99 & 0.99 & 0.93 & 0.99 & 0.98 & 0.99 & 0.99 & 1.0 & 1.0 \\ 
\cline{2-12}
 &    \multirow{2}{*}{Mode Collapse} & Mean & -0.91 & -0.9 & 1.0 & 1.0 & 1.0 & 0.96 & 0.66 & 0.27 & 1.0 \\
\cline{3-12}
                   &     & Median & -0.95 & -0.93 & 1.0 & 1.0 & 1.0 & 0.96 & 0.99 & 0.38 & 1.0 \\ 
\cline{2-12}
 &    \multirow{2}{*}{Mode Dropping} & Mean & -0.09 & 0.14 & 1.0 & 1.0 & 1.0 & 1.0 & 0.73 & 0.72 & 0.98 \\
\cline{3-12}
                   &     & Median & -0.1 & 0.55 & 1.0 & 1.0 & 1.0 & 1.0 & 0.94 & 0.92 & 0.98 \\ 
\cline{1-12}
\multirow{8}{*}{InfoGraph}  &    \multirow{2}{*}{Mixing Random} & Mean & 1.0 & 1.0 & 0.96 & 0.97 & 1.0 & 1.0 & 0.99 & 0.99 & 1.0 \\
\cline{3-12}
                   &     & Median & 1.0 & 1.0 & 0.97 & 0.98 & 1.0 & 1.0 & 0.99 & 0.99 & 1.0 \\ 
\cline{2-12}
 &    \multirow{2}{*}{Rewiring Edges} & Mean & 0.98 & 0.99 & 0.93 & 0.98 & 0.98 & 0.99 & 0.93 & 0.98 & 0.99 \\
\cline{3-12}
                   &     & Median & 0.98 & 0.99 & 0.93 & 0.98 & 0.98 & 0.99 & 0.93 & 0.99 & 0.99 \\ 
\cline{2-12}
 &    \multirow{2}{*}{Mode Collapse} & Mean & -0.96 & -0.96 & 1.0 & 1.0 & 0.99 & 0.92 & 0.18 & 0.22 & 1.0 \\
\cline{3-12}
                   &     & Median & -0.98 & -0.97 & 1.0 & 1.0 & 1.0 & 0.93 & 0.29 & 0.45 & 1.0 \\ 
\cline{2-12}
 &    \multirow{2}{*}{Mode Dropping} & Mean & -0.41 & 0.13 & 1.0 & 1.0 & 1.0 & 1.0 & 0.31 & 0.61 & 0.97 \\
\cline{3-12}
                   &     & Median & -0.74 & 0.32 & 1.0 & 1.0 & 1.0 & 1.0 & 0.84 & 0.88 & 0.98 \\ 
\cline{1-12}
\multirow{8}{*}{GIN-Random}  &    \multirow{2}{*}{Mixing Random} & Mean & 1.0 & 1.0 & 0.51 & 0.93 & 0.99 & 1.0 & 0.69 & 0.48 & 0.99 \\
\cline{3-12}
                   &     & Median & 1.0 & 1.0 & 0.44 & 0.94 & 1.0 & 1.0 & 0.83 & 0.74 & 1.0 \\ 
\cline{2-12}
 &    \multirow{2}{*}{Rewiring Edges} & Mean & 0.91 & 0.98 & 0.8 & 0.88 & 0.93 & 0.98 & 0.5 & 0.79 & 0.98 \\
\cline{3-12}
                   &     & Median & 0.94 & 0.98 & 0.79 & 0.9 & 0.95 & 0.98 & 0.45 & 0.79 & 0.99 \\ 
\cline{2-12}
 &    \multirow{2}{*}{Mode Collapse} & Mean & -0.49 & -0.78 & 1.0 & 1.0 & 1.0 & 0.98 & 0.03 & -0.02 & 1.0 \\
\cline{3-12}
                   &     & Median & -0.6 & -0.92 & 1.0 & 1.0 & 1.0 & 0.99 & -0.0 & -0.12 & 1.0 \\ 
\cline{2-12}
 &    \multirow{2}{*}{Mode Dropping} & Mean & 0.05 & 0.11 & 0.99 & 1.0 & 0.99 & 1.0 & 0.64 & 0.65 & 0.99 \\
\cline{3-12}
                   &     & Median & 0.0 & 0.18 & 1.0 & 1.0 & 1.0 & 1.0 & 0.77 & 0.66 & 0.99 \\ \bottomrule
\end{tabular}
}
\end{small}
\caption{Mean and median values for measurements in experiments with no structural features by models.} 
\label{table:no_struct_feats_community}
\end{table}
 
\begin{figure*}[h!]
    \captionsetup[subfloat]{farskip=-2pt,captionskip=-8pt}
    \centering
    \subfloat[][]{\includegraphics[width = 2.55in]{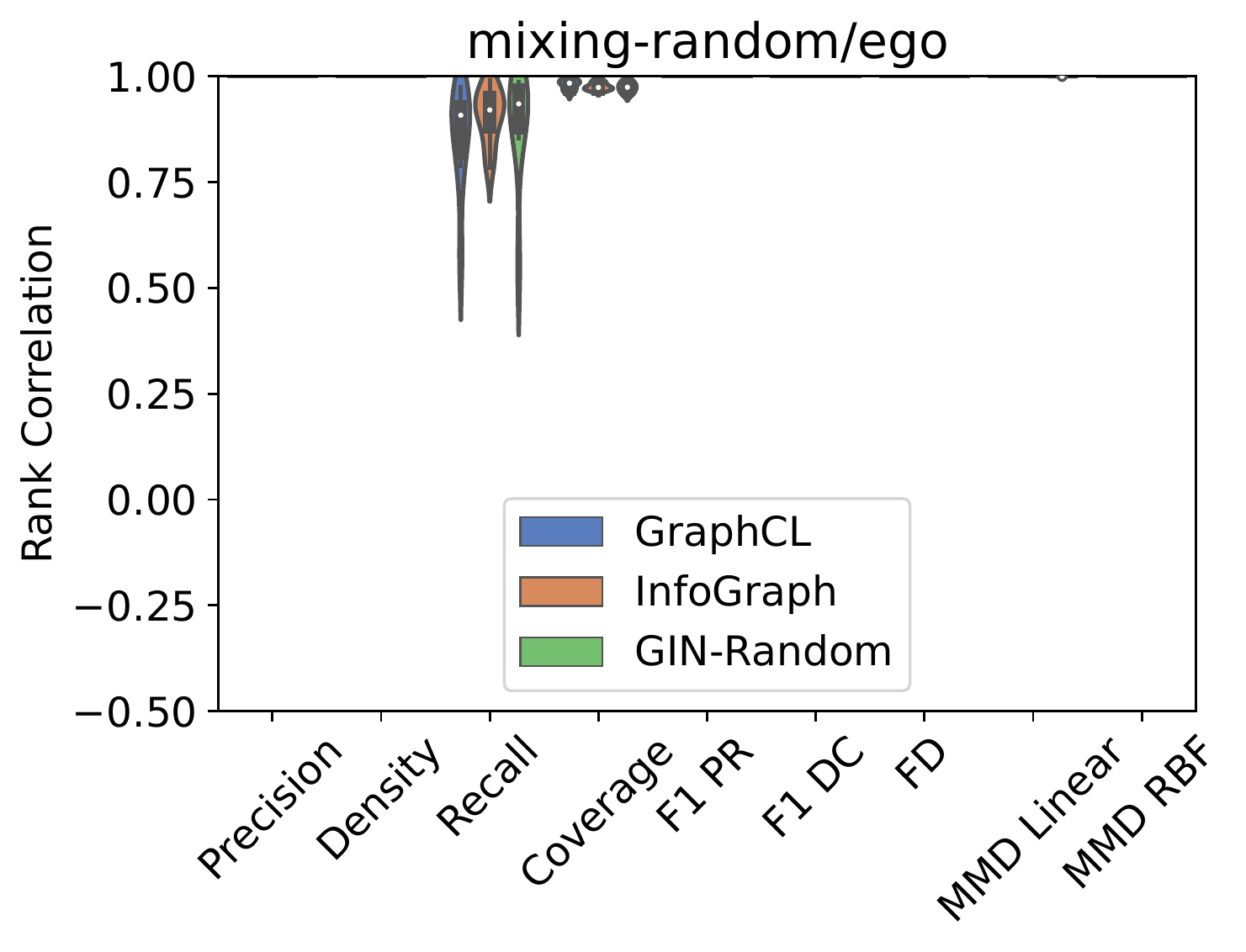}}
    \subfloat[][]{\includegraphics[width = 2.55in]{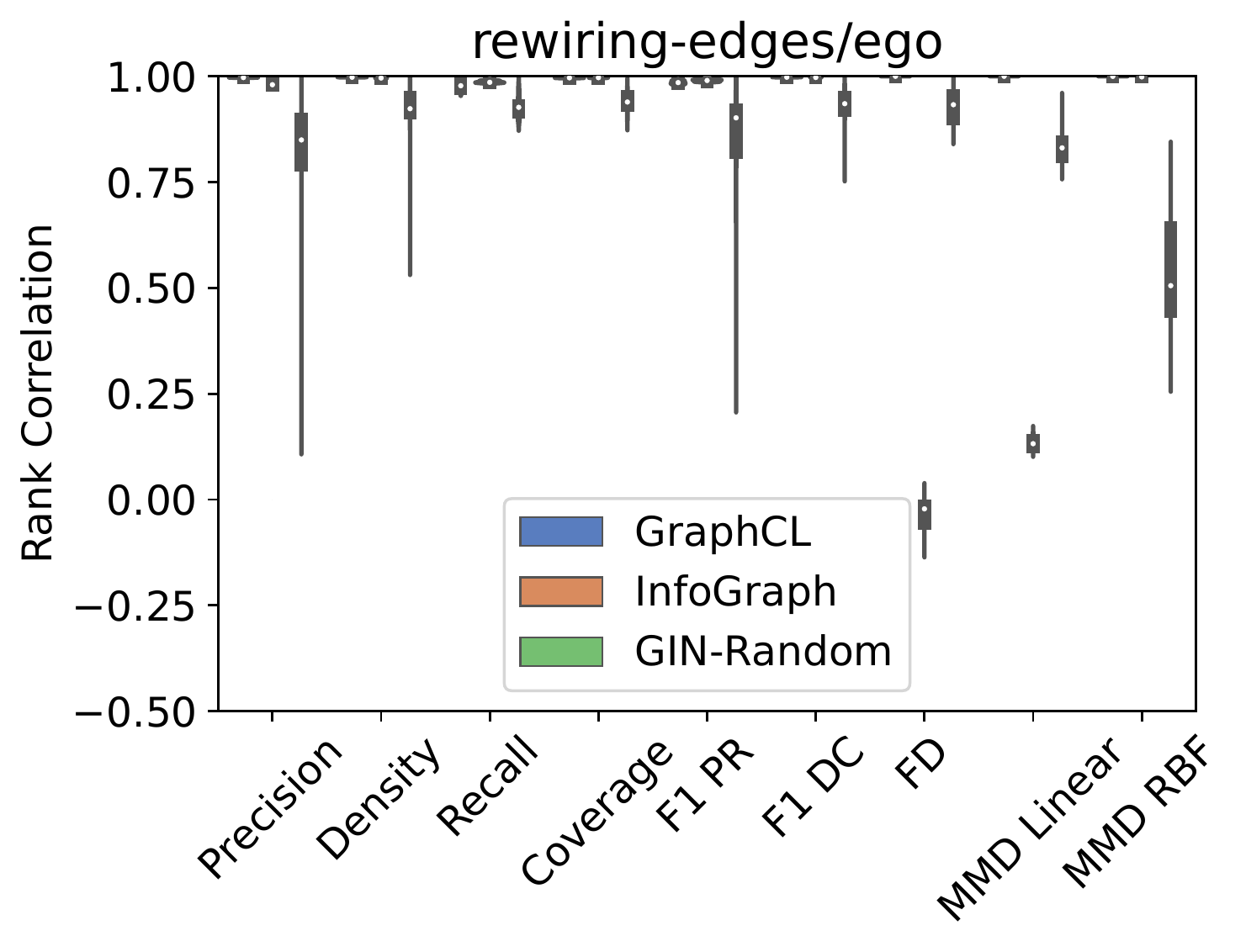}}
    \\
    \subfloat[][]{\includegraphics[width = 2.55in]{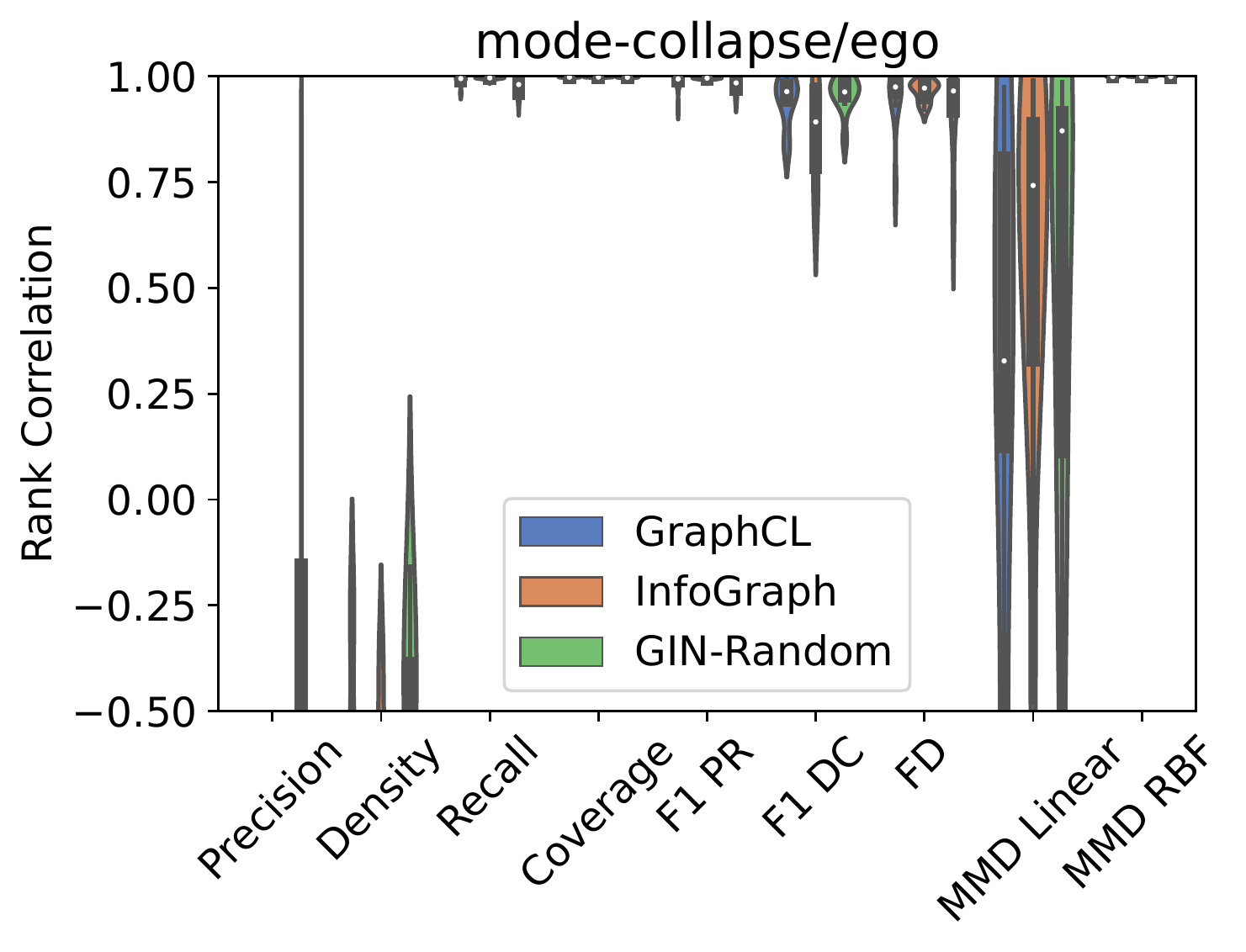}}
    \subfloat[][]{\includegraphics[width = 2.55in]{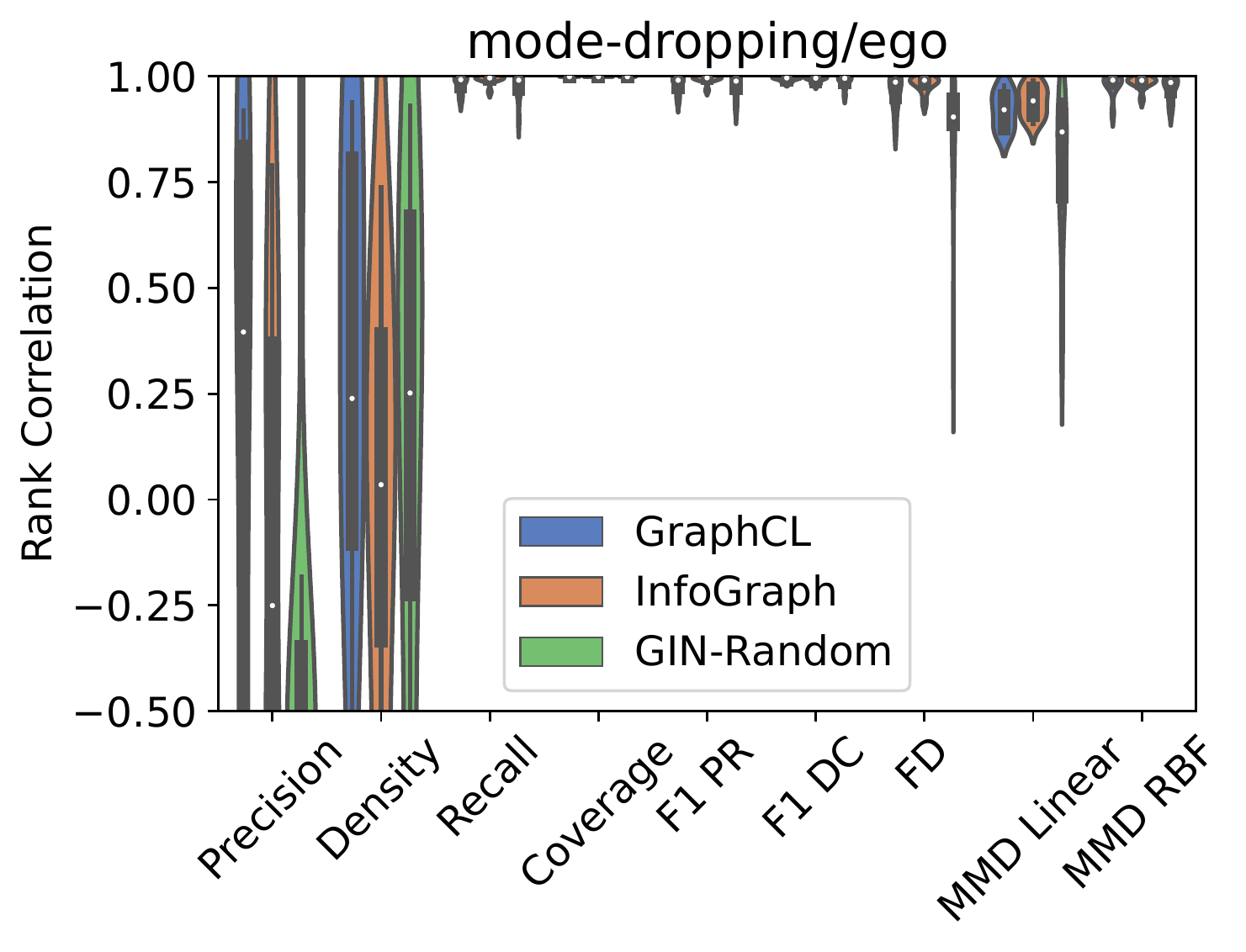}}
    \caption{Violing comparative results among the methods, with no structural features for ego dataset.}
    \label{fig:no_struct_feats_ego}
\end{figure*}
 
\begin{table}[h!]
\centering
\begin{small}
\scalebox{0.7}{
\begin{tabular}{l|l|l|l|l|l|l|l|l|l|l|l} 
\toprule
Model Name               & Experiment                      &        & Precision & Density & Recall & Coverage & F1PR & F1DC & FD & MMD Lin & MMD RBF  \\ 
\hline
\multirow{8}{*}{GraphCL}  &    \multirow{2}{*}{Mixing Random} & Mean & 1.0 & 1.0 & 0.86 & 0.98 & 1.0 & 1.0 & 1.0 & 1.0 & 1.0 \\
\cline{3-12}
                   &     & Median & 1.0 & 1.0 & 0.91 & 0.98 & 1.0 & 1.0 & 1.0 & 1.0 & 1.0 \\ 
\cline{2-12}
 &    \multirow{2}{*}{Rewiring Edges} & Mean & 1.0 & 1.0 & 0.98 & 1.0 & 0.99 & 1.0 & 1.0 & 1.0 & 1.0 \\
\cline{3-12}
                   &     & Median & 1.0 & 1.0 & 0.98 & 1.0 & 0.99 & 1.0 & 1.0 & 1.0 & 1.0 \\ 
\cline{2-12}
 &    \multirow{2}{*}{Mode Collapse} & Mean & -0.97 & -0.8 & 0.99 & 1.0 & 0.99 & 0.94 & 0.95 & 0.3 & 1.0 \\
\cline{3-12}
                   &     & Median & -0.98 & -0.87 & 0.99 & 1.0 & 0.99 & 0.96 & 0.97 & 0.33 & 1.0 \\ 
\cline{2-12}
 &    \multirow{2}{*}{Mode Dropping} & Mean & 0.15 & 0.26 & 0.98 & 1.0 & 0.98 & 0.99 & 0.97 & 0.92 & 0.98 \\
\cline{3-12}
                   &     & Median & 0.4 & 0.24 & 0.99 & 1.0 & 0.99 & 1.0 & 0.99 & 0.92 & 0.99 \\ 
\cline{1-12}
\multirow{8}{*}{InfoGraph}  &    \multirow{2}{*}{Mixing Random} & Mean & 1.0 & 1.0 & 0.91 & 0.98 & 1.0 & 1.0 & 1.0 & 1.0 & 1.0 \\
\cline{3-12}
                   &     & Median & 1.0 & 1.0 & 0.92 & 0.97 & 1.0 & 1.0 & 1.0 & 1.0 & 1.0 \\ 
\cline{2-12}
 &    \multirow{2}{*}{Rewiring Edges} & Mean & 0.98 & 1.0 & 0.99 & 1.0 & 0.99 & 1.0 & -0.03 & 0.13 & 1.0 \\
\cline{3-12}
                   &     & Median & 0.98 & 1.0 & 0.99 & 1.0 & 0.99 & 1.0 & -0.02 & 0.13 & 1.0 \\ 
\cline{2-12}
 &    \multirow{2}{*}{Mode Collapse} & Mean & -0.94 & -0.83 & 1.0 & 1.0 & 0.99 & 0.86 & 0.97 & 0.57 & 1.0 \\
\cline{3-12}
                   &     & Median & -0.95 & -0.92 & 1.0 & 1.0 & 1.0 & 0.89 & 0.97 & 0.74 & 1.0 \\ 
\cline{2-12}
 &    \multirow{2}{*}{Mode Dropping} & Mean & -0.2 & -0.01 & 0.99 & 1.0 & 0.99 & 0.99 & 0.98 & 0.94 & 0.99 \\
\cline{3-12}
                   &     & Median & -0.25 & 0.04 & 1.0 & 1.0 & 1.0 & 1.0 & 0.99 & 0.94 & 0.99 \\ 
\cline{1-12}
\multirow{8}{*}{GIN-Random}  &    \multirow{2}{*}{Mixing Random} & Mean & 1.0 & 1.0 & 0.89 & 0.97 & 1.0 & 1.0 & 1.0 & 1.0 & 1.0 \\
\cline{3-12}
                   &     & Median & 1.0 & 1.0 & 0.93 & 0.97 & 1.0 & 1.0 & 1.0 & 1.0 & 1.0 \\ 
\cline{2-12}
 &    \multirow{2}{*}{Rewiring Edges} & Mean & 0.81 & 0.91 & 0.93 & 0.94 & 0.84 & 0.93 & 0.93 & 0.84 & 0.54 \\
\cline{3-12}
                   &     & Median & 0.85 & 0.92 & 0.93 & 0.94 & 0.9 & 0.94 & 0.93 & 0.83 & 0.51 \\ 
\cline{2-12}
 &    \multirow{2}{*}{Mode Collapse} & Mean & -0.53 & -0.6 & 0.98 & 1.0 & 0.98 & 0.96 & 0.91 & 0.49 & 1.0 \\
\cline{3-12}
                   &     & Median & -0.89 & -0.53 & 0.98 & 1.0 & 0.98 & 0.96 & 0.97 & 0.87 & 1.0 \\ 
\cline{2-12}
 &    \multirow{2}{*}{Mode Dropping} & Mean & -0.41 & 0.12 & 0.98 & 1.0 & 0.98 & 0.99 & 0.86 & 0.79 & 0.97 \\
\cline{3-12}
                   &     & Median & -0.54 & 0.25 & 0.99 & 1.0 & 0.99 & 1.0 & 0.9 & 0.87 & 0.99 \\ \bottomrule
\end{tabular}
}
\end{small}
\caption{Mean and median values for measurements in experiments with no structural features by models.} 
\label{table:no_struct_feats_ego}
\end{table}
 
\begin{figure*}[h!]
    \captionsetup[subfloat]{farskip=-2pt,captionskip=-8pt}
    \centering
    \subfloat[][]{\includegraphics[width = 2.55in]{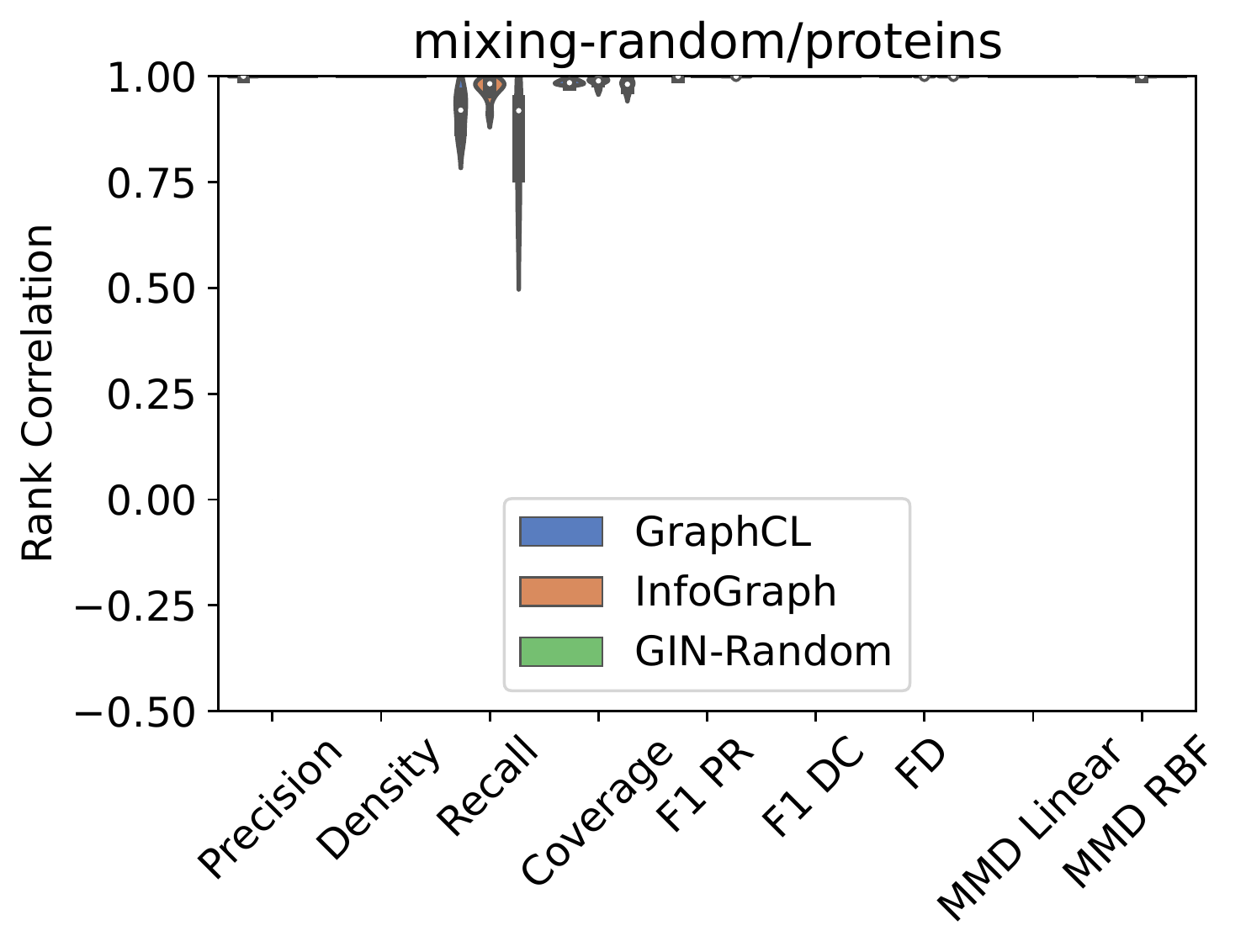}}
    \subfloat[][]{\includegraphics[width = 2.55in]{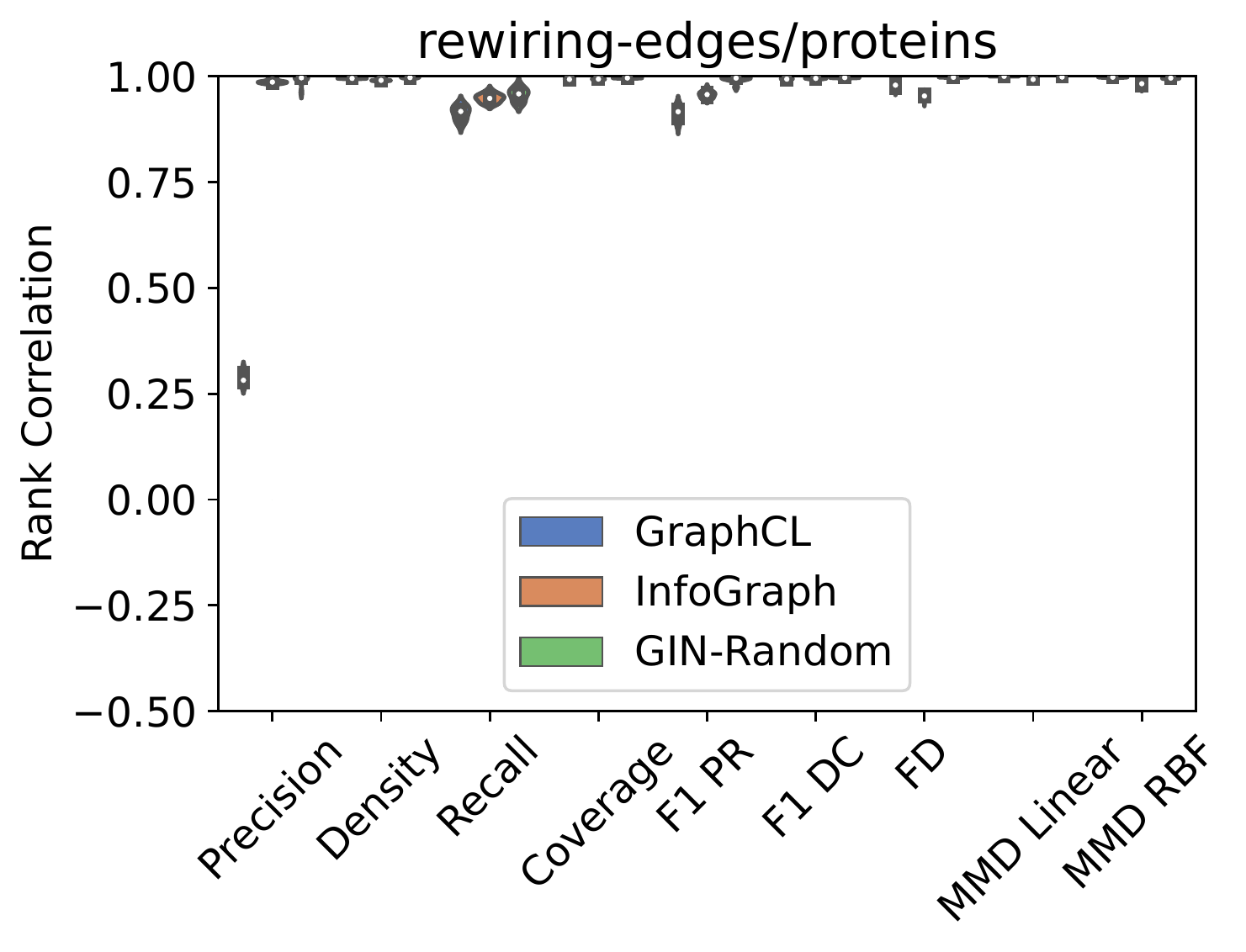}}
    \\
    \subfloat[][]{\includegraphics[width = 2.55in]{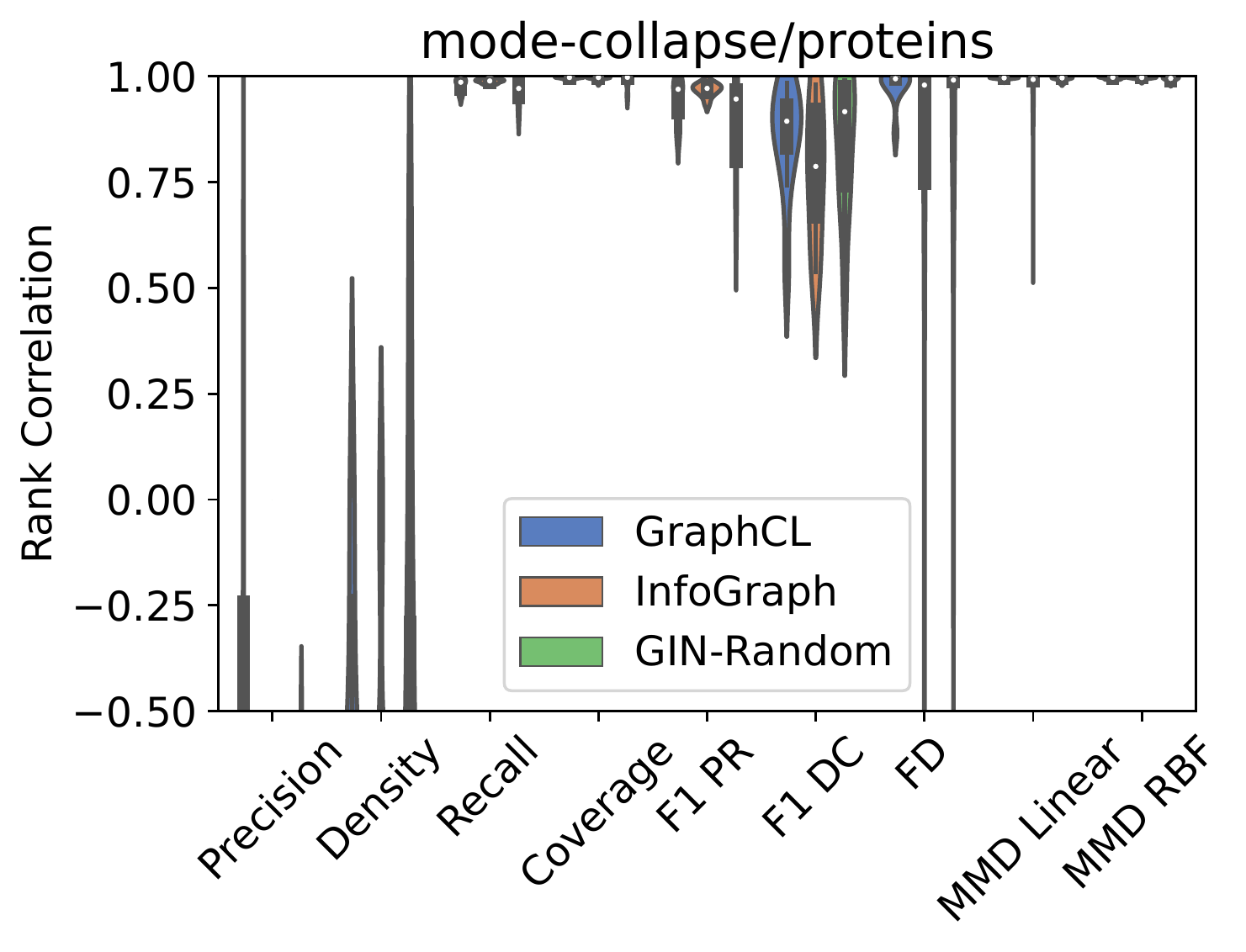}}
    \subfloat[][]{\includegraphics[width = 2.55in]{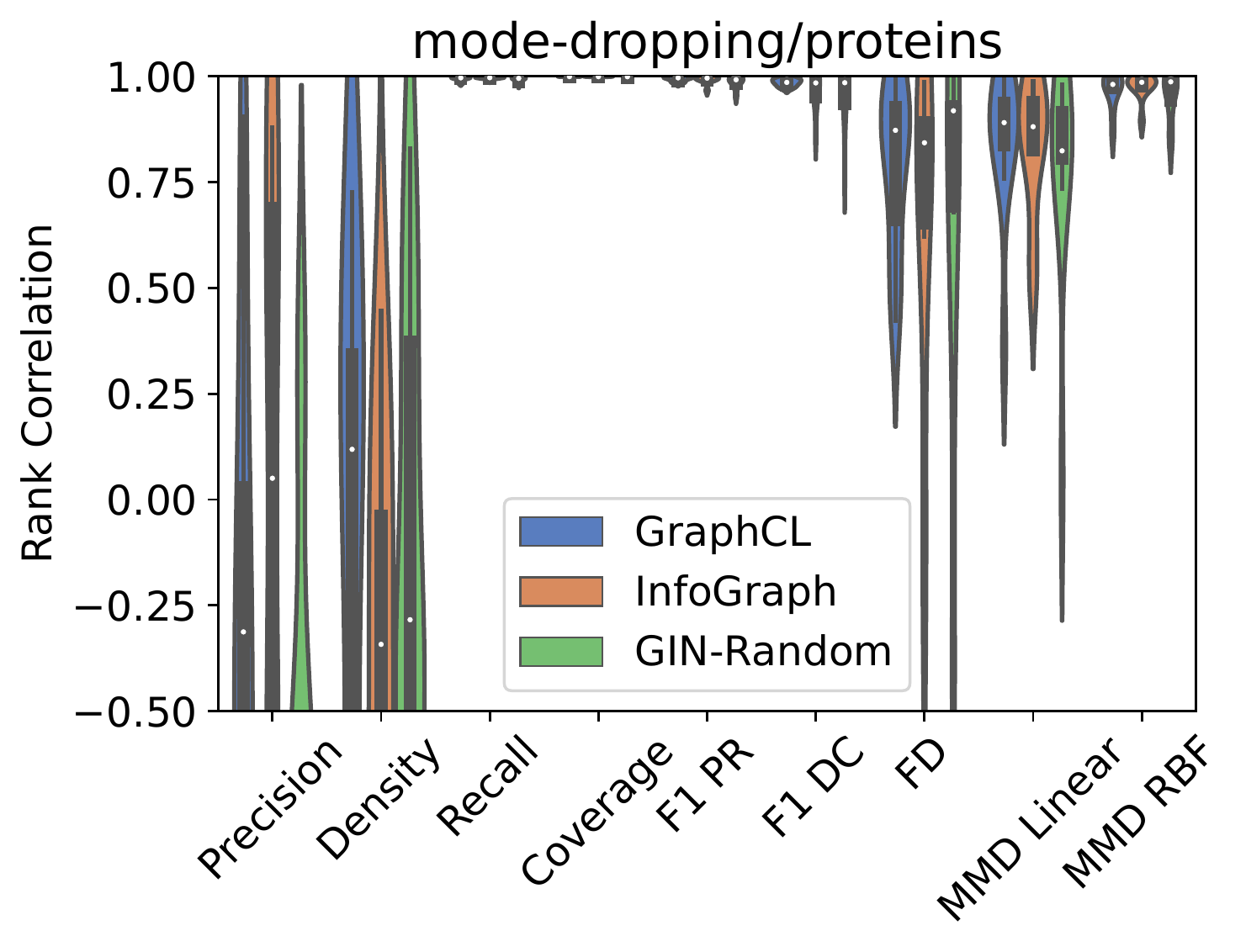}}
    \caption{Violing comparative results among the methods, with no structural features for proteins dataset.}
    \label{fig:no_struct_feats_proteins}
\end{figure*}
 
\begin{table}[h!]
\centering
\begin{small}
\scalebox{0.7}{
\begin{tabular}{l|l|l|l|l|l|l|l|l|l|l|l} 
\toprule
Model Name               & Experiment                      &        & Precision & Density & Recall & Coverage & F1PR & F1DC & FD & MMD Lin & MMD RBF  \\ 
\hline
\multirow{8}{*}{GraphCL}  &    \multirow{2}{*}{Mixing Random} & Mean & 1.0 & 1.0 & 0.91 & 0.99 & 1.0 & 1.0 & 1.0 & 1.0 & 1.0 \\
\cline{3-12}
                   &     & Median & 1.0 & 1.0 & 0.92 & 0.98 & 1.0 & 1.0 & 1.0 & 1.0 & 1.0 \\ 
\cline{2-12}
 &    \multirow{2}{*}{Rewiring Edges} & Mean & 0.29 & 0.99 & 0.91 & 0.99 & 0.91 & 0.99 & 0.98 & 1.0 & 1.0 \\
\cline{3-12}
                   &     & Median & 0.28 & 0.99 & 0.92 & 0.99 & 0.92 & 0.99 & 0.98 & 1.0 & 1.0 \\ 
\cline{2-12}
 &    \multirow{2}{*}{Mode Collapse} & Mean & -0.5 & -0.68 & 0.98 & 1.0 & 0.95 & 0.85 & 0.98 & 1.0 & 1.0 \\
\cline{3-12}
                   &     & Median & -0.72 & -0.87 & 0.99 & 1.0 & 0.97 & 0.89 & 0.99 & 1.0 & 1.0 \\ 
\cline{2-12}
 &    \multirow{2}{*}{Mode Dropping} & Mean & -0.22 & -0.06 & 0.99 & 1.0 & 0.99 & 0.99 & 0.79 & 0.84 & 0.97 \\
\cline{3-12}
                   &     & Median & -0.31 & 0.12 & 1.0 & 1.0 & 1.0 & 0.99 & 0.87 & 0.89 & 0.98 \\ 
\cline{1-12}
\multirow{8}{*}{InfoGraph}  &    \multirow{2}{*}{Mixing Random} & Mean & 1.0 & 1.0 & 0.97 & 0.99 & 1.0 & 1.0 & 1.0 & 1.0 & 1.0 \\
\cline{3-12}
                   &     & Median & 1.0 & 1.0 & 0.98 & 0.99 & 1.0 & 1.0 & 1.0 & 1.0 & 1.0 \\ 
\cline{2-12}
 &    \multirow{2}{*}{Rewiring Edges} & Mean & 0.99 & 0.99 & 0.95 & 0.99 & 0.96 & 0.99 & 0.95 & 0.99 & 0.98 \\
\cline{3-12}
                   &     & Median & 0.99 & 0.99 & 0.95 & 0.99 & 0.96 & 1.0 & 0.95 & 0.99 & 0.98 \\ 
\cline{2-12}
 &    \multirow{2}{*}{Mode Collapse} & Mean & -0.97 & -0.84 & 0.99 & 1.0 & 0.97 & 0.78 & 0.71 & 0.96 & 1.0 \\
\cline{3-12}
                   &     & Median & -0.97 & -0.97 & 0.99 & 1.0 & 0.97 & 0.79 & 0.98 & 0.99 & 1.0 \\ 
\cline{2-12}
 &    \multirow{2}{*}{Mode Dropping} & Mean & -0.0 & -0.34 & 1.0 & 1.0 & 0.99 & 0.97 & 0.65 & 0.83 & 0.98 \\
\cline{3-12}
                   &     & Median & 0.05 & -0.34 & 1.0 & 1.0 & 1.0 & 0.98 & 0.84 & 0.88 & 0.99 \\ 
\cline{1-12}
\multirow{8}{*}{GIN-Random}  &    \multirow{2}{*}{Mixing Random} & Mean & 1.0 & 1.0 & 0.86 & 0.98 & 1.0 & 1.0 & 1.0 & 1.0 & 1.0 \\
\cline{3-12}
                   &     & Median & 1.0 & 1.0 & 0.92 & 0.98 & 1.0 & 1.0 & 1.0 & 1.0 & 1.0 \\ 
\cline{2-12}
 &    \multirow{2}{*}{Rewiring Edges} & Mean & 0.99 & 1.0 & 0.96 & 1.0 & 0.99 & 1.0 & 1.0 & 1.0 & 1.0 \\
\cline{3-12}
                   &     & Median & 1.0 & 1.0 & 0.96 & 1.0 & 1.0 & 1.0 & 1.0 & 1.0 & 1.0 \\ 
\cline{2-12}
 &    \multirow{2}{*}{Mode Collapse} & Mean & -0.89 & -0.5 & 0.96 & 0.99 & 0.89 & 0.85 & 0.84 & 0.99 & 0.99 \\
\cline{3-12}
                   &     & Median & -0.93 & -0.76 & 0.97 & 1.0 & 0.95 & 0.92 & 0.99 & 1.0 & 1.0 \\ 
\cline{2-12}
 &    \multirow{2}{*}{Mode Dropping} & Mean & -0.59 & -0.09 & 0.99 & 1.0 & 0.99 & 0.95 & 0.63 & 0.78 & 0.96 \\
\cline{3-12}
                   &     & Median & -0.82 & -0.28 & 1.0 & 1.0 & 0.99 & 0.98 & 0.92 & 0.82 & 0.99 \\ \bottomrule
\end{tabular}
}
\end{small}
\caption{Mean and median values for measurements in experiments with no structural features by models.} 
\label{table:no_struct_feats_proteins}
\end{table}
 
\begin{figure*}[h!]
    \captionsetup[subfloat]{farskip=-2pt,captionskip=-8pt}
    \centering
    \subfloat[][]{\includegraphics[width = 2.55in]{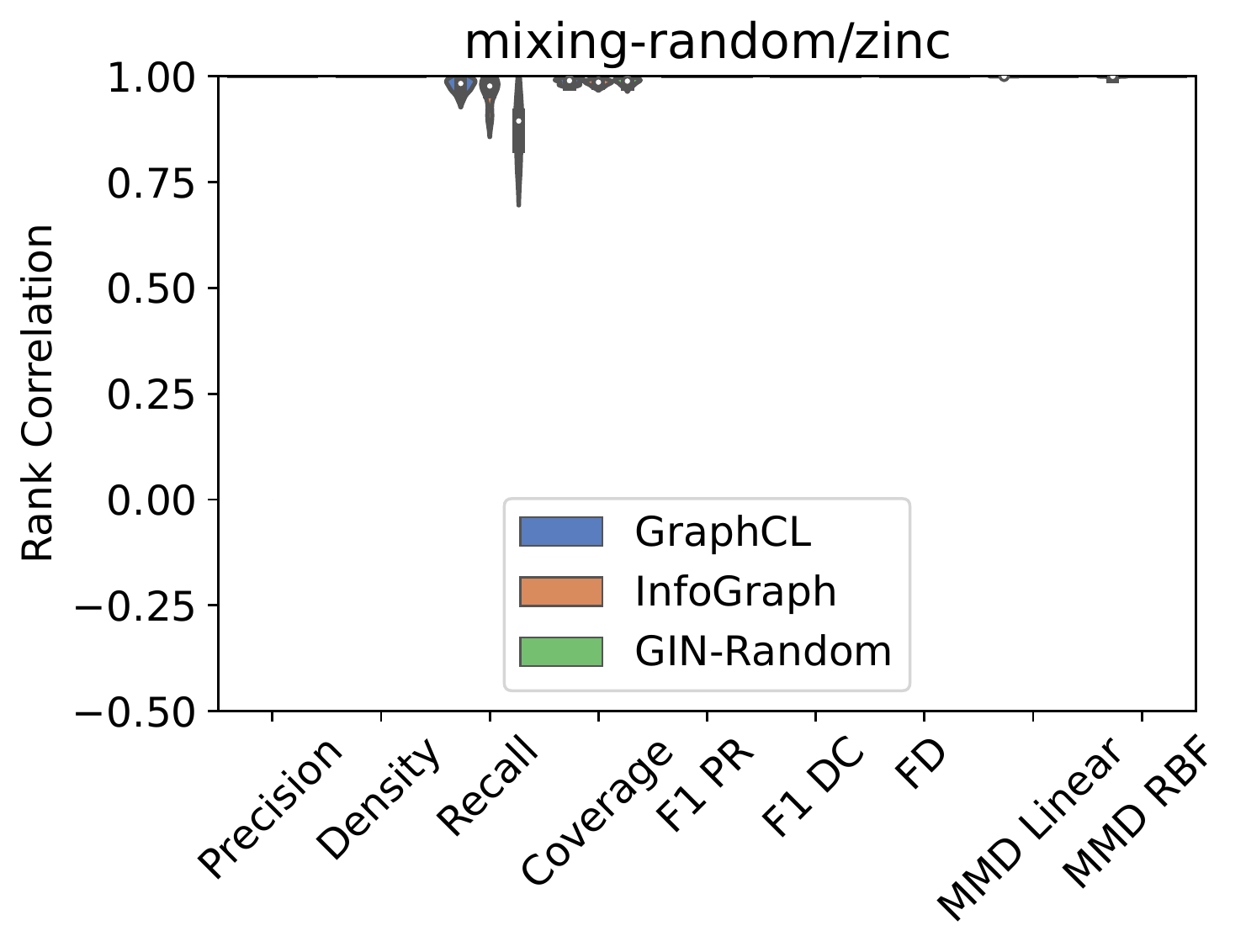}}
    \subfloat[][]{\includegraphics[width = 2.55in]{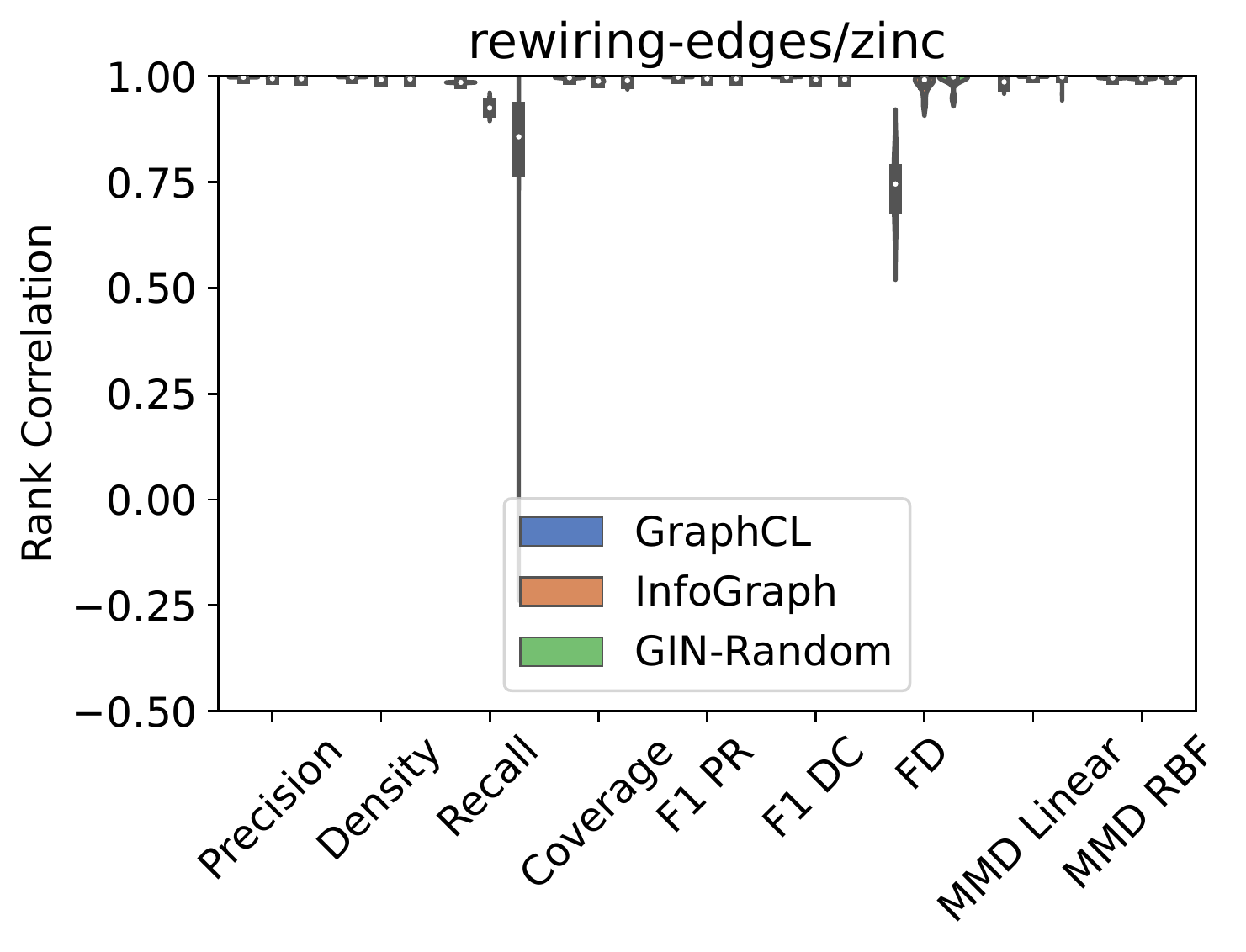}}
    \\
    \subfloat[][]{\includegraphics[width = 2.55in]{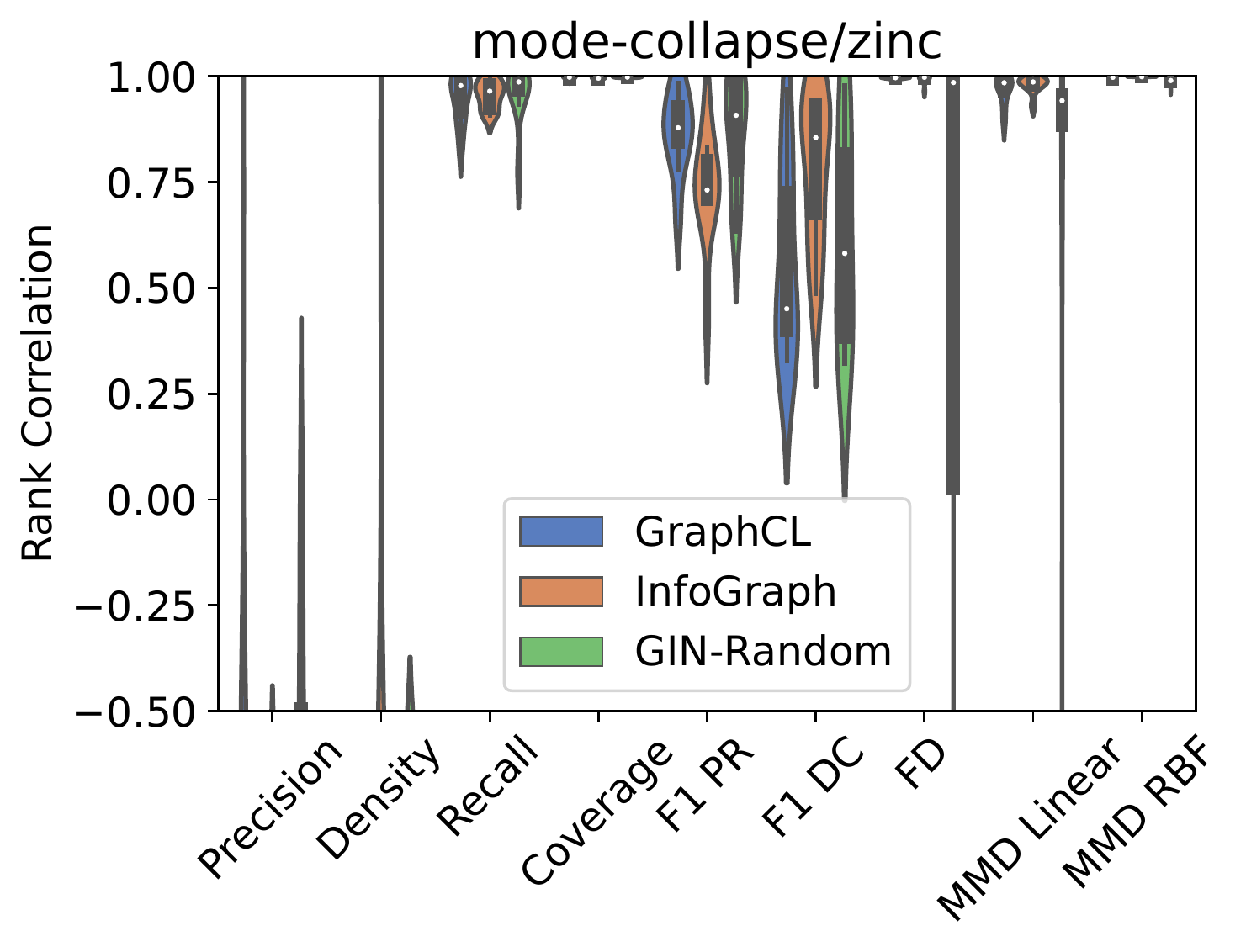}}
    \subfloat[][]{\includegraphics[width = 2.55in]{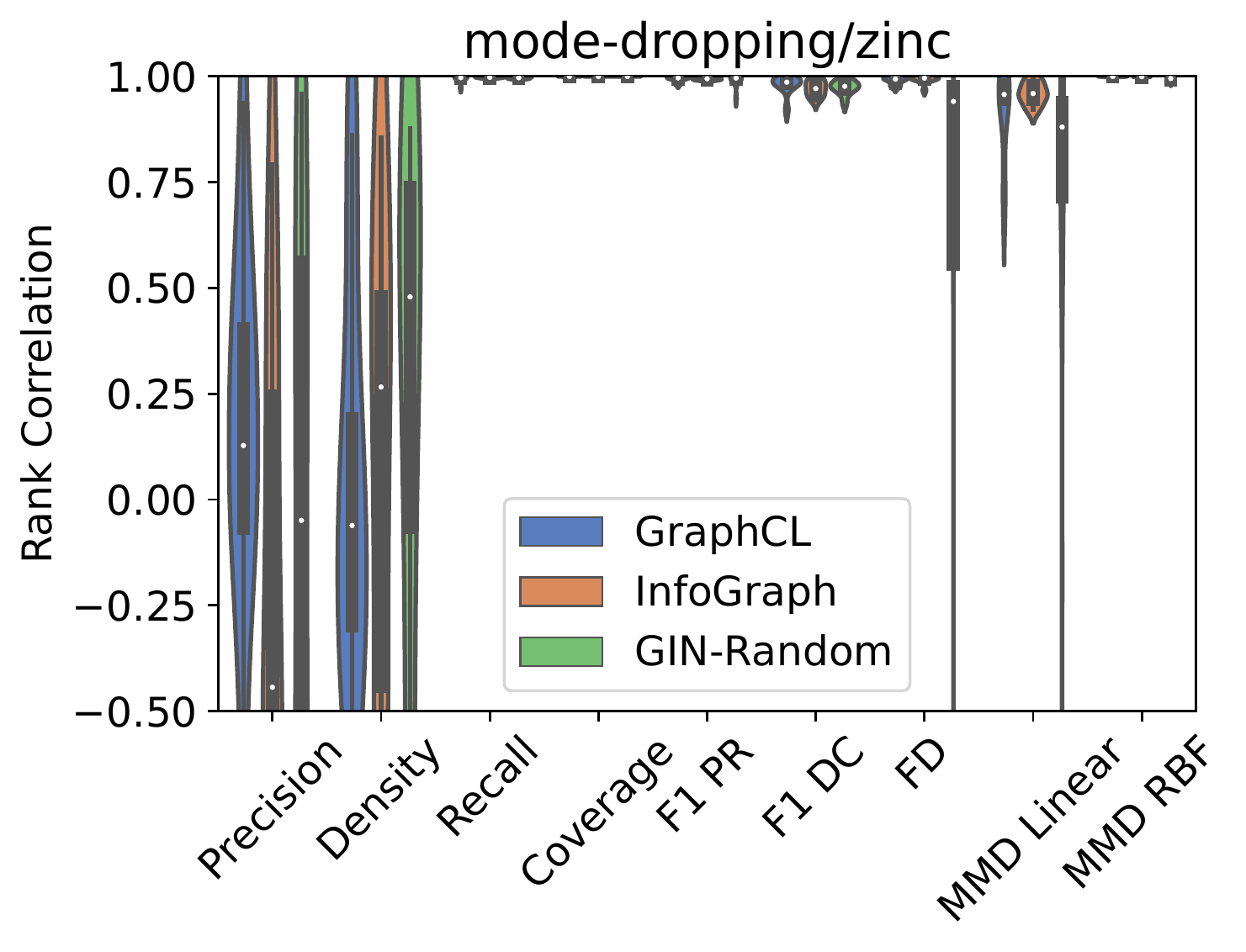}}
    \caption{Violing comparative results among the methods, with no structural features for zinc dataset.}
    \label{fig:no_struct_feats_zinc}
\end{figure*}
 
\begin{table}[h!]
\centering
\begin{small}
\scalebox{0.7}{
\begin{tabular}{l|l|l|l|l|l|l|l|l|l|l|l} 
\toprule
Model Name               & Experiment                      &        & Precision & Density & Recall & Coverage & F1PR & F1DC & FD & MMD Lin & MMD RBF  \\ 
\hline
\multirow{8}{*}{GraphCL}  &    \multirow{2}{*}{Mixing Random} & Mean & 1.0 & 1.0 & 0.98 & 0.99 & 1.0 & 1.0 & 1.0 & 1.0 & 1.0 \\
\cline{3-12}
                   &     & Median & 1.0 & 1.0 & 0.98 & 0.99 & 1.0 & 1.0 & 1.0 & 1.0 & 1.0 \\ 
\cline{2-12}
 &    \multirow{2}{*}{Rewiring Edges} & Mean & 1.0 & 1.0 & 0.99 & 1.0 & 1.0 & 1.0 & 0.73 & 0.98 & 1.0 \\
\cline{3-12}
                   &     & Median & 1.0 & 1.0 & 0.99 & 1.0 & 1.0 & 1.0 & 0.75 & 0.99 & 1.0 \\ 
\cline{2-12}
 &    \multirow{2}{*}{Mode Collapse} & Mean & -0.67 & -0.87 & 0.95 & 1.0 & 0.87 & 0.56 & 1.0 & 0.97 & 1.0 \\
\cline{3-12}
                   &     & Median & -0.8 & -0.92 & 0.98 & 1.0 & 0.88 & 0.45 & 1.0 & 0.98 & 1.0 \\ 
\cline{2-12}
 &    \multirow{2}{*}{Mode Dropping} & Mean & 0.16 & 0.04 & 0.99 & 1.0 & 0.99 & 0.98 & 0.99 & 0.92 & 1.0 \\
\cline{3-12}
                   &     & Median & 0.13 & -0.06 & 1.0 & 1.0 & 1.0 & 0.99 & 0.99 & 0.96 & 1.0 \\ 
\cline{1-12}
\multirow{8}{*}{InfoGraph}  &    \multirow{2}{*}{Mixing Random} & Mean & 1.0 & 1.0 & 0.97 & 0.99 & 1.0 & 1.0 & 1.0 & 1.0 & 1.0 \\
\cline{3-12}
                   &     & Median & 1.0 & 1.0 & 0.98 & 0.99 & 1.0 & 1.0 & 1.0 & 1.0 & 1.0 \\ 
\cline{2-12}
 &    \multirow{2}{*}{Rewiring Edges} & Mean & 0.99 & 0.99 & 0.93 & 0.99 & 0.99 & 0.99 & 0.98 & 1.0 & 0.99 \\
\cline{3-12}
                   &     & Median & 0.99 & 0.99 & 0.93 & 0.99 & 0.99 & 0.99 & 0.99 & 1.0 & 0.99 \\ 
\cline{2-12}
 &    \multirow{2}{*}{Mode Collapse} & Mean & -0.92 & -0.68 & 0.96 & 0.99 & 0.74 & 0.78 & 0.99 & 0.98 & 1.0 \\
\cline{3-12}
                   &     & Median & -0.99 & -0.87 & 0.96 & 1.0 & 0.73 & 0.86 & 1.0 & 0.99 & 1.0 \\ 
\cline{2-12}
 &    \multirow{2}{*}{Mode Dropping} & Mean & -0.21 & 0.06 & 1.0 & 1.0 & 0.99 & 0.97 & 0.99 & 0.96 & 1.0 \\
\cline{3-12}
                   &     & Median & -0.44 & 0.27 & 1.0 & 1.0 & 0.99 & 0.97 & 1.0 & 0.96 & 1.0 \\ 
\cline{1-12}
\multirow{8}{*}{GIN-Random}  &    \multirow{2}{*}{Mixing Random} & Mean & 1.0 & 1.0 & 0.88 & 0.99 & 1.0 & 1.0 & 1.0 & 1.0 & 1.0 \\
\cline{3-12}
                   &     & Median & 1.0 & 1.0 & 0.89 & 0.99 & 1.0 & 1.0 & 1.0 & 1.0 & 1.0 \\ 
\cline{2-12}
 &    \multirow{2}{*}{Rewiring Edges} & Mean & 0.99 & 0.99 & 0.79 & 0.99 & 0.99 & 0.99 & 0.99 & 0.99 & 1.0 \\
\cline{3-12}
                   &     & Median & 0.99 & 0.99 & 0.86 & 0.99 & 0.99 & 0.99 & 1.0 & 1.0 & 1.0 \\ 
\cline{2-12}
 &    \multirow{2}{*}{Mode Collapse} & Mean & -0.68 & -0.85 & 0.96 & 1.0 & 0.87 & 0.62 & 0.56 & 0.78 & 0.99 \\
\cline{3-12}
                   &     & Median & -0.8 & -0.92 & 0.99 & 1.0 & 0.91 & 0.58 & 0.99 & 0.94 & 0.99 \\ 
\cline{2-12}
 &    \multirow{2}{*}{Mode Dropping} & Mean & -0.08 & 0.25 & 1.0 & 1.0 & 0.99 & 0.97 & 0.61 & 0.75 & 0.99 \\
\cline{3-12}
                   &     & Median & -0.05 & 0.48 & 1.0 & 1.0 & 1.0 & 0.98 & 0.94 & 0.88 & 0.99 \\ \bottomrule
\end{tabular}
}
\end{small}
\caption{Mean and median values for measurements in experiments with no structural features by models.} 
\label{table:no_struct_feats_zinc}
\end{table}

\begin{figure*}[h!]
    \captionsetup[subfloat]{farskip=-2pt,captionskip=-8pt}
    \centering
    \subfloat[][]{\includegraphics[width = 2.55in]{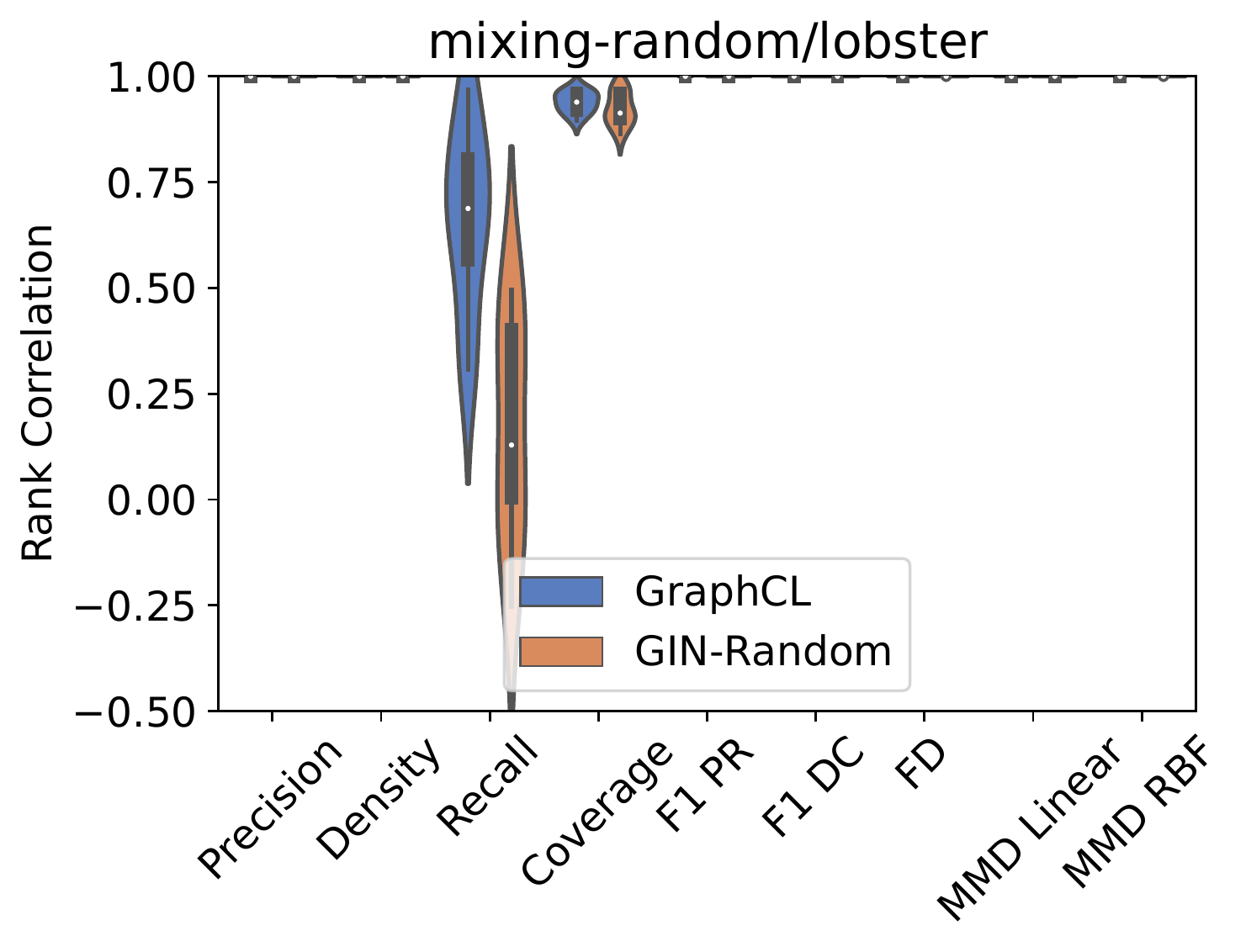}}
    \subfloat[][]{\includegraphics[width = 2.55in]{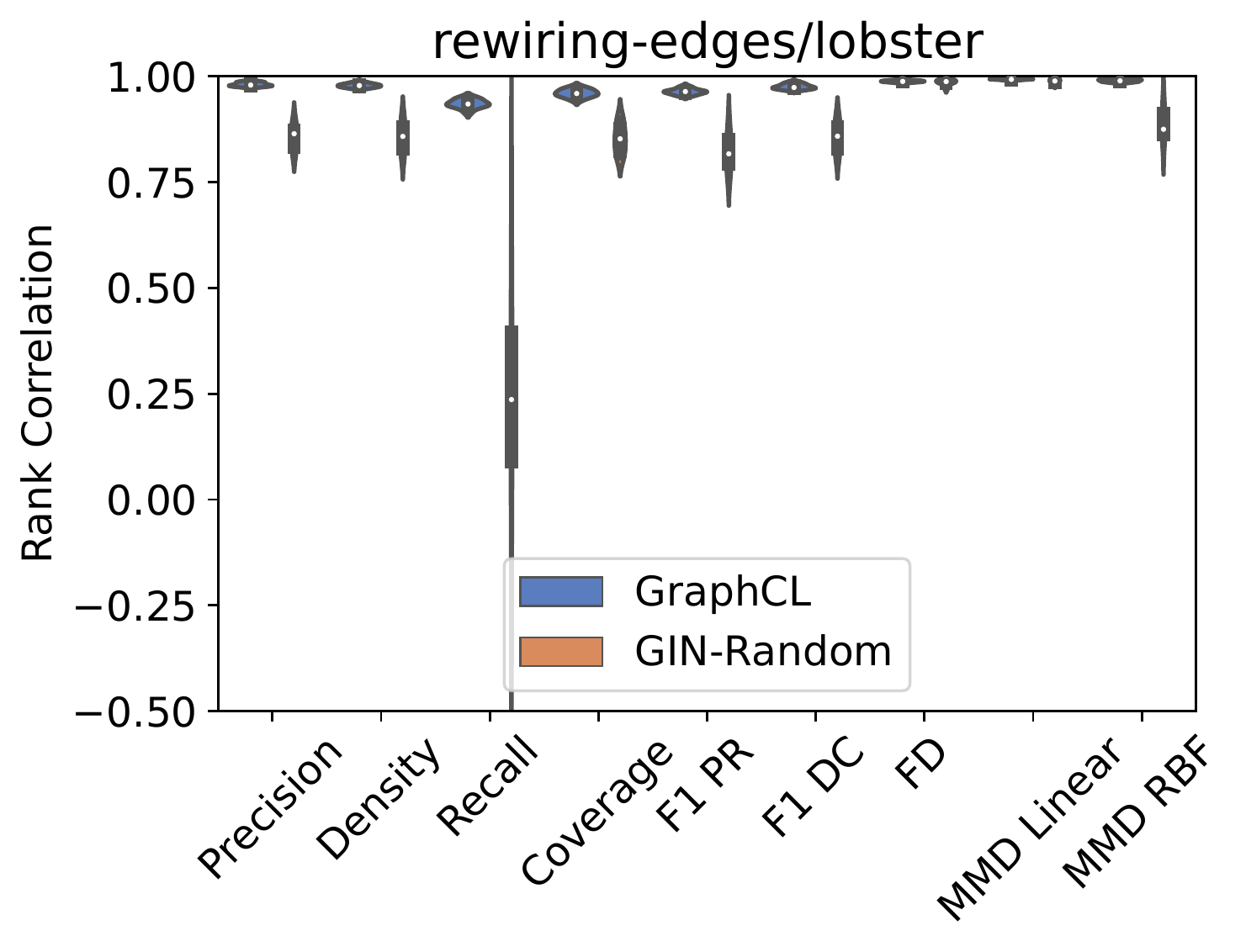}}
    \\
    \subfloat[][]{\includegraphics[width = 2.55in]{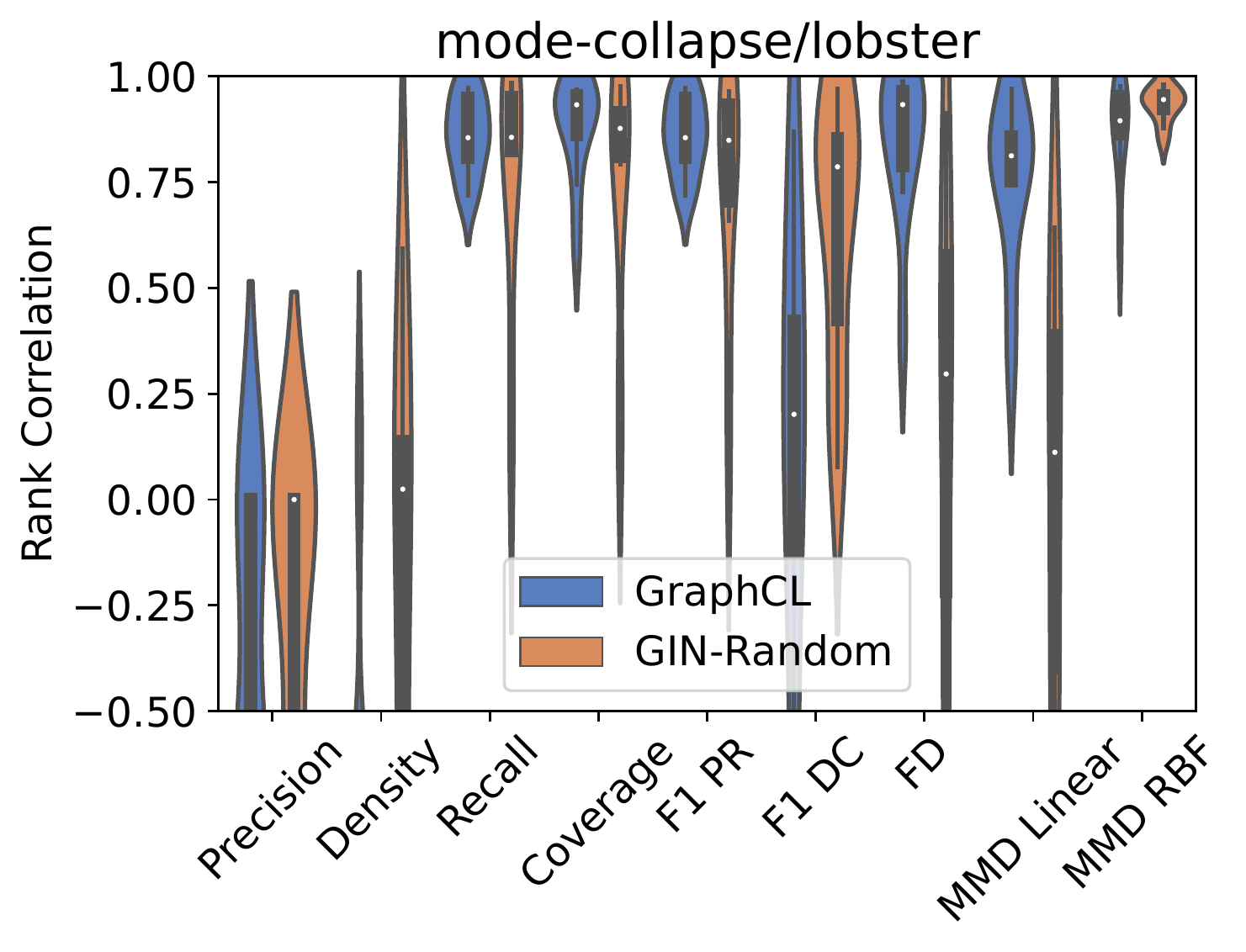}}
    \subfloat[][]{\includegraphics[width = 2.55in]{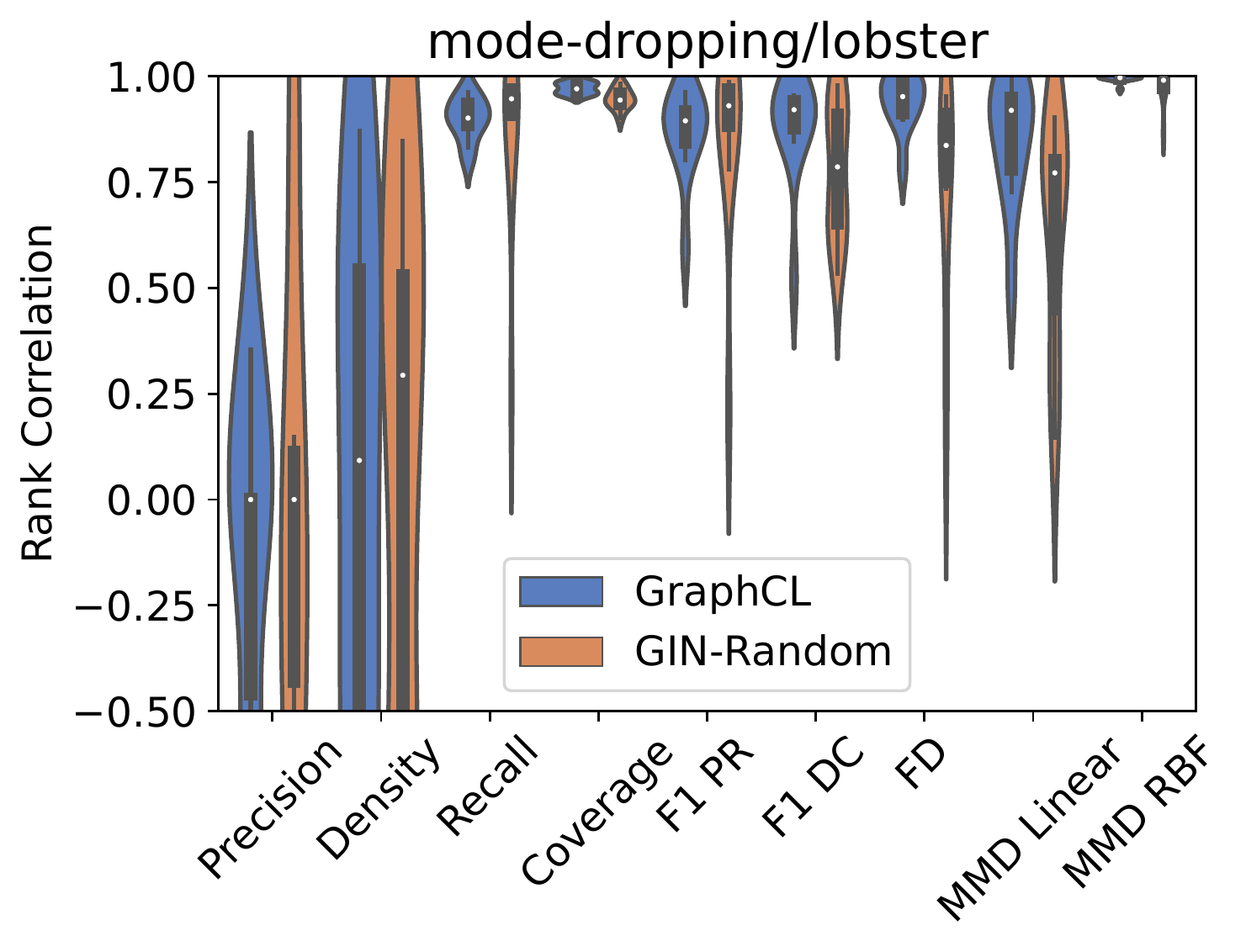}}
    \caption{Violing comparative results among the methods, with with degree features for lobster dataset.}
    \label{fig:deg_feats_lobster}
\end{figure*}
 
\begin{table}[h!]
\centering
\begin{small}
\scalebox{0.7}{
\begin{tabular}{l|l|l|l|l|l|l|l|l|l|l|l} 
\toprule
Model Name               & Experiment                      &        & Precision & Density & Recall & Coverage & F1PR & F1DC & FD & MMD Lin & MMD RBF  \\ 
\hline
\multirow{8}{*}{GraphCL}  &    \multirow{2}{*}{Mixing Random} & Mean & 1.0 & 1.0 & 0.66 & 0.94 & 1.0 & 1.0 & 1.0 & 1.0 & 1.0 \\
\cline{3-12}
                   &     & Median & 1.0 & 1.0 & 0.69 & 0.94 & 1.0 & 1.0 & 1.0 & 1.0 & 1.0 \\ 
\cline{2-12}
 &    \multirow{2}{*}{Rewiring Edges} & Mean & 0.98 & 0.98 & 0.93 & 0.96 & 0.96 & 0.98 & 0.99 & 0.99 & 0.99 \\
\cline{3-12}
                   &     & Median & 0.98 & 0.98 & 0.93 & 0.96 & 0.96 & 0.97 & 0.99 & 0.99 & 0.99 \\ 
\cline{2-12}
 &    \multirow{2}{*}{Mode Collapse} & Mean & -0.47 & -0.82 & 0.86 & 0.88 & 0.86 & 0.1 & 0.85 & 0.75 & 0.87 \\
\cline{3-12}
                   &     & Median & -0.66 & -0.94 & 0.85 & 0.93 & 0.85 & 0.2 & 0.93 & 0.81 & 0.89 \\ 
\cline{2-12}
 &    \multirow{2}{*}{Mode Dropping} & Mean & -0.15 & 0.05 & 0.9 & 0.97 & 0.87 & 0.88 & 0.94 & 0.85 & 0.99 \\
\cline{3-12}
                   &     & Median & 0.0 & 0.09 & 0.9 & 0.97 & 0.89 & 0.92 & 0.95 & 0.92 & 1.0 \\ 
\cline{1-12}
\multirow{8}{*}{GIN-Random}  &    \multirow{2}{*}{Mixing Random} & Mean & 1.0 & 1.0 & 0.16 & 0.92 & 1.0 & 1.0 & 1.0 & 1.0 & 1.0 \\
\cline{3-12}
                   &     & Median & 1.0 & 1.0 & 0.13 & 0.91 & 1.0 & 1.0 & 1.0 & 1.0 & 1.0 \\ 
\cline{2-12}
 &    \multirow{2}{*}{Rewiring Edges} & Mean & 0.86 & 0.86 & 0.22 & 0.85 & 0.82 & 0.86 & 0.99 & 0.99 & 0.89 \\
\cline{3-12}
                   &     & Median & 0.86 & 0.86 & 0.24 & 0.85 & 0.82 & 0.86 & 0.99 & 0.99 & 0.87 \\ 
\cline{2-12}
 &    \multirow{2}{*}{Mode Collapse} & Mean & -0.29 & -0.14 & 0.76 & 0.75 & 0.74 & 0.64 & 0.2 & 0.02 & 0.93 \\
\cline{3-12}
                   &     & Median & 0.0 & 0.02 & 0.86 & 0.88 & 0.85 & 0.79 & 0.3 & 0.11 & 0.94 \\ 
\cline{2-12}
 &    \multirow{2}{*}{Mode Dropping} & Mean & -0.03 & -0.02 & 0.86 & 0.95 & 0.85 & 0.78 & 0.74 & 0.65 & 0.98 \\
\cline{3-12}
                   &     & Median & 0.0 & 0.29 & 0.95 & 0.94 & 0.93 & 0.79 & 0.84 & 0.77 & 0.99 \\ \bottomrule
\end{tabular}
}
\end{small}
\caption{Mean and median values for measurements in experiments with with degree features by models, for dataset lobster} 
\label{table:deg_feats_lobster}
\end{table}
 
\begin{figure*}[h!]
    \captionsetup[subfloat]{farskip=-2pt,captionskip=-8pt}
    \centering
    \subfloat[][]{\includegraphics[width = 2.55in]{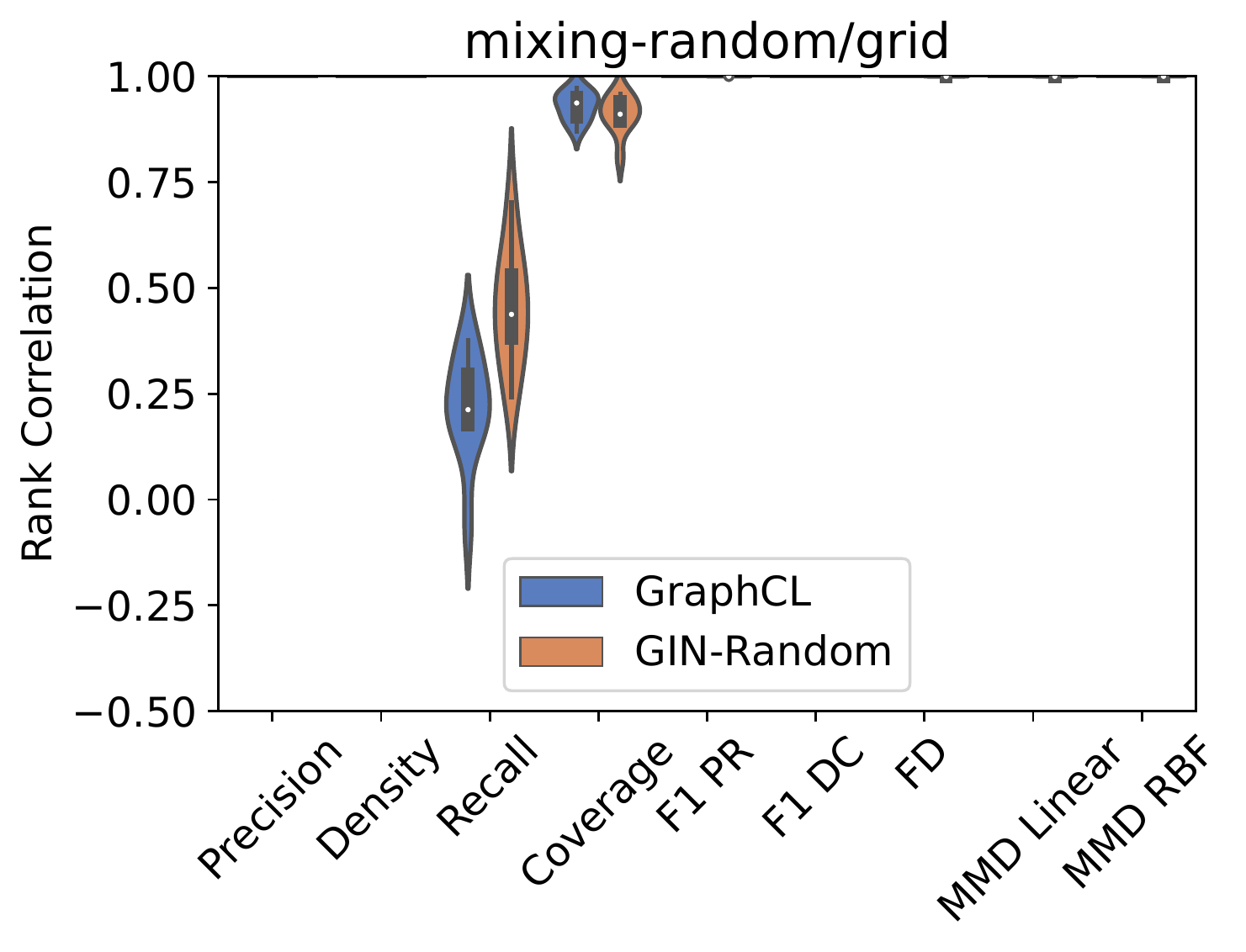}}
    \subfloat[][]{\includegraphics[width = 2.55in]{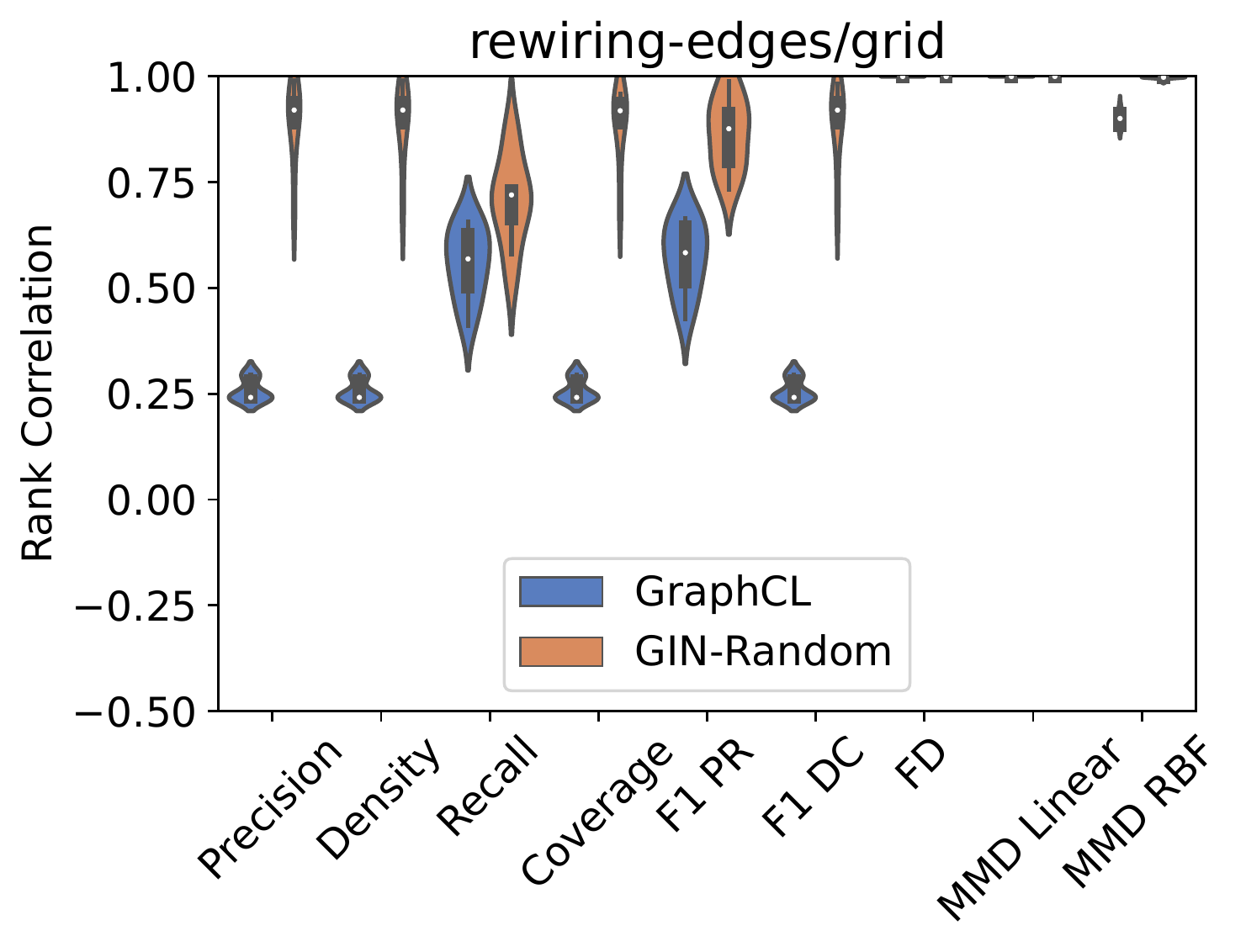}}
    \\
    \subfloat[][]{\includegraphics[width = 2.55in]{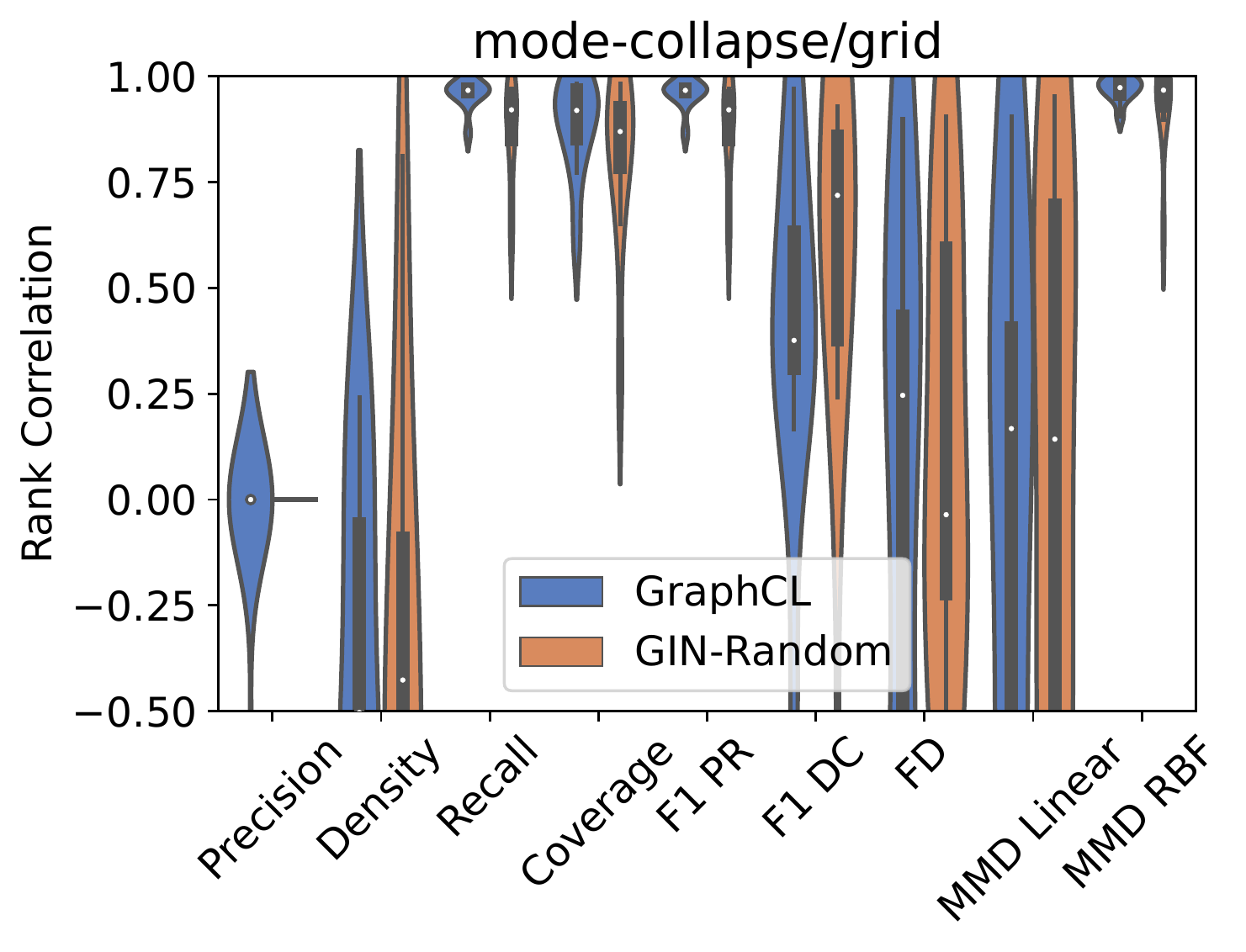}}
    \subfloat[][]{\includegraphics[width = 2.55in]{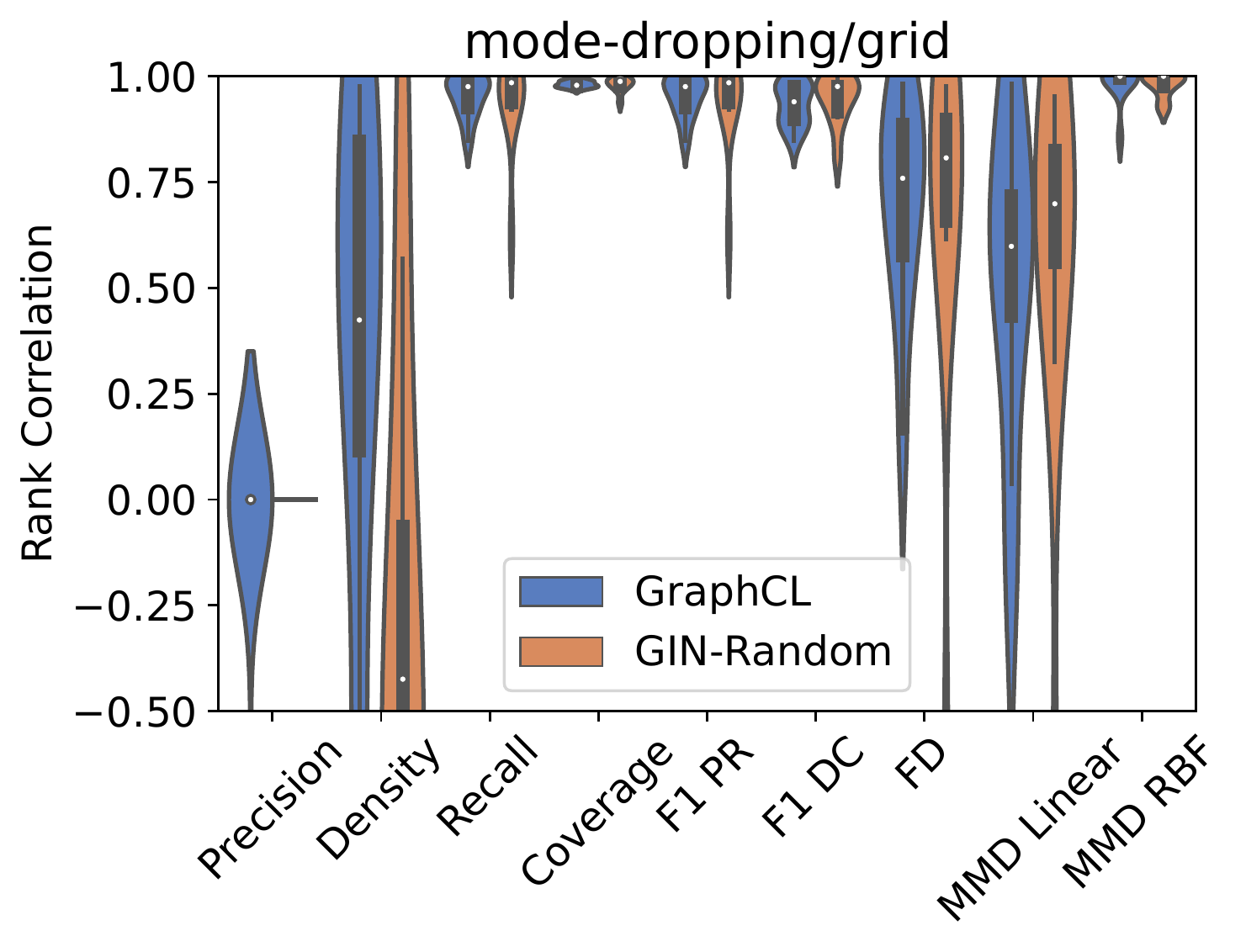}}
    \caption{Violing comparative results among the methods, with with degree features for grid dataset.}
    \label{fig:deg_feats_grid}
\end{figure*}
 
\begin{table}[h!]
\centering
\begin{small}
\scalebox{0.7}{
\begin{tabular}{l|l|l|l|l|l|l|l|l|l|l|l} 
\toprule
Model Name               & Experiment                      &        & Precision & Density & Recall & Coverage & F1PR & F1DC & FD & MMD Lin & MMD RBF  \\ 
\hline
\multirow{8}{*}{GraphCL}  &    \multirow{2}{*}{Mixing Random} & Mean & 1.0 & 1.0 & 0.22 & 0.93 & 1.0 & 1.0 & 1.0 & 1.0 & 1.0 \\
\cline{3-12}
                   &     & Median & 1.0 & 1.0 & 0.21 & 0.94 & 1.0 & 1.0 & 1.0 & 1.0 & 1.0 \\ 
\cline{2-12}
 &    \multirow{2}{*}{Rewiring Edges} & Mean & 0.26 & 0.26 & 0.56 & 0.26 & 0.57 & 0.26 & 1.0 & 1.0 & 0.9 \\
\cline{3-12}
                   &     & Median & 0.24 & 0.24 & 0.57 & 0.24 & 0.58 & 0.24 & 1.0 & 1.0 & 0.9 \\ 
\cline{2-12}
 &    \multirow{2}{*}{Mode Collapse} & Mean & -0.08 & -0.43 & 0.96 & 0.88 & 0.96 & 0.4 & -0.05 & -0.01 & 0.97 \\
\cline{3-12}
                   &     & Median & 0.0 & -0.5 & 0.97 & 0.92 & 0.97 & 0.38 & 0.25 & 0.17 & 0.97 \\ 
\cline{2-12}
 &    \multirow{2}{*}{Mode Dropping} & Mean & -0.09 & 0.31 & 0.95 & 0.98 & 0.95 & 0.93 & 0.7 & 0.53 & 0.98 \\
\cline{3-12}
                   &     & Median & 0.0 & 0.42 & 0.98 & 0.98 & 0.98 & 0.94 & 0.76 & 0.6 & 1.0 \\ 
\cline{1-12}
\multirow{8}{*}{GIN-Random}  &    \multirow{2}{*}{Mixing Random} & Mean & 1.0 & 1.0 & 0.45 & 0.91 & 1.0 & 1.0 & 1.0 & 1.0 & 1.0 \\
\cline{3-12}
                   &     & Median & 1.0 & 1.0 & 0.44 & 0.91 & 1.0 & 1.0 & 1.0 & 1.0 & 1.0 \\ 
\cline{2-12}
 &    \multirow{2}{*}{Rewiring Edges} & Mean & 0.9 & 0.89 & 0.71 & 0.89 & 0.86 & 0.89 & 1.0 & 1.0 & 1.0 \\
\cline{3-12}
                   &     & Median & 0.92 & 0.92 & 0.72 & 0.92 & 0.88 & 0.92 & 1.0 & 1.0 & 1.0 \\ 
\cline{2-12}
 &    \multirow{2}{*}{Mode Collapse} & Mean & 0.0 & -0.34 & 0.87 & 0.81 & 0.87 & 0.51 & 0.09 & 0.1 & 0.93 \\
\cline{3-12}
                   &     & Median & 0.0 & -0.43 & 0.92 & 0.87 & 0.92 & 0.72 & -0.04 & 0.14 & 0.97 \\ 
\cline{2-12}
 &    \multirow{2}{*}{Mode Dropping} & Mean & 0.0 & -0.29 & 0.94 & 0.98 & 0.94 & 0.95 & 0.67 & 0.55 & 0.98 \\
\cline{3-12}
                   &     & Median & 0.0 & -0.42 & 0.98 & 0.99 & 0.98 & 0.98 & 0.81 & 0.7 & 1.0 \\ \bottomrule
\end{tabular}
}
\end{small}
\caption{Mean and median values for measurements in experiments with with degree features by models, for dataset grid} 
\label{table:deg_feats_grid}
\end{table}
 
\begin{figure*}[h!]
    \captionsetup[subfloat]{farskip=-2pt,captionskip=-8pt}
    \centering
    \subfloat[][]{\includegraphics[width = 2.55in]{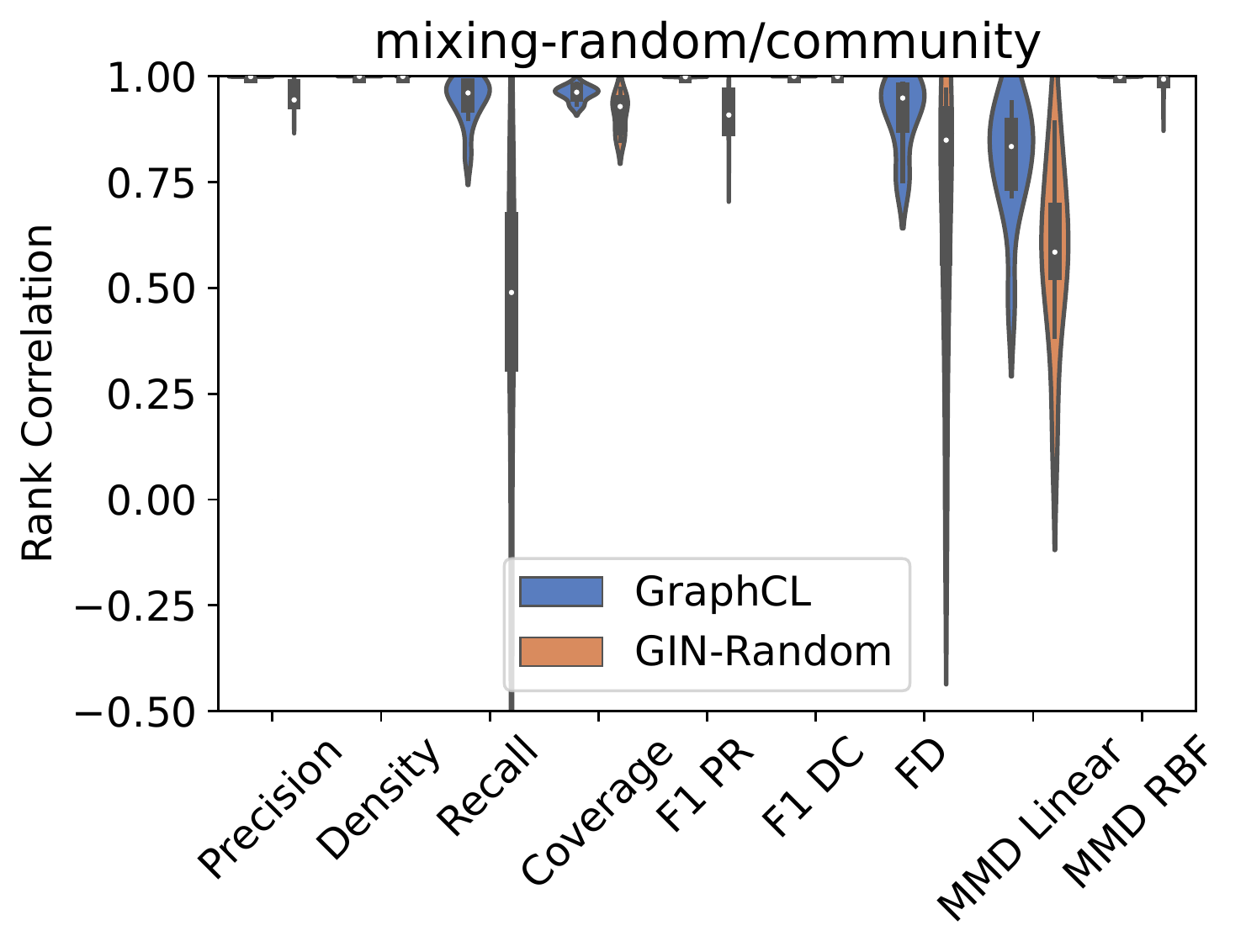}}
    \subfloat[][]{\includegraphics[width = 2.55in]{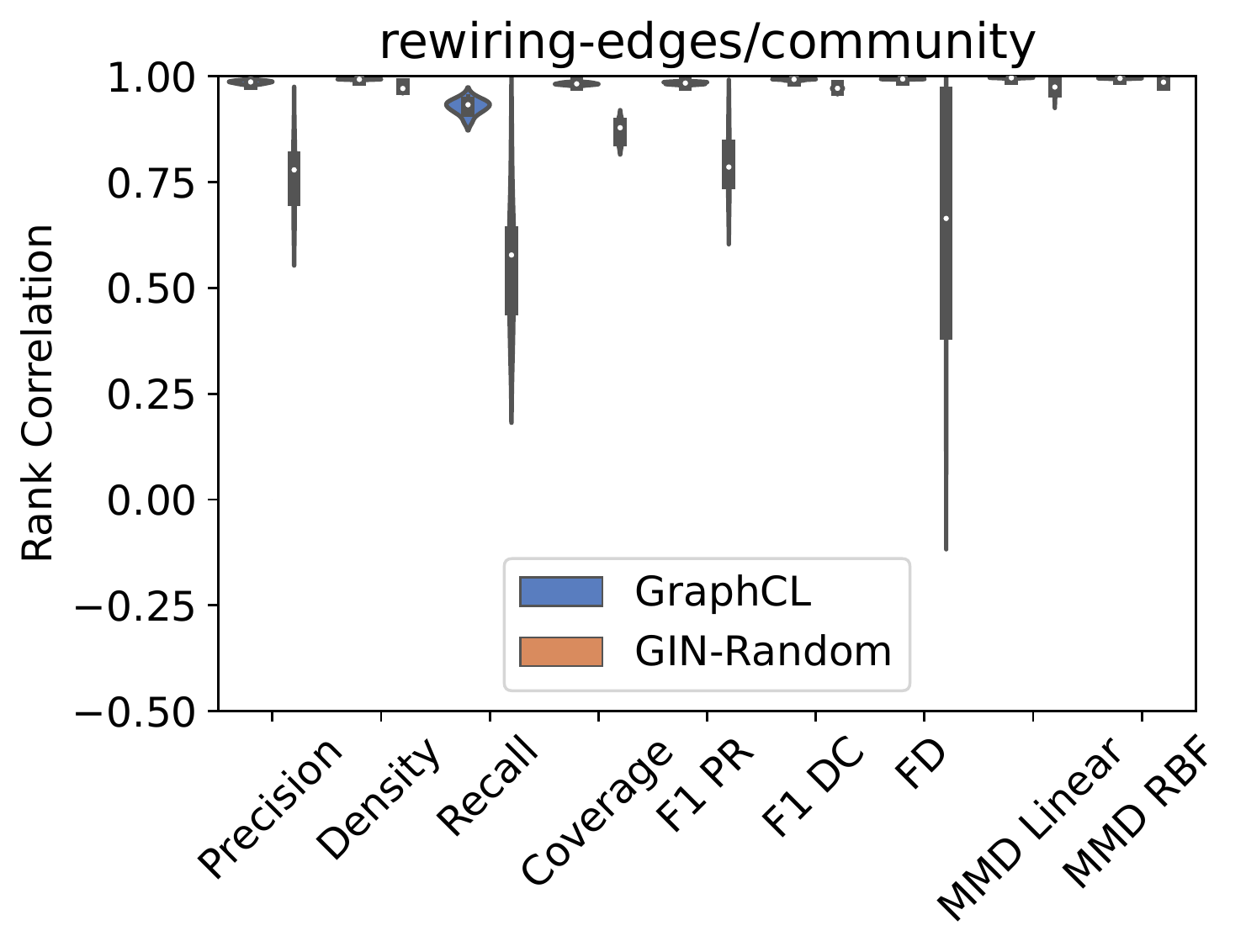}}
    \\
    \subfloat[][]{\includegraphics[width = 2.55in]{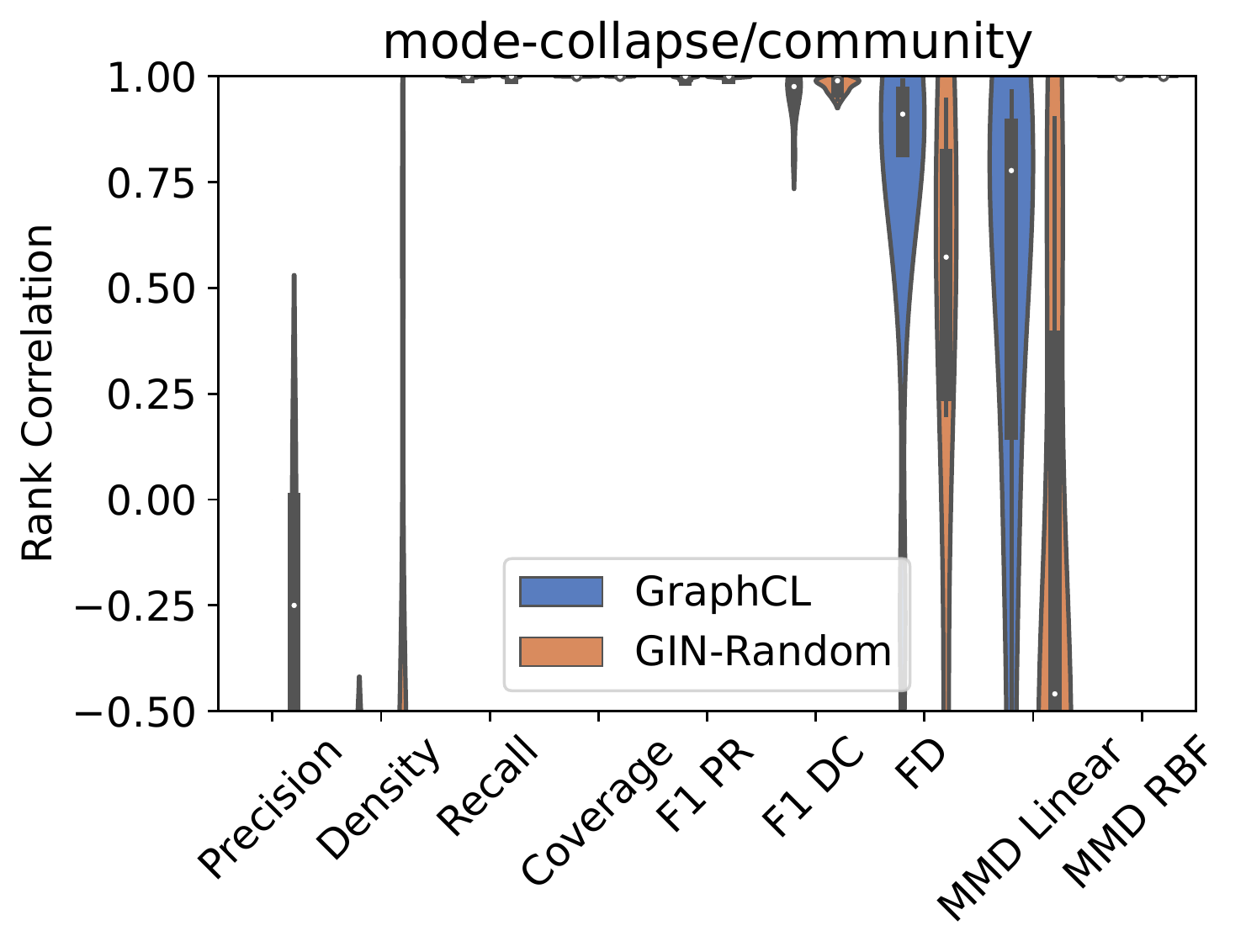}}
    \subfloat[][]{\includegraphics[width = 2.55in]{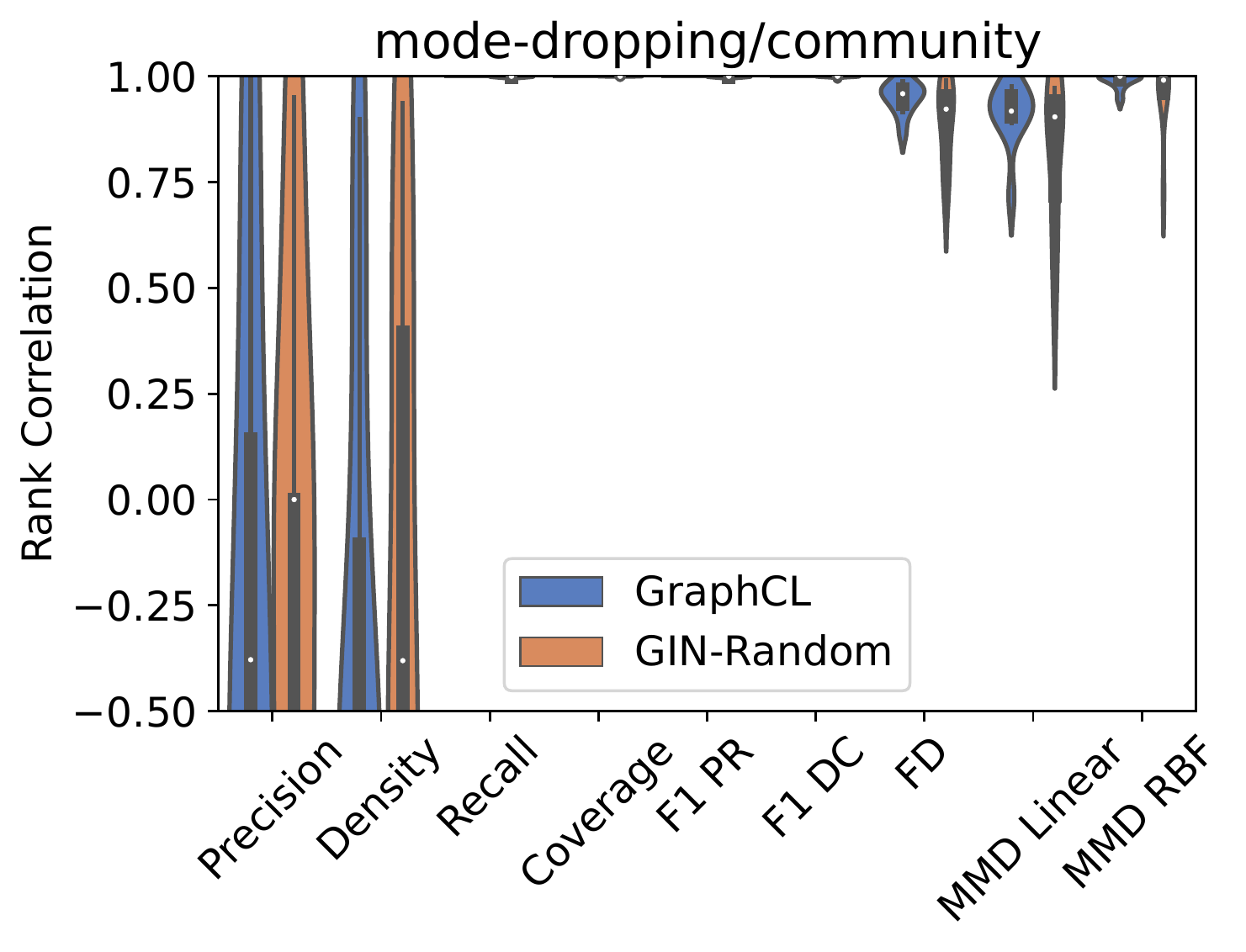}}
    \caption{Violing comparative results among the methods, with with degree features for community dataset.}
    \label{fig:deg_feats_community}
\end{figure*}
 
\begin{table}[h!]
\centering
\begin{small}
\scalebox{0.7}{
\begin{tabular}{l|l|l|l|l|l|l|l|l|l|l|l} 
\toprule
Model Name               & Experiment                      &        & Precision & Density & Recall & Coverage & F1PR & F1DC & FD & MMD Lin & MMD RBF  \\ 
\hline
\multirow{8}{*}{GraphCL}  &    \multirow{2}{*}{Mixing Random} & Mean & 1.0 & 1.0 & 0.94 & 0.96 & 1.0 & 1.0 & 0.91 & 0.8 & 1.0 \\
\cline{3-12}
                   &     & Median & 1.0 & 1.0 & 0.96 & 0.96 & 1.0 & 1.0 & 0.95 & 0.83 & 1.0 \\ 
\cline{2-12}
 &    \multirow{2}{*}{Rewiring Edges} & Mean & 0.99 & 0.99 & 0.93 & 0.98 & 0.98 & 0.99 & 0.99 & 1.0 & 0.99 \\
\cline{3-12}
                   &     & Median & 0.99 & 0.99 & 0.93 & 0.98 & 0.98 & 0.99 & 0.99 & 1.0 & 0.99 \\ 
\cline{2-12}
 &    \multirow{2}{*}{Mode Collapse} & Mean & -0.94 & -0.91 & 1.0 & 1.0 & 1.0 & 0.96 & 0.76 & 0.48 & 1.0 \\
\cline{3-12}
                   &     & Median & -0.96 & -0.95 & 1.0 & 1.0 & 1.0 & 0.98 & 0.91 & 0.78 & 1.0 \\ 
\cline{2-12}
 &    \multirow{2}{*}{Mode Dropping} & Mean & -0.21 & -0.36 & 1.0 & 1.0 & 1.0 & 1.0 & 0.95 & 0.91 & 0.99 \\
\cline{3-12}
                   &     & Median & -0.38 & -0.67 & 1.0 & 1.0 & 1.0 & 1.0 & 0.96 & 0.92 & 1.0 \\ 
\cline{1-12}
\multirow{8}{*}{GIN-Random}  &    \multirow{2}{*}{Mixing Random} & Mean & 0.95 & 1.0 & 0.39 & 0.91 & 0.9 & 1.0 & 0.69 & 0.59 & 0.98 \\
\cline{3-12}
                   &     & Median & 0.94 & 1.0 & 0.49 & 0.93 & 0.91 & 1.0 & 0.85 & 0.58 & 0.99 \\ 
\cline{2-12}
 &    \multirow{2}{*}{Rewiring Edges} & Mean & 0.76 & 0.97 & 0.56 & 0.87 & 0.79 & 0.97 & 0.66 & 0.98 & 0.98 \\
\cline{3-12}
                   &     & Median & 0.78 & 0.97 & 0.58 & 0.88 & 0.79 & 0.97 & 0.66 & 0.97 & 0.99 \\ 
\cline{2-12}
 &    \multirow{2}{*}{Mode Collapse} & Mean & -0.39 & -0.65 & 1.0 & 1.0 & 1.0 & 0.98 & 0.33 & -0.2 & 1.0 \\
\cline{3-12}
                   &     & Median & -0.25 & -0.85 & 1.0 & 1.0 & 1.0 & 0.99 & 0.57 & -0.46 & 1.0 \\ 
\cline{2-12}
 &    \multirow{2}{*}{Mode Dropping} & Mean & -0.17 & -0.16 & 1.0 & 1.0 & 1.0 & 1.0 & 0.89 & 0.83 & 0.96 \\
\cline{3-12}
                   &     & Median & 0.0 & -0.38 & 1.0 & 1.0 & 1.0 & 1.0 & 0.92 & 0.9 & 0.99 \\ \bottomrule
\end{tabular}
}
\end{small}
\caption{Mean and median values for measurements in experiments with with degree features by models, for dataset community} 
\label{table:deg_feats_community}
\end{table}
 
\begin{figure*}[h!]
    \captionsetup[subfloat]{farskip=-2pt,captionskip=-8pt}
    \centering
    \subfloat[][]{\includegraphics[width = 2.55in]{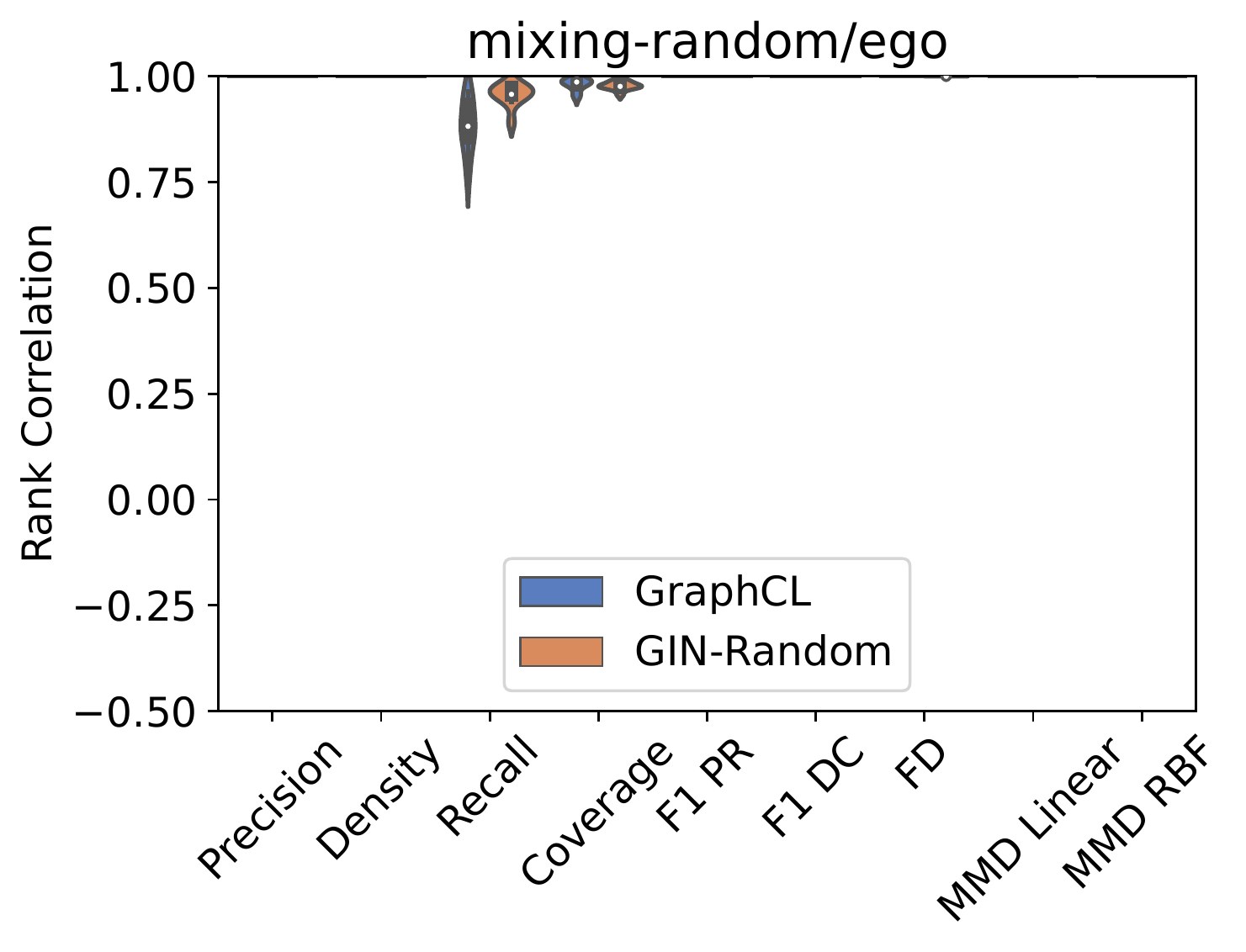}}
    \subfloat[][]{\includegraphics[width = 2.55in]{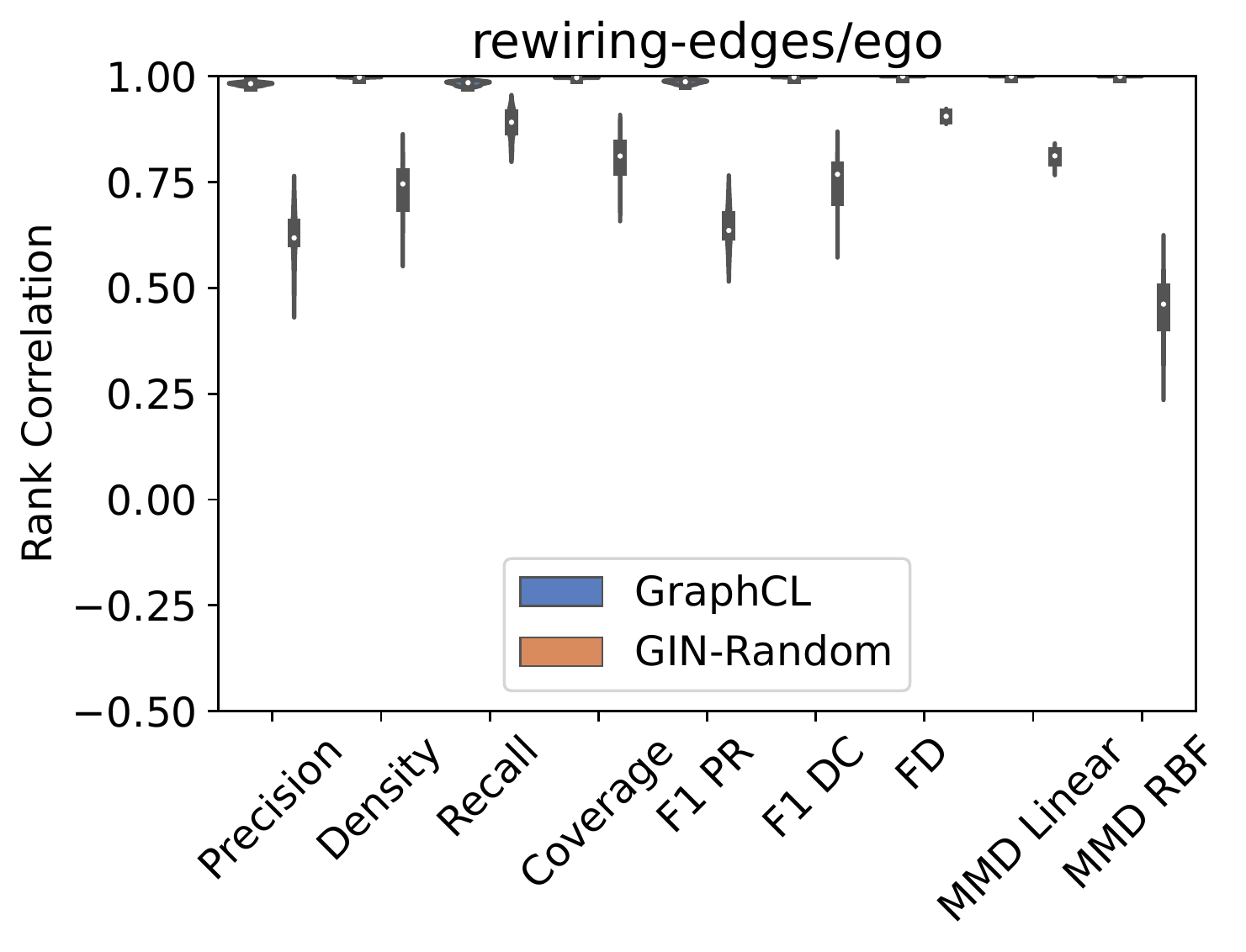}}
    \\
    \subfloat[][]{\includegraphics[width = 2.55in]{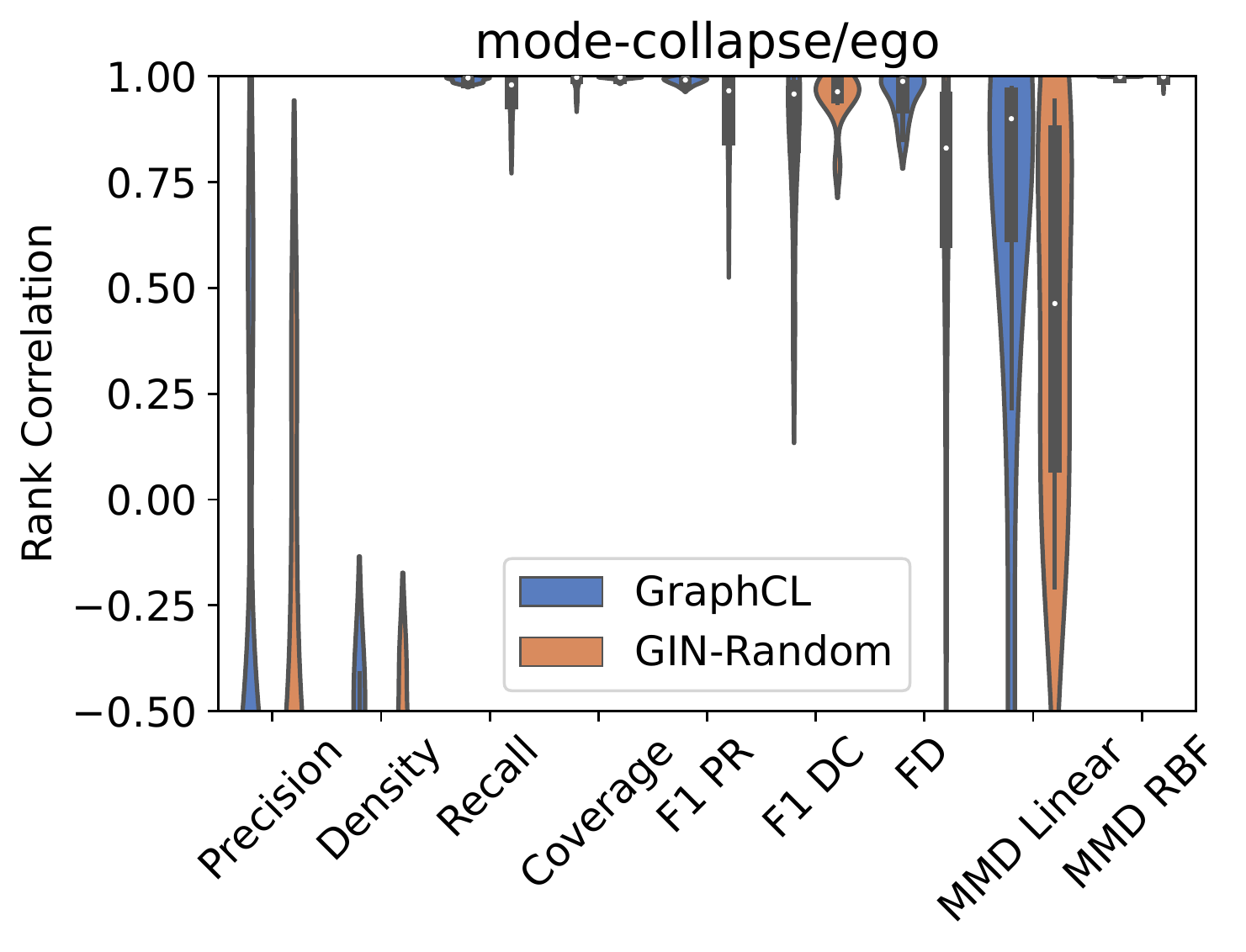}}
    \subfloat[][]{\includegraphics[width = 2.55in]{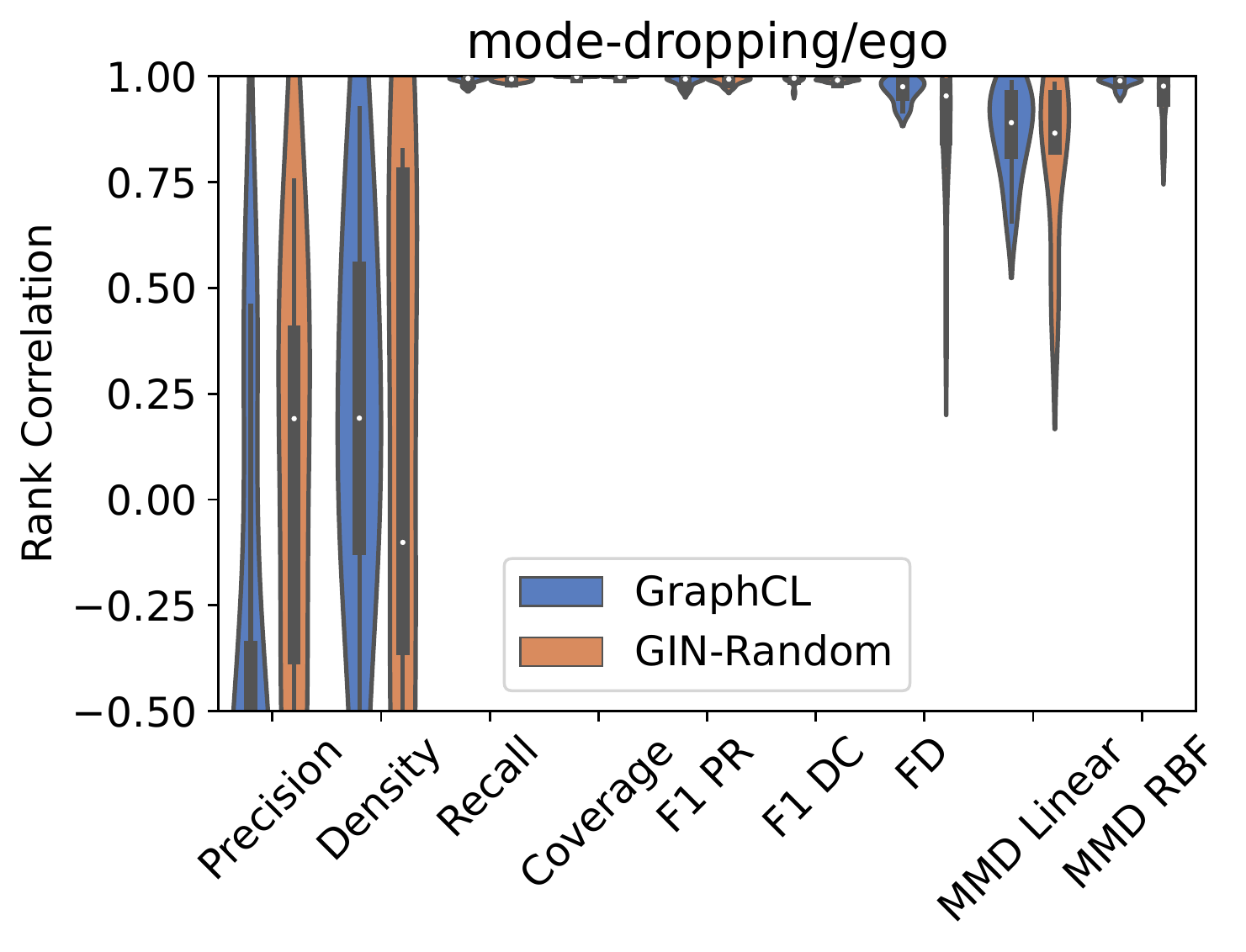}}
    \caption{Violing comparative results among the methods, with with degree features for ego dataset.}
    \label{fig:deg_feats_ego}
\end{figure*}
 
\begin{table}[h!]
\centering
\begin{small}
\scalebox{0.7}{
\begin{tabular}{l|l|l|l|l|l|l|l|l|l|l|l} 
\toprule
Model Name               & Experiment                      &        & Precision & Density & Recall & Coverage & F1PR & F1DC & FD & MMD Lin & MMD RBF  \\ 
\hline
\multirow{8}{*}{GraphCL}  &    \multirow{2}{*}{Mixing Random} & Mean & 1.0 & 1.0 & 0.88 & 0.98 & 1.0 & 1.0 & 1.0 & 1.0 & 1.0 \\
\cline{3-12}
                   &     & Median & 1.0 & 1.0 & 0.88 & 0.99 & 1.0 & 1.0 & 1.0 & 1.0 & 1.0 \\ 
\cline{2-12}
 &    \multirow{2}{*}{Rewiring Edges} & Mean & 0.98 & 1.0 & 0.98 & 1.0 & 0.99 & 1.0 & 1.0 & 1.0 & 1.0 \\
\cline{3-12}
                   &     & Median & 0.98 & 1.0 & 0.98 & 1.0 & 0.99 & 1.0 & 1.0 & 1.0 & 1.0 \\ 
\cline{2-12}
 &    \multirow{2}{*}{Mode Collapse} & Mean & -0.79 & -0.8 & 0.99 & 0.99 & 0.99 & 0.86 & 0.96 & 0.66 & 1.0 \\
\cline{3-12}
                   &     & Median & -0.98 & -0.88 & 1.0 & 1.0 & 0.99 & 0.96 & 0.99 & 0.9 & 1.0 \\ 
\cline{2-12}
 &    \multirow{2}{*}{Mode Dropping} & Mean & -0.5 & 0.14 & 0.99 & 1.0 & 0.99 & 0.99 & 0.97 & 0.87 & 0.99 \\
\cline{3-12}
                   &     & Median & -0.69 & 0.19 & 0.99 & 1.0 & 0.99 & 1.0 & 0.98 & 0.89 & 0.99 \\ 
\cline{1-12}
\multirow{8}{*}{GIN-Random}  &    \multirow{2}{*}{Mixing Random} & Mean & 1.0 & 1.0 & 0.96 & 0.98 & 1.0 & 1.0 & 1.0 & 1.0 & 1.0 \\
\cline{3-12}
                   &     & Median & 1.0 & 1.0 & 0.96 & 0.98 & 1.0 & 1.0 & 1.0 & 1.0 & 1.0 \\ 
\cline{2-12}
 &    \multirow{2}{*}{Rewiring Edges} & Mean & 0.62 & 0.73 & 0.89 & 0.8 & 0.64 & 0.75 & 0.91 & 0.81 & 0.45 \\
\cline{3-12}
                   &     & Median & 0.62 & 0.75 & 0.89 & 0.81 & 0.64 & 0.77 & 0.91 & 0.81 & 0.46 \\ 
\cline{2-12}
 &    \multirow{2}{*}{Mode Collapse} & Mean & -0.77 & -0.76 & 0.95 & 1.0 & 0.91 & 0.95 & 0.65 & 0.45 & 0.99 \\
\cline{3-12}
                   &     & Median & -0.97 & -0.78 & 0.98 & 1.0 & 0.97 & 0.96 & 0.83 & 0.46 & 1.0 \\ 
\cline{2-12}
 &    \multirow{2}{*}{Mode Dropping} & Mean & 0.02 & 0.07 & 0.99 & 1.0 & 0.99 & 0.99 & 0.87 & 0.82 & 0.95 \\
\cline{3-12}
                   &     & Median & 0.19 & -0.1 & 0.99 & 1.0 & 0.99 & 0.99 & 0.95 & 0.87 & 0.98 \\ \bottomrule
\end{tabular}
}
\end{small}
\caption{Mean and median values for measurements in experiments with with degree features by models, for dataset ego} 
\label{table:deg_feats_ego}
\end{table}
 
\begin{figure*}[h!]
    \captionsetup[subfloat]{farskip=-2pt,captionskip=-8pt}
    \centering
    \subfloat[][]{\includegraphics[width = 2.55in]{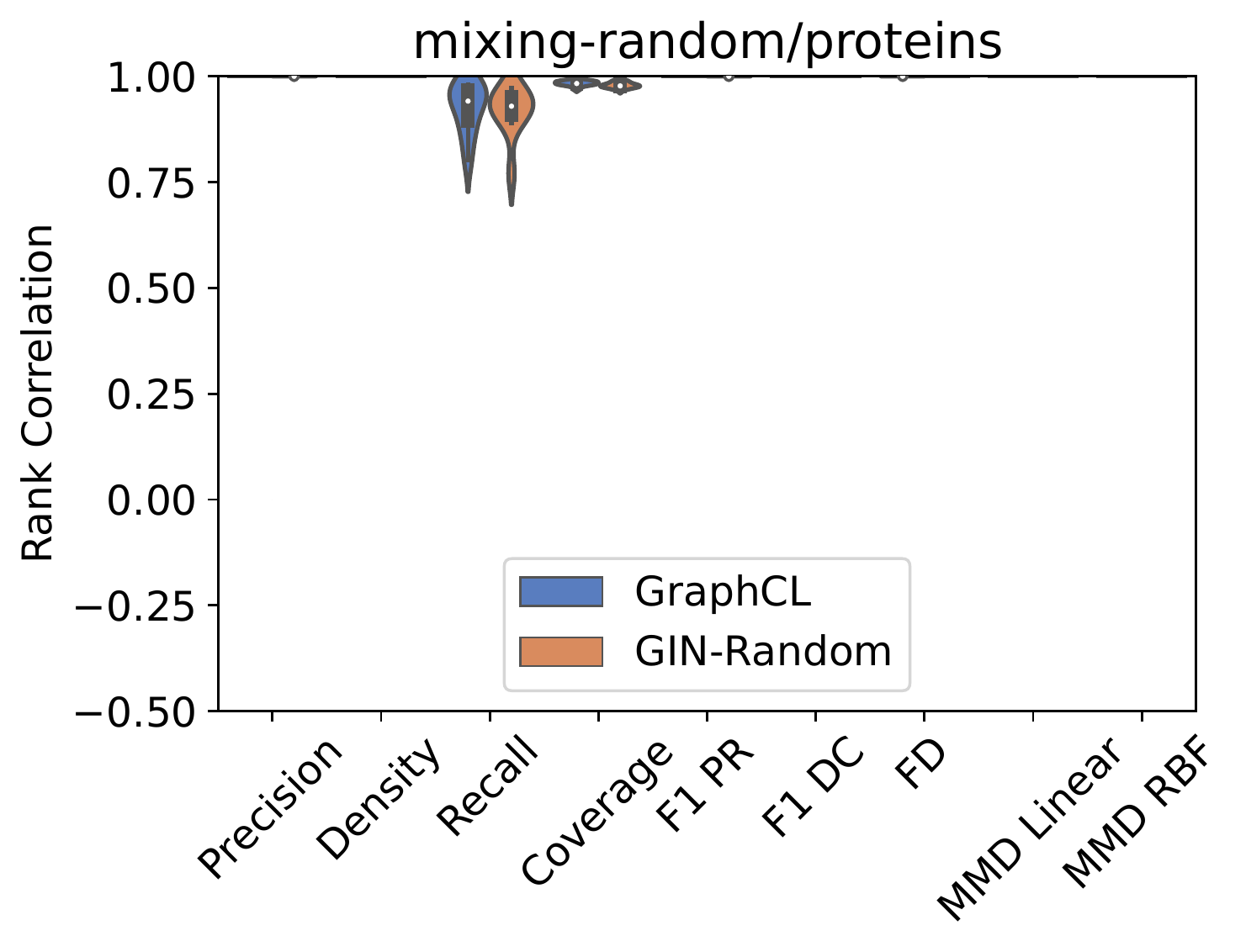}}
    \subfloat[][]{\includegraphics[width = 2.55in]{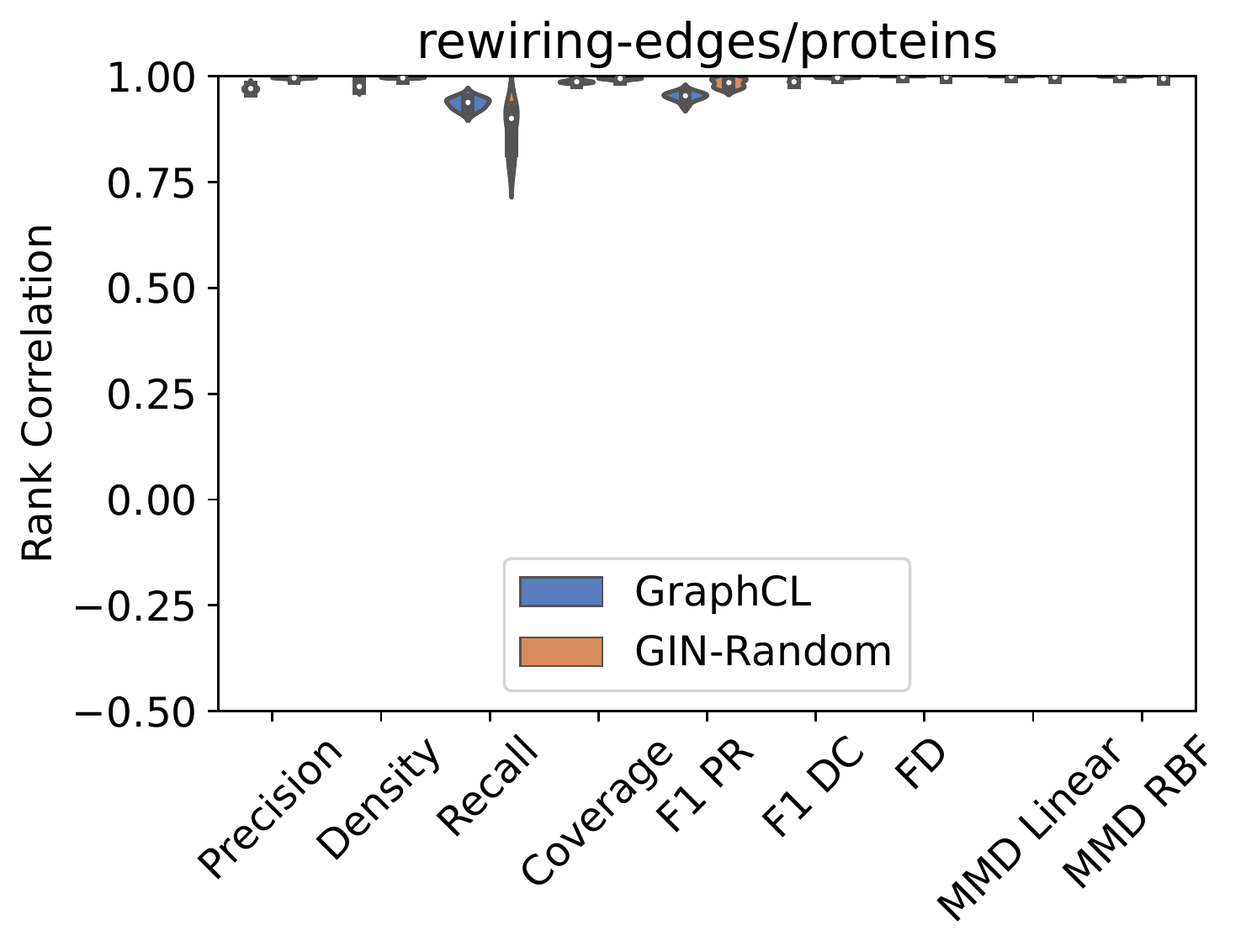}}
    \\
    \subfloat[][]{\includegraphics[width = 2.55in]{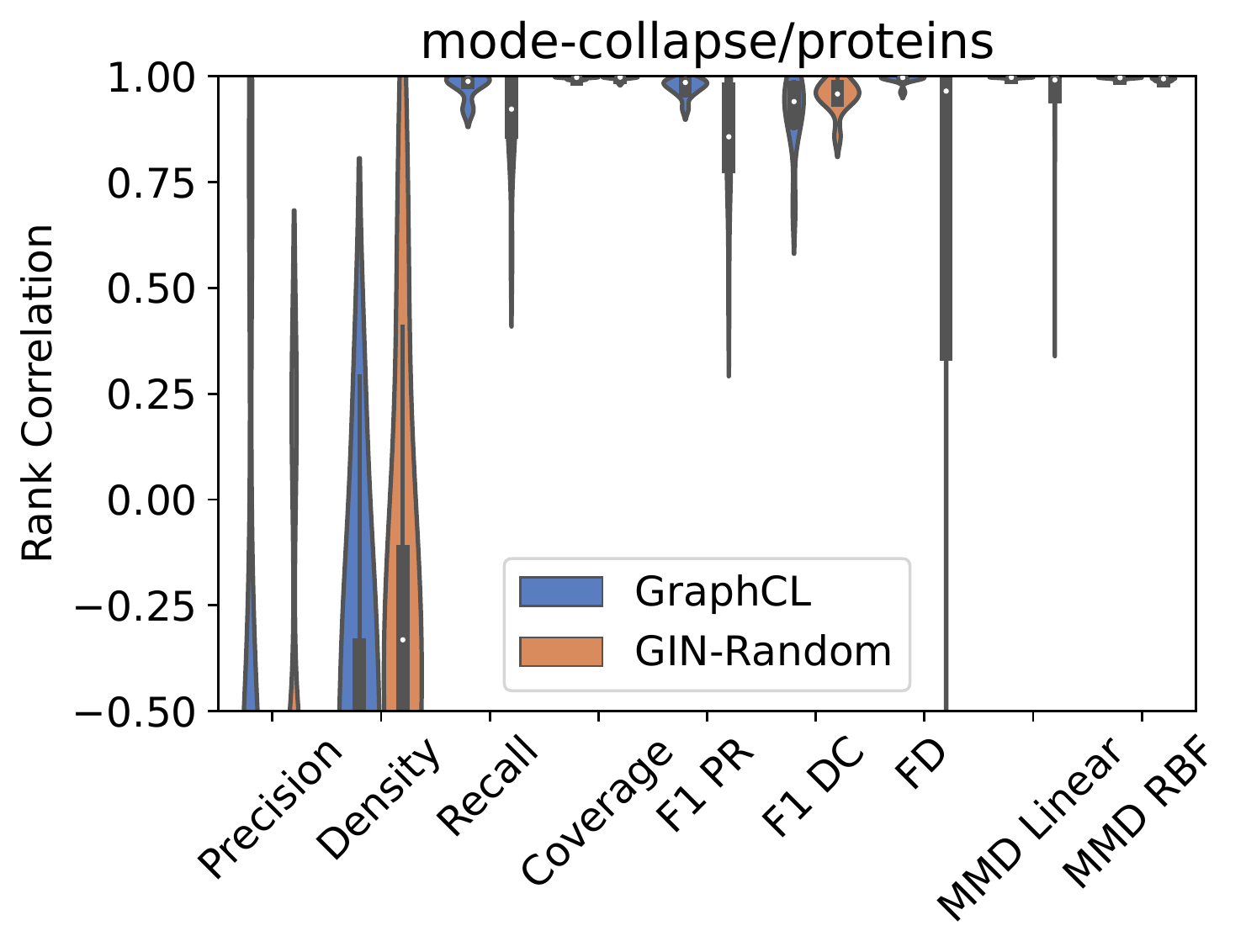}}
    \subfloat[][]{\includegraphics[width = 2.55in]{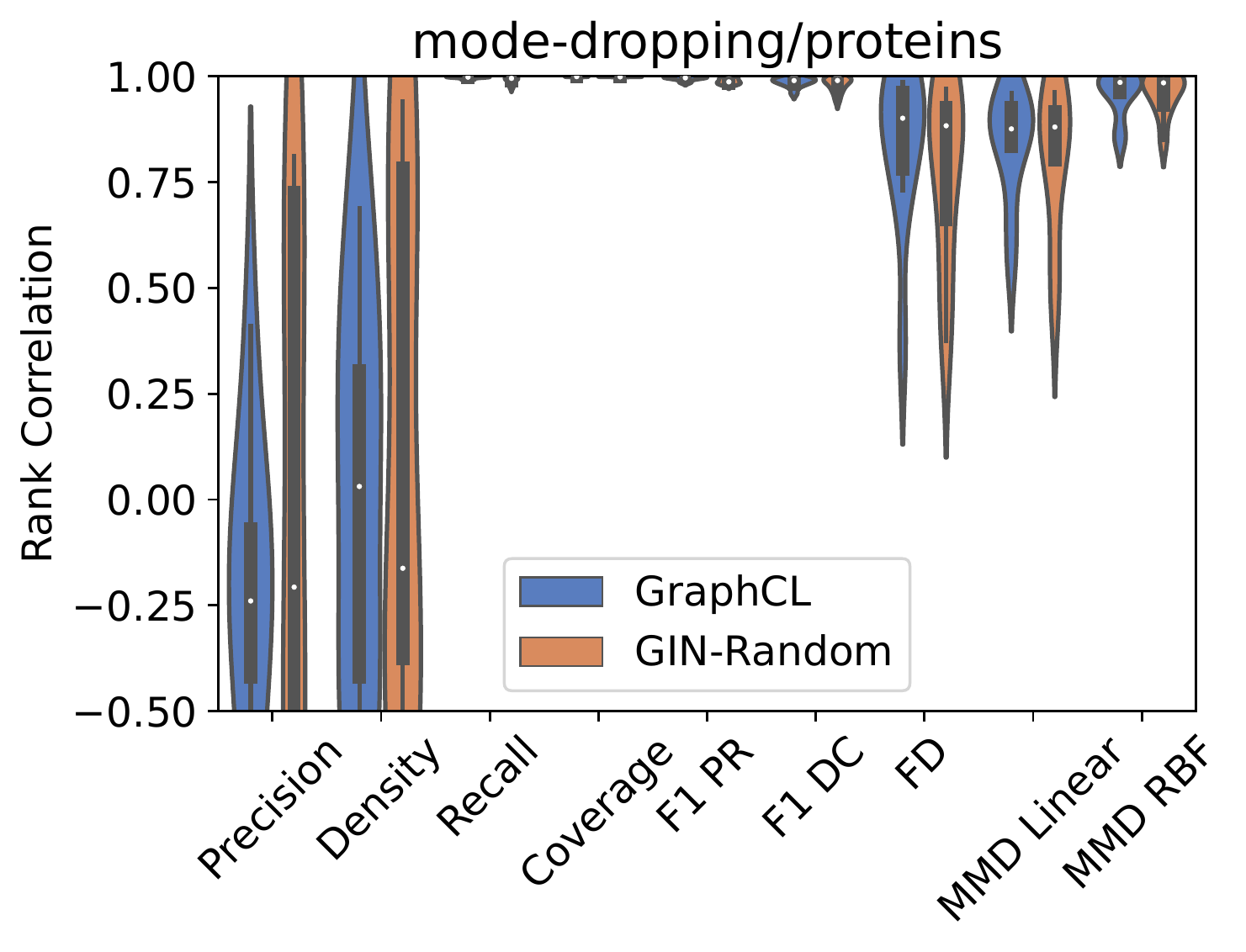}}
    \caption{Violing comparative results among the methods, with with degree features for proteins dataset.}
    \label{fig:deg_feats_proteins}
\end{figure*}
 
\begin{table}[h!]
\centering
\begin{small}
\scalebox{0.7}{
\begin{tabular}{l|l|l|l|l|l|l|l|l|l|l|l} 
\toprule
Model Name               & Experiment                      &        & Precision & Density & Recall & Coverage & F1PR & F1DC & FD & MMD Lin & MMD RBF  \\ 
\hline
\multirow{8}{*}{GraphCL}  &    \multirow{2}{*}{Mixing Random} & Mean & 1.0 & 1.0 & 0.92 & 0.98 & 1.0 & 1.0 & 1.0 & 1.0 & 1.0 \\
\cline{3-12}
                   &     & Median & 1.0 & 1.0 & 0.94 & 0.98 & 1.0 & 1.0 & 1.0 & 1.0 & 1.0 \\ 
\cline{2-12}
 &    \multirow{2}{*}{Rewiring Edges} & Mean & 0.97 & 0.98 & 0.94 & 0.99 & 0.95 & 0.99 & 1.0 & 1.0 & 1.0 \\
\cline{3-12}
                   &     & Median & 0.97 & 0.98 & 0.94 & 0.99 & 0.95 & 0.99 & 1.0 & 1.0 & 1.0 \\ 
\cline{2-12}
 &    \multirow{2}{*}{Mode Collapse} & Mean & -0.76 & -0.5 & 0.98 & 1.0 & 0.97 & 0.92 & 0.99 & 1.0 & 1.0 \\
\cline{3-12}
                   &     & Median & -0.98 & -0.54 & 0.99 & 1.0 & 0.99 & 0.94 & 1.0 & 1.0 & 1.0 \\ 
\cline{2-12}
 &    \multirow{2}{*}{Mode Dropping} & Mean & -0.29 & -0.01 & 1.0 & 1.0 & 0.99 & 0.99 & 0.84 & 0.84 & 0.96 \\
\cline{3-12}
                   &     & Median & -0.24 & 0.03 & 1.0 & 1.0 & 1.0 & 0.99 & 0.9 & 0.88 & 0.99 \\ 
\cline{1-12}
\multirow{8}{*}{GIN-Random}  &    \multirow{2}{*}{Mixing Random} & Mean & 1.0 & 1.0 & 0.92 & 0.98 & 1.0 & 1.0 & 1.0 & 1.0 & 1.0 \\
\cline{3-12}
                   &     & Median & 1.0 & 1.0 & 0.93 & 0.98 & 1.0 & 1.0 & 1.0 & 1.0 & 1.0 \\ 
\cline{2-12}
 &    \multirow{2}{*}{Rewiring Edges} & Mean & 0.99 & 1.0 & 0.87 & 0.99 & 0.98 & 1.0 & 1.0 & 1.0 & 0.99 \\
\cline{3-12}
                   &     & Median & 1.0 & 1.0 & 0.9 & 0.99 & 0.98 & 1.0 & 1.0 & 1.0 & 0.99 \\ 
\cline{2-12}
 &    \multirow{2}{*}{Mode Collapse} & Mean & -0.82 & -0.25 & 0.89 & 1.0 & 0.84 & 0.95 & 0.59 & 0.92 & 0.99 \\
\cline{3-12}
                   &     & Median & -0.96 & -0.33 & 0.92 & 1.0 & 0.86 & 0.96 & 0.97 & 0.99 & 0.99 \\ 
\cline{2-12}
 &    \multirow{2}{*}{Mode Dropping} & Mean & 0.02 & 0.13 & 0.99 & 1.0 & 0.99 & 0.98 & 0.77 & 0.81 & 0.96 \\
\cline{3-12}
                   &     & Median & -0.21 & -0.16 & 1.0 & 1.0 & 0.99 & 0.99 & 0.88 & 0.88 & 0.98 \\ \bottomrule
\end{tabular}
}
\end{small}
\caption{Mean and median values for measurements in experiments with with degree features by models, for dataset proteins} 
\label{table:deg_feats_proteins}
\end{table}
 
\begin{figure*}[h!]
    \captionsetup[subfloat]{farskip=-2pt,captionskip=-8pt}
    \centering
    \subfloat[][]{\includegraphics[width = 2.55in]{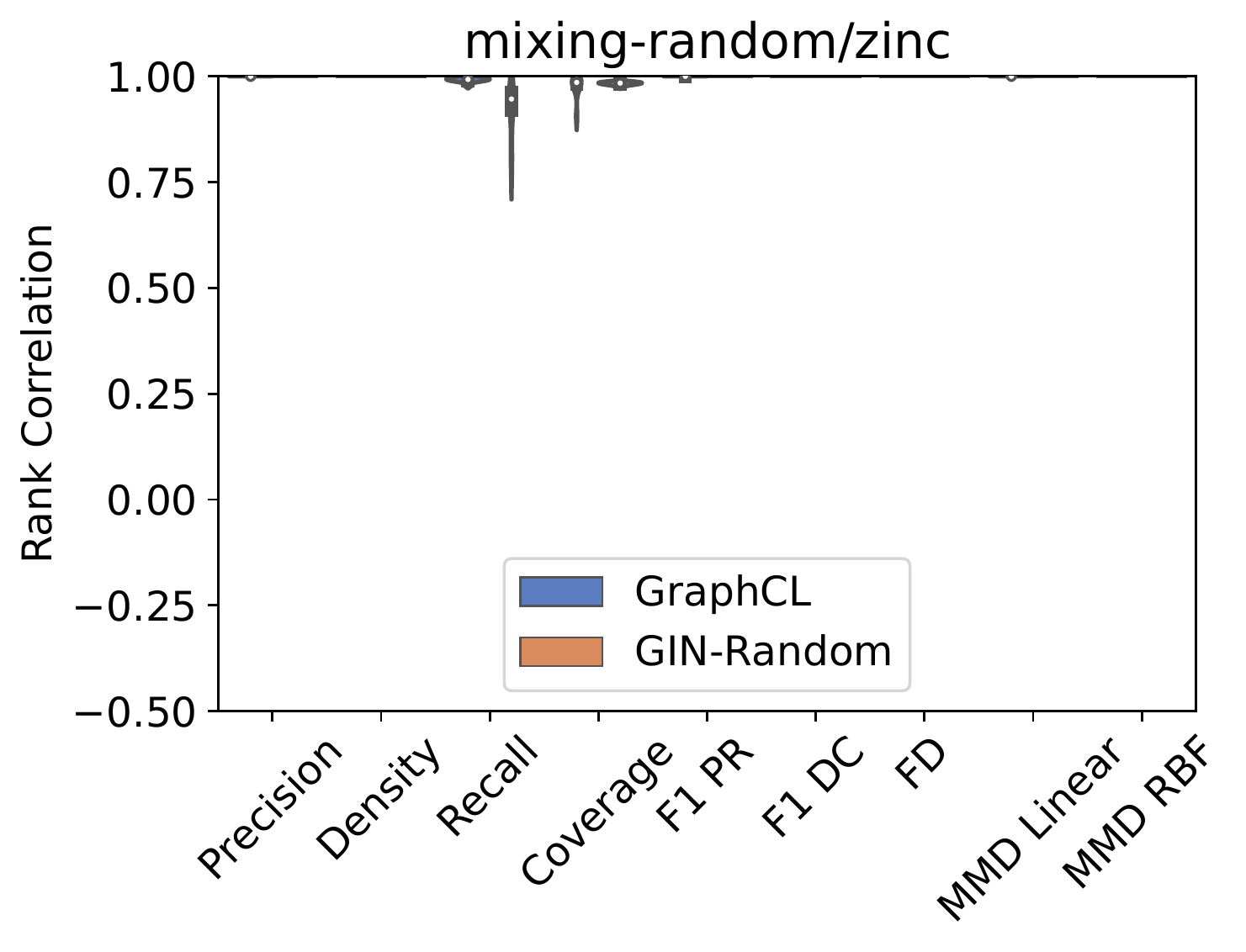}}
    \subfloat[][]{\includegraphics[width = 2.55in]{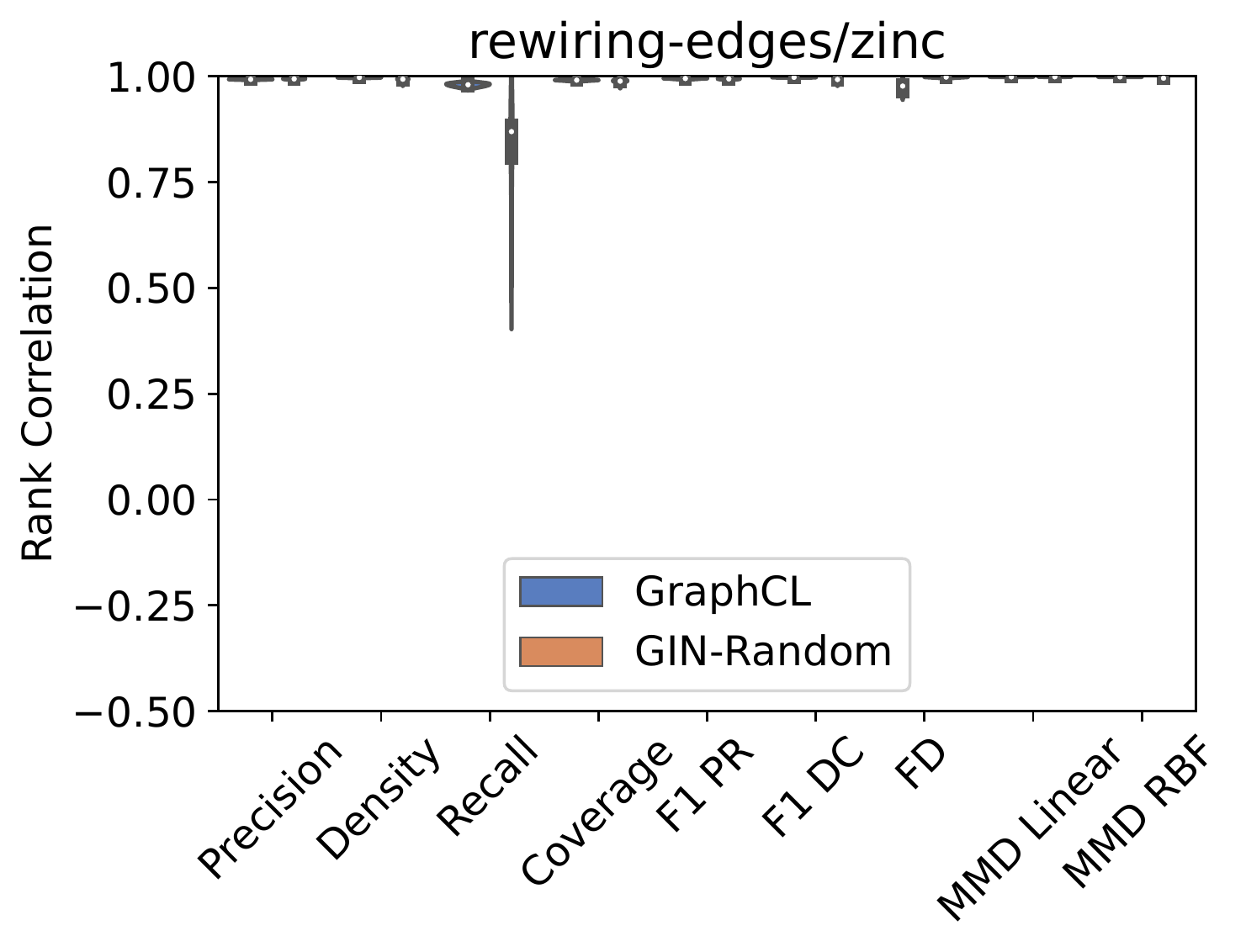}}
    \\
    \subfloat[][]{\includegraphics[width = 2.55in]{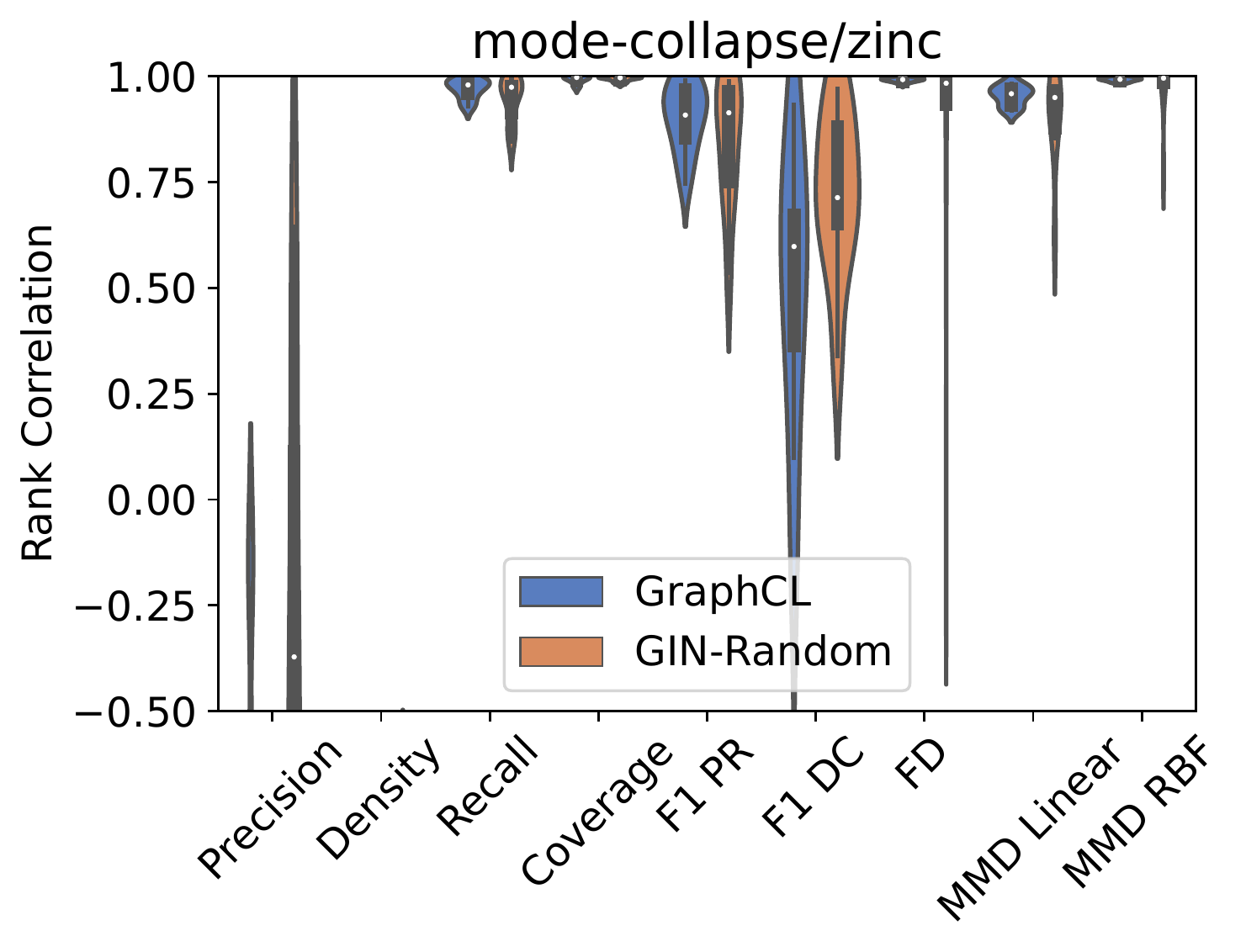}}
    \subfloat[][]{\includegraphics[width = 2.55in]{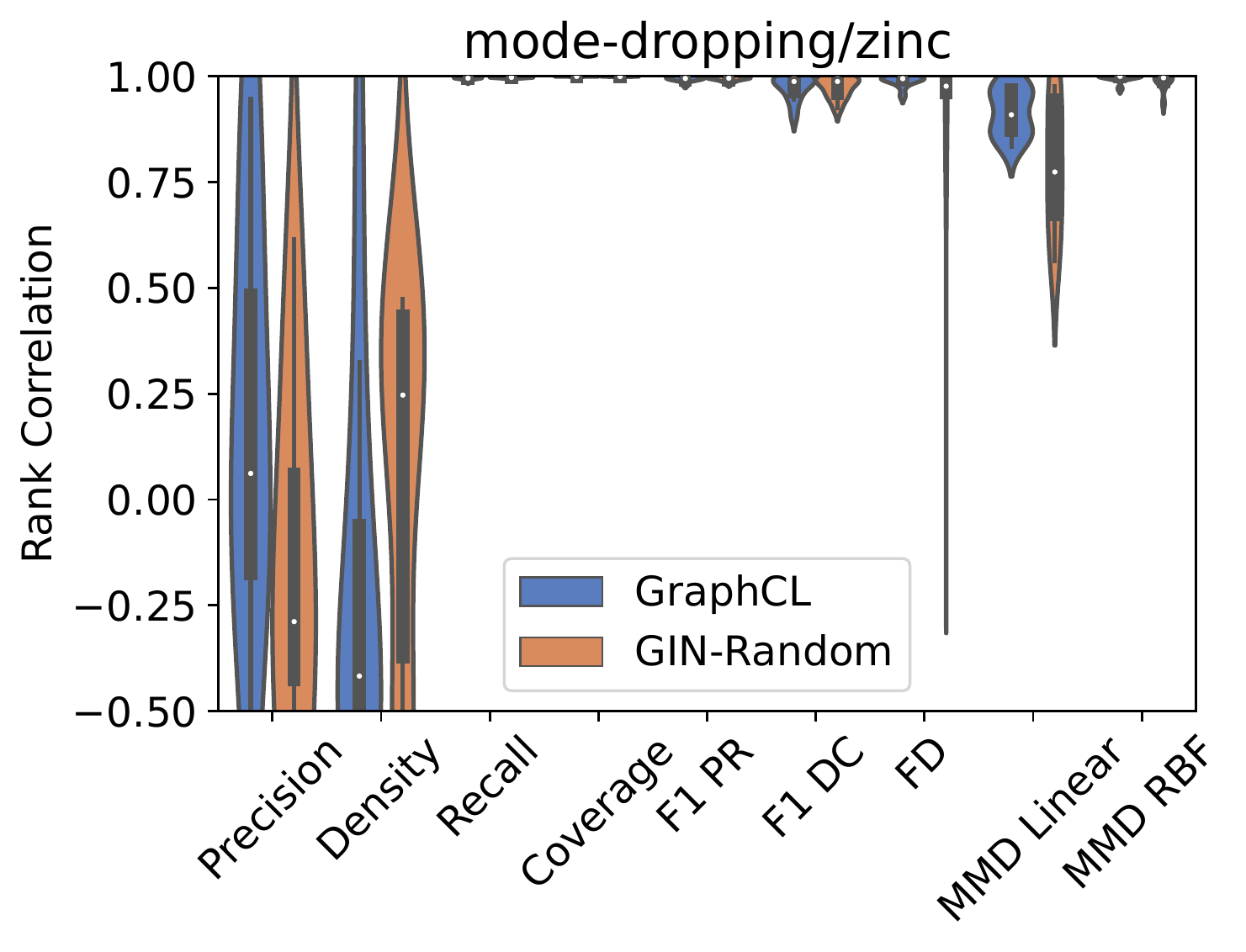}}
    \caption{Violing comparative results among the methods, with with degree features for zinc dataset.}
    \label{fig:deg_feats_zinc}
\end{figure*}
 
\begin{table}[h!]
\centering
\begin{small}
\scalebox{0.7}{
\begin{tabular}{l|l|l|l|l|l|l|l|l|l|l|l} 
\toprule
Model Name               & Experiment                      &        & Precision & Density & Recall & Coverage & F1PR & F1DC & FD & MMD Lin & MMD RBF  \\ 
\hline
\multirow{8}{*}{GraphCL}  &    \multirow{2}{*}{Mixing Random} & Mean & 1.0 & 1.0 & 0.99 & 0.98 & 1.0 & 1.0 & 1.0 & 1.0 & 1.0 \\
\cline{3-12}
                   &     & Median & 1.0 & 1.0 & 0.99 & 0.99 & 1.0 & 1.0 & 1.0 & 1.0 & 1.0 \\ 
\cline{2-12}
 &    \multirow{2}{*}{Rewiring Edges} & Mean & 0.99 & 1.0 & 0.98 & 0.99 & 0.99 & 1.0 & 0.97 & 1.0 & 1.0 \\
\cline{3-12}
                   &     & Median & 0.99 & 1.0 & 0.98 & 0.99 & 0.99 & 1.0 & 0.98 & 1.0 & 1.0 \\ 
\cline{2-12}
 &    \multirow{2}{*}{Mode Collapse} & Mean & -0.86 & -0.97 & 0.97 & 0.99 & 0.9 & 0.5 & 0.99 & 0.95 & 0.99 \\
\cline{3-12}
                   &     & Median & -0.94 & -0.97 & 0.98 & 1.0 & 0.91 & 0.6 & 0.99 & 0.96 & 0.99 \\ 
\cline{2-12}
 &    \multirow{2}{*}{Mode Dropping} & Mean & 0.15 & -0.22 & 1.0 & 1.0 & 0.99 & 0.97 & 0.99 & 0.91 & 1.0 \\
\cline{3-12}
                   &     & Median & 0.06 & -0.42 & 1.0 & 1.0 & 0.99 & 0.99 & 0.99 & 0.91 & 1.0 \\ 
\cline{1-12}
\multirow{8}{*}{GIN-Random}  &    \multirow{2}{*}{Mixing Random} & Mean & 1.0 & 1.0 & 0.92 & 0.98 & 1.0 & 1.0 & 1.0 & 1.0 & 1.0 \\
\cline{3-12}
                   &     & Median & 1.0 & 1.0 & 0.95 & 0.98 & 1.0 & 1.0 & 1.0 & 1.0 & 1.0 \\ 
\cline{2-12}
 &    \multirow{2}{*}{Rewiring Edges} & Mean & 0.99 & 0.99 & 0.82 & 0.99 & 0.99 & 0.99 & 1.0 & 1.0 & 1.0 \\
\cline{3-12}
                   &     & Median & 0.99 & 0.99 & 0.87 & 0.99 & 0.99 & 0.99 & 1.0 & 1.0 & 1.0 \\ 
\cline{2-12}
 &    \multirow{2}{*}{Mode Collapse} & Mean & -0.32 & -0.86 & 0.95 & 0.99 & 0.85 & 0.72 & 0.86 & 0.9 & 0.97 \\
\cline{3-12}
                   &     & Median & -0.37 & -0.9 & 0.97 & 1.0 & 0.91 & 0.71 & 0.98 & 0.95 & 1.0 \\ 
\cline{2-12}
 &    \multirow{2}{*}{Mode Dropping} & Mean & -0.21 & 0.06 & 1.0 & 1.0 & 0.99 & 0.98 & 0.88 & 0.79 & 0.99 \\
\cline{3-12}
                   &     & Median & -0.29 & 0.25 & 1.0 & 1.0 & 1.0 & 0.99 & 0.98 & 0.77 & 1.0 \\ \bottomrule
\end{tabular}
}
\end{small}
\caption{Mean and median values for measurements in experiments with with degree features by models, for dataset zinc} 
\label{table:deg_feats_zinc}
\end{table}

\begin{figure*}[h!]
    \captionsetup[subfloat]{farskip=-2pt,captionskip=-8pt}
    \centering
    \subfloat[][]{\includegraphics[width = 2.55in]{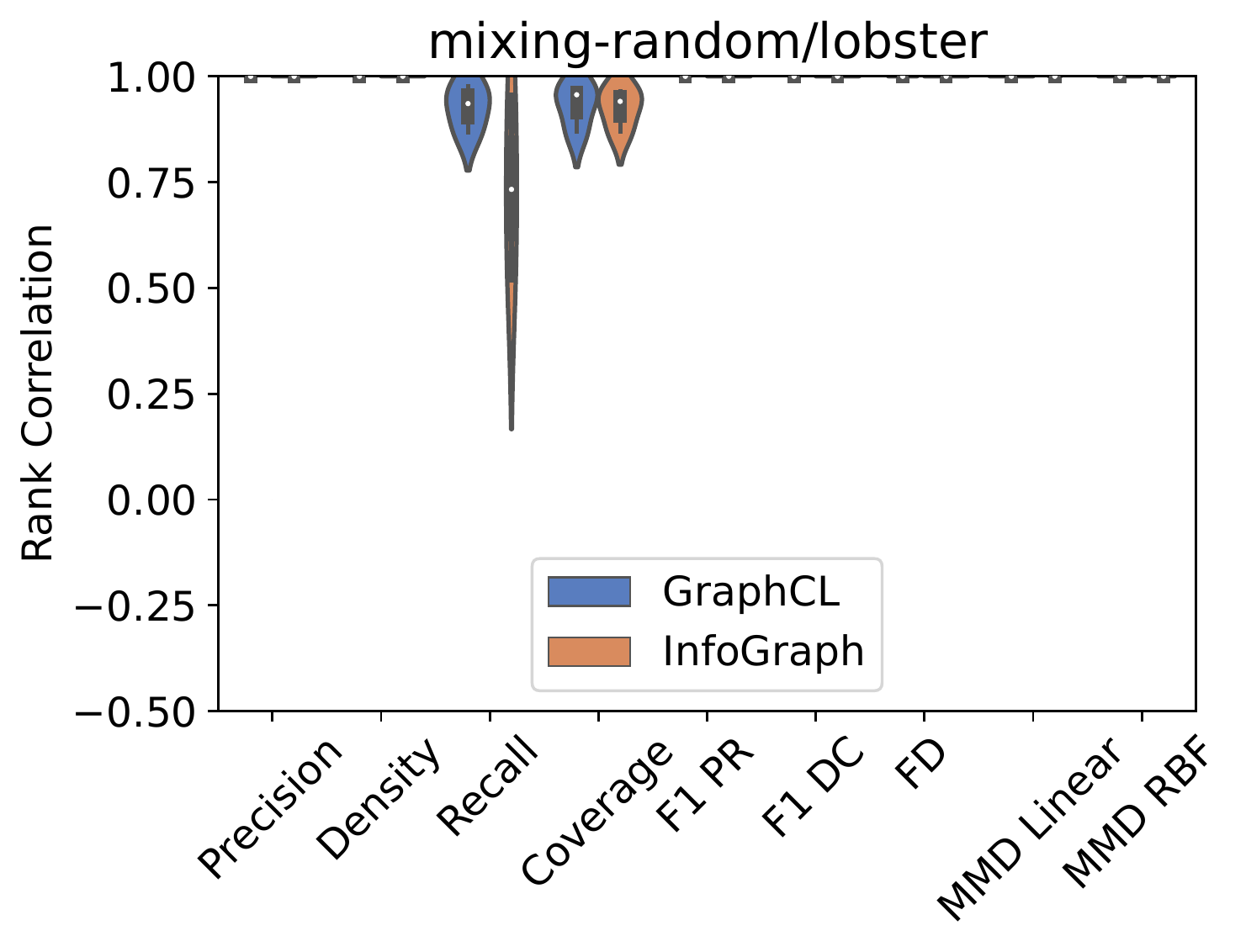}}
    \subfloat[][]{\includegraphics[width = 2.55in]{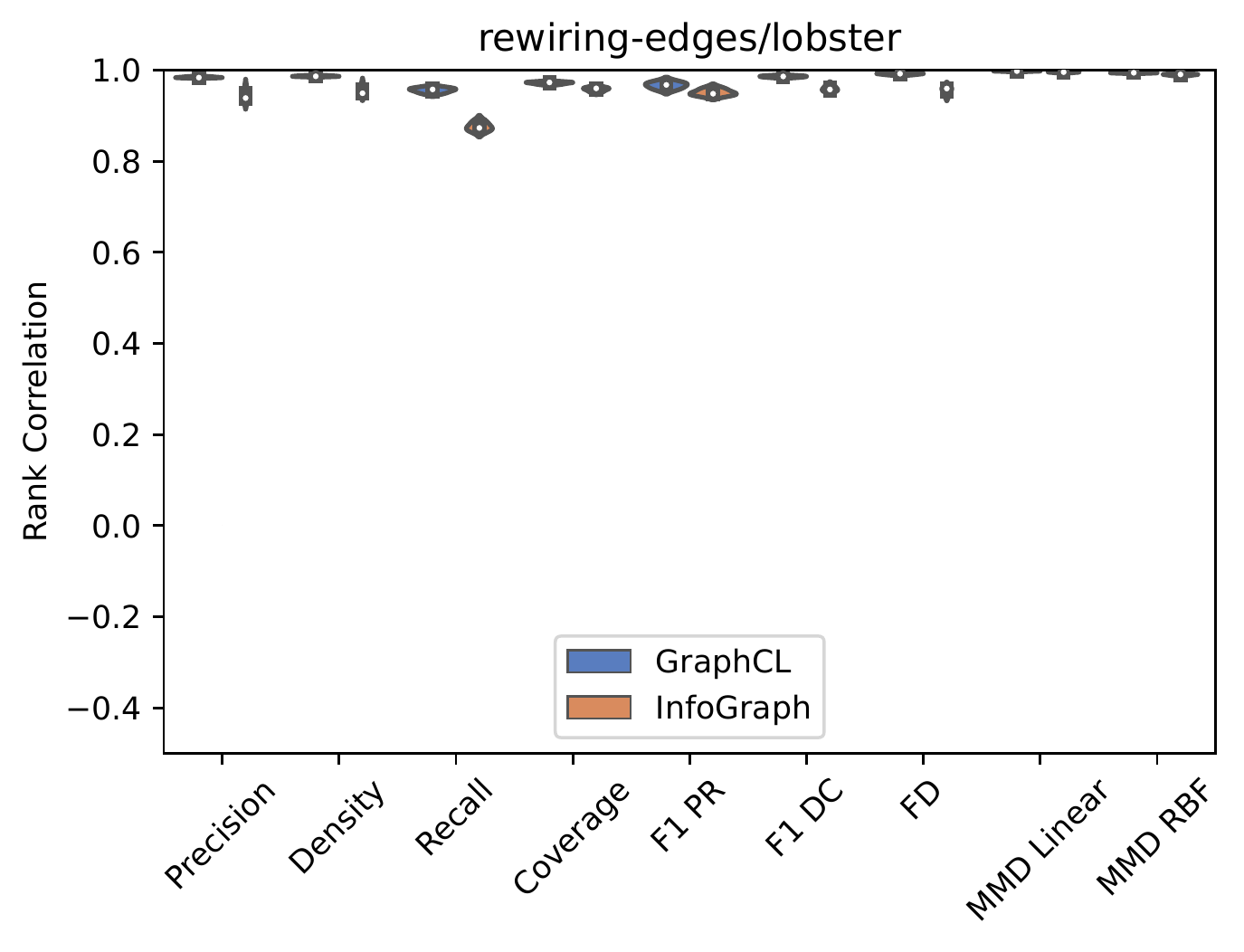}}
    \\
    \subfloat[][]{\includegraphics[width = 2.55in]{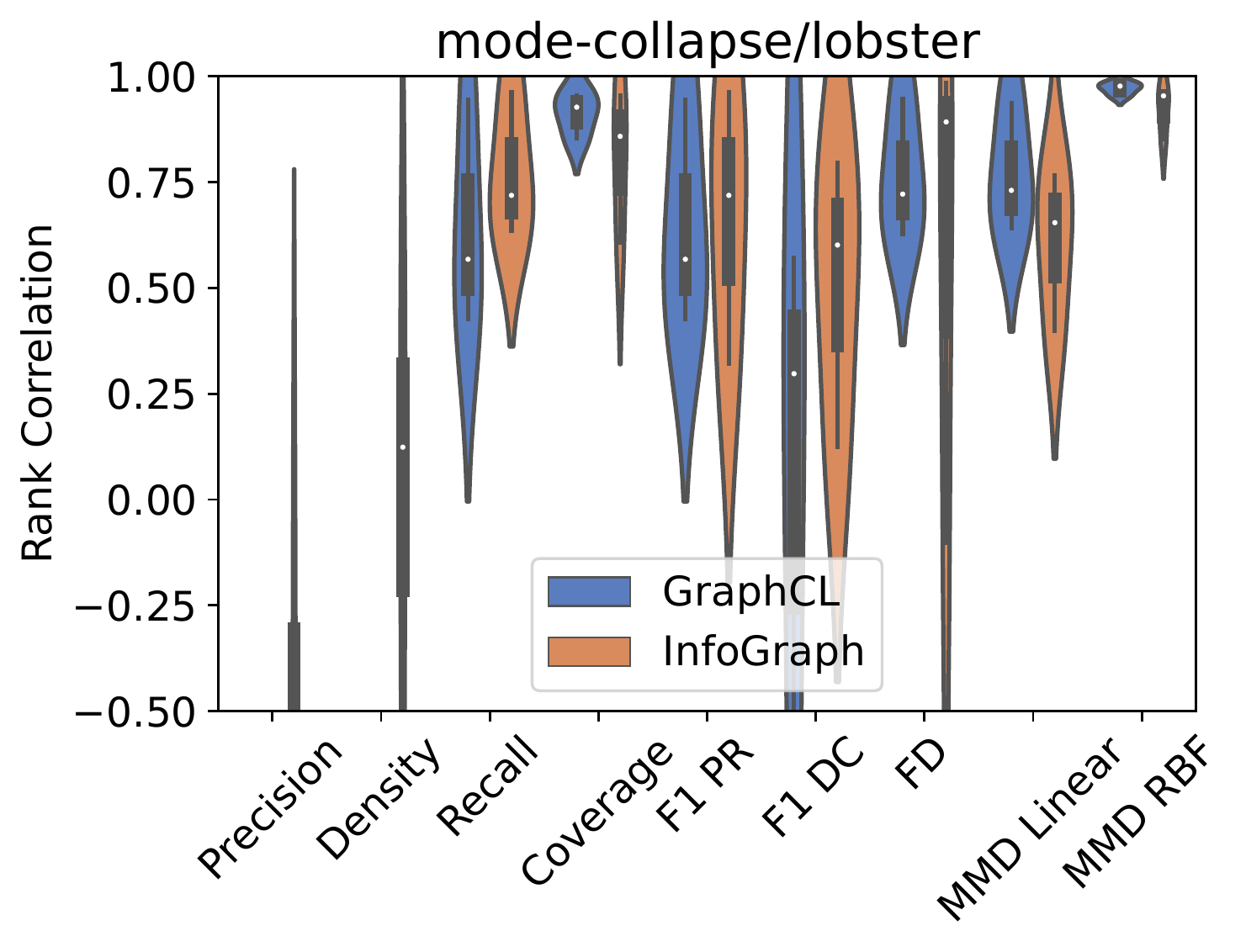}}
    \subfloat[][]{\includegraphics[width = 2.55in]{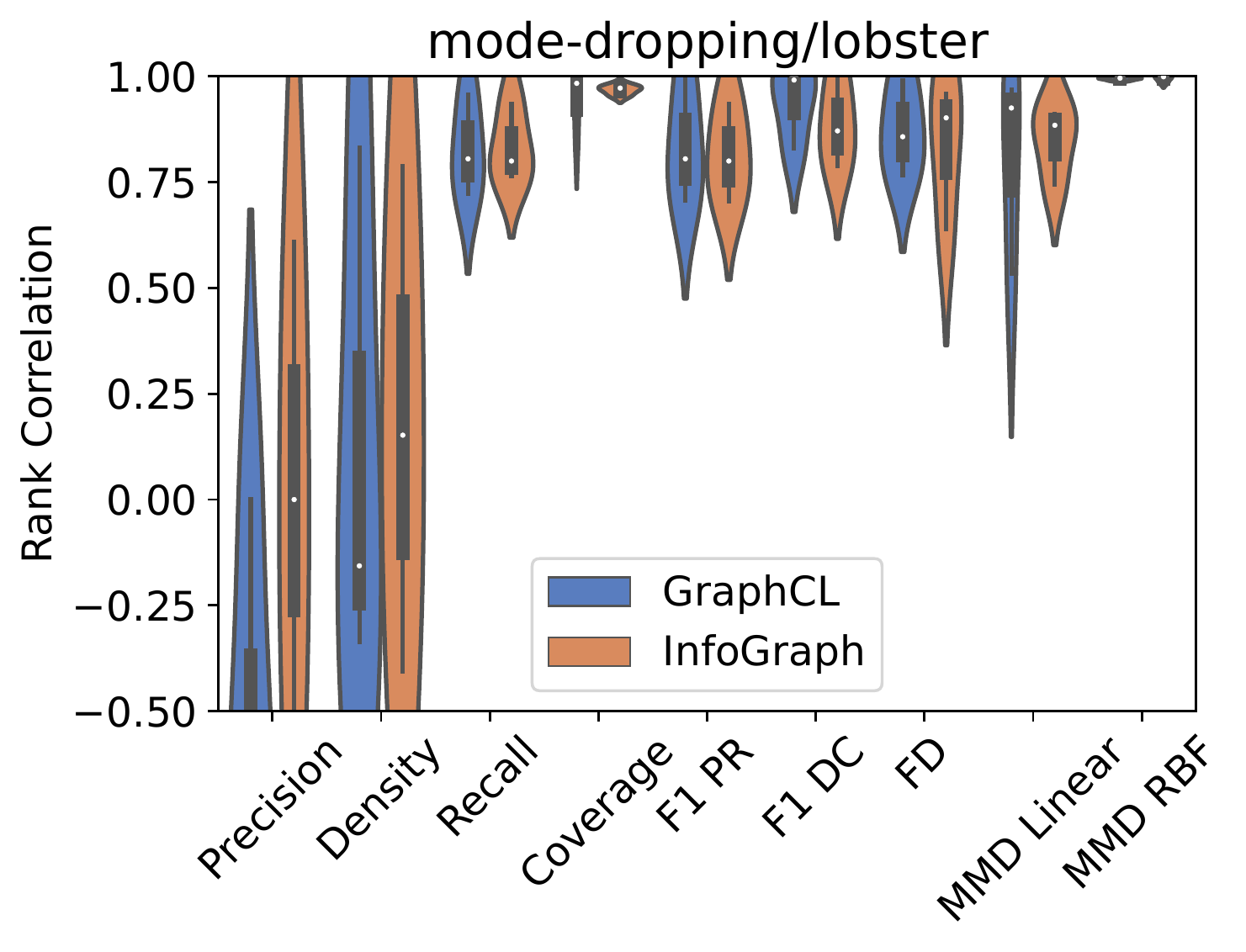}}
    \caption{Violing comparative results among the methods, with with clustering structural features for lobster dataset.}
    \label{fig:clustering_feats_lobster}
\end{figure*}
 
\begin{table}[h!]
\centering
\begin{small}
\scalebox{0.7}{
\begin{tabular}{l|l|l|l|l|l|l|l|l|l|l|l} 
\toprule
Model Name               & Experiment                      &        & Precision & Density & Recall & Coverage & F1PR & F1DC & FD & MMD Lin & MMD RBF  \\ 
\hline
\multirow{8}{*}{GraphCL}  &    \multirow{2}{*}{Mixing Random} & Mean & 1.0 & 1.0 & 0.93 & 0.93 & 1.0 & 1.0 & 1.0 & 1.0 & 1.0 \\
\cline{3-12}
                   &     & Median & 1.0 & 1.0 & 0.94 & 0.96 & 1.0 & 1.0 & 1.0 & 1.0 & 1.0 \\ 
\cline{2-12}
 &    \multirow{2}{*}{Rewiring Edges} & Mean & 0.98 & 0.99 & 0.96 & 0.97 & 0.97 & 0.99 & 0.99 & 1.0 & 0.99 \\
\cline{3-12}
                   &     & Median & 0.98 & 0.99 & 0.96 & 0.97 & 0.97 & 0.99 & 0.99 & 1.0 & 0.99 \\ 
\cline{2-12}
 &    \multirow{2}{*}{Mode Collapse} & Mean & -0.87 & -0.8 & 0.65 & 0.91 & 0.65 & 0.02 & 0.77 & 0.77 & 0.97 \\
\cline{3-12}
                   &     & Median & -0.86 & -0.82 & 0.57 & 0.93 & 0.57 & 0.3 & 0.72 & 0.73 & 0.98 \\ 
\cline{2-12}
 &    \multirow{2}{*}{Mode Dropping} & Mean & -0.49 & 0.11 & 0.83 & 0.94 & 0.83 & 0.94 & 0.87 & 0.81 & 0.99 \\
\cline{3-12}
                   &     & Median & -0.73 & -0.16 & 0.8 & 0.98 & 0.8 & 0.99 & 0.86 & 0.93 & 1.0 \\ 
\cline{1-12}
\multirow{8}{*}{InfoGraph}  &    \multirow{2}{*}{Mixing Random} & Mean & 1.0 & 1.0 & 0.74 & 0.93 & 1.0 & 1.0 & 1.0 & 1.0 & 1.0 \\
\cline{3-12}
                   &     & Median & 1.0 & 1.0 & 0.73 & 0.94 & 1.0 & 1.0 & 1.0 & 1.0 & 1.0 \\ 
\cline{2-12}
 &    \multirow{2}{*}{Rewiring Edges} & Mean & 0.94 & 0.96 & 0.88 & 0.96 & 0.95 & 0.96 & 0.96 & 1.0 & 0.99 \\
\cline{3-12}
                   &     & Median & 0.94 & 0.95 & 0.87 & 0.96 & 0.95 & 0.96 & 0.96 & 1.0 & 0.99 \\ 
\cline{2-12}
 &    \multirow{2}{*}{Mode Collapse} & Mean & -0.52 & 0.03 & 0.77 & 0.81 & 0.67 & 0.51 & 0.59 & 0.61 & 0.92 \\
\cline{3-12}
                   &     & Median & -0.61 & 0.12 & 0.72 & 0.86 & 0.72 & 0.6 & 0.89 & 0.65 & 0.95 \\ 
\cline{2-12}
 &    \multirow{2}{*}{Mode Dropping} & Mean & 0.03 & 0.18 & 0.83 & 0.97 & 0.81 & 0.89 & 0.83 & 0.85 & 0.99 \\
\cline{3-12}
                   &     & Median & 0.0 & 0.15 & 0.8 & 0.97 & 0.8 & 0.87 & 0.9 & 0.88 & 1.0 \\ \bottomrule
\end{tabular}
}
\end{small}
\caption{Mean and median values for measurements in experiments with with clustering structural features by models, for dataset lobster} 
\label{table:clustering_feats_lobster}
\end{table}
 
\begin{figure*}[h!]
    \captionsetup[subfloat]{farskip=-2pt,captionskip=-8pt}
    \centering
    \subfloat[][]{\includegraphics[width = 2.55in]{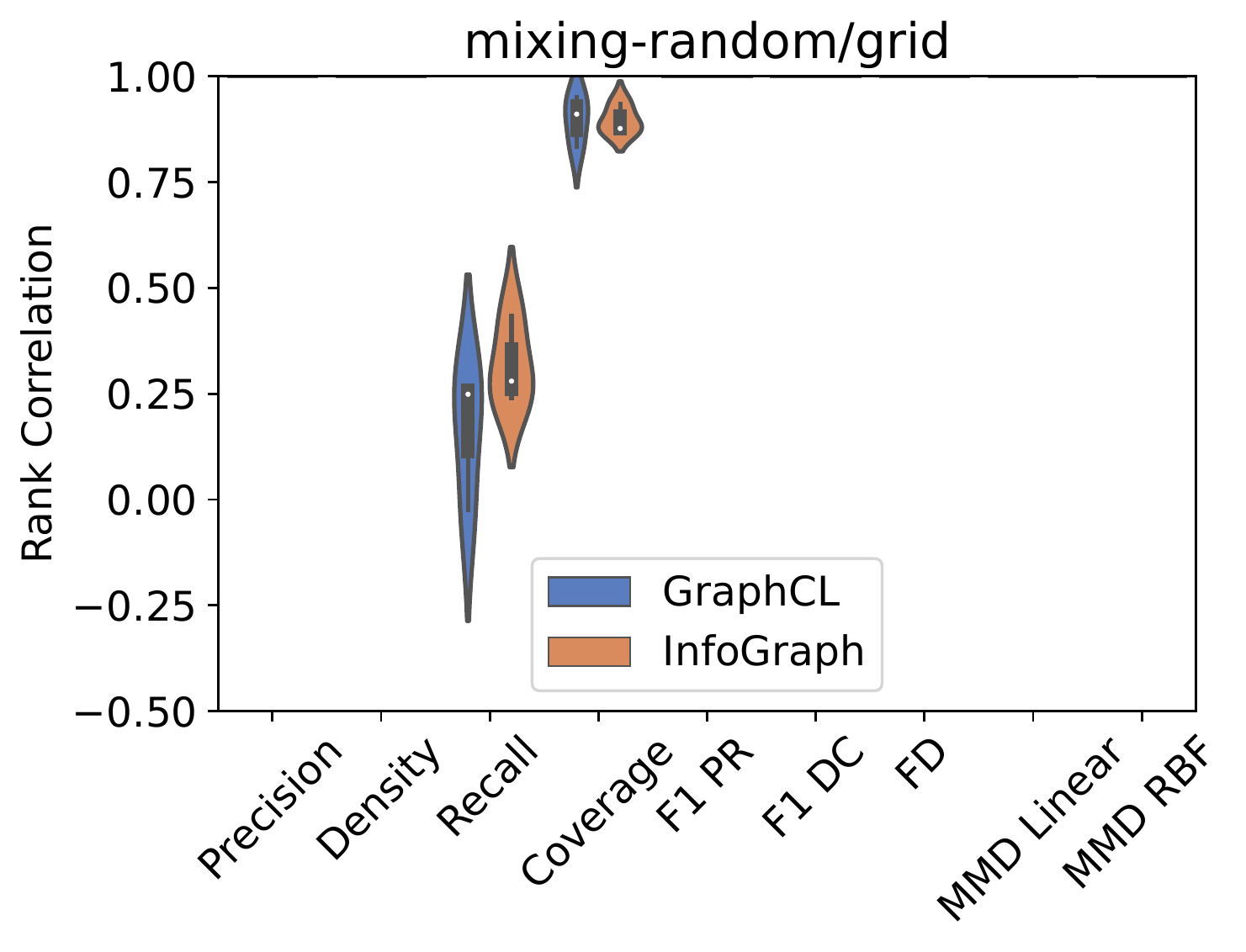}}
    \subfloat[][]{\includegraphics[width = 2.55in]{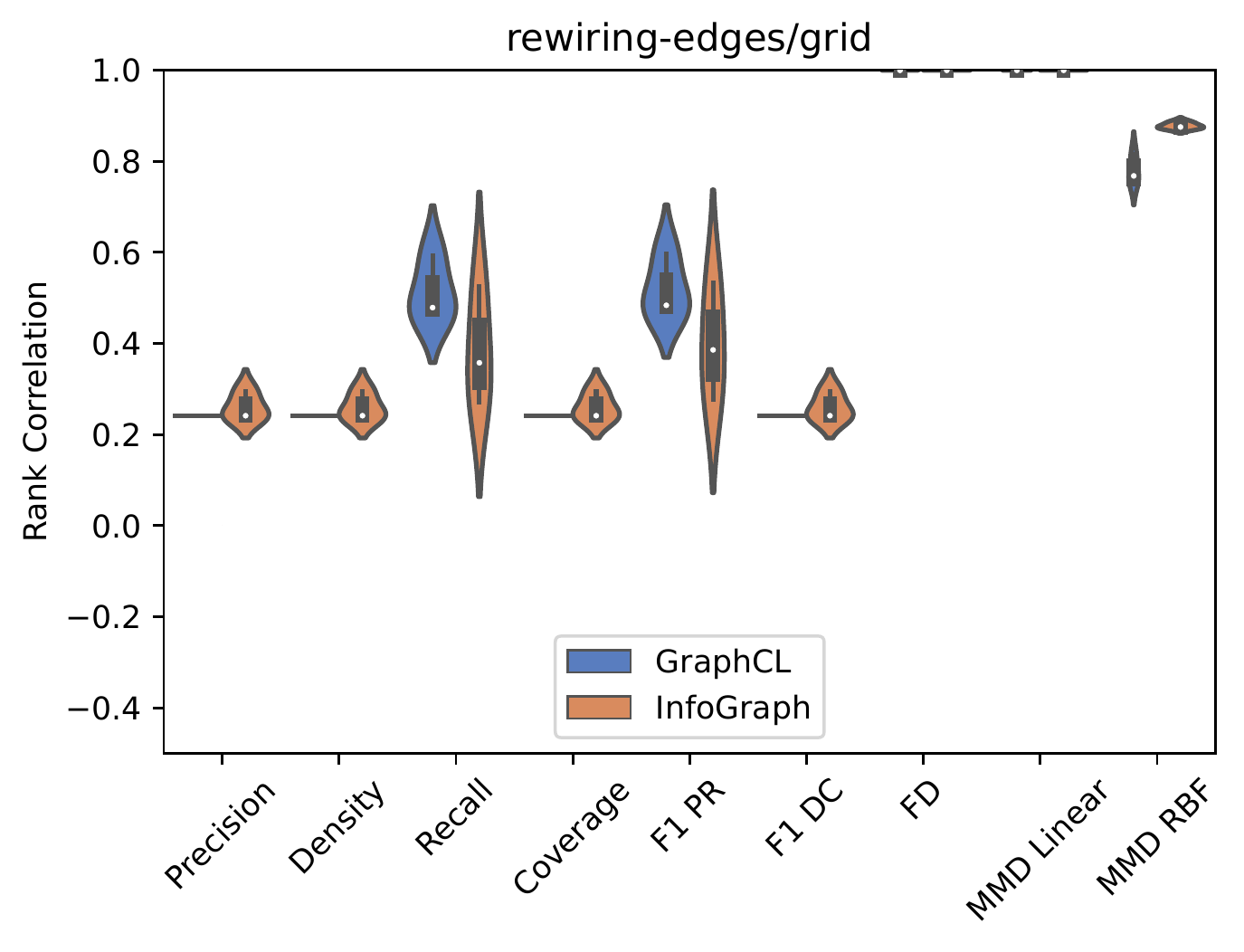}}
    \\
    \subfloat[][]{\includegraphics[width = 2.55in]{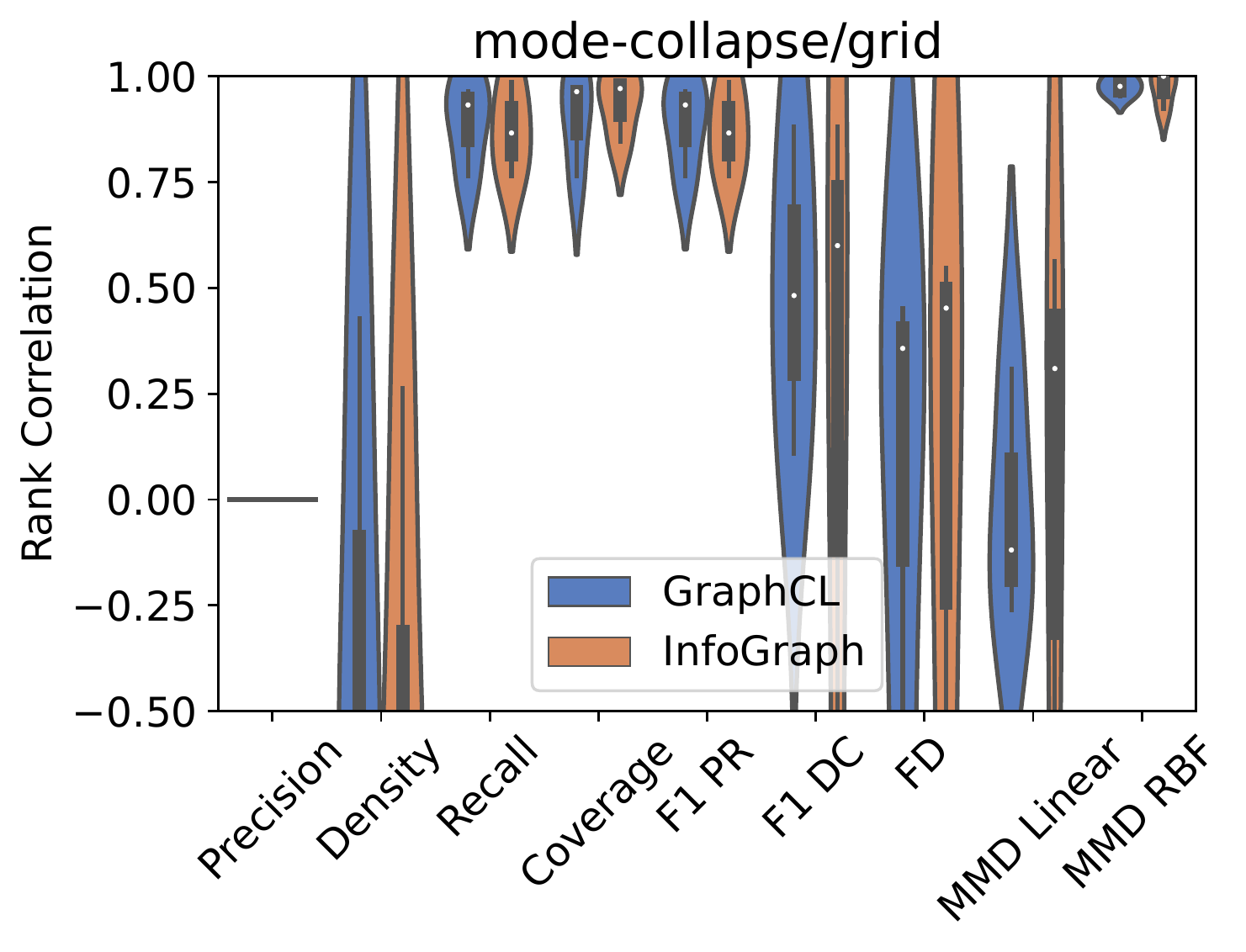}}
    \subfloat[][]{\includegraphics[width = 2.55in]{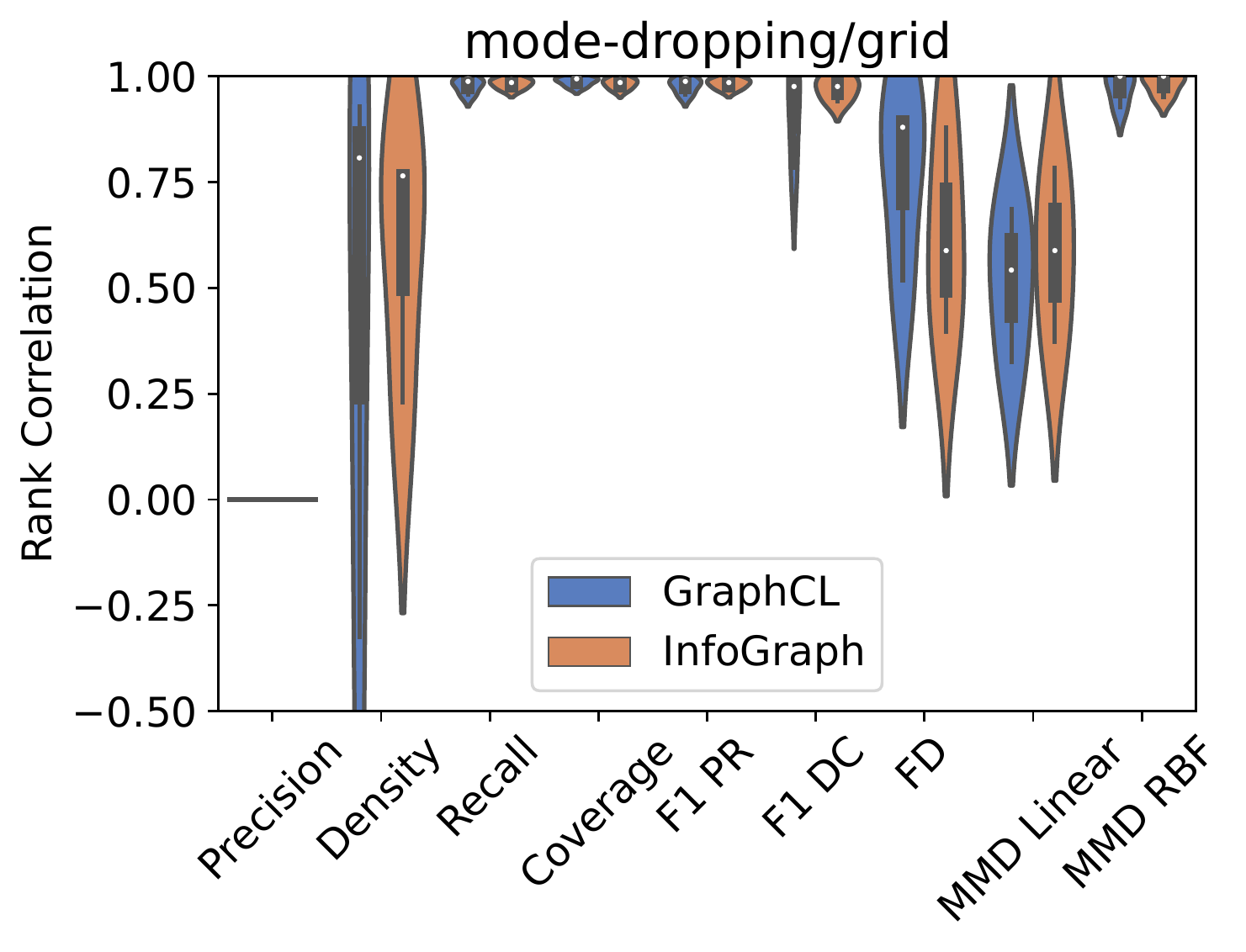}}
    \caption{Violing comparative results among the methods, with with clustering structural features for grid dataset.}
    \label{fig:clustering_feats_grid}
\end{figure*}
 
\begin{table}[h!]
\centering
\begin{small}
\scalebox{0.7}{
\begin{tabular}{l|l|l|l|l|l|l|l|l|l|l|l} 
\toprule
Model Name               & Experiment                      &        & Precision & Density & Recall & Coverage & F1PR & F1DC & FD & MMD Lin & MMD RBF  \\ 
\hline
\multirow{8}{*}{GraphCL}  &    \multirow{2}{*}{Mixing Random} & Mean & 1.0 & 1.0 & 0.16 & 0.9 & 1.0 & 1.0 & 1.0 & 1.0 & 1.0 \\
\cline{3-12}
                   &     & Median & 1.0 & 1.0 & 0.25 & 0.91 & 1.0 & 1.0 & 1.0 & 1.0 & 1.0 \\ 
\cline{2-12}
 &    \multirow{2}{*}{Rewiring Edges} & Mean & 0.24 & 0.24 & 0.51 & 0.24 & 0.52 & 0.24 & 1.0 & 1.0 & 0.78 \\
\cline{3-12}
                   &     & Median & 0.24 & 0.24 & 0.48 & 0.24 & 0.48 & 0.24 & 1.0 & 1.0 & 0.77 \\ 
\cline{2-12}
 &    \multirow{2}{*}{Mode Collapse} & Mean & 0.0 & -0.37 & 0.89 & 0.9 & 0.89 & 0.49 & 0.06 & -0.02 & 0.98 \\
\cline{3-12}
                   &     & Median & 0.0 & -0.6 & 0.93 & 0.96 & 0.93 & 0.48 & 0.36 & -0.12 & 0.98 \\ 
\cline{2-12}
 &    \multirow{2}{*}{Mode Dropping} & Mean & 0.0 & 0.47 & 0.98 & 0.99 & 0.98 & 0.92 & 0.77 & 0.52 & 0.98 \\
\cline{3-12}
                   &     & Median & 0.0 & 0.81 & 0.99 & 0.99 & 0.99 & 0.98 & 0.88 & 0.54 & 1.0 \\ 
\cline{1-12}
\multirow{8}{*}{InfoGraph}  &    \multirow{2}{*}{Mixing Random} & Mean & 1.0 & 1.0 & 0.32 & 0.9 & 1.0 & 1.0 & 1.0 & 1.0 & 1.0 \\
\cline{3-12}
                   &     & Median & 1.0 & 1.0 & 0.28 & 0.88 & 1.0 & 1.0 & 1.0 & 1.0 & 1.0 \\ 
\cline{2-12}
 &    \multirow{2}{*}{Rewiring Edges} & Mean & 0.26 & 0.26 & 0.38 & 0.26 & 0.4 & 0.26 & 1.0 & 1.0 & 0.88 \\
\cline{3-12}
                   &     & Median & 0.24 & 0.24 & 0.36 & 0.24 & 0.39 & 0.24 & 1.0 & 1.0 & 0.88 \\ 
\cline{2-12}
 &    \multirow{2}{*}{Mode Collapse} & Mean & 0.0 & -0.54 & 0.87 & 0.93 & 0.87 & 0.21 & 0.02 & -0.02 & 0.97 \\
\cline{3-12}
                   &     & Median & 0.0 & -0.89 & 0.87 & 0.97 & 0.87 & 0.6 & 0.45 & 0.31 & 1.0 \\ 
\cline{2-12}
 &    \multirow{2}{*}{Mode Dropping} & Mean & 0.0 & 0.59 & 0.98 & 0.98 & 0.98 & 0.97 & 0.62 & 0.58 & 0.98 \\
\cline{3-12}
                   &     & Median & 0.0 & 0.76 & 0.99 & 0.99 & 0.99 & 0.98 & 0.59 & 0.59 & 1.0 \\ \bottomrule
\end{tabular}
}
\end{small}
\caption{Mean and median values for measurements in experiments with with clustering structural features by models, for dataset grid} 
\label{table:clustering_feats_grid}
\end{table}
 
\begin{figure*}[h!]
    \captionsetup[subfloat]{farskip=-2pt,captionskip=-8pt}
    \centering
    \subfloat[][]{\includegraphics[width = 2.55in]{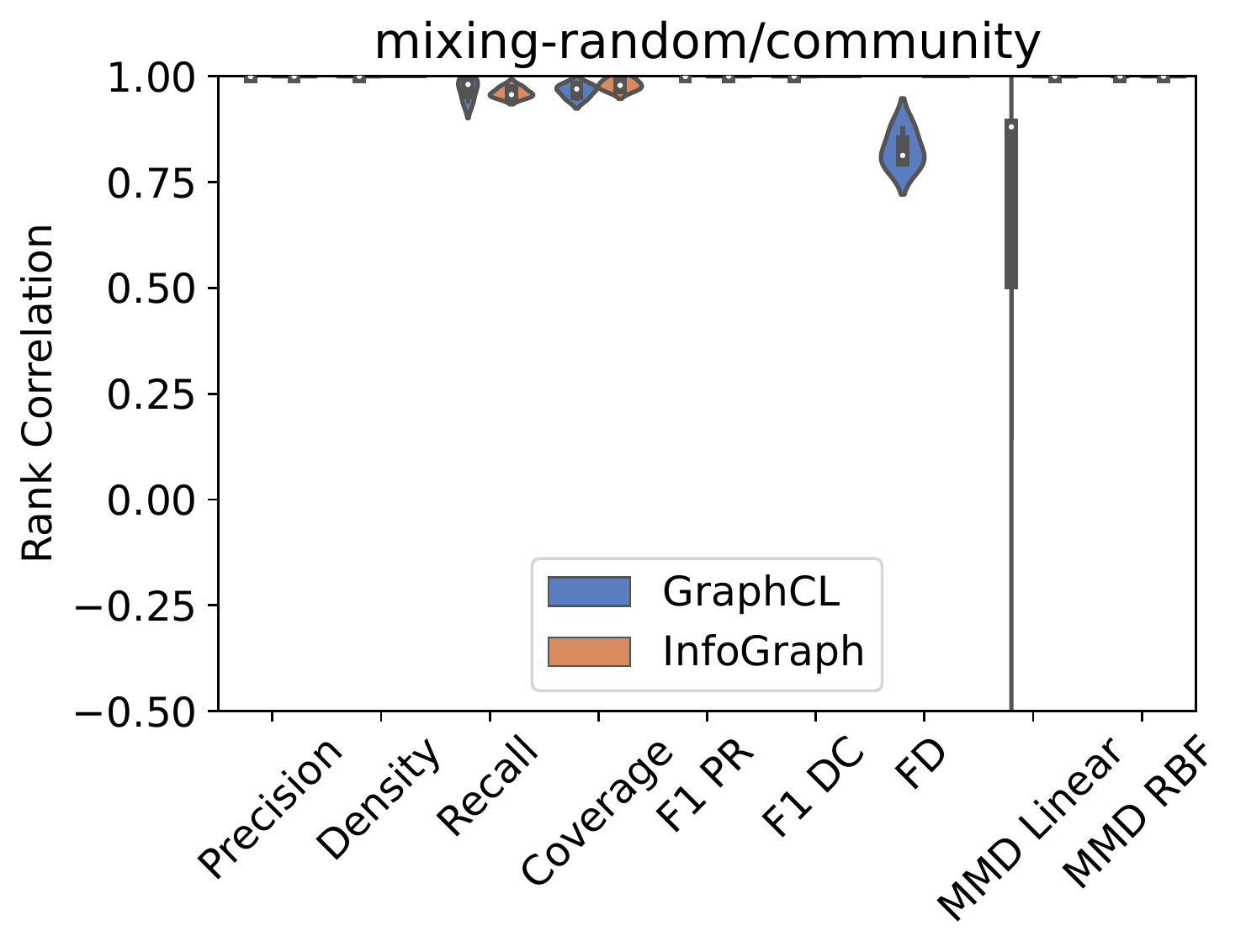}}
    \subfloat[][]{\includegraphics[width = 2.55in]{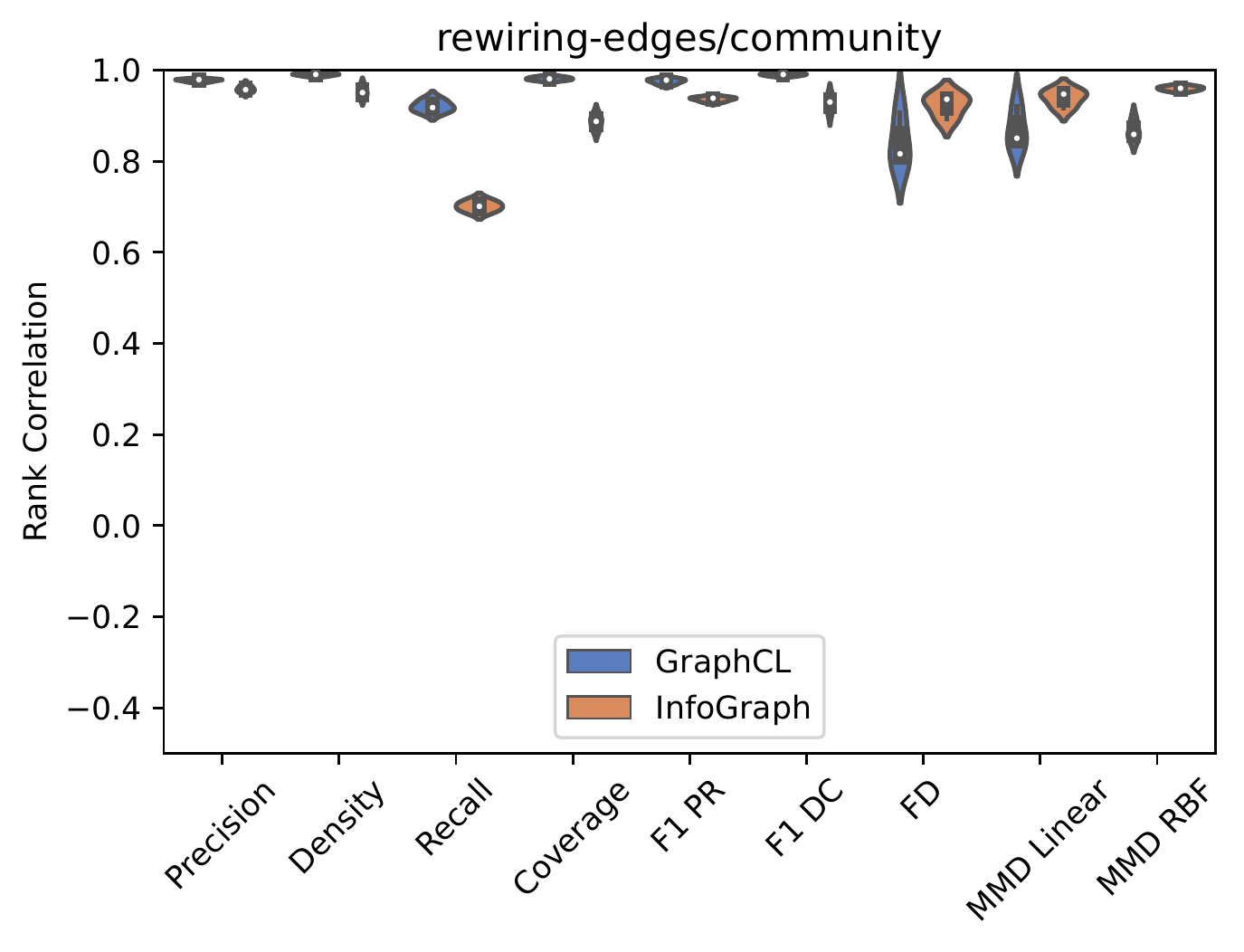}}
    \\
    \subfloat[][]{\includegraphics[width = 2.55in]{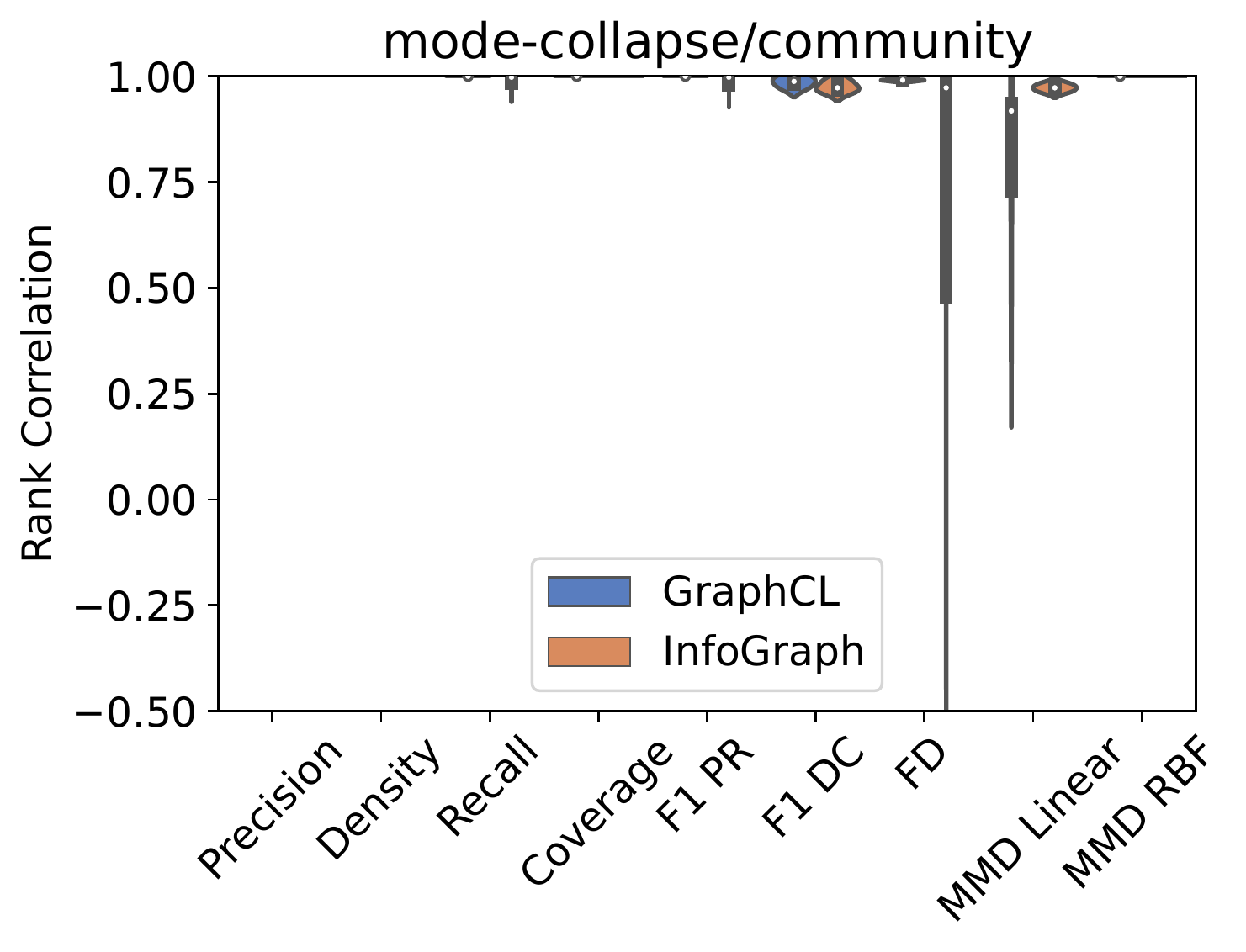}}
    \subfloat[][]{\includegraphics[width = 2.55in]{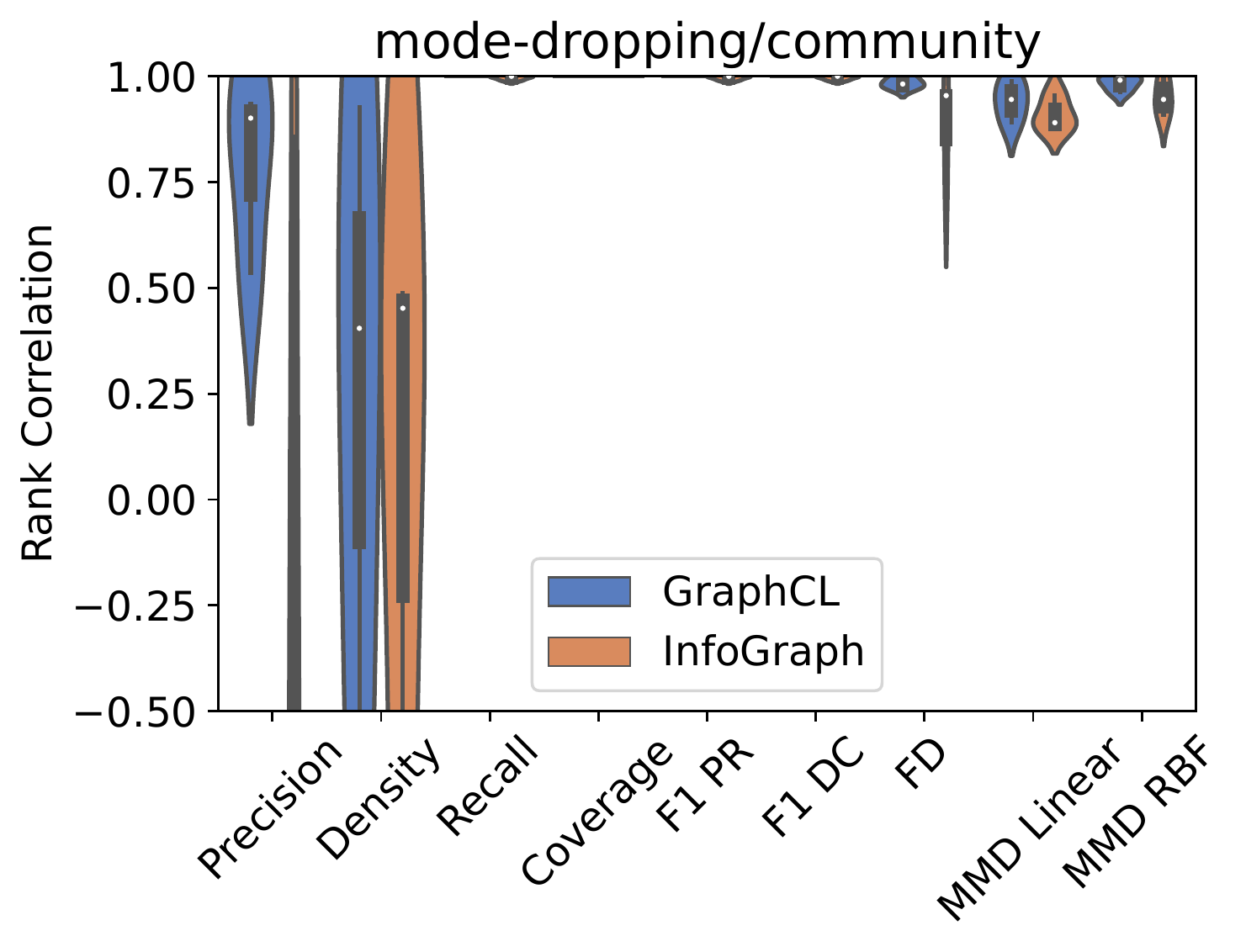}}
    \caption{Violing comparative results among the methods, with with clustering structural features for community dataset.}
    \label{fig:clustering_feats_community}
\end{figure*}
 
\begin{table}[h!]
\centering
\begin{small}
\scalebox{0.7}{
\begin{tabular}{l|l|l|l|l|l|l|l|l|l|l|l} 
\toprule
Model Name               & Experiment                      &        & Precision & Density & Recall & Coverage & F1PR & F1DC & FD & MMD Lin & MMD RBF  \\ 
\hline
\multirow{8}{*}{GraphCL}  &    \multirow{2}{*}{Mixing Random} & Mean & 1.0 & 1.0 & 0.97 & 0.96 & 1.0 & 1.0 & 0.83 & 0.64 & 1.0 \\
\cline{3-12}
                   &     & Median & 1.0 & 1.0 & 0.98 & 0.97 & 1.0 & 1.0 & 0.81 & 0.88 & 1.0 \\ 
\cline{2-12}
 &    \multirow{2}{*}{Rewiring Edges} & Mean & 0.98 & 0.99 & 0.92 & 0.98 & 0.98 & 0.99 & 0.84 & 0.87 & 0.87 \\
\cline{3-12}
                   &     & Median & 0.98 & 0.99 & 0.92 & 0.98 & 0.98 & 0.99 & 0.82 & 0.85 & 0.86 \\ 
\cline{2-12}
 &    \multirow{2}{*}{Mode Collapse} & Mean & -0.96 & -0.77 & 1.0 & 1.0 & 1.0 & 0.99 & 0.99 & 0.8 & 1.0 \\
\cline{3-12}
                   &     & Median & -0.96 & -0.79 & 1.0 & 1.0 & 1.0 & 0.99 & 0.99 & 0.92 & 1.0 \\ 
\cline{2-12}
 &    \multirow{2}{*}{Mode Dropping} & Mean & 0.79 & 0.24 & 1.0 & 1.0 & 1.0 & 1.0 & 0.98 & 0.94 & 0.98 \\
\cline{3-12}
                   &     & Median & 0.9 & 0.4 & 1.0 & 1.0 & 1.0 & 1.0 & 0.98 & 0.95 & 0.99 \\ 
\cline{1-12}
\multirow{8}{*}{InfoGraph}  &    \multirow{2}{*}{Mixing Random} & Mean & 1.0 & 1.0 & 0.96 & 0.98 & 1.0 & 1.0 & 1.0 & 1.0 & 1.0 \\
\cline{3-12}
                   &     & Median & 1.0 & 1.0 & 0.96 & 0.98 & 1.0 & 1.0 & 1.0 & 1.0 & 1.0 \\ 
\cline{2-12}
 &    \multirow{2}{*}{Rewiring Edges} & Mean & 0.96 & 0.95 & 0.7 & 0.89 & 0.94 & 0.93 & 0.92 & 0.94 & 0.96 \\
\cline{3-12}
                   &     & Median & 0.96 & 0.95 & 0.7 & 0.89 & 0.94 & 0.93 & 0.94 & 0.95 & 0.96 \\ 
\cline{2-12}
 &    \multirow{2}{*}{Mode Collapse} & Mean & -0.96 & -0.95 & 0.99 & 1.0 & 0.99 & 0.98 & 0.65 & 0.97 & 1.0 \\
\cline{3-12}
                   &     & Median & -0.98 & -0.95 & 1.0 & 1.0 & 1.0 & 0.97 & 0.97 & 0.97 & 1.0 \\ 
\cline{2-12}
 &    \multirow{2}{*}{Mode Dropping} & Mean & -0.23 & 0.01 & 1.0 & 1.0 & 1.0 & 1.0 & 0.88 & 0.91 & 0.95 \\
\cline{3-12}
                   &     & Median & -0.65 & 0.45 & 1.0 & 1.0 & 1.0 & 1.0 & 0.95 & 0.89 & 0.95 \\ \bottomrule
\end{tabular}
}
\end{small}
\caption{Mean and median values for measurements in experiments with with clustering structural features by models, for dataset community} 
\label{table:clustering_feats_community}
\end{table}
 
\begin{figure*}[h!]
    \captionsetup[subfloat]{farskip=-2pt,captionskip=-8pt}
    \centering
    \subfloat[][]{\includegraphics[width = 2.55in]{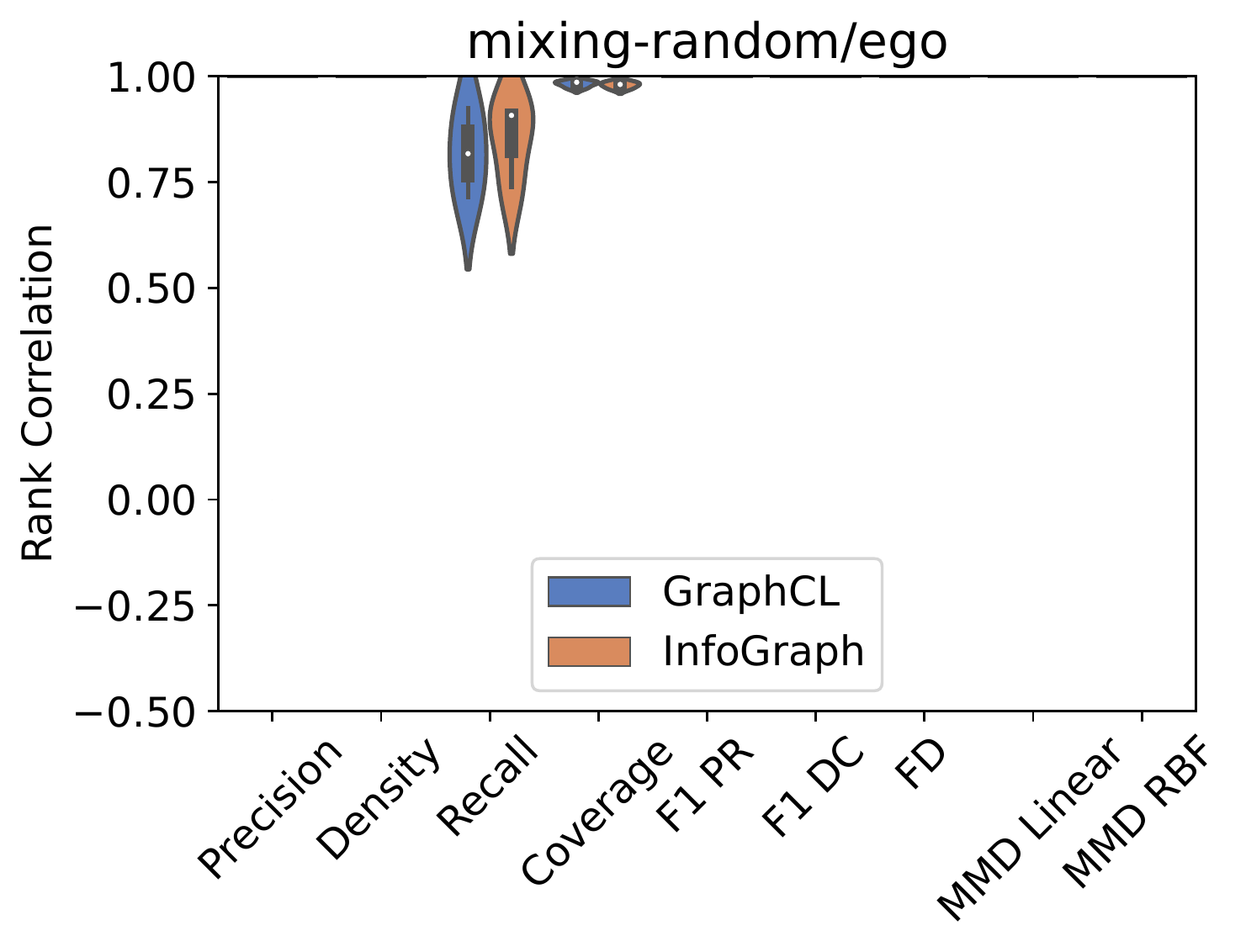}}
    \subfloat[][]{\includegraphics[width = 2.55in]{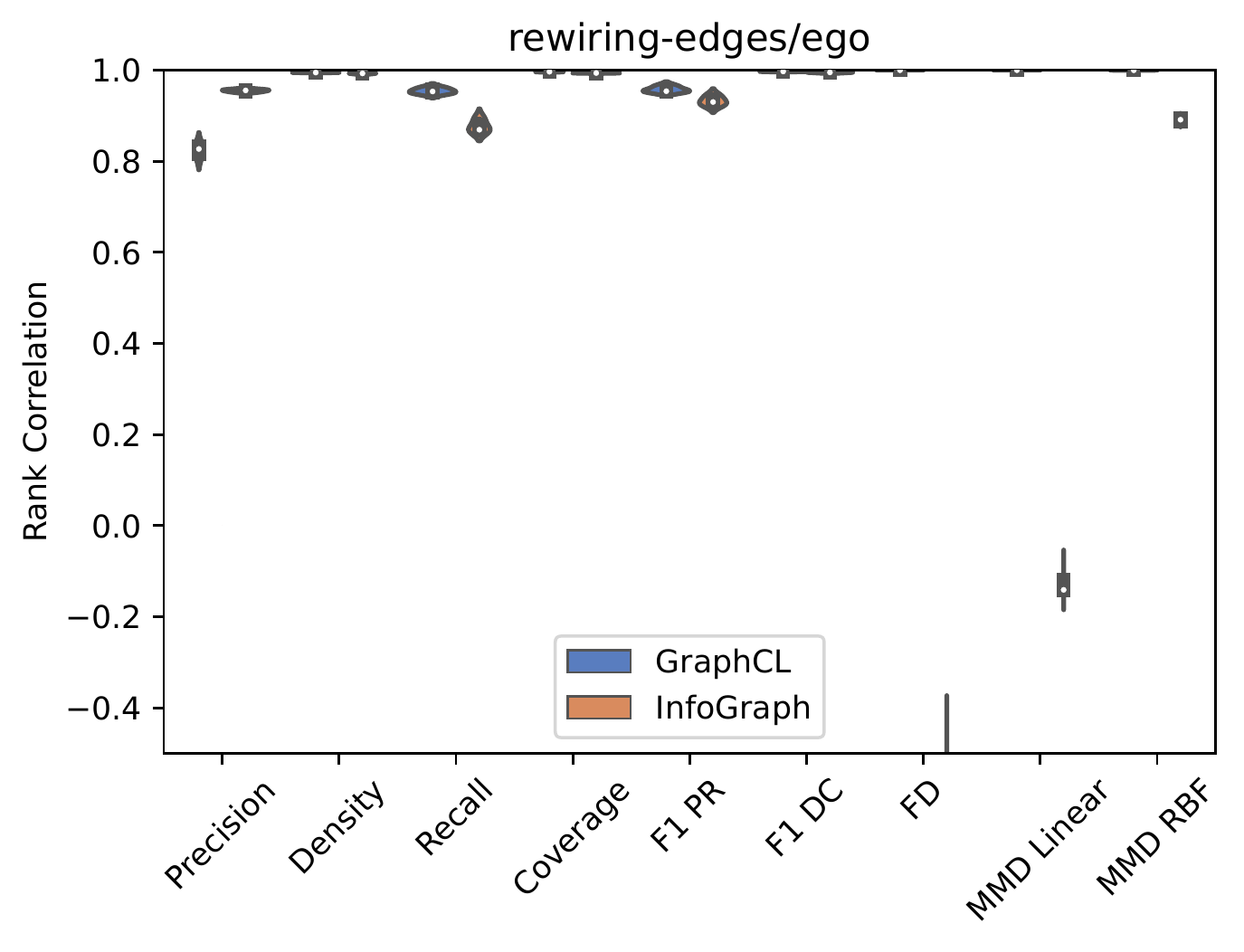}}
    \\
    \subfloat[][]{\includegraphics[width = 2.55in]{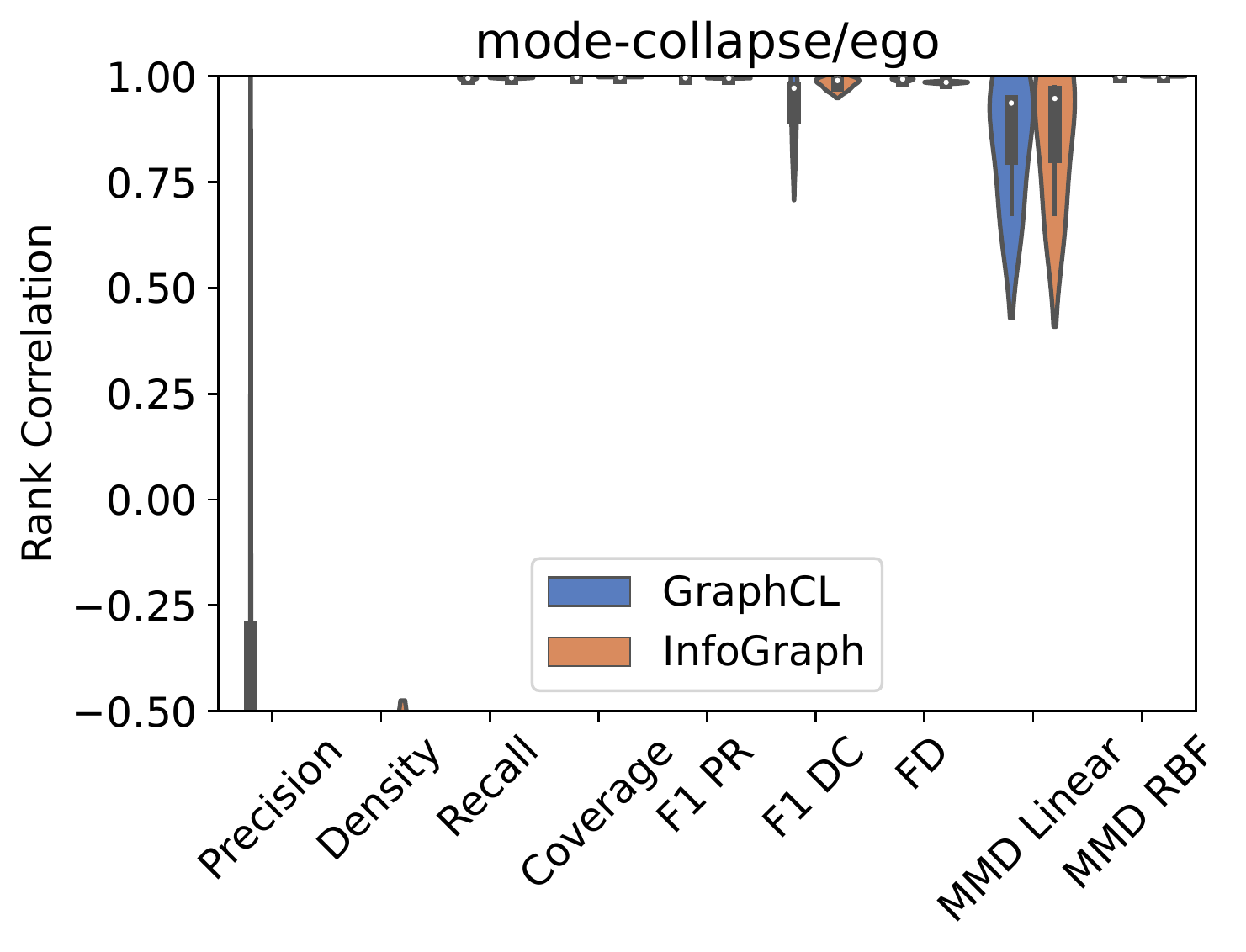}}
    \subfloat[][]{\includegraphics[width = 2.55in]{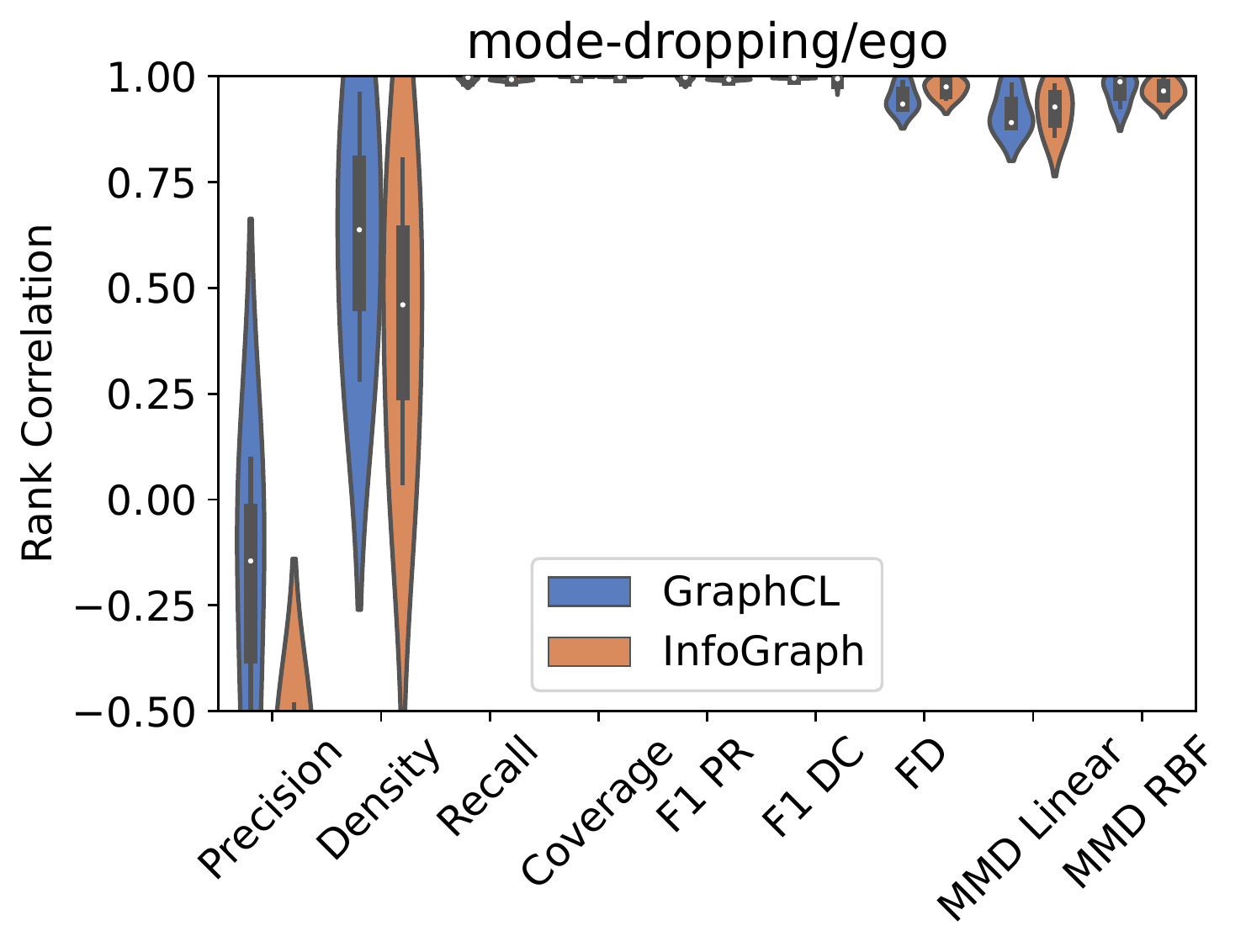}}
    \caption{Violing comparative results among the methods, with with clustering structural features for ego dataset.}
    \label{fig:clustering_feats_ego}
\end{figure*}
 
\begin{table}[h!]
\centering
\begin{small}
\scalebox{0.7}{
\begin{tabular}{l|l|l|l|l|l|l|l|l|l|l|l} 
\toprule
Model Name               & Experiment                      &        & Precision & Density & Recall & Coverage & F1PR & F1DC & FD & MMD Lin & MMD RBF  \\ 
\hline
\multirow{8}{*}{GraphCL}  &    \multirow{2}{*}{Mixing Random} & Mean & 1.0 & 1.0 & 0.82 & 0.98 & 1.0 & 1.0 & 1.0 & 1.0 & 1.0 \\
\cline{3-12}
                   &     & Median & 1.0 & 1.0 & 0.82 & 0.99 & 1.0 & 1.0 & 1.0 & 1.0 & 1.0 \\ 
\cline{2-12}
 &    \multirow{2}{*}{Rewiring Edges} & Mean & 0.82 & 1.0 & 0.95 & 1.0 & 0.96 & 1.0 & 1.0 & 1.0 & 1.0 \\
\cline{3-12}
                   &     & Median & 0.83 & 0.99 & 0.95 & 1.0 & 0.95 & 1.0 & 1.0 & 1.0 & 1.0 \\ 
\cline{2-12}
 &    \multirow{2}{*}{Mode Collapse} & Mean & -0.53 & -0.88 & 1.0 & 1.0 & 1.0 & 0.93 & 0.99 & 0.85 & 1.0 \\
\cline{3-12}
                   &     & Median & -0.91 & -0.94 & 1.0 & 1.0 & 1.0 & 0.97 & 0.99 & 0.94 & 1.0 \\ 
\cline{2-12}
 &    \multirow{2}{*}{Mode Dropping} & Mean & -0.22 & 0.63 & 0.99 & 1.0 & 0.99 & 1.0 & 0.95 & 0.92 & 0.97 \\
\cline{3-12}
                   &     & Median & -0.14 & 0.64 & 1.0 & 1.0 & 1.0 & 1.0 & 0.93 & 0.89 & 0.99 \\ 
\cline{1-12}
\multirow{8}{*}{InfoGraph}  &    \multirow{2}{*}{Mixing Random} & Mean & 1.0 & 1.0 & 0.85 & 0.98 & 1.0 & 1.0 & 1.0 & 1.0 & 1.0 \\
\cline{3-12}
                   &     & Median & 1.0 & 1.0 & 0.91 & 0.98 & 1.0 & 1.0 & 1.0 & 1.0 & 1.0 \\ 
\cline{2-12}
 &    \multirow{2}{*}{Rewiring Edges} & Mean & 0.95 & 0.99 & 0.88 & 0.99 & 0.93 & 0.99 & -0.55 & -0.13 & 0.89 \\
\cline{3-12}
                   &     & Median & 0.96 & 0.99 & 0.87 & 0.99 & 0.93 & 1.0 & -0.57 & -0.14 & 0.89 \\ 
\cline{2-12}
 &    \multirow{2}{*}{Mode Collapse} & Mean & -0.93 & -0.76 & 1.0 & 1.0 & 1.0 & 0.98 & 0.99 & 0.87 & 1.0 \\
\cline{3-12}
                   &     & Median & -0.92 & -0.7 & 1.0 & 1.0 & 1.0 & 0.99 & 0.99 & 0.95 & 1.0 \\ 
\cline{2-12}
 &    \multirow{2}{*}{Mode Dropping} & Mean & -0.7 & 0.43 & 0.99 & 1.0 & 0.99 & 0.99 & 0.97 & 0.92 & 0.97 \\
\cline{3-12}
                   &     & Median & -0.71 & 0.46 & 0.99 & 1.0 & 0.99 & 0.99 & 0.97 & 0.93 & 0.97 \\ \bottomrule
\end{tabular}
}
\end{small}
\caption{Mean and median values for measurements in experiments with with clustering structural features by models, for dataset ego} 
\label{table:clustering_feats_ego}
\end{table}
 
\begin{figure*}[h!]
    \captionsetup[subfloat]{farskip=-2pt,captionskip=-8pt}
    \centering
    \subfloat[][]{\includegraphics[width = 2.55in]{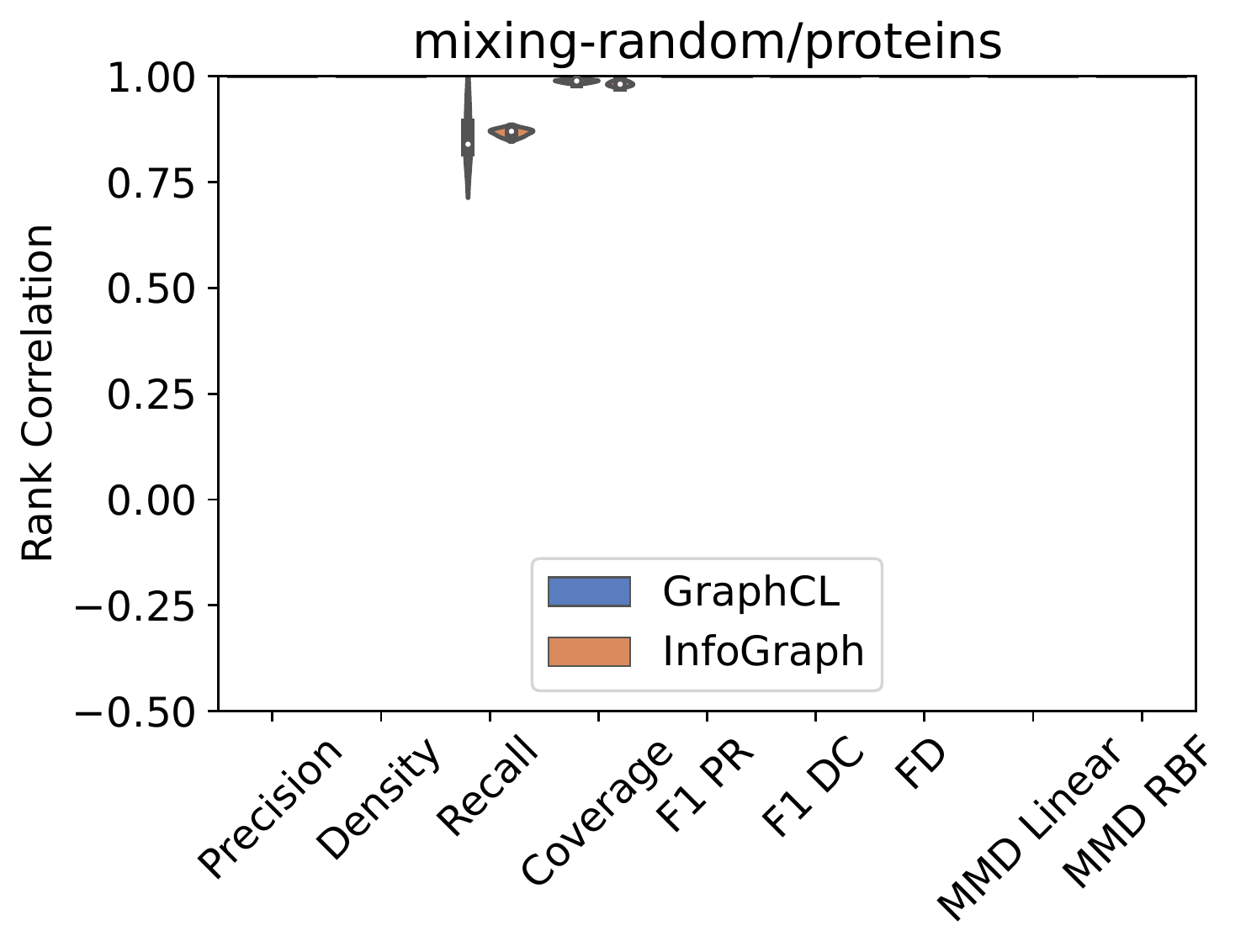}}
    \subfloat[][]{\includegraphics[width = 2.55in]{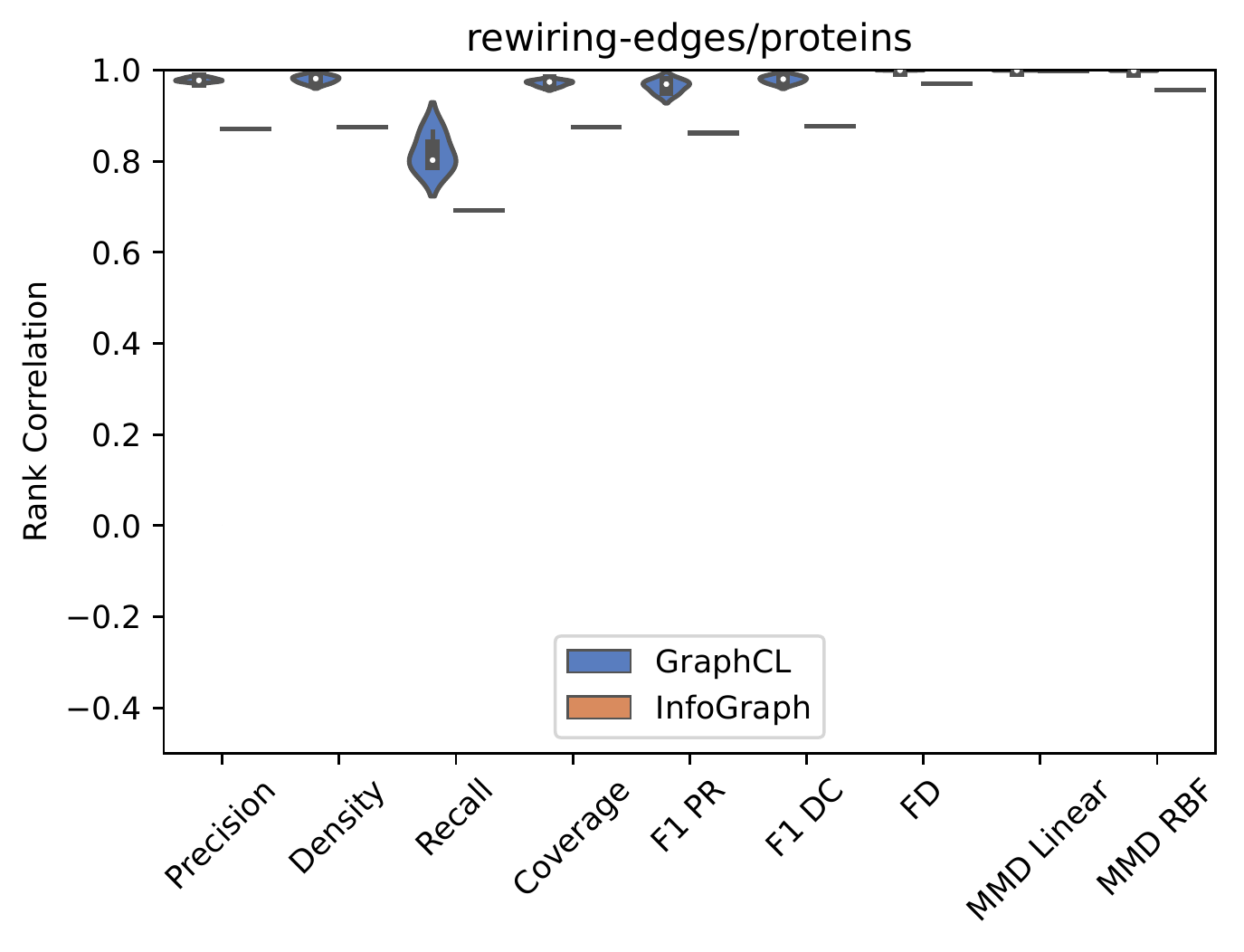}}
    \\
    \subfloat[][]{\includegraphics[width = 2.55in]{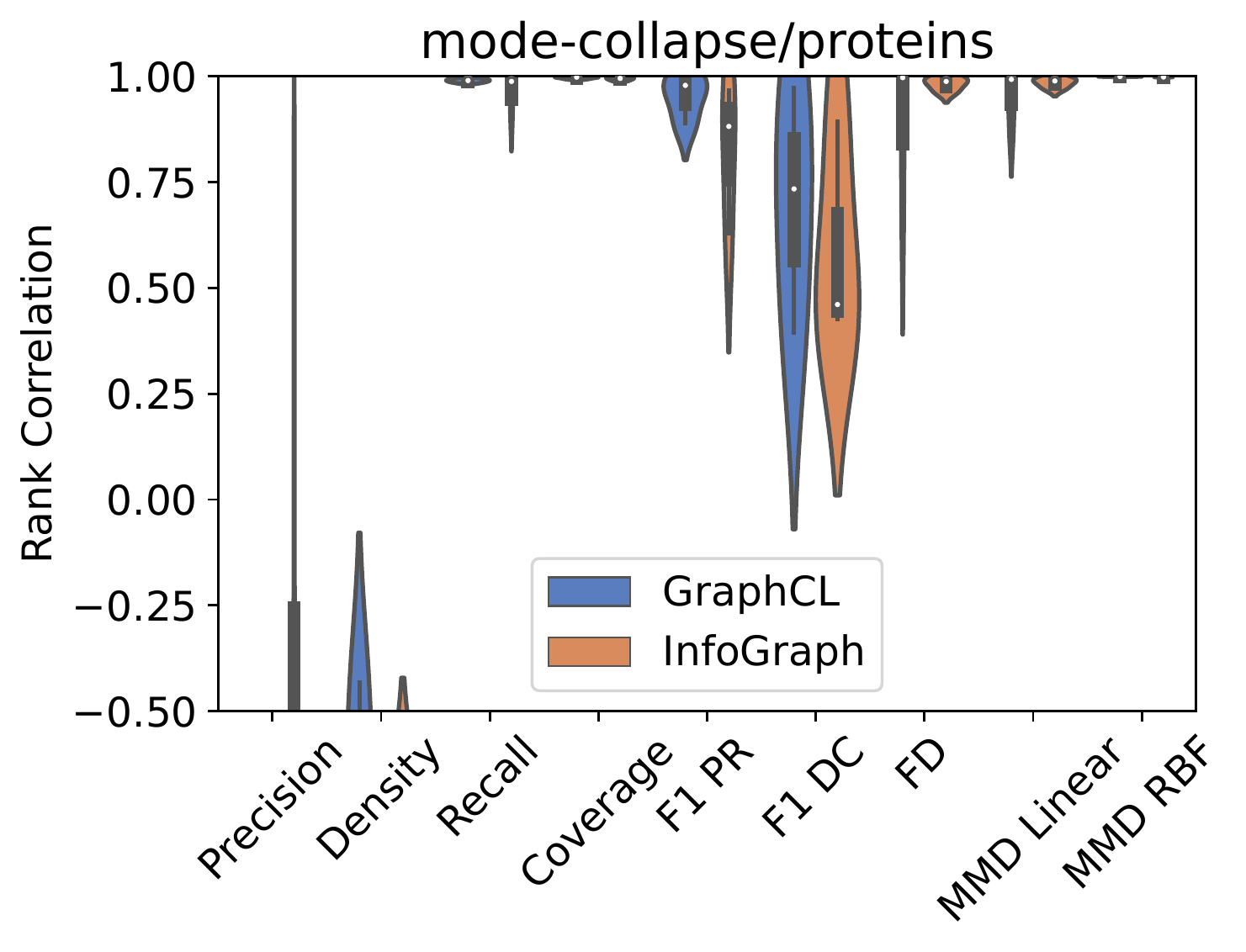}}
    \subfloat[][]{\includegraphics[width = 2.55in]{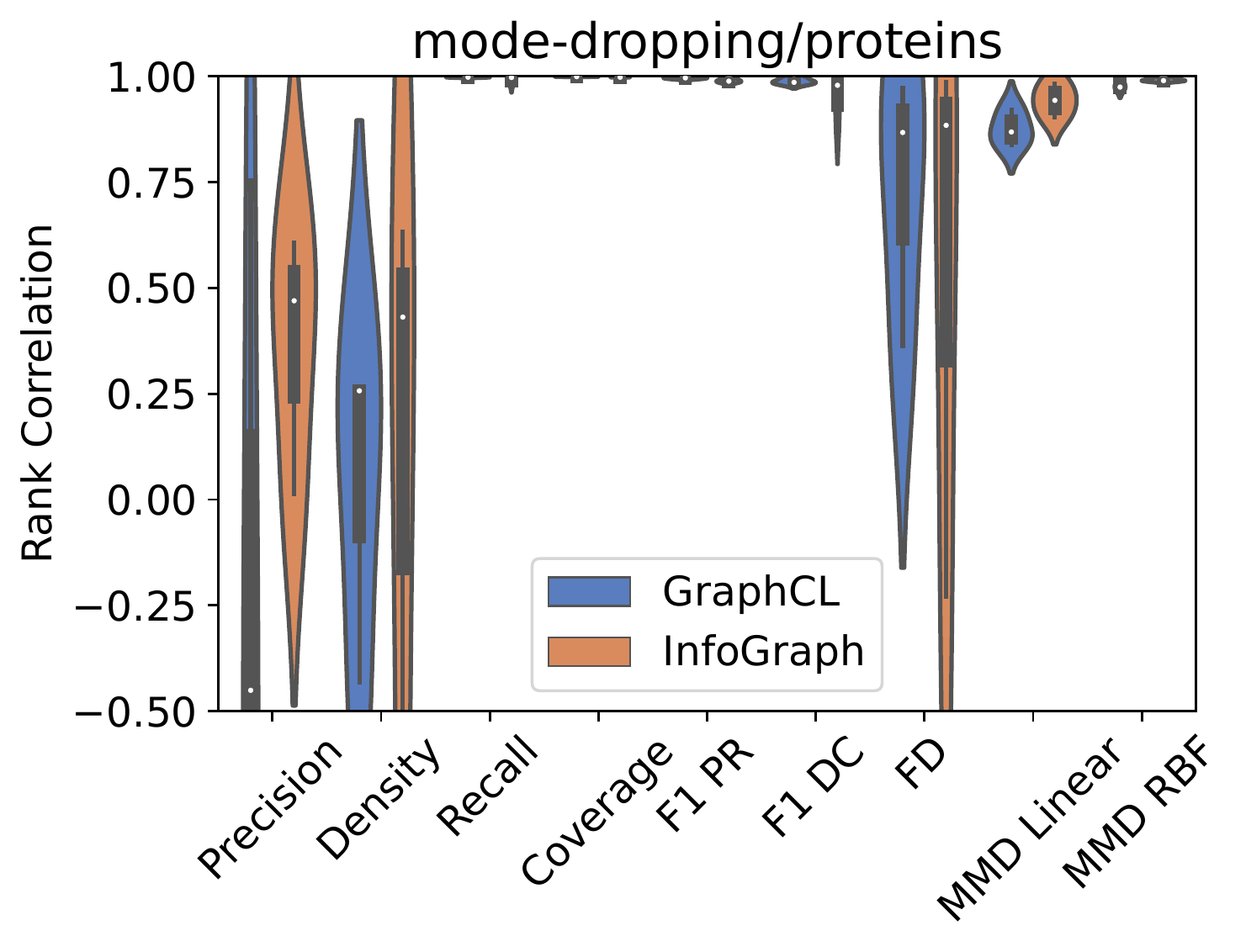}}
    \caption{Violing comparative results among the methods, with with clustering structural features for proteins dataset.}
    \label{fig:clustering_feats_proteins}
\end{figure*}
 
\begin{table}[h!]
\centering
\begin{small}
\scalebox{0.7}{
\begin{tabular}{l|l|l|l|l|l|l|l|l|l|l|l} 
\toprule
Model Name               & Experiment                      &        & Precision & Density & Recall & Coverage & F1PR & F1DC & FD & MMD Lin & MMD RBF  \\ 
\hline
\multirow{8}{*}{GraphCL}  &    \multirow{2}{*}{Mixing Random} & Mean & 1.0 & 1.0 & 0.86 & 0.99 & 1.0 & 1.0 & 1.0 & 1.0 & 1.0 \\
\cline{3-12}
                   &     & Median & 1.0 & 1.0 & 0.84 & 0.99 & 1.0 & 1.0 & 1.0 & 1.0 & 1.0 \\ 
\cline{2-12}
 &    \multirow{2}{*}{Rewiring Edges} & Mean & 0.98 & 0.98 & 0.82 & 0.97 & 0.96 & 0.98 & 1.0 & 1.0 & 1.0 \\
\cline{3-12}
                   &     & Median & 0.98 & 0.98 & 0.8 & 0.97 & 0.97 & 0.98 & 1.0 & 1.0 & 1.0 \\ 
\cline{2-12}
 &    \multirow{2}{*}{Mode Collapse} & Mean & -0.97 & -0.68 & 0.99 & 1.0 & 0.95 & 0.7 & 0.89 & 0.95 & 1.0 \\
\cline{3-12}
                   &     & Median & -0.96 & -0.73 & 0.99 & 1.0 & 0.98 & 0.73 & 1.0 & 0.99 & 1.0 \\ 
\cline{2-12}
 &    \multirow{2}{*}{Mode Dropping} & Mean & -0.22 & 0.03 & 1.0 & 1.0 & 1.0 & 0.99 & 0.73 & 0.88 & 0.98 \\
\cline{3-12}
                   &     & Median & -0.45 & 0.26 & 1.0 & 1.0 & 1.0 & 0.99 & 0.87 & 0.87 & 0.97 \\ 
\cline{1-12}
\multirow{8}{*}{InfoGraph}  &    \multirow{2}{*}{Mixing Random} & Mean & 1.0 & 1.0 & 0.87 & 0.98 & 1.0 & 1.0 & 1.0 & 1.0 & 1.0 \\
\cline{3-12}
                   &     & Median & 1.0 & 1.0 & 0.87 & 0.98 & 1.0 & 1.0 & 1.0 & 1.0 & 1.0 \\ 
\cline{2-12}
 &    \multirow{2}{*}{Rewiring Edges} & Mean & 0.87 & 0.87 & 0.69 & 0.88 & 0.86 & 0.88 & 0.97 & 1.0 & 0.96 \\
\cline{3-12}
                   &     & Median & 0.87 & 0.87 & 0.69 & 0.88 & 0.86 & 0.88 & 0.97 & 1.0 & 0.96 \\ 
\cline{2-12}
 &    \multirow{2}{*}{Mode Collapse} & Mean & -0.5 & -0.84 & 0.96 & 0.99 & 0.83 & 0.59 & 0.98 & 0.99 & 1.0 \\
\cline{3-12}
                   &     & Median & -1.0 & -0.88 & 0.99 & 0.99 & 0.88 & 0.46 & 0.99 & 0.99 & 1.0 \\ 
\cline{2-12}
 &    \multirow{2}{*}{Mode Dropping} & Mean & 0.36 & 0.1 & 0.99 & 1.0 & 0.99 & 0.95 & 0.55 & 0.94 & 0.99 \\
\cline{3-12}
                   &     & Median & 0.47 & 0.43 & 1.0 & 1.0 & 0.99 & 0.98 & 0.88 & 0.94 & 0.99 \\ \bottomrule
\end{tabular}
}
\end{small}
\caption{Mean and median values for measurements in experiments with with clustering structural features by models, for dataset proteins} 
\label{table:clustering_feats_proteins}
\end{table}
 
\begin{figure*}[h!]
    \captionsetup[subfloat]{farskip=-2pt,captionskip=-8pt}
    \centering
    \subfloat[][]{\includegraphics[width = 2.55in]{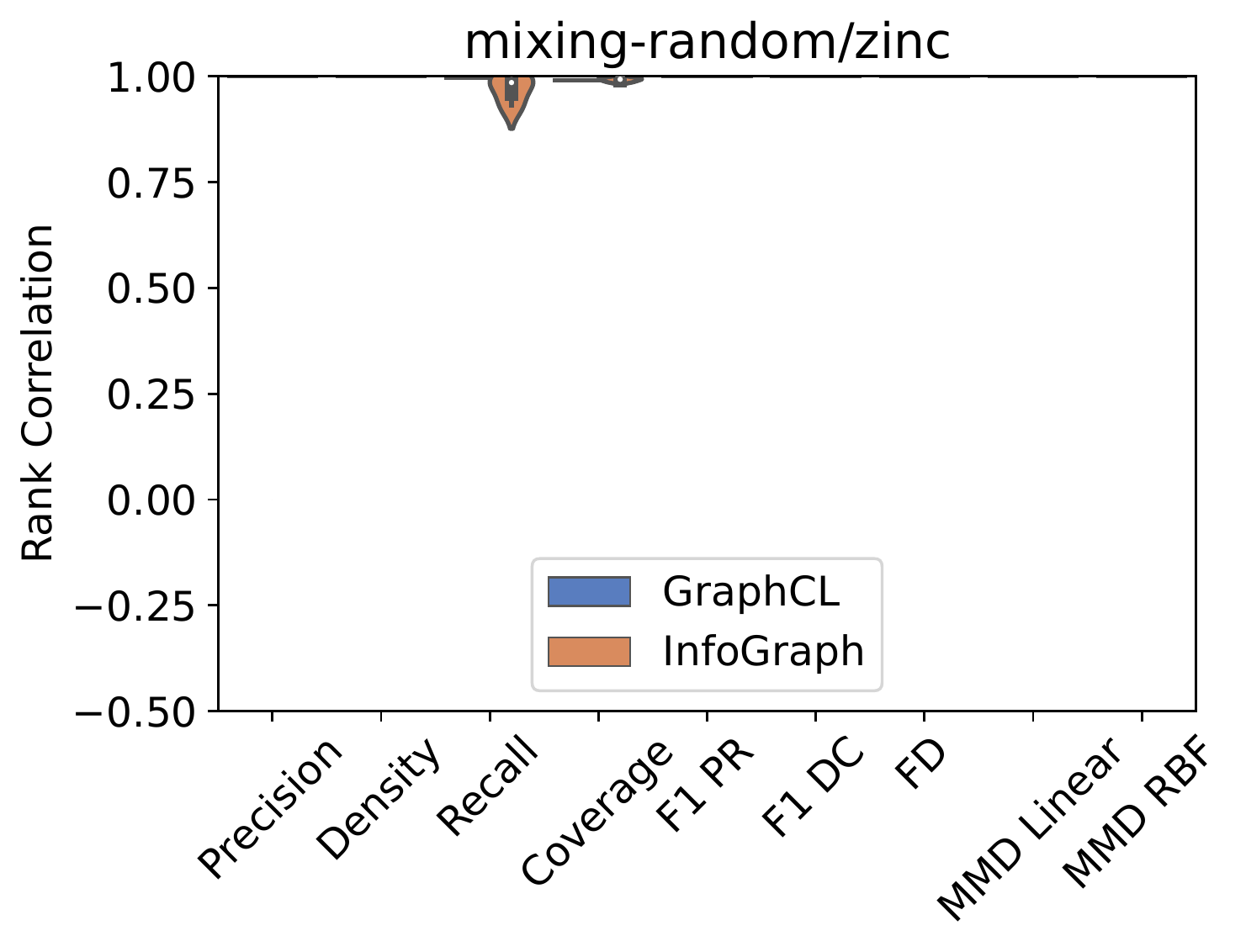}}
    \subfloat[][]{\includegraphics[width = 2.55in]{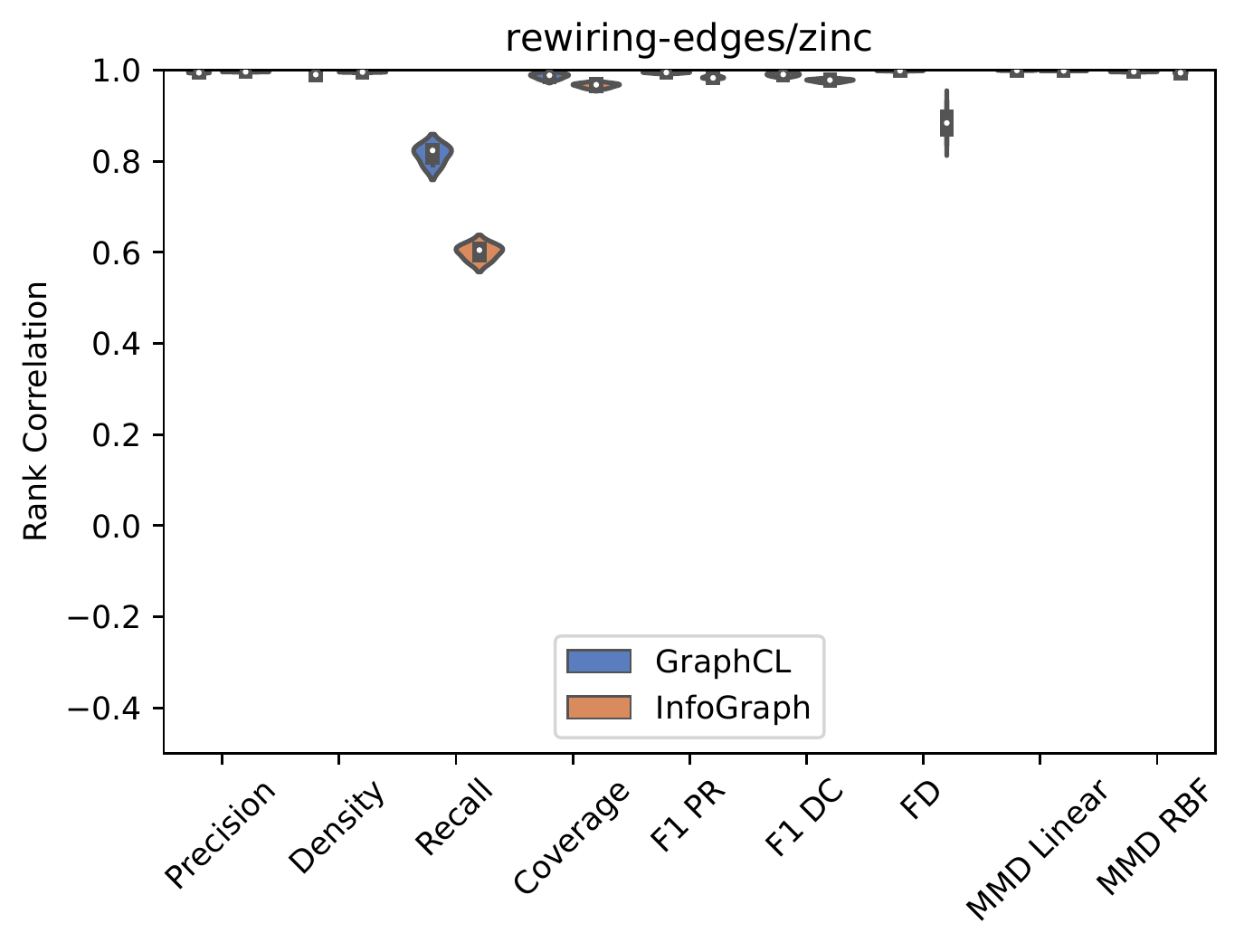}}
    \\
    \subfloat[][]{\includegraphics[width = 2.55in]{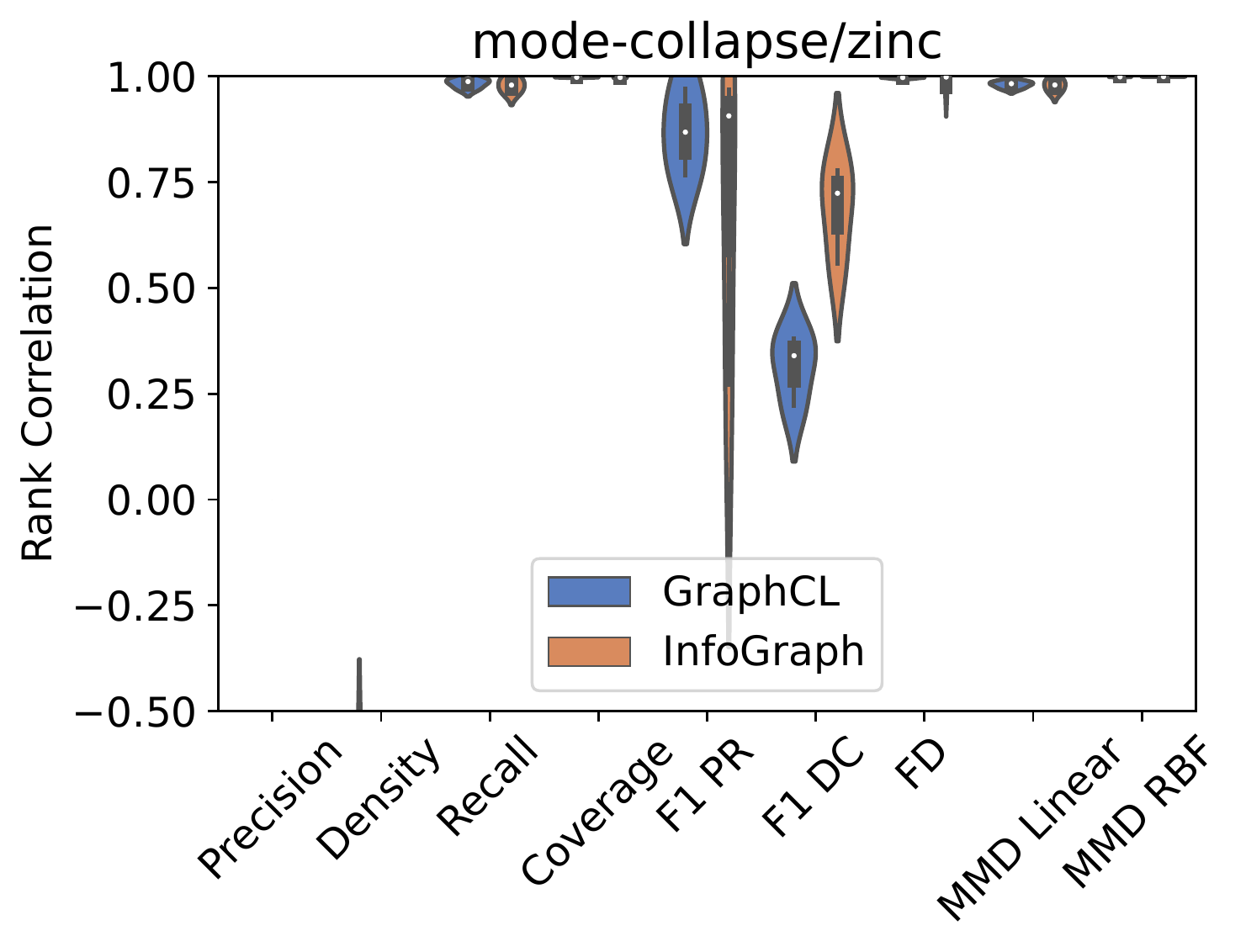}}
    \subfloat[][]{\includegraphics[width = 2.55in]{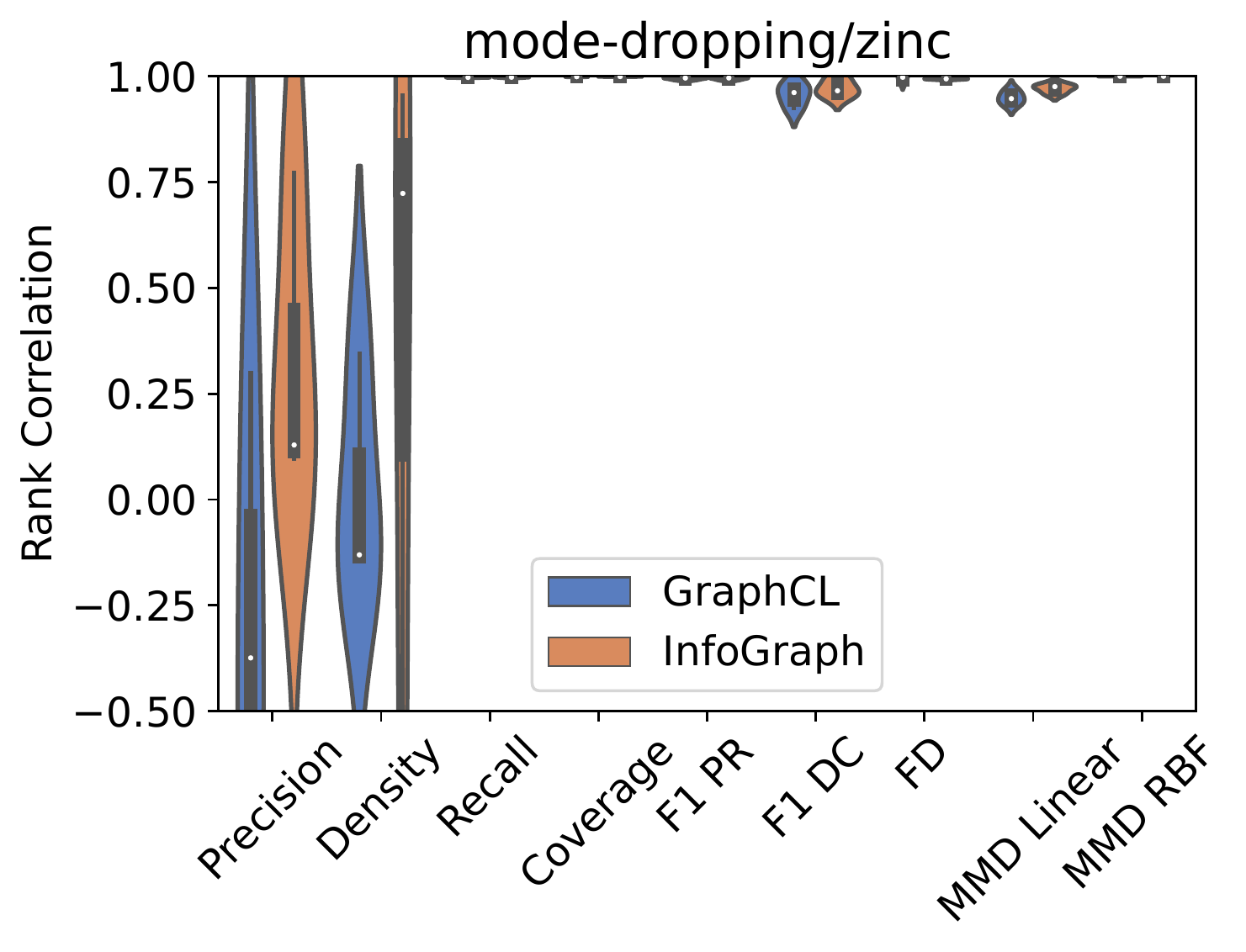}}
    \caption{Violing comparative results among the methods, with with clustering structural features for zinc dataset.}
    \label{fig:clustering_feats_zinc}
\end{figure*}
 
\begin{table}[h!]
\centering
\begin{small}
\scalebox{0.7}{
\begin{tabular}{l|l|l|l|l|l|l|l|l|l|l|l} 
\toprule
Model Name               & Experiment                      &        & Precision & Density & Recall & Coverage & F1PR & F1DC & FD & MMD Lin & MMD RBF  \\ 
\hline
\multirow{8}{*}{GraphCL}  &    \multirow{2}{*}{Mixing Random} & Mean & 1.0 & 1.0 & 1.0 & 0.99 & 1.0 & 1.0 & 1.0 & 1.0 & 1.0 \\
\cline{3-12}
                   &     & Median & 1.0 & 1.0 & 1.0 & 0.99 & 1.0 & 1.0 & 1.0 & 1.0 & 1.0 \\ 
\cline{2-12}
 &    \multirow{2}{*}{Rewiring Edges} & Mean & 0.99 & 0.99 & 0.81 & 0.99 & 0.99 & 0.99 & 1.0 & 1.0 & 1.0 \\
\cline{3-12}
                   &     & Median & 0.99 & 0.99 & 0.82 & 0.99 & 0.99 & 0.99 & 1.0 & 1.0 & 1.0 \\ 
\cline{2-12}
 &    \multirow{2}{*}{Mode Collapse} & Mean & -0.99 & -0.87 & 0.98 & 1.0 & 0.87 & 0.31 & 1.0 & 0.98 & 1.0 \\
\cline{3-12}
                   &     & Median & -1.0 & -0.97 & 0.99 & 1.0 & 0.87 & 0.34 & 1.0 & 0.98 & 1.0 \\ 
\cline{2-12}
 &    \multirow{2}{*}{Mode Dropping} & Mean & -0.31 & 0.02 & 1.0 & 1.0 & 0.99 & 0.96 & 0.99 & 0.95 & 1.0 \\
\cline{3-12}
                   &     & Median & -0.37 & -0.13 & 1.0 & 1.0 & 1.0 & 0.96 & 1.0 & 0.95 & 1.0 \\ 
\cline{1-12}
\multirow{8}{*}{InfoGraph}  &    \multirow{2}{*}{Mixing Random} & Mean & 1.0 & 1.0 & 0.97 & 0.99 & 1.0 & 1.0 & 1.0 & 1.0 & 1.0 \\
\cline{3-12}
                   &     & Median & 1.0 & 1.0 & 0.99 & 0.99 & 1.0 & 1.0 & 1.0 & 1.0 & 1.0 \\ 
\cline{2-12}
 &    \multirow{2}{*}{Rewiring Edges} & Mean & 1.0 & 1.0 & 0.6 & 0.97 & 0.98 & 0.98 & 0.88 & 1.0 & 0.99 \\
\cline{3-12}
                   &     & Median & 1.0 & 1.0 & 0.6 & 0.97 & 0.98 & 0.98 & 0.88 & 1.0 & 0.99 \\ 
\cline{2-12}
 &    \multirow{2}{*}{Mode Collapse} & Mean & -1.0 & -0.91 & 0.98 & 1.0 & 0.72 & 0.69 & 0.98 & 0.98 & 1.0 \\
\cline{3-12}
                   &     & Median & -1.0 & -0.93 & 0.98 & 1.0 & 0.91 & 0.72 & 1.0 & 0.98 & 1.0 \\ 
\cline{2-12}
 &    \multirow{2}{*}{Mode Dropping} & Mean & 0.33 & 0.39 & 1.0 & 1.0 & 0.99 & 0.97 & 0.99 & 0.97 & 1.0 \\
\cline{3-12}
                   &     & Median & 0.13 & 0.72 & 1.0 & 1.0 & 1.0 & 0.97 & 0.99 & 0.98 & 1.0 \\ \bottomrule
\end{tabular}
}
\end{small}
\caption{Mean and median values for measurements in experiments with with clustering structural features by models, for dataset zinc} 
\label{table:clustering_feats_zinc}
\end{table}

\end{document}